\providecommand{\tabularnewline}{\\}
\definecolor{Gray}{gray}{0.85}
\newcommand{\mymid}{\,|\,} 
\setlist[itemize]{leftmargin=1.5em}
\setlist[enumerate]{leftmargin=1.5em}
\DeclareMathOperator{\ind}{\mathds{1}}  % Indicator
\numberwithin{equation}{section}
\definecolor{yxc}{RGB}{255,0,0}
\definecolor{yjc}{RGB}{125,0,0}
\definecolor{cm}{RGB}{0,0,200}
\definecolor{yly}{RGB}{0,150,0}
\begin{document}
\theoremstyle{plain} \newtheorem{lemma}{\textbf{Lemma}} \newtheorem{prop}{\textbf{Proposition}}\newtheorem{theorem}{\textbf{Theorem}}\setcounter{theorem}{0}
\newtheorem{corollary}{\textbf{Corollary}} \newtheorem{assumption}{\textbf{Assumption}}
\newtheorem{example}{\textbf{Example}} \newtheorem{definition}{\textbf{Definition}}
\newtheorem{fact}{\textbf{Fact}} \newtheorem{condition}{\textbf{Condition}}\theoremstyle{definition}

\theoremstyle{remark}\newtheorem{remark}{\textbf{Remark}}\newtheorem{claim}{\textbf{Claim}}\newtheorem{conjecture}{\textbf{Conjecture}}
\title{$O(d/T)$ Convergence Theory for Diffusion Probabilistic Models under
Minimal Assumptions}
\author{Gen Li\thanks{Department of Statistics, The Chinese University of Hong Kong, Hong
Kong; Email: \texttt{genli@cuhk.edu.hk}.}\and Yuling Yan\thanks{Department of Statistics, University of Wisconsin-Madison, WI 53706,
USA; Email: \texttt{yuling.yan@wisc.edu}.}}

\maketitle
\begin{abstract}
%\vspace{1ex}
Score-based diffusion models, which generate new data by learning to reverse a diffusion process that perturbs data from the target distribution into noise, have achieved remarkable success across various generative tasks. Despite their superior empirical performance, existing theoretical guarantees are often constrained by stringent assumptions or suboptimal convergence rates. In this paper, we establish a fast convergence theory for the denoising diffusion probabilistic model (DDPM), a widely used SDE-based sampler, under minimal assumptions. Our analysis shows that, provided $\ell_{2}$-accurate estimates of the score functions, the total variation distance between the target and generated distributions is upper bounded by $O(d/T)$ (ignoring logarithmic factors), where $d$ is the data dimensionality and $T$ is the number of steps. This result holds for any target distribution with finite first-order moment. Moreover, we show that with careful coefficient design, the convergence rate improves to $O(k/T)$, where $k$ is the intrinsic dimension of the target data distribution. This highlights the ability of DDPM to automatically adapt to unknown low-dimensional structures, a common feature of natural image distributions. These results are achieved through a novel set of analytical tools that provides a fine-grained characterization of how the error propagates at each step of the reverse process. 
\end{abstract}

\noindent\textbf{Keywords:} score-based generative model, diffusion
model, denoising diffusion probabilistic model, sampling

\setcounter{tocdepth}{2}

\tableofcontents{}

\section{Introduction}

Score-based generative models (SGMs) have emerged as a powerful class
of generative frameworks, capable of learning and sampling from complex
data distributions \citep{sohl2015deep,ho2020denoising,song2020score,song2019generative,dhariwal2021diffusion}.
These models, including Denoising Diffusion Probabilistic Models (DDPM)
\citep{ho2020denoising} and Denoising Diffusion Implicit Models (DDIM)
\citep{song2020denoising}, operate by gradually transforming a simple
noise-like distribution (e.g., standard Gaussian) into a target data
distribution through a series of diffusion steps. This transformation
is achieved by learning a sequence of denoising processes governed
by score functions, which estimate the gradient of the log-density
of the data at each step. SGMs have demonstrated remarkable success
in various generative tasks, including image generation \citep{rombach2022high,ramesh2022hierarchical,saharia2022photorealistic},
audio generation \citep{kong2021diffwave}, video generation \citep{villegas2022phenaki},
and molecular design \citep{hoogeboom2022equivariant}. See e.g., \citet{yang2022diffusion,croitoru2023diffusion} for overviews of recent
development.

At the core of SGMs are two stochastic processes: a forward process,
which progressively adds noise to the data, 
\[
X_{0}\rightarrow X_{1}\rightarrow\cdots\rightarrow X_{T},
\]
where $X_{0}$ is drawn from the target data distribution $p_{\mathsf{data}}$
and is gradually transformed into $X_{T}$ that resembles standard
Gaussian noise; and a reverse process, 
\[
Y_{T}\rightarrow Y_{T-1}\rightarrow\cdots\rightarrow Y_{0},
\]
which starts from pure Gaussian noise $Y_{T}$ and sequentially converts
it into $Y_{0}$ that closely mimics the target data distribution
$p_{\mathsf{data}}$. At each step, the distributions of $Y_{t}$
and $X_{t}$ are kept close. The key challenge lies in constructing
this reverse process effectively to ensure accurate sampling from
the target distribution.

Motivated by classical results on the time-reversal of stochastic
differential equations (SDEs) \citep{anderson1982reverse,haussmann1986time},
SGMs construct the reverse process using the gradients of the log
marginal density of the forward process, known as score functions.
At each step, $Y_{t-1}$ is generated from $Y_{t}$ with the help
of the score function $\nabla \log p_{X_{t}}(\cdot)$, where $p_{X_{t}}$
denotes the density of $X_{t}$. Both the DDPM sampler \citep{ho2020denoising} and the DDIM sampler \citep{song2020denoising}
follow this denoising framework, with the key distinction being whether
additional random noise is injected when generating each $Y_{t-1}$.
Although the score functions are not known explicitly, they are pre-trained
using large neural networks through score-matching techniques \citep{hyvarinen2005estimation,hyvarinen2007some,vincent2011connection,song2019generative}.

Despite their impressive empirical success, the theoretical foundations of diffusion models remain relatively underdeveloped. Early studies on the convergence of SGMs \citep{de2021diffusion,de2022convergence,liu2022let,pidstrigach2022score,block2020generative} did not provide polynomial convergence guarantees. In recent years, a line of works have explored the convergence of the generated
distribution to the target distribution, treating the score-matching step as a black box and focusing on the effects of the number of steps
$T$ and the score estimation error on the convergence of the sampling
phase \citep{chen2022sampling,chen2022improved,chen2023probability,benton2023linear,lee2022convergence,lee2023convergence,li2023towards,li2024sharp,li2024adapting,gao2024convergence,huang2024convergence,tang2024score,liang2024non,chen2023restoration}. Recent studies have investigated the performance guarantees of SGMs in the presence of low-dimensional structures (e.g., \citet{li2024adapting,azangulov2024convergence,huang2024denoising,potaptchik2024linear,wang2024diffusion,chen2023score,tang2024adaptivity,tang2024conditional,chen2024exploring}) and the acceleration of SGMs (e.g., \citet{li2024accelerating,liang2024non,gupta2024faster,li2024provable,li2024improved}).
Following this general avenue, the goal of this paper is to establish
a sharp convergence theory for diffusion models with minimal assumptions.

\paragraph*{Prior convergence guarantees.}

In recent years, a flurry of work has emerged on the convergence guarantees
for the DDPM and DDIM type samplers. However, these prior studies
fall short of providing a fully satisfactory convergence theory due
to at least one of the following three obstacles:
\begin{itemize}
\item \textit{Stringent data assumptions.} Earlier works, such as \citet{lee2022convergence},
required the target data distribution to satisfy the log-Sobolev inequality.
Similarly, \citet{chen2022sampling,lee2023convergence,chen2023probability,chen2023restoration}
assumed that the score functions along the forward process must satisfy
a Lipschitz smoothness condition. More recent work \citet{gao2024convergence}
relied on the strong log-concavity assumption of the target distribution
to establish convergence guarantees in Wasserstein distance. These
assumptions are often impractical to verify and may not hold for complex
distributions commonly seen in image data. Some newer studies on the DDPM
sampler (e.g., \citet{chen2022improved,benton2023linear}) and the DDIM
sampler (e.g., \citet{li2024sharp}) have relaxed these stringent
assumptions, and their results applied to more general target distributions
with bounded second-order moments or sufficiently large support.
\item \textit{Slow convergence rate.} We follow most existing works and
focus on the total variation (TV) distance between the target and
the generated distributions.\footnote{Convergence rates in Kullback-Leibler (KL) divergence in \citet{chen2022improved,benton2023linear} are transferred to TV distance using Pinsker's inequality, because the KL divergence is not a distance.}
Let $T$ be the number of steps and $d$ be the dimensionality of the
data. For the DDPM sampler, \citet{chen2022sampling} established
a convergence rate of $O(L\sqrt{(d+M_{2})/T})$, where $L$ is the
Lipschitz constant of the score functions along the forward process,
and $M_{2}$ is the second-order moment of the target distribution.
Later, \citet{chen2022improved} lifted the Lipschitz condition, but
this came at the cost of a rate $O(d/\sqrt{T})$ with worse dimension
dependence. The state-of-the-art result for the DDPM samplers is
due to \citet{benton2023linear}, achieving a convergence rate of $O\sqrt{d/T})$.
However, this is still slower than the convergence rate for the DDIM
sampler, achieved in \citet{li2024sharp}, which attains $O(d/T)$
in the regime $T\gg d^{2}$.
\item \textit{Additional score estimation requirements.} Convergence theory
for diffusion models must also account for the impact of imperfect
score estimation on performance. While recent results for the DDPM
sampler \citep{chen2022sampling,chen2022improved,benton2023linear}
require only $\ell_{2}$-accurate score function estimates, another
line of work on the DDIM sampler \citep{li2023towards,li2024sharp,huang2024convergence}
achieves faster convergence rates, albeit under stricter requirements
for score estimates. Specifically, \citet{li2023towards,li2024sharp}
require not only that the score estimates be close to the true score
functions, but also that the Jacobian of the score estimates be close
to the Jacobian of the true score functions, which is a significantly
stronger condition. Additionally, \citet{huang2024convergence} assumes
higher-order smoothness in the score estimates.
\end{itemize}
From this discussion, it is evident that while the state-of-the-art
convergence rates for the DDIM sampler surpass those for the DDPM
sampler, they also rely on more restrictive assumptions. This motivates
us to think whether it is possible to achieve the best of both worlds,
namely, {
\setlist{rightmargin=\leftmargin}
\begin{itemize}
	\item[]	{\em Can we establish a convergence theory for diffusion models that achieves a fast convergence rate under minimal data and score estimation assumptions? }
\end{itemize}
}As noted in \citet{li2024sharp}, a counterexample demonstrates that
$\ell_{2}$-accurate score estimation alone is insufficient for convergence
of the DDIM sampler under TV distance. The current paper answers
this question affirmatively by focusing on the DDPM sampler.

\begin{table}
	\centering\label{table:comparison}%
	\begin{tabular}{|c|c|c|c|}
		\hline 
		\multirow{2}{*}{Sampler} & Convergence rate & Data assumption & Requirements on score\tabularnewline
		& (in TV distance) & {\scriptsize ($X_{0}\sim p_{\mathsf{data}}$, $s_{t}^{\star}=\nabla \log p_{X_{t}}$)} & estimates $s_{t}$ \tabularnewline
		\hline 
		DDPM & \multirow{2}{*}{$L\sqrt{d/T}$} & $L$-Lipschitz $s_{t}^{\star}$; & \multirow{2}{*}{$s_{t}\approx s_{t}^{\star}$ in $L^{2}(p_{X_{t}})$}\tabularnewline
		\citep{chen2022sampling} &  & $\mathbb{E}[\Vert X_{0}\Vert_{2}^{2}]<\infty$ & \tabularnewline
		\hline 
		DDPM & \multirow{2}{*}{$\sqrt{d^{2}/T}$} & \multirow{2}{*}{$\mathbb{E}[\Vert X_{0}\Vert_{2}^{2}]<\infty$} & \multirow{2}{*}{$s_{t}\approx s_{t}^{\star}$ in $L^{2}(p_{X_{t}})$}\tabularnewline
		\citep{chen2022improved} &  &  & \tabularnewline
		\hline 
		DDPM & \multirow{2}{*}{$\sqrt{d/T}$} & \multirow{2}{*}{$\mathbb{E}[\Vert X_{0}\Vert_{2}^{2}]<\infty$} & \multirow{2}{*}{$s_{t}\approx s_{t}^{\star}$ in $L^{2}(p_{X_{t}})$}\tabularnewline
		\citep{benton2023linear} &  &  & \tabularnewline
		\hline 
		DDIM & \multirow{2}{*}{$L^{2}\sqrt{d}/T$} & $L$-Lipschitz $s_{t}^{\star}$; & $L$-Lipschitz $s_{t}$;\tabularnewline
		\citep{chen2023probability} &  & \multirow{1}{*}{$\mathbb{E}[\Vert X_{0}\Vert_{2}^{2}]<\infty$} & $s_{t}\approx s_{t}^{\star}$ in $L^{2}(p_{X_{t}})$\tabularnewline
		\hline 
		DDIM & \multirow{2}{*}{$d^{2}/T+d^{6}/T^{2}$} & \multirow{2}{*}{bounded support} & $s_{t}\approx s_{t}^{\star}$ in $L^{2}(p_{X_{t}})$;\tabularnewline
		\citep{li2023towards} &  &  & $J_{s_{t}}\approx J_{s_{t}^{\star}}$ in $L^{2}(p_{X_{t}})$\tabularnewline
		\hline 
%		ODE-based & \multirow{2}{*}{$d^{2}/T$} & \multirow{2}{*}{bounded support} & $C^2$-smooth $s_t$;\tabularnewline
%		\citep{huang2024convergence} &  &  & $s_{t}\approx s_{t}^{\star}$ in $L^{2}(p_{X_{t}})$ \tabularnewline
%		\hline 
		\multirow{1}{*}{DDIM} & \multirow{2}{*}{$d/T$ when $T\gtrsim d^2$} & \multirow{2}{*}{bounded support} & $s_{t}\approx s_{t}^{\star}$ in $L^{2}(p_{X_{t}})$;\tabularnewline
		\citep{li2024sharp} &  &  & $J_{s_{t}}\approx J_{s_{t}^{\star}}$ in $L^{2}(p_{X_{t}})$\tabularnewline
		\hline 
		{\cellcolor{Gray}DDPM} & {\cellcolor{Gray}} & {\cellcolor{Gray}}  & {\cellcolor{Gray}}\tabularnewline
		{\cellcolor{Gray}\textbf{(this paper)}} & {\cellcolor{Gray}\multirow{-2}{*}{$d/T$}} & {\cellcolor{Gray}\multirow{-2}{*}{$\mathbb{E}[\Vert X_{0}\Vert_{2}]<\infty$}} & {\cellcolor{Gray}\multirow{-2}{*}{$s_{t}\approx s_{t}^{\star}$ in $L^{2}(p_{X_{t}})$}} \tabularnewline
		\hline 
	\end{tabular}\caption{Comparison with prior convergence guarantees for diffusion
		models (ignoring log factors). Convergence rates in KL divergence are transferred to TV distance using Pinsker's inequality. Here $J_{f}: \mathbb{R}^d \to \mathbb{R}^{d\times d}$ denotes the Jacobian matrix of a function $f:\mathbb{R}^d\to\mathbb{R}^d$.}
\end{table}

\paragraph*{Our contributions.}

This paper develops a fast convergence theory for the DDPM sampler
under minimal assumptions. We show that the TV distance between the
generated and target distributions is bounded by:
\[
\frac{d}{T}+\sqrt{\frac{1}{T}\sum_{t=1}^{T}\mathbb{E}\big[\big\Vert s_{t}(X_{t})-s_{t}^{\star}(X_{t})\big\Vert_{2}^{2}\big]},
\]
up to logarithmic factors. The first term reflects the discretization
error, while the second term accounts for score estimation error.
Compared to the two most relevant works \citep{benton2023linear,li2024sharp}
, which provide state-of-the-art results for the DDPM and DDIM samplers,
our main contributions are as follows:
\begin{itemize}
\item \textit{$O(d/T)$ convergence rate.} Under perfect score function
estimation, we establish an $O(d/T)$ convergence rate for the DDPM
sampler in TV distance, improving on the previous best rate of $O(\sqrt{d/T})$
from \citet{benton2023linear}. Our result also matches the convergence
rate of the DDIM sampler achieved in \citet{li2024sharp}, but is
more general, as their result only holds when $T\gg d^{2}$, while
ours applies for arbitrary $T$ and $d$.
\item \textit{Minimal assumptions.} Our theory requires only that the target
distribution has finite first-order moment, which, to the best of
our knowledge, is the weakest data assumption in the current literature. Additionally, we require only $\ell_{2}$-accurate score
estimates, which is a significantly weaker condition than the Jacobian
accuracy required by \citet{li2023towards,li2024sharp}.
\item \textit{Adaptivity to unknown low-dimensional structures.}  Extending our theory, we demonstrate that with carefully designed coefficients \citep{li2024adapting}, the DDPM sampler achieves a convergence rate of $O(k/T)$ in TV distance, where $k$ is the intrinsic dimension of the target data distribution. This improves upon the previous best bound of $O(\sqrt{k/T})$ established in \citet{potaptchik2024linear,huang2024denoising}.
\end{itemize}
In summary, our results achieve the fastest convergence rate in the
literature for both DDPM and DDIM samplers while requiring minimal
assumptions. A comparative summary with prior work is presented in
Table~\ref{table:comparison}.

\section{Problem set-up}

In this section, we provide an overview of the diffusion model and
the DDPM sampler.

\paragraph{Forward process.}

We consider a Markov process in $\mathbb{R}^{d}$ starting from $X_{0}\sim p_{\mathsf{data}}$,
evolving according to the recursion: 
\begin{equation}
X_{t}=\sqrt{1-\beta_{t}}X_{t-1}+\sqrt{\beta_{t}}W_{t}\quad(t=1,\ldots,T),\label{eq:forward-update}
\end{equation}
where $W_{1},\ldots,W_{T}$ are independent draws from $\mathcal{N}(0,I_{d})$,
and $\beta_{1},\ldots,\beta_{t}\in(0,1)$ are the learning rates.
For each $1\leq t\leq T$, define $\alpha_{t}\coloneqq1-\beta_{t}$
and $\overline{\alpha}_{t}\coloneqq\prod_{i=1}^{t}\alpha_{i}$. This
allows us to express $X_{t}$ in closed form as: 
\begin{equation}
X_{t}=\sqrt{\overline{\alpha}_{t}}X_{0}+\sqrt{1-\overline{\alpha}_{t}}\,\overline{W}_{t}\quad\text{where}\quad\overline{W}_{t}\sim\mathcal{N}(0,I_{d}).\label{eq:forward-formula}
\end{equation}
We select the learning rates such that (i) $\beta_{t}$ is small for
every $1\leq t\leq T$; and (ii) $\overline{\alpha}_{T}$ is vanishingly
small, ensuring that the distribution of $X_{T}$ is exceedingly close
to $\mathcal{N}(0,I_{d}$). In this paper, we adopt the following
learning rate schedule 
\begin{equation}
\beta_{1}=\frac{1}{T^{c_{0}}},\qquad\beta_{t+1}=\frac{c_{1}\log T}{T}\min\left\{ \beta_{1}\Big(1+\frac{c_{1}\log T}{T}\Big)^{t},1\right\} \quad(t=1,\ldots,T-1),\label{eq:learning-rate}
\end{equation}
for sufficiently large constants $c_{0},c_{1}>0$. This schedule is
commonly used in the diffusion model literature (see, e.g., \citet{li2023towards,li2024sharp}),
although the results in this paper hold for any learning rate schedule
satisfying the conditions in Lemma~\ref{lemma:step-size}.

\paragraph{Reverse process.}

The crucial elements in constructing the reverse process are the score
functions associated with the marginal distributions of the forward
diffusion process (\ref{eq:forward-update}). For each $t=1,\ldots,T$,
we define the score function as: 
\[
s_{t}^{\star}(x)\coloneqq\nabla\log p_{X_{t}}(x)\quad(t=1,\ldots,T),
\]
where $p_{X_{t}}(\cdot)$ represents the smooth probability density
of $X_{t}$. Since the true score functions are typically unknown,
we assume access to estimates $s_{t}(\cdot)$ for each $s_{t}^{\star}(\cdot)$.
To quantify the error in these estimates, we define the averaged $\ell_{2}$
score estimation error as: 
\[
\varepsilon_{\mathsf{score}}^{2}\coloneqq\frac{1}{T}\sum_{t=1}^{T}\mathbb{E}\big[\Vert s_{t}(X_{t})-s_{t}^{\star}(X_{t})\Vert_{2}^{2}\big].
\]
This error term quantifies the effect of imperfect score approximation
in our theoretical analysis. Using these score estimates, we can construct
the reverse process, which starts from $Y_{T}\sim\mathcal{N}(0,I_{d})$
and proceeds as 
\begin{equation}
Y_{t-1}=\frac{1}{\sqrt{\alpha_{t}}}\big(Y_{t}+\eta_{t}s_{t}(Y_{t})+\sigma_{t}Z_{t}\big)\quad(t=T,\ldots,1),\label{eq:DDPM}
\end{equation}
where $Z_{1},\ldots,Z_{T}$ are independent draws from $\mathcal{N}(0,I_{d})$.
This is the popular DDPM sampler \citep{ho2020denoising}. 

\paragraph*{Notation.}

The total variation (TV) distance between two probability measures
$P$ and $Q$ on a probability space $(\Omega,\mathcal{F})$ is define
as 
\[
\mathsf{TV}(P,Q)\coloneqq\sup_{A\in\mathcal{F}}\vert P(A)-Q(A)\vert=\frac{1}{2}\int_{\Omega}\vert p(x)-q(x)\vert\mathrm{d}x,
\]
where the last relation holds if $P$ and $Q$ have probability density
functions $p(x)$ and $q(x)$. Let $\mathsf{KL}(P\,\Vert\,Q)$ denote
the Kullback-Leibler (KL) divergence of $P$ from $Q$, then Pinsker's
inequality states that 
\[
\mathsf{TV}(P,Q)\leq\sqrt{\frac{1}{2}\mathsf{KL}(P\,\Vert\,Q)}.
\]
For any matrix $A$, we use $\Vert A\Vert$ and $\Vert A\Vert_{\mathrm{F}}$
to denote its spectral norm and Frobenius norm. Let $\mathcal{X}\subseteq\mathbb{R}^{d}$
be the support set of $p_{\mathsf{data}}$, i.e., the smallest closed
set $C\subseteq\mathbb{R}^{d}$ such that $p_{\mathsf{data}}(C)=1$. 

\section{Main results}

\subsection{General theory: an $O(d/T)$ convergence bound}

In this section, we will establish a fast convergence theory for the
DDPM sampler under minimal assumptions. Before proceeding, we introduce
the only data assumption that our theory requires.

\begin{assumption}\label{assumption:moment}The target distribution
$p_{\mathsf{data}}$ has finite first-order moment. Furthermore, we
assume that there exists some constant $c_{M}>0$ such that 
\[
M_{1}\coloneqq\mathbb{E}[\Vert X_{0}\Vert_{2}]\leq T^{c_{M}}.
\]
\end{assumption}

Here we require the first-order moment $M_{1}$ to be at most polynomially
large in $T$, which allows cleaner and more concise result that avoids
unnecessary technical complicacy. Since $c_{M}>0$ can be arbitrarily
large, we allow the target data distribution to have exceedingly large
first-order moment, which is a mild assumption. In comparison, Assumption~\ref{assumption:moment}
is weaker than the finite second-order moment condition in e.g., \citet{chen2022sampling,chen2022improved,benton2023linear}
and bounded support condition in e.g., \citet{li2023towards,li2024sharp}.

Now we are positioned to present our convergence theory for the DDPM
sampler.

\begin{theorem}\label{thm:main} Suppose that Assumption~\ref{assumption:moment}
holds, and take the coefficients of the DDPM sampler (\ref{eq:DDPM})
to be $\eta_{t}=\sigma_{t}^{2}=1-\alpha_{t}$. Then there exists some
universal constant $c>0$ such that 
\begin{equation}
\mathsf{TV}(p_{X_{1}},p_{Y_{1}})\leq c\frac{d\log^{3}T}{T}+c\varepsilon_{\mathsf{score}}\sqrt{\log T}.\label{eq:error-bound}
\end{equation}
\end{theorem}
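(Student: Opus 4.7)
I would bound the total variation distance directly, avoiding Pinsker's inequality (which loses a square root and caps the rate at $\sqrt{d/T}$). The starting point is the standard data-processing decomposition along the reverse trajectory:
\begin{equation*}
\mathsf{TV}(p_{X_1},p_{Y_1}) \leq \mathsf{TV}(p_{X_T},\mathcal{N}(0,I_d)) + \sum_{t=2}^{T}\mathbb{E}_{y\sim p_{X_t}}\Big[\mathsf{TV}\big(p_{X_{t-1}\mid X_t}(\cdot\mid y),\,p_{Y_{t-1}\mid Y_t}(\cdot\mid y)\big)\Big].
\end{equation*}
The initialization term is negligible because the schedule \eqref{eq:learning-rate} drives $\overline{\alpha}_T$ exponentially small in $T^{c_0}$, while Assumption~\ref{assumption:moment} controls the tail of $X_0$. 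For each $t$, I would compare the true reverse kernel against the ideal Gaussian kernel $q_t^{\star}(\cdot\mid y)=\mathcal{N}\big(\tfrac{1}{\sqrt{\alpha_t}}(y+\beta_t s_t^{\star}(y)),\,\tfrac{\beta_t}{\alpha_t}I_d\big)$ to isolate a discretization contribution and a score-error contribution.

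The score-error piece $\mathsf{TV}(q_t^{\star}(\cdot\mid y),\,p_{Y_{t-1}\mid Y_t}(\cdot\mid y))$ is a TV between two Gaussians whose means differ by $(\sqrt{\beta_t/\alpha_t})\,\|s_t(y)-s_t^{\star}(y)\|_2$, yielding a per-step bound of order $\sqrt{\beta_t}\,\|s_t(y)-s_t^{\star}(y)\|_2$. Summing over $t$ and applying Cauchy--Schwarz together with $\sum_t\beta_t\lesssim\log T$ gives exactly the $\varepsilon_{\mathsf{score}}\sqrt{\log T}$ term in \eqref{eq:error-bound}.

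The crux is the discretization piece. Using Bayes' rule, $p_{X_{t-1}\mid X_t}(x\mid y)\propto p_{X_{t-1}}(x)\exp(-\|y-\sqrt{\alpha_t}\,x\|_2^2/(2\beta_t))$, so the conditional law is a Gaussian tilted by $\log p_{X_{t-1}}$. Expanding $\log p_{X_{t-1}}$ to second order around the Tweedie mean and bounding the remainder over the effective support of the tilting Gaussian (a ball of radius $\widetilde{O}(\sqrt{\beta_t d})$ around $\sqrt{\alpha_t}^{-1}y$), I would derive a per-step TV of order $\beta_t^2\,H_t(y)$, where $H_t(y)$ is governed by $\|\nabla^2\log p_{X_t}(y)\|_{\mathrm F}^2$ (equivalently by the conditional covariance $\mathrm{Cov}(X_0\mid X_t=y)$). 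A Stein/Fisher-type identity tied to the Ornstein--Uhlenbeck structure of the forward process yields $\mathbb{E}_{X_t}\big[\|\nabla^2\log p_{X_t}(X_t)\|_{\mathrm F}^2\big]\lesssim d/(1-\overline{\alpha}_t)^2$, and under \eqref{eq:learning-rate} the sum $\sum_t \beta_t^2/(1-\overline{\alpha}_t)^2$ is $\widetilde{O}(1/T)$. Multiplying by $d$ produces the advertised $d\log^3 T/T$ discretization bound.

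The main obstacle is achieving the full $\beta_t^2$ rate in the per-step TV rather than the naive $\beta_t^{3/2}$: the weaker rate would sum to the suboptimal $\sqrt{d/T}$ bound of \citet{benton2023linear}. The gain must come from recognizing that the first-order Taylor term cancels after centering at the Tweedie-adjusted mean, so the leading discrepancy is genuinely second-order and reduces to a Hessian quantity. A secondary technical point is that, with only a finite first moment on $X_0$, one must truncate to a high-probability region of $p_{X_t}$ (a ball of radius $\mathrm{poly}(T)\sqrt{d\log T}$) via Markov's inequality and the Gaussian tail of $\overline{W}_t$, verifying that the excluded set contributes at most $1/T^{\mathrm{poly}}$ to the TV; this is where Assumption~\ref{assumption:moment} is actually used.
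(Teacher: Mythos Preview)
Your decomposition into per-step \emph{conditional} total variations is valid as an upper bound, but the claimed $\beta_t^2$ scaling for the discretization piece is false, and this is fatal. Take $p_{\mathsf{data}}=\mathcal N(0,I_d)$. Then $p_{X_{t-1}\mid X_t=y}=\mathcal N(\sqrt{\alpha_t}\,y,\beta_t I_d)$ while the ideal DDPM kernel is $q_t^\star(\cdot\mid y)=\mathcal N(\sqrt{\alpha_t}\,y,(\beta_t/\alpha_t)I_d)$: the means coincide exactly, but the covariances differ by a relative factor $1/\alpha_t-1=\beta_t/\alpha_t$, so the per-step TV is $\Theta(\sqrt d\,\beta_t)$. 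Summed over $t$ this is $\Theta(\sqrt d\log T)$, not even vanishing. More generally, the second-order Taylor term in $\log p_{X_{t-1}}$ that you invoke produces a \emph{covariance} mismatch of relative size $\beta_t\,\nabla^2\log p_{X_{t-1}}$, which costs $\Theta(\beta_t\|\nabla^2\log p_{X_t}\|_{\mathrm F})$ in TV --- you appear to have written down the per-step KL scaling $\beta_t^2\|\nabla^2\log p_{X_t}\|_{\mathrm F}^2$ and called it TV. The ``first-order cancellation'' you describe handles only the mean; the Hessian correction is itself first order in $\beta_t$ at the level of TV.

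The paper circumvents this by \emph{not} summing conditional TVs. It introduces an auxiliary reverse process $\overline Y_t$ that at each step first applies the true-score DDPM kernel and then performs a rejection so that $p_{\overline Y_t}(x)=\min\{p_{X_t}(x),p_{\overline Y_t^-}(x)\}\le p_{X_t}(x)$ pointwise; hence $\Delta_t\coloneqq p_{X_t}-p_{\overline Y_t}\ge 0$ and $\int\Delta_t$ is the running TV. The recursion (Lemma~\ref{lemma:recursion}) shows $\int\Delta_{t-1}\le\int\Delta_t + C\big(\tfrac{1-\alpha_t}{1-\overline\alpha_t}\big)^2\!\int(d\log T+\|J_t\|_{\mathrm F}^2)\,p_{X_t}+T^{-3}$, proved via the change of variables $u_t=x_t+(1-\alpha_t)s_t^\star(x_t)$ and a density-ratio expansion (Lemma~\ref{lemma:det}). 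The point is that the one-step \emph{marginal} error $\mathsf{TV}(p_{X_{t-1}},K_t^\star p_{X_t})$ is genuinely $O(\beta_t^2)$-scale because the order-$\beta_t$ covariance mismatches average out when integrated over $y\sim p_{X_t}$; your conditional-TV bound throws this cancellation away. The score-error part of your plan is fine and matches Section~\ref{subsec:proof-main-estimation}, but the discretization analysis needs to be rebuilt around marginal density comparisons rather than conditional couplings.
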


The two terms in the error bound (\ref{eq:error-bound}) correspond
to discretization error and score matching error, respectively. A
few remarks are in order. 
\begin{itemize}
\item \textit{Sharp convergence guarantees.} Consider the setting with perfect
score estimation (i.e., $\varepsilon_{\mathsf{score}}=0$) and ignore
any log factor. Theorem~\ref{thm:main} reveals that the DDPM
sampler converges at the order of $O(d/T)$ in total variation distance.
This significantly improves the state-of-the-art convergence rate
$O(\sqrt{d/T})$ for the DDPM sampler \citep{benton2023linear}.
Turning to the DDIM sampler 
\begin{equation}
Y_{t-1}=\frac{1}{\sqrt{\alpha_{t}}}\big(Y_{t}+\frac{1-\alpha_{t}}{2}s_{t}(Y_{t})\big)\quad(t=T,\ldots,1),\qquad Y_{T}\sim\mathcal{N}(0,I_{d}),\label{eq:DDIM}
\end{equation}
\citet{li2024sharp} achieved the same convergence rate $O(d/T)$
in the regime $T\gg d^{2}$. Our result holds for general $T$ and
$d$, including the regime $T\asymp d$, hence is more general. 
\item \textit{Stability vis-à-vis imperfect score estimation.} The score
estimation error in (\ref{eq:error-bound}) is linear in $\varepsilon_{\mathsf{score}}$,
which suggests that the performance of the DDPM sampler degrades
gracefully when the score estimates become less accurate. In other
words, our theory holds with $\ell_{2}$-accurate score estimates,
which aligns with recent line of work on the DDPM sampler \citep{chen2022sampling,chen2022improved,benton2023linear}.
In comparison, the convergence bound in \citet{li2024sharp} for the
DDIM sampler reads 
\[
\mathsf{TV}(p_{X_{1}},p_{Y_{1}})\lesssim\frac{d}{T}+\sqrt{d}\varepsilon_{\mathsf{score}}+d\varepsilon_{\mathsf{Jacobi}}\quad\text{where}\quad\varepsilon_{\mathsf{Jacobi}}\coloneqq\frac{1}{T}\sum_{t=1}^{T}\mathbb{E}\Big[\Big\Vert\frac{\partial s_{t}^{\star}}{\partial x}(X_{t})-\frac{\partial s_{t}}{\partial x}(X_{t})\Big\Vert\Big],
\]
which exhibits worse stability against imperfect score estimation.
First, the dependency of their bound on $\varepsilon_{\mathsf{score}}$
is $\sqrt{d}$ times worse than ours. Second, their error bound involves
an additional term that is proportional to $\varepsilon_{\mathsf{Jacobi}}$,
which means that their theory requires the Jacobian of $s_{t}$ need
to be accurate in estimating the Jacobian of $s_{t}^{\star}$, which
is a stringent requirement. 
\item \textit{Minimal data assumption}. The only assumption required in
Theorem~\ref{thm:main} is that $M_{1}$ is at most polynomially
large in $T$. In fact, by slightly modifying the proof, we can further
relax Assumption~\ref{assumption:moment} to accommodate target data
distributions with polynomially large $\delta$-th order moment 
\[
M_{\delta}\coloneqq\big(\mathbb{E}[\Vert X_{0}\Vert_{2}^{\delta}]\big)^{1/\delta}\leq T^{c_{M}},
\]
for any constant $\delta>0$. The same error bound (\ref{eq:error-bound})
holds, provided that $T\gg\max\{1,\delta^{-1}\}d\log^{2}T$. 
\item \textit{Error metric.} Theorem~\ref{thm:main} provides convergence
guarantees to $p_{X_{1}}$ instead of the target data distribution
(i.e., the distribution of $X_{0}$), which is similar to the results
in e.g., \citet{chen2022improved,benton2023linear,li2023towards,li2024sharp}.
On one hand, since $X_{1}=\sqrt{1-\beta_{1}}X_{0}+\sqrt{\beta_{1}}$
and $\beta_{1}=T^{-c_{0}}$ is vanishingly small, the distributions
of $X_{1}$ and $X_{0}$ are exceedingly close. Hence $\mathsf{TV}(p_{X_{1}},p_{Y_{1}})$
is a valid error metric. On the other hand, the smoothness of $p_{X_{1}}$
allows us to circumvent imposing any Lipschitz assumption on the score
functions, which provides technical benefit for the analysis. 
\end{itemize}
It is worth noting that most previous studies on the convergence of
the DDPM sampler (e.g., \citet{chen2022sampling,chen2022improved,benton2023linear,li2023towards})
typically begin by upper bounding the squared TV error using the KL
divergence of the forward process from the reverse process. This is
done through the following argument: 
\begin{align}
\mathsf{TV}^{2}(p_{X_{1}},p_{Y_{1}}) & \leq\frac{1}{2}\mathsf{KL}\left(p_{X_{1}}\Vert p_{Y_{1}}\right)\leq\frac{1}{2}\mathsf{KL}\left(p_{X_{1},\ldots,X_{T}}\Vert p_{Y_{1},\ldots,Y_{T}}\right),\label{eq:error-decomposition}
\end{align}
where the first inequality follows from Pinsker's inequality and the
second from the data-processing inequality. The KL divergence on the
right-hand side of this bound is more tractable and can be further
bounded, for example, using Girsanov's theorem. In fact, \cite[Theorem 7]{chen2022sampling}
provides theoretical evidence that the KL divergence between the forward
and reverse processes is lower bound by $\Omega(d/T)$, even when
the target distribution is as simple as a standard Gaussian and perfect
score estimates are available. This suggests that such an approach
cannot yield error bounds better than $O(\sqrt{d/T})$ in general.

To achieve a sharper convergence rate, we take a different approach
by directly analyzing the total variation error without resorting
to intermediate KL divergence bounds. Specifically, we establish a
fine-grained recursive relation that tracks how the error $\mathsf{TV}(p_{X_{t}},p_{Y_{t}})$
propagates through the reverse process as $t$ decreases from $T$
to $1$. See Section~\ref{sec:proof-main} for more details.

\subsection{Adapting to unknown low-dimensional structure}

In this section, we will provide convergence guarantees for the DDPM
sampler when the target data distribution $p_{\mathsf{data}}$ exhibits
low-dimensional structures. This scenario is particularly important, as natural image data are often concentrated on or near low-dimensional manifolds  \citep{pope2021intrinsic,simoncelli2001natural}. To formalize this, we define the intrinsic dimension of $\mathcal{X}=\mathsf{supp}(p_{\mathsf{data}})$ as follows.

\begin{definition}[Intrinsic dimension] \label{defn:intrinsic}Fix
$\varepsilon=T^{-c_{\varepsilon}}$, where $c_{\varepsilon}>0$ is
some sufficiently large constant. We define the intrinsic dimension
of $\mathcal{X}$ to be some quantity $k>0$ such that 
\[
\log N_{\varepsilon}(\mathcal{X})\leq C_{\mathsf{cover}}k\log T
\]
for some universal constant $C_{\mathsf{cover}}>0$. \end{definition}

This definition relates the intrinsic dimension $k$ to the metric entropy of  $\mathcal{X}$ (see e.g., \citet{wainwright2019high})
and is standard in existing literature (e.g., \citet{li2024adapting,huang2024denoising}).
Importantly, it accommodates approximate low-dimensional structures by requiring only that $\mathcal{X}$ is concentrated near low-dimensional manifolds, which is more general than assuming exact low-dimensionality.

Recent work by \citet{li2024adapting} demonstrated that the following coefficient design is essential for achieving nearly dimension-free convergence bounds for the DDPM sampler:
\begin{equation}
\eta_{t}^{\star}=1-\alpha_{t}\qquad\text{and}\qquad\sigma_{t}^{\star2}=\frac{(1-\alpha_{t})(\alpha_{t}-\overline{\alpha}_{t})}{1-\overline{\alpha}_{t}}.\label{eq:coef-design}
\end{equation}
Specifically, Theorem 1 in \citet{li2024adapting} established that under this coefficient design, the DDPM sampler converges at a rate of $O(k^{2}/\sqrt{T})$ in TV distance. Furthermore, Theorem 2 provided evidence that \eqref{eq:coef-design} is the unique coefficient design enabling nearly (ambient) dimension-free convergence. 

Building on the techniques developed in the proof of Theorem~\ref{thm:main}, we strengthen this result by proving a faster $O(k/T)$ convergence bound under the same coefficient design.

\begin{theorem}\label{thm:main-low-d} Suppose that Assumption~\ref{assumption:moment}
holds. Take the coefficients of the DDPM sampler (\ref{eq:DDPM}) to be $\eta_{t}=\eta_{t}^{\star}$ and $\sigma_{t}^{2}=\sigma_{t}^{\star2}$
(cf.~(\ref{eq:coef-design})). Then there exists some universal constant
$c>0$ such that 
\begin{equation}
\mathsf{TV}(p_{X_{1}},p_{Y_{1}})\leq c\frac{k\log^{3}T}{T}+c\varepsilon_{\mathsf{score}}\sqrt{\log T},\label{eq:error-bound-low-d}
\end{equation}
where $k$ is the intrinsic dimension of $\mathcal{X}$ (see Definition~\ref{defn:intrinsic}).
\end{theorem}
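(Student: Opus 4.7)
The plan is to rerun the per-step TV analysis developed for Theorem~\ref{thm:main}, but to exploit the particular coefficient design (\ref{eq:coef-design}) together with an $\varepsilon$-net of $\mathcal{X}$ in order to replace every occurrence of the ambient dimension $d$ by the intrinsic dimension $k$. The data hypothesis is unchanged, and the score-matching contribution $\varepsilon_{\mathsf{score}}\sqrt{\log T}$ is inherited verbatim from Theorem~\ref{thm:main}, so only the discretization term requires a new argument.

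The starting observation is that $\sigma_t^{\star 2}$ is precisely the conditional variance of $X_{t-1}$ given $(X_t,X_0)$ in the forward process; equivalently, with the coefficients in (\ref{eq:coef-design}) the DDPM reverse kernel coincides with the true posterior transition $p_{X_{t-1}\mid X_t,X_0}$ once the score is substituted for the mean parametrization. Consequently, the one-step mismatch between the true reverse transition $p_{X_{t-1}\mid X_t}$ and the DDPM kernel is driven by the spread of the posterior $p_{X_0\mid X_t}$ rather than by the full Gaussian noise $(1-\alpha_t)I_d$. Redoing the per-step TV inequality with this refined comparison, I expect the discretization contribution at step $t$ to scale (up to log factors) like
\[
\frac{(1-\alpha_t)^2}{(1-\overline{\alpha}_t)^2}\,\mathbb{E}\bigl[\mathrm{tr}(\Sigma_t(X_t))\bigr],\qquad \Sigma_t(x)\coloneqq\Cov(X_0\mid X_t=x),
\]
rather than the $d\cdot(1-\alpha_t)^2/(1-\overline{\alpha}_t)^2$ term that arose in the proof of Theorem~\ref{thm:main}.

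It then remains to show $\mathbb{E}\bigl[\mathrm{tr}(\Sigma_t(X_t))\bigr]\lesssim(1-\overline{\alpha}_t)\,k\log T$. To do this I would invoke Definition~\ref{defn:intrinsic}: cover $\mathcal{X}$ by at most $\exp(C_{\mathsf{cover}}k\log T)$ balls of radius $\varepsilon=T^{-c_\varepsilon}$, and use Tweedie's formula together with Gaussian concentration to argue that on the high-probability event that $X_t$ is consistent with a particular cover cell, the posterior $p_{X_0\mid X_t}$ concentrates on that single cell up to errors polynomial in $T^{-c_\varepsilon}$; the log-partition-function fluctuation across competing cells is bounded by $\log N_\varepsilon(\mathcal{X})\lesssim k\log T$, which furnishes the required trace bound. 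Summing the resulting per-step contributions via the telescoping property of the schedule (Lemma~\ref{lemma:step-size}) and absorbing the negligible atypical-event mass through Assumption~\ref{assumption:moment} (exactly as in Theorem~\ref{thm:main}) yields the advertised $O(k\log^3 T/T)$ discretization rate.

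The main technical obstacle I anticipate is the early-time regime where $1-\overline{\alpha}_t\lesssim\varepsilon^2$: here the cover is too coarse to localize the posterior $p_{X_0\mid X_t}$, and the $k$-based argument breaks down. In that regime one would fall back on the $d$-based per-step bound from the proof of Theorem~\ref{thm:main}, and the task is to verify that under the schedule (\ref{eq:learning-rate}) the number of such steps, and the per-step contribution there, are small enough that the overall rate remains $O(k\log^3 T/T)$. Carefully reconciling these two regimes, and re-checking that the atypical-event and moment-tail terms continue to be absorbed by Assumption~\ref{assumption:moment} in the intrinsic-dimension setting, is where most of the work will lie.
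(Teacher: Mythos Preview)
Your high-level plan matches the paper's: redefine the typical set $\mathcal{E}_{t,1}$ via an $\varepsilon$-net of $\mathcal{X}$, rerun the recursive TV analysis of Section~\ref{sec:proof-main}, and show that the per-step discretization error is governed by $\mathsf{Tr}(I-J_t(X_t))$ (equivalently, by the posterior covariance of $X_0$ given $X_t$) rather than by $d$; then bound $\mathsf{Tr}(I-J_t)\lesssim k\log T$ on the new typical set using the net and Gaussian concentration. The score-error term does carry over essentially unchanged. Two points deserve correction or sharpening.

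First, your worry about an early-time regime $1-\overline{\alpha}_t\lesssim\varepsilon^2$ is vacuous. Definition~\ref{defn:intrinsic} takes $\varepsilon=T^{-c_\varepsilon}$ with $c_\varepsilon$ arbitrarily large, while the schedule guarantees $1-\overline{\alpha}_t\geq\beta_1=T^{-c_0}$ for every $t\geq 1$. Choosing $c_\varepsilon$ large relative to $c_0$ (the paper records this as condition~(\ref{eq:eps-condition})) makes the net fine enough at every step, so no fallback to the $d$-based bound is needed and no two-regime splicing arises.

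Second, you underweight the genuine technical obstacle. Even on the net-based typical set one still has $\|s_t^\star(x_t)\|_2\asymp\sqrt{d/(1-\overline{\alpha}_t)}$, not $\sqrt{k/(1-\overline{\alpha}_t)}$; consequently, if any $\|s_t^\star\|_2^2$ term survives in the per-step error (as it does in Lemma~\ref{lemma:det} for the choice $\sigma_t^2=1-\alpha_t$), the bound reverts to $d$. The paper flags this explicitly as ``the key difficulty,'' and the role of the specific $\sigma_t^{\star2}$ in~(\ref{eq:coef-design}) is precisely that in the analogue of Lemma~\ref{lemma:det} (namely Lemma~\ref{lemma:det-low-d}) the $\|s_t^\star\|_2^2$ contributions cancel algebraically, leaving only $|\mathsf{Tr}(I-J_t)|$ and $\|I-J_t\|_{\mathrm{F}}^2$. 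Your posterior-matching observation is exactly the reason this cancellation occurs, but when you redo the density-ratio and change-of-variables calculation you should expect to verify it line by line rather than infer it. Relatedly, besides the pointwise trace bound you also need the summed estimate $\sum_t\frac{1-\alpha_t}{1-\overline{\alpha}_t}\,\mathbb{E}\|I-J_t(X_t)\|_{\mathrm{F}}^2\lesssim k\log T$ (the intrinsic-dimension analogue of Lemma~\ref{lemma:jacob-sum}), which is a separate ingredient your sketch does not mention.
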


Consider the setup with perfect score estimation (i.e., $\varepsilon_{\mathsf{score}}=0$)
and disregard log factors. Theorem~\ref{thm:main-low-d} demonstrates that, under the coefficient design in (\ref{eq:coef-design}), the convergence rate of the DDPM sampler is $O(k/T)$, extending Theorem~\ref{thm:main} to target data distributions with low-dimensional structure. While the importance of this coefficient design for achieving ambient dimension-free convergence is not new (see \citet{li2024adapting}), our result significantly improves upon prior rates, which are of order $O(\sqrt{\mathsf{poly}(k)/T})$ \citep{li2024adapting,azangulov2024convergence,huang2024denoising,potaptchik2024linear}. For a detailed comparison, refer to Table~\ref{table:comparison-low-d}.

\begin{table}
\centering\label{table:comparison-low-d}%

\begin{tabular}{|c|c|c|c|}
	\hline 
	\multirow{2}{*}{Sampler} & Convergence rate & Data assumption & Intrinsic dimension $k$ \tabularnewline
	& (in TV distance) & ($X_{0}\sim p_{\mathsf{data}}$) & of $\mathcal{X}=\mathsf{supp}(p_\mathsf{data})$ \tabularnewline
	\hline 
	DDPM & \multirow{2}{*}{$\sqrt{k^{4}/T}$} & \multirow{2}{*}{bounded support} & metric entropy \tabularnewline
	\citep{li2024adapting} &  &  & (Definition~\ref{defn:intrinsic}) \tabularnewline
	\hline 
	DDPM & \multirow{2}{*}{$\sqrt{k^{3}/T}$} & bounded support; & \multirow{2}{*}{manifold dimension}\tabularnewline
	\citep{azangulov2024convergence} & & smooth density $p_{\mathsf{data}} \asymp 1$ & \tabularnewline
	\hline 
	DDPM & \multirow{2}{*}{$\sqrt{k/T}$} & bounded support; & \multirow{2}{*}{manifold dimension}\tabularnewline
	\citep{potaptchik2024linear} &  & smooth density $p_{\mathsf{data}} \asymp 1$ & \tabularnewline
	\hline 
	DDPM & \multirow{2}{*}{$\sqrt{k/T}$} & \multirow{2}{*}{bounded support} & metric entropy \tabularnewline
	\citep{huang2024denoising} &  & & (Definition~\ref{defn:intrinsic})\tabularnewline
	\hline 
	{\cellcolor{Gray}DDPM} & {\cellcolor{Gray}} & {\cellcolor{Gray}}  & {\cellcolor{Gray} metric entropy}\tabularnewline
	{\cellcolor{Gray}\textbf{(this paper)}} & {\cellcolor{Gray}\multirow{-2}{*}{$k/T$}} & {\cellcolor{Gray}\multirow{-2}{*}{$\mathbb{E}[\Vert X_{0}\Vert_{2}]<\infty$}} & {\cellcolor{Gray} (Definition~\ref{defn:intrinsic})} \tabularnewline
	\hline 
\end{tabular}

\caption{Comparison with prior convergence rates (ignoring log factors) for the DDPM sampler that adapts to intrinsic low-dimensional structures. Convergence rates in KL divergence are transferred to TV distance using Pinsker's inequality.}
\end{table}

\section{Proof of Theorem~\ref{thm:main} \protect\protect\label{sec:proof-main}}

\subsection{Preliminaries}

For each $1\leq t\leq T$ and any $x\in\mathbb{R}^{d}$, it is known
that the score function $s_{t}^{\star}(x)$ associated with $p_{X_{t}}$
admits the following expression 
\[
s_{t}^{\star}(x)=-\frac{1}{1-\overline{\alpha}_{t}}\int p_{X_{0}|X_{t}}(x_{0}\mymid x)\big(x-\sqrt{\overline{\alpha}_{t}}x_{0}\big)\mathrm{d}x_{0}\eqqcolon-\frac{1}{1-\overline{\alpha}_{t}}g_{t}(x).
\]
Let $J_{t}(x)=\partial g_{t}(x)/\partial x$ be the Jacobian matrix
of $g_{t}(x)$, which can be expressed as 
\begin{align}
J_{t}(x) & =I+\frac{1}{1-\overline{\alpha}_{t}}\bigg\{\Big(\int_{x_{0}}p_{X_{0}|X_{t}}(x_{0}\mymid x)\big(x-\sqrt{\overline{\alpha}_{t}}x_{0}\big)\mathrm{d}x_{0}\Big)\Big(\int_{x_{0}}p_{X_{0}|X_{t}}(x_{0}\mymid x)\big(x-\sqrt{\overline{\alpha}_{t}}x_{0}\big)\mathrm{d}x_{0}\Big)^{\top}\nonumber \\
 & \qquad\qquad\qquad\quad-\int_{x_{0}}p_{X_{0}|X_{t}}(x_{0}\mymid x)\big(x-\sqrt{\overline{\alpha}_{t}}x_{0}\big)\big(x-\sqrt{\overline{\alpha}_{t}}x_{0}\big)^{\top}\mathrm{d}x_{0}\bigg\}.\label{eq:Jt-defn}
\end{align}
It is straightforward to check that $I-J_{t}(x_{t})\succeq0$. The
following lemma will be useful in the analysis.

\begin{lemma} \label{lem:typical} Suppose that $x\in\mathbb{R}^{d}$
satisfies $-\log p_{X_{t}}(x)\le\theta d\log T$ for any given $\theta\ge1$.
Then we have 
\[
\|s_{t}^{\star}(x)\|_{2}\leq5\sqrt{\frac{(\theta+c_{0})d\log T}{1-\overline{\alpha}_{t}}}\qquad\text{and}\qquad\mathsf{Tr}(I-J_{t}(x))\leq12(\theta+c_{0})d\log T,
\]
where the constant $c_{0}>0$ is defined in (\ref{eq:learning-rate}).
In addition, there exists universal constant $C_{0}>0$ such that
\[
\sum_{t=2}^{T}\frac{1-\alpha_{t}}{1-\overline{\alpha}_{t}}\int_{x_{t}}\Vert J_{t}(x_{t})\Vert_{\mathrm{F}}^{2}\,p_{X_{t}}(x_{t})\mathrm{d}x_{t}\leq C_{0}d\log T.
\]
\end{lemma}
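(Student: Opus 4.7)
The lemma splits into three claims. For the pointwise bounds on $\|s_{t}^{\star}(x)\|_{2}$ and $\mathsf{Tr}(I-J_{t}(x))$, writing $Y=X_{t}-\sqrt{\overline{\alpha}_{t}}X_{0}$, Jensen's inequality together with the identity $I-J_{t}(x)=(1-\overline{\alpha}_{t})^{-1}\mathrm{Cov}(Y\mid X_{t}=x)\succeq 0$ yields
\[
\|s_{t}^{\star}(x)\|_{2}^{2}\le\frac{\mathbb{E}[\|Y\|_{2}^{2}\mid X_{t}=x]}{(1-\overline{\alpha}_{t})^{2}},\qquad\mathsf{Tr}(I-J_{t}(x))\le\frac{\mathbb{E}[\|Y\|_{2}^{2}\mid X_{t}=x]}{1-\overline{\alpha}_{t}},
\]
so both reduce to a uniform control of the conditional second moment on the typical set $\{p_{X_{t}}(x)\ge T^{-\theta d}\}$. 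I would truncate at $R^{2}=C(\theta+c_{0})(1-\overline{\alpha}_{t})d\log T$: the near region $\{\|Y\|_{2}\le R\}$ contributes at most $R^{2}$, and on $\{\|Y\|_{2}>R\}$ the Gaussian kernel $(2\pi(1-\overline{\alpha}_{t}))^{-d/2}e^{-\|Y\|^{2}/(2(1-\overline{\alpha}_{t}))}$ -- splitting the exponent in half to absorb the algebraic factor $\|Y\|^{2}$ -- combined with the density lower bound $p_{X_{t}}(x)\ge T^{-\theta d}$ and $1-\overline{\alpha}_{t}\ge\beta_{1}=T^{-c_{0}}$ (so that $(2\pi(1-\overline{\alpha}_{t}))^{-d/2}\le T^{c_{0}d/2}$) gives an $O(1)$ far-region contribution once $C$ is sufficiently large relative to $\theta+c_{0}$. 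The advertised constants $5$ and $12$ follow by picking $C$ generously.

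The third claim is the analytical core, and my plan is to reduce it to a de Bruijn-type identity for Fisher information along the heat semigroup. Pass to the rescaled process $\widetilde{X}_{t}=X_{t}/\sqrt{\overline{\alpha}_{t}}=X_{0}+\sqrt{\tau_{t}}\,\overline{W}_{t}$ with $\tau_{t}=(1-\overline{\alpha}_{t})/\overline{\alpha}_{t}$; its density $q_{\tau_{t}}=p_{\mathsf{data}}*\phi_{\tau_{t}}$ (with $\phi_{\tau}$ the density of $\mathcal{N}(0,\tau I)$) satisfies $\partial_{\tau}q_{\tau}=\tfrac{1}{2}\Delta q_{\tau}$. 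Since $J_{t}=-(1-\overline{\alpha}_{t})\nabla^{2}\log p_{X_{t}}$ and $\nabla^{2}\log p_{X_{t}}(x)=\overline{\alpha}_{t}^{-1}\nabla^{2}\log q_{\tau_{t}}(x/\sqrt{\overline{\alpha}_{t}})$, a change of variables gives $\int\|J_{t}(x_{t})\|_{\mathrm{F}}^{2}p_{X_{t}}(x_{t})\,\mathrm{d}x_{t}=\tau_{t}^{2}\,f(\tau_{t})$ with $f(\tau):=\int\|\nabla^{2}\log q_{\tau}\|_{\mathrm{F}}^{2}q_{\tau}\,\mathrm{d}y$, and using $\Delta\tau_{t}:=\tau_{t}-\tau_{t-1}=(1-\alpha_{t})/\overline{\alpha}_{t}$ the target sum becomes $\sum_{t=2}^{T}\Delta\tau_{t}\,\tau_{t}\,f(\tau_{t})$. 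The de Bruijn identity $\partial_{\tau}I(q_{\tau})=-f(\tau)$ (derived by differentiating $I(q_{\tau})=\int\|\nabla\log q_{\tau}\|^{2}q_{\tau}$ and using the heat equation with two integrations by parts), together with the classical convexity of $\tau\mapsto I(q_{\tau})$ along the heat flow -- which makes $f$ non-increasing and hence $\Delta\tau_{t}f(\tau_{t})\le\int_{\tau_{t-1}}^{\tau_{t}}f(\tau)\,\mathrm{d}\tau=I(q_{\tau_{t-1}})-I(q_{\tau_{t}})$ -- reduces the task after one summation by parts to bounding $\tau_{2}I(q_{\tau_{1}})+\sum_{t}\Delta\tau_{t+1}I(q_{\tau_{t}})$. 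The universal estimate $I(q_{\tau})\le d/\tau$ (immediate from $\nabla\log q_{\tau}(y)=-\mathbb{E}[Z\mid Y=y]/\sqrt{\tau}$ and Jensen), combined with the step-size bound $\sum_{t}\Delta\tau_{t+1}/\tau_{t}\lesssim\sum_{t}\beta_{t+1}/(1-\overline{\alpha}_{t})\lesssim\log T$ from Lemma~\ref{lemma:step-size}, then delivers the claimed $C_{0}d\log T$.

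The main obstacle, as I see it, is the Riemann-sum comparison in the third part: it relies on the convexity of $I(q_{\tau})$ along the heat semigroup, a classical property but one that warrants a careful verification or precise citation. The truncation argument in parts (1)--(2), the change of variables to $q_{\tau}$, the de Bruijn derivation, and the summation-by-parts bookkeeping are otherwise routine given Lemma~\ref{lemma:step-size}.
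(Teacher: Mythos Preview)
Your argument for the two pointwise bounds is essentially the paper's: both truncate at radius $R$ with $R^{2}\asymp(\theta+c_{0})(1-\overline{\alpha}_{t})d\log T$, use $-\log p_{X_{t}}(x)\le\theta d\log T$ together with $1-\overline{\alpha}_{t}\ge\beta_{1}=T^{-c_{0}}$ to absorb the Gaussian normalizing constant, and kill the far region with the residual exponential decay.

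For the third claim your de~Bruijn route is genuinely different from the paper's, and the gap lies exactly where you suspected. The identity $\partial_{\tau}I(q_{\tau})=-f(\tau)$ and the continuous estimate $\int_{\tau_{1}}^{\tau_{T}}\tau f(\tau)\,\mathrm{d}\tau\lesssim d\log T$ are both correct. But convexity of $\tau\mapsto I(q_{\tau})$ along the heat flow is \emph{not} a classical fact: since $H'(\tau)=\tfrac12 I(q_{\tau})$, convexity of $I$ is equivalent to $H'''\ge 0$, i.e.\ the third-derivative sign in the ``complete monotonicity'' problem for entropy under heat flow. This is known in dimension one but, to my knowledge, remains open for general $d\ge 2$; it is not something you can simply cite. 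Without monotonicity of $f$ you have no a~priori control of the Riemann sum $\sum_{t}\Delta\tau_{t}\,\tau_{t}f(\tau_{t})$ by the integral, since $f$ could concentrate on individual subintervals. The paper avoids this issue entirely: it splits $\|J_{t}\|_{\mathrm{F}}^{2}\le 2\|I-J_{t}\|_{\mathrm{F}}^{2}+2d$ and, for the nontrivial piece $\sum_{t}\frac{1-\alpha_{t}}{1-\overline{\alpha}_{t}}\mathbb{E}\|I-J_{t}\|_{\mathrm{F}}^{2}=\sum_{t}\frac{1-\alpha_{t}}{1-\overline{\alpha}_{t}}\mathsf{Tr}\,\mathbb{E}[\Sigma_{\overline{\alpha}_{t}}^{2}]$, invokes Lemma~\ref{lemma:jacob-sum}, whose proof (deferred to \citet{li2024sharp}) rests on a \emph{local comparability} estimate $\mathbb{E}[\Sigma_{\overline{\alpha}'}^{2}]\preceq C\,\mathbb{E}[\Sigma_{\overline{\alpha}}^{2}]+\text{(negligible)}$ for nearby $\overline{\alpha},\overline{\alpha}'$. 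That comparability is precisely the regularity needed to pass from the discrete sum to the integral, and it does not follow from any sign condition on $I''$.
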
\begin{proof}See Appendix~\ref{sec:proof-lemma-typical}.\end{proof}

For some sufficiently large constants $C_{1},C_{2}>0$, we define
for each $2\leq t\leq T$ the set
\begin{align}
\mathcal{E}_{t,1} & \coloneqq\big\{ x_{t}:-\log p_{X_{t}}(x_{t})\leq C_{1}d\log T,\|x_{t}\|_{2}\leq\sqrt{\overline{\alpha}_{t}}T^{2c_{R}}+C_{2}\sqrt{d(1-\overline{\alpha}_{t})\log T}\big\}\label{eq:defn-E-t-1}
\end{align}
Define the extended $d$-dimensional Euclidean space $\mathbb{R}^{d}\cup\{\infty\}$
by adding a point $\infty$ to $\mathbb{R}^{d}$. From now on, the
random vectors can take value in $\mathbb{R}^{d}\cup\{\infty\}$,
namely, they can be constructed in the following way: 
\[
X=\begin{cases}
X', & \text{with probability }\theta,\\
\infty, & \text{with probability }1-\theta,
\end{cases}
\]
where $\theta\in[0,1]$ and $X'$ is a random vector in $\mathbb{R}^{d}$
in the usual sense. If $X'$ has a density, denoted by $p_{X'}(\cdot)$,
then the generalized density of $X$ is 
\[
p_{X}(x)=\theta p_{X'}(x)\ind\{x\in\mathbb{R}^{d}\}+(1-\theta)\delta_{\infty}.
\]
To simplify presentation, we will abbreviate generalized density to
density.

\subsection{Step 1: introducing auxiliary sequences}

We first define an auxiliary reverse process that uses the true score
function: 
\begin{equation}
Y_{T}^{\star}\sim\mathcal{N}(0,I_{d}),\qquad Y_{t-1}^{\star}=\frac{1}{\sqrt{\alpha_{t}}}\Big(Y_{t}^{\star}+(1-\alpha_{t})s_{t}^{\star}(Y_{t}^{\star})+\sqrt{1-\alpha_{t}}Z_{t}\Big)\quad\text{for }t=T,\ldots,1.\label{eq:SDE-Ystar}
\end{equation}
To control discretization error, we introduce an auxiliary sequence
$\{\overline{Y}_{t}:t=T,\ldots,1\}$ along with intermediate variables
$\{\overline{Y}_{t}^{-}:t=T,\ldots,1\}$ as follows. \begin{subequations}\label{subeq:defn-Y-bar}
\begin{enumerate}
\item (Initialization) Define $\overline{Y}_{T}^{-}=Y_{T}$ if $Y_{T}\in\mathcal{E}_{T,1}$
and $\overline{Y}_{T}^{-}=\infty$ otherwise. The density of $\overline{Y}_{T}^{-}$
is 
\begin{equation}
p_{\overline{Y}_{T}^{-}}(y_{T}^{-})=p_{Y_{T}}(y_{T}^{-})\ind\big\{ y_{T}^{-}\in\mathcal{E}_{T,1}\big\}+\int_{y\in\mathcal{E}_{T,1}^{\mathrm{c}}}p_{Y_{T}}(y)\mathrm{d}y\delta_{\infty}.\label{eq:transition-YTbarminus}
\end{equation}
\item (Transition from $\overline{Y}_{t}^{-}$ to $\overline{Y}_{t}$) For
$t=T,\ldots,1$, the conditional density of $\overline{Y}_{t}$ given
$\overline{Y}_{t}^{-}=y_{t}^{-}$ is 
\begin{equation}
p_{\overline{Y}_{t}|\overline{Y}_{t}^{-}}(y_{t}\mymid y_{t}^{-})=\min\big\{ p_{X_{t}}(y_{t}^{-})/p_{\overline{Y}_{t}^{-}}(y_{t}^{-}),1\big\}\delta_{y_{t}^{-}}+\big(1-\min\big\{ p_{X_{t}}(y_{t}^{-})/p_{\overline{Y}_{t}^{-}}(y_{t}^{-}),1\big\}\big)\delta_{\infty}.\label{eq:transition-Ytbarminus-Ytbar}
\end{equation}
\item (Transition from $\overline{Y}_{t}$ to $\overline{Y}_{t-1}^{-}$)
For $t=T,\ldots,2$, the conditional density of $\overline{Y}_{t-1}^{-}$
given $\overline{Y}_{t}=y_{t}$ is defined as follows: if $y_{t}\in\mathcal{E}_{t,1}$,
then 
\begin{equation}
p_{\overline{Y}_{t-1}^{-}|\overline{Y}_{t}}(y_{t-1}^{-}\mymid y_{t})=p_{Y_{t-1}^{\star}|Y_{t}^{\star}}(y_{t-1}^{-}\mymid y_{t});\label{eq:transition-Ytbar-Yt-1barminus}
\end{equation}
otherwise, we let $p_{\overline{Y}_{t-1}^{-}|\overline{Y}_{t}}(y_{t-1}^{-}\mymid y_{t})=\delta_{\infty}$. 
\end{enumerate}
\end{subequations}This defines a Markov chain 
\begin{equation}
Y_{T}\to\overline{Y}_{T}^{-}\to\overline{Y}_{T}\to\overline{Y}_{T-1}^{-}\to\overline{Y}_{T-1}\to\cdots\to\overline{Y}_{1}^{-}\to\overline{Y}_{1}.\label{eq:Ybar-chain}
\end{equation}
An important consequence of the construction (\ref{eq:transition-Ytbarminus-Ytbar})
is that, for any $y_{t}\neq\infty$, 
\begin{align}
p_{\overline{Y}_{t}}(y_{t}) & =\int_{\mathbb{R}^{d}}p_{\overline{Y}_{t}|\overline{Y}_{t}^{-}}(y_{t}\mymid y_{t}^{-})p_{\overline{Y}_{t}^{-}}(y_{t}^{-})\mathrm{d}y_{t}^{-}=\min\big\{ p_{X_{t}}(y_{t}),p_{\overline{Y}_{t}^{-}}(y_{t})\big\}.\label{eq:transition-fact-1}
\end{align}
To control estimation error, we introduce another auxiliary sequence
$\{\widehat{Y}_{t}:t=T,\ldots,1\}$ along with intermediate variables
$\{\widehat{Y}_{t}^{-}:t=T,\ldots,1\}$ as follows. \begin{subequations}\label{subeq:defn-Y-hat}
\begin{enumerate}
\item (Initialization) Let $\widehat{Y}_{T}^{-}=\overline{Y}_{T}^{-}$. 
\item (Transition from $\widehat{Y}_{t}^{-}$ to $\widehat{Y}_{t}$) For
$t=T,\ldots,1$, the conditional density of $\widehat{Y}_{t}$ given
$\widehat{Y}_{t}^{-}=y_{t}^{-}$ is 
\begin{equation}
p_{\widehat{Y}_{t}|\widehat{Y}_{t}^{-}}(y_{t}\mymid y_{t}^{-})=p_{\overline{Y}_{t}|\overline{Y}_{t}^{-}}(y_{t}\mymid y_{t}^{-}).\label{eq:transition-Ythatminus-Ythat}
\end{equation}
\item (Transition from $\widehat{Y}_{t}$ to $\widehat{Y}_{t-1}^{-}$) For
$t=T,\ldots,2$, the conditional density of $\widehat{Y}_{t-1}^{-}$
given $\widehat{Y}_{t}=y_{t}$ is defined as follows: if $y_{t}\in\mathcal{E}_{t,1}$,
then 
\begin{equation}
p_{\widehat{Y}_{t-1}^{-}|\widehat{Y}_{t}}(y_{t-1}^{-}\mymid y_{t})=p_{Y_{t-1}|Y_{t}}(y_{t-1}^{-}\mymid y_{t});\label{eq:transition-Ythat-Yt-1hatminus}
\end{equation}
otherwise, we let $p_{\widehat{Y}_{t-1}^{-}|\widehat{Y}_{t}}(y_{t-1}^{-}\mymid y_{t})=\delta_{\infty}$. 
\end{enumerate}
\end{subequations} This defines another Markov chain 
\begin{equation}
Y_{T}\to\widehat{Y}_{T}^{-}\to\widehat{Y}_{T}\to\widehat{Y}_{T-1}^{-}\to\widehat{Y}_{T-1}\to\cdots\to\widehat{Y}_{1}^{-}\to\widehat{Y}_{1},\label{eq:Yhat-chain}
\end{equation}
which is similar to (\ref{eq:Ybar-chain}) except that now the transitions
from $\widehat{Y}_{t}$ to $\widehat{Y}_{t-1}^{-}$ are constructed
using the estimated score functions. We can use induction to show
that 
\begin{equation}
p_{Y_{t}}(y_{t})\ge p_{\widehat{Y}_{t}}(y_{t}),\qquad\forall\,y_{t}\ne\infty\label{eq:transition-fact-2}
\end{equation}
holds for all $t=T,\ldots,1$. First, it is straightforward to check
that (\ref{eq:transition-fact-2}) holds for $t=T$. Suppose that
(\ref{eq:transition-fact-2}) holds for $t+1$. Then for any $y_{t}\neq\infty$,
we have 
\begin{align*}
p_{\widehat{Y}_{t}}(y_{t}) & =\int_{\mathbb{R}^{d}}p_{\widehat{Y}_{t}|\widehat{Y}_{t}^{-}}(y_{t}\mymid y_{t}^{-})p_{\widehat{Y}_{t}^{-}}(y_{t}^{-})\mathrm{d}y_{t}^{-}\overset{\text{(i)}}{=}\min\big\{ p_{X_{t}}(y_{t})/p_{\overline{Y}_{t}^{-}}(y_{t}),1\big\} p_{\widehat{Y}_{t}^{-}}(y_{t})\leq p_{\widehat{Y}_{t}^{-}}(y_{t})\\
 & =\int_{\mathbb{R}^{d}}p_{\widehat{Y}_{t}^{-}|\widehat{Y}_{t+1}}(y_{t}\mymid y_{t+1})p_{\widehat{Y}_{t+1}}(y_{t+1})\mathrm{d}y_{t+1}\overset{\text{(ii)}}{\leq}\int p_{Y_{t}|Y_{t+1}}(y_{t}\mymid y_{t+1})p_{Y_{t+1}}(y_{t+1})\mathrm{d}y_{t+1}=p_{Y_{t}}(y_{t}).
\end{align*}
Here step (i) follows from (\ref{eq:transition-Ythatminus-Ythat})
and (\ref{eq:transition-Ytbarminus-Ytbar}), while step (ii) follows
from the induction hypothesis and (\ref{eq:transition-Ythat-Yt-1hatminus}).

\subsection{Step 2: controlling discretization error\protect\label{subsec:proof-main-discretization}}

In this section, we will bound the total variation distance between
$p_{X_{1}}$ and $p_{\overline{Y}_{1}}$. For each $t=T,\ldots,1$,
let 
\begin{equation}
\Delta_{t}(x)\coloneqq p_{X_{t}}(x)-p_{\overline{Y}_{t}}(x),\qquad\forall\,x\in\mathbb{R}^{d}.\label{eq:induction-1}
\end{equation}
We emphasize that $\Delta_{t}(\cdot)$ is not defined at $\infty$.
In view of (\ref{eq:transition-fact-1}), we know that $\Delta_{t}(x_{t})\ge0$
for any $x_{t}\neq\infty$. The following lemma characterizes the
propagation of the error $\int\Delta_{t}(x)\mathrm{d}x$ through the
reverse process.

\begin{lemma}\label{lemma:recursion} There exists some universal
constant $C_{4}>0$ such that, for $t=T,\ldots,2$, 
\[
\int\Delta_{t-1}(x)\mathrm{d}x\le\int\Delta_{t}(x)\mathrm{d}x+C_{4}\Big(\frac{1-\alpha_{t}}{1-\overline{\alpha}_{t}}\Big)^{2}\int_{x_{t}\in\mathcal{E}_{t,1}}\big(d\log T+\|J_{t}(x_{t})\|_{\mathsf{F}}^{2}\big)p_{X_{t}}(x_{t})\mathrm{d}x_{t}+T^{-3}.
\]
In addition, we have $\int\Delta_{T}(x)\mathrm{d}x\leq T^{-4}$. \end{lemma}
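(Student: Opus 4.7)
The lemma asserts two things: the terminal bound $\int \Delta_T \mathrm{d}x \leq T^{-4}$ and the one-step recursion. For the terminal bound, observe that by construction~(\ref{subeq:defn-Y-bar}), $p_{\overline{Y}_T^-}(y) = p_{Y_T}(y) \ind\{y \in \mathcal{E}_{T,1}\}$ for $y \neq \infty$, so~(\ref{eq:transition-fact-1}) gives $p_{\overline{Y}_T}(y) = \min\{p_{X_T}(y), p_{Y_T}(y)\} \ind\{y \in \mathcal{E}_{T,1}\}$. Hence $\int \Delta_T \mathrm{d}x \leq \mathsf{TV}(p_{X_T}, \mathcal{N}(0, I_d)) + \mathbb{P}(X_T \notin \mathcal{E}_{T,1})$. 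Since $\overline{\alpha}_T$ is super-polynomially small by~(\ref{eq:learning-rate}), the first term is $T^{-4}$-small; the tail probability is likewise $T^{-4}$-small via Markov's inequality applied to $\|X_0\|_2$ (Assumption~\ref{assumption:moment}), $\chi_d^2$-concentration on $\overline{W}_T$, and a high-probability upper bound on $-\log p_{X_T}(X_T)$.

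For the recursive step, the starting identity is $\Delta_t(y) = (p_{X_t}(y) - p_{\overline{Y}_t^-}(y))_+$ for $y \neq \infty$, which follows directly from~(\ref{eq:transition-fact-1}). Substituting $p_{\overline{Y}_t}(y_t) = p_{X_t}(y_t) - \Delta_t(y_t)$ into the transition~(\ref{eq:transition-Ytbar-Yt-1barminus}) gives
\[
p_{\overline{Y}_{t-1}^-}(y) \geq \int_{y_t \in \mathcal{E}_{t,1}} p_{Y^\star_{t-1}|Y^\star_t}(y \mymid y_t)\, p_{X_t}(y_t) \mathrm{d}y_t - \int p_{Y^\star_{t-1}|Y^\star_t}(y \mymid y_t)\, \Delta_t(y_t) \mathrm{d}y_t.
\]
Subtracting this from $p_{X_{t-1}}(y) = \int p_{X_{t-1}|X_t}(y \mymid y_t) p_{X_t}(y_t) \mathrm{d}y_t$, taking the positive part, integrating over $y$, and applying Fubini to the $\Delta_t$-term, I obtain
\[
\int \Delta_{t-1}(x) \mathrm{d}x \leq \int \Delta_t(x) \mathrm{d}x + \mathbb{P}(X_t \notin \mathcal{E}_{t,1}) + 2 \int_{\mathcal{E}_{t,1}} p_{X_t}(y_t) \mathsf{TV}\bigl(p_{X_{t-1}|X_t=y_t},\, p_{Y^\star_{t-1}|Y^\star_t=y_t}\bigr) \mathrm{d}y_t.
\]
The tail probability is again $O(T^{-3})$ by the same concentration tools, reducing everything to a pointwise TV bound.

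The core technical claim is the estimate
\[
\mathsf{TV}\bigl(p_{X_{t-1}|X_t=y_t},\, p_{Y^\star_{t-1}|Y^\star_t=y_t}\bigr) \lesssim \Bigl(\frac{1-\alpha_t}{1-\overline{\alpha}_t}\Bigr)^{2} \bigl(d \log T + \|J_t(y_t)\|_{\mathrm{F}}^2\bigr)
\]
for every $y_t \in \mathcal{E}_{t,1}$, which upon integration against $p_{X_t}$ yields the claimed form. I would establish it by representing the true posterior via Gaussian conditioning on $(X_{t-1}, X_t) \mymid X_0$ as a Gaussian mixture
\[
p_{X_{t-1}|X_t=y_t}(x) = \int p_{X_0|X_t=y_t}(x_0)\, \mathcal{N}\bigl(x;\, m_t(x_0, y_t),\, v_t I_d\bigr) \mathrm{d}x_0,
\]
with $v_t = (1-\alpha_t)(1-\overline{\alpha}_{t-1})/(1-\overline{\alpha}_t)$ and affine mixture means $m_t(x_0, y_t)$. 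Tweedie's identity forces the mixture mean to equal $\mu_t^\star(y_t) = (y_t + (1-\alpha_t) s_t^\star(y_t))/\sqrt{\alpha_t}$, matching the mean of the Gaussian approximation $p_{Y^\star_{t-1}|Y^\star_t=y_t} = \mathcal{N}(\mu_t^\star(y_t), \frac{1-\alpha_t}{\alpha_t} I_d)$; a direct computation based on~(\ref{eq:Jt-defn}) shows the two covariances differ by only $\frac{(1-\alpha_t)^2}{\alpha_t(1-\overline{\alpha}_t)} J_t(y_t)$, a relative perturbation of order $(1-\alpha_t)/(1-\overline{\alpha}_t)$. Exploiting this exact first-moment matching together with the small covariance discrepancy, I would Taylor-expand the density ratio around $x = \mu_t^\star(y_t)$ and control the resulting integrals using the score and Jacobian bounds $\|s_t^\star(y_t)\|_2 \lesssim \sqrt{d\log T/(1-\overline{\alpha}_t)}$ and $\mathrm{Tr}(I - J_t(y_t)) \lesssim d\log T$ guaranteed on $\mathcal{E}_{t,1}$ by Lemma~\ref{lem:typical}.

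The Taylor step is the main obstacle. The naive route through Pinsker's inequality yields only $\mathsf{KL} \lesssim (1-\alpha_t)/(1-\overline{\alpha}_t)$ and hence $\mathsf{TV} \lesssim \sqrt{(1-\alpha_t)/(1-\overline{\alpha}_t)}$, which after summation recovers only the $\sqrt{d/T}$ rate of \citet{benton2023linear}. The acceleration to $d/T$ comes precisely from Tweedie's first-moment matching, which cancels the leading-order TV contribution and promotes it to quadratic order in $(1-\alpha_t)/(1-\overline{\alpha}_t)$; executing this rigorously requires careful bookkeeping of the third-moment residuals of the Gaussian mixture and the higher-order covariance correction, both tamed on $\mathcal{E}_{t,1}$ via Lemma~\ref{lem:typical}.
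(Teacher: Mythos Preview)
Your reduction to a one-step error and the terminal bound are fine, and the observation that Tweedie forces the mixture mean of $p_{X_{t-1}\mid X_t=y_t}$ to coincide with the DDPM mean $\mu_t^\star(y_t)$ is correct and relevant. The gap is in the central claim
\[
\mathsf{TV}\bigl(p_{X_{t-1}\mid X_t=y_t},\,p_{Y^\star_{t-1}\mid Y^\star_t=y_t}\bigr)\;\lesssim\;\Bigl(\tfrac{1-\alpha_t}{1-\overline{\alpha}_t}\Bigr)^{2}\bigl(d\log T+\|J_t(y_t)\|_{\mathrm F}^{2}\bigr),
\]
which is false in general. Take $p_{\mathsf{data}}$ a point mass at $x_0^\star$; then the posterior $p_{X_0\mid X_t=y_t}=\delta_{x_0^\star}$, so $p_{X_{t-1}\mid X_t=y_t}=\mathcal N(\mu_t^\star(y_t),v_tI_d)$ is a single Gaussian, $I-J_t(y_t)=0$ and $\|J_t(y_t)\|_{\mathrm F}^2=d$. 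The two laws are Gaussians with identical mean and isotropic variances $v_t$ versus $(1-\alpha_t)/\alpha_t$, whose ratio differs from $1$ by $\frac{1-\alpha_t}{1-\overline{\alpha}_t}(1+o(1))$; hence
\[
\mathsf{TV}\asymp \sqrt{d}\,\tfrac{1-\alpha_t}{1-\overline{\alpha}_t},
\]
whereas your right-hand side is $\asymp d\log T\,(\tfrac{1-\alpha_t}{1-\overline{\alpha}_t})^{2}$. For $T\gg\sqrt d\log^2 T$ the former dominates the latter, so the pointwise bound cannot hold. More generally, mean matching via Tweedie does \emph{not} promote the TV to second order: the relative covariance perturbation $\tfrac{1-\alpha_t}{1-\overline{\alpha}_t}J_t(y_t)$ alone already produces $\mathsf{TV}\asymp\tfrac{1-\alpha_t}{1-\overline{\alpha}_t}\|J_t(y_t)\|_{\mathrm F}$, which is first order in the step size. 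Summed over $t$, this only recovers the $\sqrt{d/T}$ rate you were trying to beat.

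The paper does \emph{not} pass through the true reverse kernel $p_{X_{t-1}\mid X_t}$ or any conditional TV. It compares $\int_{\mathcal E_{t,1}}p_{Y_{t-1}^\star\mid Y_t^\star}(\cdot\mid x_t)\,p_{X_t}(x_t)\,\mathrm dx_t$ directly with $p_{X_{t-1}}$ via the change of variables $u_t=x_t+(1-\alpha_t)s_t^\star(x_t)$ (Jacobian $\det(I-\tfrac{1-\alpha_t}{1-\overline{\alpha}_t}J_t)$) and a density identity: after this substitution, the integrand becomes a product of two Gaussians in $(u_t,x_0)$ that integrates \emph{exactly} to $p_{X_{t-1}}$, up to a multiplicative factor $\exp(\xi_t(x_t)+O((\tfrac{1-\alpha_t}{1-\overline{\alpha}_t})^2(d\log T+\|J_t\|_{\mathrm F}^2)))$ (Lemma~\ref{lemma:det}). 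Here the would-be first-order terms are absorbed into a zero-mean functional $\zeta_t(x_t,x_0)$ (under $p_{X_0\mid X_t=x_t}$), whose exponential average defines $\xi_t(x_t)\le 0$. Crucially, $|\xi_t(x_t)|$ is \emph{not} bounded pointwise by the stated quadratic expression; only its \emph{integral} against $p_{X_t}$ is, obtained by integrating the identity over $u_t$ and using $e^x\ge 1+x$. It is precisely this averaging step --- unavailable once you take absolute values inside $y_t$ to form a conditional TV --- that buys the second-order error and hence the $d/T$ rate.
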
\begin{proof}See
Appendix~\ref{sec:proof-lemma-recursion}.\end{proof}

We can apply Lemma~\ref{lemma:recursion} recursively to achieve
\begin{align*}
\int\Delta_{1}(x)\mathrm{d}x & \leq\int\Delta_{T}(x)\mathrm{d}x+\sum_{t=2}^{T}\Big[C_{4}\Big(\frac{1-\alpha_{t}}{1-\overline{\alpha}_{t}}\Big)^{2}\int_{x_{t}\in\mathcal{E}_{t,1}}\big(d\log T+\|J_{t}(x_{t})\Vert_{\mathrm{F}}^{2}\big)p_{X_{t}}(x_{t})\mathrm{d}x_{t}+T^{-3}\Big]\\
 & \overset{\text{(a)}}{\leq}8c_{1}C_{4}\frac{\log T}{T}\sum_{t=2}^{T}\frac{1-\alpha_{t}}{1-\overline{\alpha}_{t}}\int_{x_{t}\in\mathcal{E}_{t,1}}\|J_{t}(x_{t})\Vert_{\mathrm{F}}^{2}p_{X_{t}}(x_{t})\mathrm{d}x_{t}+64c_{1}^{2}C_{4}\frac{d\log^{3}T}{T}+T^{-2}\\
 & \overset{\text{(b)}}{\leq}8c_{1}C_{4}C_{0}\frac{d\log^{2}T}{T}+64c_{1}^{2}C_{4}\frac{d\log^{3}T}{T}+T^{-3}\leq C_{5}\frac{d\log^{3}T}{T}.
\end{align*}
Here step (a) utilizes Lemma~\ref{lemma:step-size}; step (b) follows
from Lemma~\ref{lem:typical}; while step (c) holds provided that
$C_{5}\gg c_{1}^{2}C_{4}C_{0}$. This further implies that 
\begin{align}
\mathsf{TV}(p_{X_{1}},p_{\overline{Y}_{1}}) & =\int_{p_{X_{1}}(x)>p_{\overline{Y}_{1}}(x)}\big(p_{X_{1}}(x)-p_{\overline{Y}_{1}}(x)\big)\mathrm{d}x=\int\Delta_{1}(x)\mathrm{d}x\leq C_{5}\frac{d\log^{3}T}{T}.\label{eq:proof-TV-1}
\end{align}

\subsection{Step 3: controlling estimation error\protect\label{subsec:proof-main-estimation}}

In this section, we will bound the total variation distance between
$p_{Y_{1}}$ and $p_{\overline{Y}_{1}}$. Note that 
\begin{align}
\mathsf{TV}\big(p_{Y_{1}},p_{\overline{Y}_{1}}\big) & =\int_{\mathbb{R}^{d}}\big(p_{\overline{Y}_{1}}(x)-p_{Y_{1}}(x)\big)\ind\big\{ p_{\overline{Y}_{1}}(x)>p_{Y_{1}}(x)\big\}\mathrm{d}x+\mathbb{P}\big(\overline{Y}_{1}=\infty\big)\nonumber \\
 & \overset{\text{(i)}}{\leq}\int_{\mathbb{R}^{d}}\big(p_{\overline{Y}_{1}}(x)-p_{\widehat{Y}_{1}}(x)\big)\ind\big\{ p_{\overline{Y}_{1}}(x)>p_{\widehat{Y}_{1}}(x)\big\}\mathrm{d}x+\mathbb{P}\big(\overline{Y}_{1}=\infty\big)\nonumber \\
 & \overset{\text{(ii)}}{\leq}\mathsf{TV}\big(p_{\overline{Y}_{1}},p_{\widehat{Y}_{1}}\big)+\mathsf{TV}\big(p_{X_{1}},p_{\overline{Y}_{1}}\big)\overset{\text{(iii)}}{\leq}\sqrt{\mathsf{KL}\big(p_{\overline{Y}_{1}}\Vert p_{\widehat{Y}_{1}}\big)}+C_{5}\frac{d\log^{3}T}{T}.\label{eq:proof-main-6}
\end{align}
Here step (i) follows from (\ref{eq:transition-fact-2}); step (ii)
follows from $\mathbb{P}(\overline{Y}_{1}=\infty)\leq\mathsf{TV}(p_{X_{1}},p_{\overline{Y}_{1}})$,
which holds since $X_{1}$ does not take value at $\infty$; step
(iii) utilizes Pinsker's inequality and (\ref{eq:proof-TV-1}). Hence
it suffices to bound $\mathsf{KL}(p_{\overline{Y}_{1}}\parallel p_{\widehat{Y}_{1}})$,
which can be decomposed into 
\begin{align}
 & \mathsf{KL}\big(p_{\overline{Y}_{1}}\Vert p_{\widehat{Y}_{1}}\big)\overset{\text{(a)}}{\leq}\mathsf{KL}\big(p_{\overline{Y}_{1},\overline{Y}_{1}^{-},\ldots,\overline{Y}_{T},\overline{Y}_{T}^{-}}\Vert p_{\widehat{Y}_{1},\widehat{Y}_{1}^{-},\ldots,\widehat{Y}_{T},\widehat{Y}_{T}^{-}}\big)\nonumber \\
 & \quad\overset{\text{(b)}}{=}\mathsf{KL}\big(p_{\overline{Y}_{T}^{-}}\Vert p_{\widehat{Y}_{T}^{-}}\big)+\sum_{t=2}^{T}\mathop{\mathbb{E}}_{x_{t}\sim p_{\overline{Y}_{t}}}\hspace{-1ex}\big[\mathsf{KL}\big(p_{\overline{Y}_{t-1}^{-}|\overline{Y}_{t}=x_{t}}\Vert p_{\widehat{Y}_{t-1}^{-}|\widehat{Y}_{t}=x_{t}}\big)\big]+\sum_{t=1}^{T}\mathop{\mathbb{E}}_{x_{t}\sim p_{\overline{Y}_{t}^{-}}}\hspace{-1ex}\big[\mathsf{KL}\big(p_{\overline{Y}_{t}|\overline{Y}_{t}^{-}=x_{t}}\Vert p_{\widehat{Y}_{t}|\widehat{Y}_{t}^{-}=x_{t}}\big)\big]\nonumber \\
 & \quad\overset{\text{(c)}}{=}\sum_{t=2}^{T}\mathbb{E}_{x_{t}\sim p_{\overline{Y}_{t}}}\big[\mathsf{KL}\big(p_{\overline{Y}_{t-1}^{-}|\overline{Y}_{t}=x_{t}}\Vert p_{\widehat{Y}_{t-1}^{-}|\widehat{Y}_{t}=x_{t}}\big)\big].\label{eq:proof-main-7}
\end{align}
Here step (a) follows from the data-processing inequality; step (b)
uses the chain rule of KL divergence, where we use the fact that (\ref{eq:Ybar-chain})
and (\ref{eq:Yhat-chain}) are both Markov chains; step (c) follows
from the facts that, by construction, $\overline{Y}_{T}^{-}=\widehat{Y}_{T}^{-}$,
and for any $x\neq\infty$, the conditional distributions of $\widehat{Y}_{t}$
given $\widehat{Y}_{t}^{-}=x$ and $\overline{Y}_{t}$ given $\overline{Y}_{t}^{-}=x$
are identical. For any $x_{t}\in\mathcal{E}_{t,1}$, we have
\begin{align}
\mathsf{KL}\big(p_{\overline{Y}_{t-1}^{-}|\overline{Y}_{t}=x_{t}}\Vert p_{\widehat{Y}_{t-1}^{-}|\widehat{Y}_{t}=x_{t}}\big) & \overset{\text{(i)}}{=}\frac{1-\alpha_{t}}{2}\Vert s_{t}(x_{t})-s_{t}^{\star}(x_{t})\Vert_{2}^{2}\overset{\text{(ii)}}{\leq}\frac{c_{1}\log T}{2T}\Vert s_{t}(x_{t})-s_{t}^{\star}(x_{t})\Vert_{2}^{2}.\label{eq:proof-main-8}
\end{align}
Here step (i) follows from the transition probability (\ref{eq:transition-Ytbar-Yt-1barminus})
and (\ref{eq:transition-Ythat-Yt-1hatminus}), which give
\begin{align*}
\overline{Y}_{t-1}^{-}|\overline{Y}_{t}=x_{t} & \sim\mathcal{N}\left(\frac{x_{t}+(1-\alpha_{t})s_{t}^{\star}(x_{t})}{\sqrt{\alpha_{t}}},\frac{1-\alpha_{t}}{\alpha_{t}}I_{d}\right)\quad\text{and}\\
\widehat{Y}_{t-1}^{-}|\widehat{Y}_{t}=x_{t} & \sim\mathcal{N}\left(\frac{x_{t}+(1-\alpha_{t})s_{t}(x_{t})}{\sqrt{\alpha_{t}}},\frac{1-\alpha_{t}}{\alpha_{t}}I_{d}\right),
\end{align*}
and the KL divergence between two Gaussian measures can be computed
in closed-form; step (ii) utilizes Lemma~\ref{lemma:step-size}.
On the other hand, for any $x_{t}\in\mathcal{E}_{t,1}^{\mathrm{c}}$,
we have 
\begin{equation}
\mathsf{KL}\big(p_{\overline{Y}_{t-1}^{-}|\overline{Y}_{t}=x_{t}}\parallel p_{\widehat{Y}_{t-1}^{-}|\widehat{Y}_{t}=x_{t}}\big)=0.\label{eq:proof-main-9}
\end{equation}
Therefore we have 
\begin{align}
\mathsf{KL}\big(p_{\overline{Y}_{1}}\parallel p_{\widehat{Y}_{1}}\big) & \overset{\text{(i)}}{\leq}\sum_{t=2}^{T}\mathbb{E}_{x_{t}\sim p_{X_{t}}}\Big[\mathsf{KL}\big(p_{\overline{Y}_{t-1}^{-}|\overline{Y}_{t}=x_{t}}\parallel p_{\widehat{Y}_{t-1}^{-}|\widehat{Y}_{t}=x_{t}}\big)\Big]\overset{\text{(ii)}}{\leq}\frac{c_{1}}{2}\varepsilon_{\mathsf{score}}^{2}\log T.\label{eq:proof-main-10}
\end{align}
Here step (i) follows from (\ref{eq:proof-main-7}) and the relation
$p_{\overline{Y}_{t}}(x)\leq p_{X_{t}}(x)$ for any $x\neq\infty$
(see (\ref{eq:transition-fact-1})); while step (ii) follows from
(\ref{eq:proof-main-8}) and (\ref{eq:proof-main-9}). Substitution
of the bound (\ref{eq:proof-main-10}) into (\ref{eq:proof-main-6})
yields 
\begin{equation}
\mathsf{TV}\big(p_{Y_{1}},p_{\overline{Y}_{1}}\big)\leq\sqrt{\frac{c_{1}}{2}\log T}\varepsilon_{\mathsf{score}}+C_{5}\frac{d\log^{3}T}{T}.\label{eq:proof-TV-2}
\end{equation}
Taking the two bounds (\ref{eq:proof-TV-1}) and (\ref{eq:proof-TV-2})
collectively, we achieve the desired result 
\[
\mathsf{TV}(p_{X_{1}},p_{Y_{1}})\leq\mathsf{TV}(p_{X_{1}},p_{\overline{Y}_{1}})+\mathsf{TV}\big(p_{Y_{1}},p_{\overline{Y}_{1}}\big)\leq C\frac{d\log^{3}T}{T}+C\varepsilon_{\mathsf{score}}\sqrt{\log T}
\]
for some constant $C\gg\sqrt{c_{1}}+C_{5}$.

\section{Proof of Theorem~\ref{thm:main-low-d} \protect\label{sec:proof-main-low-d}}

This section provides the proof of Theorem \ref{thm:main-low-d}.
While the high-level analysis idea is similar to the proof of Theorem~\ref{thm:main},
we need to carry out more careful analysis in order to precisely capture
the low-dimensional structure. The constants $C_{1},C_{2},\ldots$
in this section are different from the ones in Section~\ref{sec:proof-main}.

\subsection{Preliminaries}

For simplicity of presentation, we assume without loss of generality
that $k\geq\log d$. In fact, if $k<\log d$, we can simply redefine
$k\coloneqq\log d$, which does not change the desired bound (\ref{eq:error-bound-low-d}).
Let $\{x_{i}^{\star}\}_{1\leq i\leq N_{\varepsilon}}$ be an $\varepsilon$-net
of $\mathcal{X}=\mathsf{supp}(p_{\mathsf{data}})$, where $\varepsilon$
is sufficiently small
\begin{equation}
\varepsilon\ll\sqrt{\frac{1-\overline{\alpha}_{t}}{\overline{\alpha}_{t}}}\min\left\{ 1,\sqrt{\frac{k\log T}{d}}\right\} ,\label{eq:eps-condition}
\end{equation}
 and let $\{\mathcal{B}_{i}\}_{1\leq i\leq N_{\varepsilon}}$ be a
disjoint $\varepsilon$-cover for $\mathcal{X}$ such that $x_{i}^{\star}\in\mathcal{B}_{i}$.
Let 
\begin{align*}
\mathcal{I} & \coloneqq\left\{ 1\leq i\leq N_{\varepsilon}:\mathbb{P}(X_{0}\in\mathcal{B}_{i})\geq\exp(-\theta k\log T)\right\} ,\\
\mathcal{G} & \coloneqq\big\{\omega\in\mathbb{R}^{d}:\Vert\omega\Vert_{2}\leq2\sqrt{d}+\sqrt{\theta k\log T},\quad\text{and}\\
 & \qquad\qquad\qquad\vert(x_{i}^{\star}-x_{j}^{\star})^{\top}\omega\vert\leq\sqrt{\theta k\log T}\Vert x_{i}^{\star}-x_{j}^{\star}\Vert_{2}\quad\text{for all}\quad1\leq i,j\leq N_{\varepsilon}\big\},
\end{align*}
where $\theta>0$ is some sufficiently large constant. Then $\cup_{i\in\mathcal{I}}\mathcal{B}_{i}$
and $\mathcal{G}$ can be interpreted as high probability sets for
the random variable $X_{0}$ and a standard Gaussian variable in $\mathbb{R}^{d}$.
For each $t=1,\ldots T$, we define a typical set for each $X_{t}$
as follows 
\begin{equation}
\mathcal{E}_{t,1}\coloneqq\left\{ \sqrt{\overline{\alpha}_{t}}x_{0}+\sqrt{1-\overline{\alpha}_{t}}\omega:x_{0}\in\cup_{i\in\mathcal{I}}\mathcal{B}_{i},\omega\in\mathcal{G}\right\} .\label{eq:E-t-1-low-d}
\end{equation}
This means that for any $x_{t}\in\mathcal{E}_{t,1}$, there exists
an index $i(x_{t})\in\mathcal{I}$ and two points $x_{0}(x_{t})\in\mathcal{B}_{i(x_{t})}$
and $\omega\in\mathcal{G}$ such that 
\begin{equation}
x_{t}=\sqrt{\overline{\alpha}_{t}}x_{0}(x_{t})+\sqrt{1-\overline{\alpha}_{t}}\omega.\label{eq:xt-decom}
\end{equation}
It is worth mentioning that such $i(x_{t})$, $x_{0}(x_{t})$ and
$\omega$ might not be unique, and we only need to arbitrarily choose
and fix one of them. For any $x_{t}\in\mathcal{E}_{t,1}$ and any
$r>0$, define
\begin{equation}
\mathcal{I}\left(x_{t};r\right)\coloneqq\left\{ 1\leq i\leq N_{\varepsilon}:\overline{\alpha}_{t}\Vert x_{i}^{\star}-x_{i(x_{t})}^{\star}\Vert_{2}^{2}\leq r\cdot k(1-\overline{\alpha}_{t})\log T\right\} .\label{eq:I-defn}
\end{equation}
The following technical lemma will be crucial in the analysis.

\begin{lemma} \label{lem:cond-low-dim} There exists some universal
constant $C_{1}\gg\theta$ such that
\begin{align*}
\mathbb{P}\left(X_{0}\in\mathcal{B}_{i}\mymid X_{t}=x_{t}\right) & \leq\exp\left(-\frac{\overline{\alpha}_{t}}{16\left(1-\overline{\alpha}_{t}\right)}\Vert x_{i(x_{t})}^{\star}-x_{i}^{\star}\Vert_{2}^{2}\right)\mathbb{P}\left(X_{0}\in\mathcal{B}_{i}\right)
\end{align*}
for any $x_{t}\in\mathcal{E}_{t,1}$ and $i\notin\mathcal{I}(x_{t};C_{1}\theta)$.
\end{lemma}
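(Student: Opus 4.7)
The starting point is Bayes' rule, which yields
\begin{equation*}
  \mathbb{P}\big(X_0\in\mathcal{B}_i\mymid X_t=x_t\big) \;=\; \frac{\int_{\mathcal{B}_i} p_{X_0}(x_0)\,p_{X_t\mid X_0}(x_t\mid x_0)\,\mathrm{d}x_0}{p_{X_t}(x_t)},
\end{equation*}
together with the explicit Gaussian form $p_{X_t\mid X_0}(x_t\mid x_0)=(2\pi(1-\overline{\alpha}_t))^{-d/2}\exp(-\|x_t-\sqrt{\overline{\alpha}_t}x_0\|_2^2/(2(1-\overline{\alpha}_t)))$. I would upper-bound the numerator by pulling out the probability factor, and lower-bound the denominator by restricting the integral to the ``anchor block'' $\mathcal{B}_{i(x_t)}$:
\begin{equation*}
  \text{numerator}\leq \mathbb{P}(X_0\in\mathcal{B}_i)\sup_{x_0\in\mathcal{B}_i}p_{X_t\mid X_0}(x_t\mid x_0),\quad p_{X_t}(x_t)\geq \mathbb{P}(X_0\in\mathcal{B}_{i(x_t)})\inf_{x_0\in\mathcal{B}_{i(x_t)}}p_{X_t\mid X_0}(x_t\mid x_0),
\end{equation*}
where the latter uses $\mathcal{B}_{i(x_t)}\subseteq\mathcal{X}$ and the former uses non-negativity; the membership $i(x_t)\in\mathcal{I}$ then gives the crucial lower bound $\mathbb{P}(X_0\in\mathcal{B}_{i(x_t)})\geq e^{-\theta k\log T}$. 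The task reduces to uniformly controlling $\sup_{\mathcal{B}_{i(x_t)}}\|x_t-\sqrt{\overline{\alpha}_t}x_0\|_2^2-\inf_{\mathcal{B}_i}\|x_t-\sqrt{\overline{\alpha}_t}x_0\|_2^2$.

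\paragraph{Key computation.}
Using the representation $x_t=\sqrt{\overline{\alpha}_t}x_0(x_t)+\sqrt{1-\overline{\alpha}_t}\omega$ with $x_0(x_t)\in\mathcal{B}_{i(x_t)}$ and $\omega\in\mathcal{G}$, and writing $\eta:=x_0(x_t)-x_{i(x_t)}^*$ (so $\|\eta\|_2\leq\varepsilon$), one has the clean identity
\begin{equation*}
  x_t-\sqrt{\overline{\alpha}_t}x_i^* \;=\; \sqrt{\overline{\alpha}_t}(x_{i(x_t)}^*-x_i^*)+\sqrt{\overline{\alpha}_t}\eta+\sqrt{1-\overline{\alpha}_t}\omega,
\end{equation*}
and for $x_0=x_i^*+\delta_i\in\mathcal{B}_i$ one subtracts $\sqrt{\overline{\alpha}_t}\delta_i$ (with $\|\delta_i\|_2\leq\varepsilon$). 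Expanding the squared norms, the crux is that the $\mathcal{G}$-membership of $\omega$ provides the \emph{pairwise} Gaussian-concentration estimate
\begin{equation*}
  \big|(x_i^*-x_{i(x_t)}^*)^\top\omega\big|\;\leq\;\sqrt{\theta k\log T}\,\|x_i^*-x_{i(x_t)}^*\|_2,
\end{equation*}
which replaces the naive $\sqrt{d}\,\|x_i^*-x_{i(x_t)}^*\|_2$ cross-term bound by a $\sqrt{k\log T}$-scale one; this is precisely where the intrinsic dimension $k$ enters instead of the ambient $d$. Applying AM-GM, $2\sqrt{\overline{\alpha}_t(1-\overline{\alpha}_t)\theta k\log T}\,\|x_i^*-x_{i(x_t)}^*\|_2\leq \tfrac{\overline{\alpha}_t}{2}\|x_i^*-x_{i(x_t)}^*\|_2^2+2(1-\overline{\alpha}_t)\theta k\log T$, then yields
\begin{equation*}
  \sup_{\mathcal{B}_{i(x_t)}}\|x_t-\sqrt{\overline{\alpha}_t}x_0\|_2^2\;-\;\inf_{\mathcal{B}_i}\|x_t-\sqrt{\overline{\alpha}_t}x_0\|_2^2\;\leq\; -\tfrac{\overline{\alpha}_t}{4}\|x_{i(x_t)}^*-x_i^*\|_2^2 + C'(1-\overline{\alpha}_t)\theta k\log T,
\end{equation*}
for some universal $C'$, after all $\varepsilon$-perturbation contributions are absorbed.

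\paragraph{Putting it together.}
Combining the two displays inside the Bayes ratio gives
\begin{equation*}
  \frac{\mathbb{P}(X_0\in\mathcal{B}_i\mymid X_t=x_t)}{\mathbb{P}(X_0\in\mathcal{B}_i)} \;\leq\; \exp\!\bigg(-\frac{\overline{\alpha}_t\,\|x_{i(x_t)}^*-x_i^*\|_2^2}{8(1-\overline{\alpha}_t)}+C''\theta k\log T\bigg),
\end{equation*}
where the extra $+\theta k\log T$ coming from $\mathbb{P}(X_0\in\mathcal{B}_{i(x_t)})^{-1}$ has been folded into $C''\theta k\log T$. Since $i\notin\mathcal{I}(x_t;C_1\theta)$ forces $\overline{\alpha}_t\,\|x_{i(x_t)}^*-x_i^*\|_2^2>C_1\theta k(1-\overline{\alpha}_t)\log T$, taking $C_1$ as a sufficiently large universal constant (quantitatively, $C_1\geq 16(C''+1)$) makes the quadratic term absorb the additive $C''\theta k\log T$ with half its magnitude to spare, leaving the claimed bound $\exp(-\overline{\alpha}_t\|x_{i(x_t)}^*-x_i^*\|_2^2/(16(1-\overline{\alpha}_t)))$.

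\paragraph{Main obstacle.}
The principal delicacy is the bookkeeping of $\varepsilon$-level error terms. Naively, a cross term of order $\sqrt{\overline{\alpha}_t(1-\overline{\alpha}_t)}\,\varepsilon\,\|\omega\|_2$, together with $\|\omega\|_2\leq 2\sqrt d+\sqrt{\theta k\log T}$ from the $\mathcal{G}$-definition, could reinject the ambient dimension $d$ and spoil the adaptivity to $k$. Making these terms genuinely negligible against $(1-\overline{\alpha}_t)\theta k\log T$ requires both the polynomial smallness $\varepsilon=T^{-c_\varepsilon}$ with $c_\varepsilon$ chosen much larger than $c_0$ (so that $\overline{\alpha}_t\varepsilon^2\ll(1-\overline{\alpha}_t)\theta k\log T$ even in the worst regime $1-\overline{\alpha}_t\asymp T^{-c_0}$) and the $\min\{1,\sqrt{k\log T/d}\}$ factor in the choice \eqref{eq:eps-condition}, which is exactly what kills the $\sqrt{d}$ contribution from $\|\omega\|_2$.
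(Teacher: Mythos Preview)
Your proposal is correct and follows the natural Bayes-ratio argument: upper-bound the numerator via $\sup_{\mathcal{B}_i}p_{X_t\mid X_0}$, lower-bound the denominator via the anchor block $\mathcal{B}_{i(x_t)}$ (using $i(x_t)\in\mathcal{I}$ for the mass lower bound), expand the Gaussian exponents via the representation $x_t=\sqrt{\overline{\alpha}_t}x_0(x_t)+\sqrt{1-\overline{\alpha}_t}\omega$, and exploit the $\mathcal{G}$-constraint to control the cross term $\omega^\top(x_{i(x_t)}^\star-x_i^\star)$ at the $\sqrt{k\log T}$ scale rather than $\sqrt{d}$. The paper itself does not provide a proof of this lemma but defers to \citet[Appendix~A.2]{li2024adapting}; your argument is precisely the one carried out there, so no meaningful comparison is warranted.
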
\begin{proof} The proof can be found in \cite[Appendix A.2]{li2024adapting}
and is omitted here for brevity.\end{proof}

\subsection{Main proof}

We first define an auxiliary reverse process that uses the true score
function: 
\begin{equation}
Y_{T}^{\star}\sim\mathcal{N}(0,I_{d}),\qquad Y_{t-1}^{\star}=\frac{1}{\sqrt{\alpha_{t}}}(Y_{t}^{\star}+\eta_{t}^{\star}s_{t}^{\star}(Y_{t}^{\star})+\sigma_{t}^{\star}Z_{t})\quad\text{for }t=T,\ldots,1.\label{eq:SDE-Ystar-low-d}
\end{equation}
We introduce auxiliary sequences $\{\overline{Y}_{t}:t=T,\ldots,1\}$
and $\{\overline{Y}_{t}^{-}:t=T,\ldots,1\}$ as in \eqref{subeq:defn-Y-bar},
as well as $\{\widehat{Y}_{t}:t=T,\ldots,1\}$ and $\{\widehat{Y}_{t}^{-}:t=T,\ldots,1\}$
as in \eqref{subeq:defn-Y-hat}. It is worth mentioning that here
we use $\mathcal{E}_{t,1}$ in (\ref{eq:E-t-1-low-d}) as well as
the sequence $\{Y_{t}^{\star}:t=T,\ldots,1\}$ in (\ref{eq:SDE-Ystar-low-d})
when defining these auxiliary sequences. In addition, we define $\Delta_{t}(x)=p_{X_{t}}(x)-p_{\overline{Y}_{t}}(x)$
as in (\ref{eq:induction-1}).

The following lemma establishes bounds similar to Lemma~\ref{lem:typical}.
In order to avoid incurring polynomial dependency in $d$, it is important
to focus on $I-J_{t}(x_{t})$ instead of $J_{t}(x)$ itself. 

\begin{lemma} \label{lem:Jt} There exists some universal constant
$C_{2}\gg C_{1}$ such that for any $x_{t}\in\mathcal{E}_{t,1}$,
\begin{equation}
\|I-J_{t}(x_{t})\|\le\|I-J_{t}(x_{t})\|_{\mathrm{F}}\leq\vert\mathsf{Tr}(I-J_{t}(x_{t}))\vert\leq C_{2}\theta k\log T,\label{eq:I-Jt-bound-low-d}
\end{equation}
where $J_{t}(\cdot)$ is defined in (\ref{eq:Jt-defn}). In addition,
there exists universal constant $C_{0}>0$ such that 
\begin{equation}
\sum_{t=2}^{T}\frac{1-\alpha_{t}}{1-\overline{\alpha}_{t}}\int_{x_{t}}\Vert I-J_{t}(x_{t})\Vert_{\mathrm{F}}^{2}\,p_{X_{t}}(x_{t})\mathrm{d}x_{t}\leq C_{0}k\log T.\label{eq:Jt_sum-low-d}
\end{equation}
 \end{lemma}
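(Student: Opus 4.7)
The plan is to reduce everything to the posterior-covariance identity
\[
I-J_t(x)\;=\;\frac{\overline{\alpha}_t}{1-\overline{\alpha}_t}\,\mathrm{Cov}\bigl(X_0\mymid X_t=x\bigr),
\]
obtained by rearranging the two terms in the braces of \eqref{eq:Jt-defn}: the first is an outer product of the conditional mean of $x-\sqrt{\overline{\alpha}_t}X_0$, the second is the conditional second-moment matrix, and their difference is $-\mathrm{Cov}$. Because the conditional covariance is positive semidefinite, so is $I-J_t(x)$, and the eigenvalue inequality $\sqrt{\sum_i\lambda_i^2}\le\sum_i\lambda_i$ for nonnegative $\lambda_i$ immediately yields $\|I-J_t(x)\|\le\|I-J_t(x)\|_{\mathrm{F}}\le\mathrm{Tr}(I-J_t(x))$, so the pointwise claim \eqref{eq:I-Jt-bound-low-d} collapses to bounding the trace on $\mathcal{E}_{t,1}$.

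For the pointwise trace bound I would write $\mathrm{Tr}(I-J_t(x_t))\le\frac{\overline{\alpha}_t}{1-\overline{\alpha}_t}\mathbb{E}\bigl[\|X_0-x_{i(x_t)}^\star\|_2^2\mymid X_t=x_t\bigr]$ (since the variance is upper bounded by the raw second moment around any fixed point), then split the expectation according to whether $X_0\in\mathcal{B}_i$ with $i\in\mathcal{I}(x_t;C_1\theta)$ or $i\notin\mathcal{I}(x_t;C_1\theta)$. For nearby indices, \eqref{eq:I-defn} supplies $\|x_i^\star-x_{i(x_t)}^\star\|_2^2\le C_1\theta\,k(1-\overline{\alpha}_t)\log T/\overline{\alpha}_t$, and combining with $\|X_0-x_i^\star\|_2\le\varepsilon$ on $\mathcal{B}_i$ produces a contribution of order $k(1-\overline{\alpha}_t)\log T/\overline{\alpha}_t$. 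For distant indices Lemma~\ref{lem:cond-low-dim} contributes the Gaussian-type suppression factor $\exp(-\overline{\alpha}_t\|x_{i(x_t)}^\star-x_i^\star\|_2^2/(16(1-\overline{\alpha}_t)))$, and the elementary estimate $r^2e^{-cr^2}\lesssim c^{-1}e^{-cr^2/2}$ together with $\sum_i\mathbb{P}(X_0\in\mathcal{B}_i)\le 1$ makes the tail superpolynomially small once $C_1$ (and hence $C_2\gg C_1$) is chosen large. Multiplying by $\overline{\alpha}_t/(1-\overline{\alpha}_t)$ delivers \eqref{eq:I-Jt-bound-low-d}.

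For the aggregate bound \eqref{eq:Jt_sum-low-d} I would combine the operator-norm bound from \eqref{eq:I-Jt-bound-low-d} with the PSD inequality $\|A\|_{\mathrm{F}}^2\le\|A\|\cdot\mathrm{Tr}(A)$: on $\mathcal{E}_{t,1}$ this yields $\|I-J_t(x_t)\|_{\mathrm{F}}^2\le C_2\theta k\log T\cdot\mathrm{Tr}(I-J_t(x_t))$, reducing the task to controlling the first-order aggregate $\sum_t\frac{(1-\alpha_t)\overline{\alpha}_t}{(1-\overline{\alpha}_t)^2}\mathbb{E}[\mathrm{Tr}(\mathrm{Cov}(X_0\mymid X_t))]$. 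This first-order sum can be telescoped using the law of total variance $\mathbb{E}[\mathrm{Tr}(\mathrm{Cov}(X_0\mymid X_t))]=\mathrm{Tr}(\mathrm{Cov}(X_0))-\mathrm{Tr}(\mathrm{Cov}(\mathbb{E}[X_0\mymid X_t]))$ together with Abel summation against $(1-\alpha_t)\overline{\alpha}_t\approx\overline{\alpha}_{t-1}-\overline{\alpha}_t$: the boundary pieces are controlled via the same pointwise trace bound from the first part of the lemma, and contributions from $\mathcal{E}_{t,1}^{\mathrm{c}}$ are absorbed into a $T^{-\Omega(1)}$ additive error using the superpolynomial smallness of $\mathbb{P}(X_t\in\mathcal{E}_{t,1}^{\mathrm{c}})$ (from the Gaussian concentration of $\omega\in\mathcal{G}$ and the mass condition defining $\mathcal{I}$) together with the polynomial first moment $M_1\le T^{c_M}$ from Assumption~\ref{assumption:moment}.

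The main obstacle is ensuring that this telescoping truly captures intrinsic dimension $k$ rather than ambient $d$: the naive global bound $\mathbb{E}[\mathrm{Tr}(\mathrm{Cov}(X_0\mymid X_t))]\le\mathrm{Tr}(\mathrm{Cov}(X_0))$ would erase all low-dimensional information and is unavailable anyway under a first-moment assumption. The fix is to inject the pointwise trace estimate $\mathrm{Tr}(\mathrm{Cov}(X_0\mymid X_t=x_t))\le O(k(1-\overline{\alpha}_t)\log T/\overline{\alpha}_t)$ on $\mathcal{E}_{t,1}$ directly inside the aggregate summation and then match it against the weight $(1-\alpha_t)\overline{\alpha}_t/(1-\overline{\alpha}_t)^2$ using the learning-rate properties from Lemma~\ref{lemma:step-size}, so that only the $k$ effective tangential directions of $\mathcal{X}$ contribute and the target $O(k\log T)$ scaling emerges rather than the loose $O(k\log^2 T)$ or $O(d\log T)$ one would get by less careful arguments.
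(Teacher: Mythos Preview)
Your treatment of the pointwise bound \eqref{eq:I-Jt-bound-low-d} is correct and in fact slightly cleaner than the paper's: you observe directly that $I-J_t(x)=\tfrac{\overline{\alpha}_t}{1-\overline{\alpha}_t}\mathrm{Cov}(X_0\mymid X_t=x)$, so the $\omega$-cross-terms that the paper carries as the quantity $\zeta$ cancel exactly and only the second-moment term $\xi$ (your $\mathbb{E}[\|X_0-x_{i(x_t)}^\star\|_2^2\mymid X_t=x_t]$) needs to be split via Lemma~\ref{lem:cond-low-dim}. The PSD chain $\|\cdot\|\le\|\cdot\|_{\mathrm F}\le\mathrm{Tr}$ is exactly what the paper uses.

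The aggregate bound \eqref{eq:Jt_sum-low-d}, however, has a real gap. Your route factors $\|I-J_t\|_{\mathrm F}^2\le\|I-J_t\|\cdot\mathrm{Tr}(I-J_t)\le C_2\theta k\log T\cdot\mathrm{Tr}(I-J_t)$ and then seeks to show the first-order aggregate $\sum_t\tfrac{1-\alpha_t}{1-\overline{\alpha}_t}\mathbb{E}[\mathrm{Tr}(I-J_t(X_t))]$ is $O(1)$. But that first-order sum is \emph{not} $O(1)$: telescoping via the I-MMSE identity (or your Abel-summation variant) gives at best $\sum_t(\gamma_{t-1}-\gamma_t)\,\mathrm{mmse}(\gamma_t)\approx 2I(X_0;X_1)\lesssim \log N_\varepsilon\lesssim k\log T$, and plugging the pointwise trace bound back in directly, as you propose in the last paragraph, likewise yields $\sum_t\tfrac{1-\alpha_t}{1-\overline{\alpha}_t}\cdot C_2\theta k\log T\lesssim k\log^2 T$ via Lemma~\ref{lemma:step-size}. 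Either way, after multiplying by the $k\log T$ operator-norm prefactor you end up with $k^2\log^2 T$ (or $k^2\log^3 T$), not the claimed $k\log T$. The ``fix'' you describe in the final paragraph is precisely the naive argument you warn against two lines later; there is no mechanism in your proposal that removes the extra $k\log T$.

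The paper does \emph{not} pass through a first-order aggregate at all. It invokes Lemma~\ref{lemma:jacob-sum-low-d}, a low-dimensional analogue of \cite[Lemma~2]{li2024sharp}, which directly controls $\sum_t\tfrac{1-\alpha_t}{1-\overline{\alpha}_t}\mathrm{Tr}\bigl(\mathbb{E}[(I-J_t(X_t))^2]\bigr)$ by $O(k\log T)$. The proof of that lemma is a genuinely different argument: it establishes a comparison $\mathbb{E}[\Sigma_{\overline{\alpha}'}^2]\preceq c\,\mathbb{E}[\Sigma_{\overline{\alpha}}^2]+\text{(tiny)}$ for nearby $\overline{\alpha},\overline{\alpha}'$ (via the map $x\mapsto x'$ in \eqref{eq:x-x'-relation} and the density-ratio analysis of Lemmas~\ref{lemma:det-low-d}--\ref{lemma:u-x-low-d}), and then integrates this continuity estimate along the forward process. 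This second-moment comparison is what captures the intrinsic dimension without the extra factor; neither law-of-total-variance telescoping nor a pointwise-bound-times-stepsize-sum can substitute for it.
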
\begin{proof} See Appendix~\ref{subsec:proof-lem-Jt}.\end{proof}

It is worth mentioning that unlike $I-J_{t}(x_{t})$, the order of
$s_{t}^{\star}(x_{t})$ scales linearly with $\sqrt{d}$ even for
$x_{t}\in\mathcal{E}_{t,1}$ as in Lemma~\ref{lem:typical}. Therefore
the key difficulty of this proof is to avoid introducing any error
term that scales with $\Vert s_{t}^{\star}(x_{t})\Vert_{2}$. Next,
we establish the following lemma in analogy to Lemma~\ref{lemma:recursion}.

\begin{lemma}\label{lemma:recursion-low-d} There exists some universal
constant $C_{3}>0$ such that, for $t=T,\ldots,2$, 
\[
\int\Delta_{t-1}(x)\mathrm{d}x\le\int\Delta_{t}(x)\mathrm{d}x+C_{4}\Big(\frac{1-\alpha_{t}}{1-\overline{\alpha}_{t}}\Big)^{2}\int_{x_{t}\in\mathcal{E}_{t,1}}\big(\vert\mathsf{Tr}(I-J_{t}(x_{t}))\vert+\|I-J_{t}(x_{t})\|_{\mathsf{F}}^{2}\big)p_{X_{t}}(x_{t})\mathrm{d}x_{t}+T^{-3}.
\]
In addition, we have $\int\Delta_{T}(x)\mathrm{d}x\leq T^{-4}$. \end{lemma}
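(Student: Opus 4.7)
My approach is to replicate the three-step error-propagation argument underlying Lemma~\ref{lemma:recursion}, but with the central one-step bound strengthened to track only the low-rank deviation $I - J_t(x_t)$ instead of $J_t(x_t)$ itself; the special coefficient design~\eqref{eq:coef-design} is precisely what makes this replacement possible. As in Section~\ref{subsec:proof-main-discretization}, the identity $p_{\overline{Y}_{t-1}}(y) = \min\{p_{X_{t-1}}(y), p_{\overline{Y}_{t-1}^{-}}(y)\}$ from~\eqref{eq:transition-fact-1} reduces $\int \Delta_{t-1}(y)\mathrm{d}y$ to $\int [p_{X_{t-1}}(y) - p_{\overline{Y}_{t-1}^{-}}(y)]_{+}\mathrm{d}y$. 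Marginalizing through $X_t$ and $\overline{Y}_t$ and adding/subtracting $p_{X_{t-1}|X_t}(y|x_t) p_{X_t}(x_t)$ splits this into three pieces: (i) the previous-step error $\int \Delta_t(x)\mathrm{d}x$, inherited from the mismatch between $p_{X_t}$ and $p_{\overline{Y}_t}$; (ii) an atypical-set remainder bounded by $\mathbb{P}(X_t \in \mathcal{E}_{t,1}^{\mathrm{c}})$; and (iii) a one-step discretization error integrated against $p_{X_t}$ over $\mathcal{E}_{t,1}$. Piece (ii) is handled at the $T^{-3}$ level via the definition~\eqref{eq:E-t-1-low-d} of $\mathcal{E}_{t,1}$ together with Lemma~\ref{lem:cond-low-dim}, Gaussian concentration for $\omega \in \mathcal{G}$, and a union bound over the $N_{\varepsilon}^2$ pairs used in defining $\mathcal{G}$.

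The crux is piece (iii). Writing the true posterior as the Gaussian mixture
\begin{equation*}
p_{X_{t-1}|X_t}(y|x_t) = \int \mathcal{N}\big(y;\mu(x_0,x_t),\Sigma\big)\, p_{X_0|X_t}(x_0|x_t)\,\mathrm{d}x_0,
\end{equation*}
with $\mu(x_0,x_t) = \frac{\sqrt{\alpha_t}(1-\overline{\alpha}_{t-1})}{1-\overline{\alpha}_t}x_t + \frac{\sqrt{\overline{\alpha}_{t-1}}(1-\alpha_t)}{1-\overline{\alpha}_t}x_0$ and $\Sigma = \frac{(1-\alpha_t)(1-\overline{\alpha}_{t-1})}{1-\overline{\alpha}_t}I_d$, a Tweedie-type computation shows the overall mean equals $\frac{1}{\sqrt{\alpha_t}}(x_t + (1-\alpha_t)s_t^{\star}(x_t))$ and that the marginal covariance equals $\frac{\sigma_t^{\star 2}}{\alpha_t}I_d + \frac{(1-\alpha_t)^2}{\alpha_t(1-\overline{\alpha}_t)}(I - J_t(x_t))$, upon using the key identity $\mathsf{Cov}[X_0|X_t=x_t] = \frac{1-\overline{\alpha}_t}{\overline{\alpha}_t}(I - J_t(x_t))$ that follows directly from~\eqref{eq:Jt-defn}. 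With $\eta_t^{\star} = 1-\alpha_t$ and $\sigma_t^{\star 2} = (1-\alpha_t)(\alpha_t - \overline{\alpha}_t)/(1-\overline{\alpha}_t)$, the mean of the reverse SDE step~\eqref{eq:SDE-Ystar-low-d} matches the posterior mean exactly, and the identity component of its covariance cancels the corresponding contribution from the posterior's marginal covariance, leaving only the low-rank residual $\frac{(1-\alpha_t)^2}{\alpha_t(1-\overline{\alpha}_t)}(I - J_t(x_t))$. I plan to bound piece (iii) by expanding the log density ratio between the mixture and the SDE Gaussian to second order, using the perturbative smallness of $I - J_t(x_t)$ guaranteed by Lemma~\ref{lem:Jt} on $\mathcal{E}_{t,1}$; the trace term will arise from the first-order mixture-non-Gaussianity contribution governed by $\mathsf{Tr}(\mathsf{Cov}[X_0|X_t=x_t])$, while the Frobenius-squared term will arise from the second-order covariance mismatch between the two Gaussians.

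The hard part will be pushing the one-step bound to the sharp scaling $(\frac{1-\alpha_t}{1-\overline{\alpha}_t})^2[|\mathsf{Tr}(I-J_t(x_t))| + \|I-J_t(x_t)\|_{\mathrm{F}}^2]$ without leaking any residual $\sqrt{d}$ factor from $\|s_t^{\star}(x_t)\|_2$, which is only $O(\sqrt{d})$ on $\mathcal{E}_{t,1}$. A naive Pinsker argument would yield a weaker bound proportional to $\frac{1-\alpha_t}{1-\overline{\alpha}_t}\|I-J_t\|_{\mathrm{F}}$; the proof must instead exploit the fact that the positive-part integrand $[p_{X_{t-1}} - p_{\overline{Y}_{t-1}^{-}}]_{+}$ is controlled at the chi-squared (quadratic) level once the means match, which is the only reason summing the recursion over $t = T, \ldots, 2$ together with~\eqref{eq:Jt_sum-low-d} can produce the final $O(k\log^3 T/T)$ bound. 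The base case $\int \Delta_T(x)\mathrm{d}x \leq T^{-4}$ follows as in the proof of Theorem~\ref{thm:main}: since $\overline{\alpha}_T \leq T^{-c_0}$ under the learning-rate schedule~\eqref{eq:learning-rate}, the forward marginal $p_{X_T}$ is exponentially close in KL to $\mathcal{N}(0,I_d) = p_{Y_T}$, and the truncation to $\mathcal{E}_{T,1}$ loses only $T^{-c}$ additional mass by the same union-bound argument used for piece (ii).
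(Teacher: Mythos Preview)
Your moment-matching observation is exactly right and is the structural reason the coefficient design~\eqref{eq:coef-design} avoids $\sqrt{d}$ leakage: the SDE Gaussian $p_{Y_{t-1}^{\star}|Y_{t}^{\star}}(\cdot\mymid x_{t})$ shares its mean with the posterior mixture $p_{X_{t-1}|X_{t}}(\cdot\mymid x_{t})$ and its covariance with each individual mixture component. But your mechanism for turning this into the sharp $\big(\frac{1-\alpha_{t}}{1-\overline{\alpha}_{t}}\big)^{2}$ increment has a real gap. If you compare the mixture to the SDE Gaussian per $x_{t}$ as you describe, the leading contribution to the conditional one-step TV is the expected positive part of a centered quadratic form,
\[
\tfrac{1-\alpha_{t}}{2(\alpha_{t}-\overline{\alpha}_{t})}\,\mathbb{E}_{v\sim\mathcal{N}(0,I)}\big[\big(v^{\top}(I-J_{t}(x_{t}))v-\mathsf{Tr}(I-J_{t}(x_{t}))\big)_{+}\big],
\]
which is of order $\frac{1-\alpha_{t}}{1-\overline{\alpha}_{t}}\|I-J_{t}(x_{t})\|_{\mathrm{F}}$---\emph{linear}, not quadratic, in the step ratio. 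Invoking ``chi-squared level'' does not help: $\chi^{2}$ between the mixture and the Gaussian is indeed $O\big((\frac{1-\alpha_{t}}{1-\overline{\alpha}_{t}})^{2}\|I-J_{t}\|_{\mathrm{F}}^{2}\big)$, but converting $\chi^{2}$ back to the TV-type increment $\int\Delta_{t-1}-\int\Delta_{t}$ costs a square root, and you recover only the linear bound. Summing $\frac{1-\alpha_{t}}{1-\overline{\alpha}_{t}}\|I-J_{t}\|_{\mathrm{F}}$ over $t$ with Cauchy--Schwarz and~\eqref{eq:Jt_sum-low-d} yields at best $O(\sqrt{k}\,\log T)$, which does not vanish.

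The paper never inserts $p_{X_{t-1}|X_{t}}$ into the decomposition and never compares conditionals. It keeps only $p_{Y_{t-1}^{\star}|Y_{t}^{\star}}$, applies the change of variables $u_{t}=x_{t}+(1-\alpha_{t})s_{t}^{\star}(x_{t})$, and proves the pointwise identity in Lemma~\ref{lemma:det-low-d}: after this change the integrand equals a Gaussian-in-$u_{t}$ density times $\exp\big(\xi_{t}(x_{t})+O(\epsilon_{t}^{2}[|\mathsf{Tr}(I-J_{t})|+\|I-J_{t}\|_{\mathrm{F}}^{2}])\big)$ with $\xi_{t}\le 0$, and the Gaussian-convolution identity~\eqref{eq:proof-main-3-low-d} reproduces $p_{X_{t-1}}$ exactly from that Gaussian. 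Your moment-matching insight manifests here as the \emph{exact} cancellation of the $(1-\alpha_{t})^{2}\|s_{t}^{\star}(x_{t})\|_{2}^{2}$ term in the exponent decomposition~\eqref{eq:zeta-decom-low-d} (compare with~\eqref{eq:proof-lemma-2-2}, where it survives), so only $\mathsf{Tr}(I-J_{t})$ and a mean-zero $\zeta_{t}(x_{t},x_{0})$ remain. The quadratic control on $\xi_{t}$ then comes \emph{indirectly}: since $\xi_{t}\le 0$ and the whole expression integrates over $u_{t}$ to at most $1$, the inequality $e^{x}\ge 1+x$ forces $\int_{\mathcal{E}_{t,1}}|\xi_{t}|\,p_{X_{t}}\,\mathrm{d}x_{t}\le O(\epsilon_{t}^{2}[\cdot])+T^{-4}$; see~\eqref{eq:lemma-det-2-low-d}. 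This normalization trick---averaging over $x_{t}$ \emph{before} any positive-part is taken---is what upgrades a pointwise $O(\epsilon_{t})$ object to an averaged $O(\epsilon_{t}^{2})$ bound, and it is the ingredient your proposal is missing.
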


\begin{proof}See Appendix~\ref{sec:proof-lemma-recursion-low-d}.
\end{proof}

We can apply Lemma~\ref{lemma:recursion} recursively to achieve
\begin{align*}
\int\Delta_{1}(x)\mathrm{d}x & \leq\int\Delta_{T}(x)\mathrm{d}x+\sum_{t=2}^{T}\Big[C_{4}\Big(\frac{1-\alpha_{t}}{1-\overline{\alpha}_{t}}\Big)^{2}\int_{x_{t}\in\mathcal{E}_{t,1}}\big(\vert\mathsf{Tr}(I-J_{t}(x_{t}))\vert+\|I-J_{t}(x_{t})\|_{\mathsf{F}}^{2}\big)p_{X_{t}}(x_{t})\mathrm{d}x_{t}+T^{-3}\Big]\\
 & \overset{\text{(a)}}{\leq}8c_{1}C_{4}\frac{\log T}{T}\sum_{t=2}^{T}\frac{1-\alpha_{t}}{1-\overline{\alpha}_{t}}\int_{x_{t}\in\mathcal{E}_{t,1}}\|I-J_{t}(x_{t})\|_{\mathsf{F}}^{2}p_{X_{t}}(x_{t})\mathrm{d}x_{t}+64c_{1}^{2}C_{4}\frac{\theta k\log^{3}T}{T}+T^{-2}\\
 & \overset{\text{(b)}}{\leq}8c_{1}C_{4}C_{0}\frac{k\log^{2}T}{T}+64c_{1}^{2}C_{4}\frac{\theta k\log^{3}T}{T}+T^{-3}\leq C_{5}\frac{k\log^{3}T}{T}.
\end{align*}
Here step (a) utilizes Lemma~\ref{lemma:step-size}; step (b) follows
from Lemma~\ref{lem:typical}; while step (c) holds provided that
$C_{5}\gg c_{1}^{2}C_{4}C_{0}\theta$. This further implies that 
\begin{align}
\mathsf{TV}(p_{X_{1}},p_{\overline{Y}_{1}}) & =\int_{p_{X_{1}}(x)>p_{\overline{Y}_{1}}(x)}\big(p_{X_{1}}(x)-p_{\overline{Y}_{1}}(x)\big)\mathrm{d}x=\int\Delta_{1}(x)\mathrm{d}x\leq C_{5}\frac{k\log^{3}T}{T}.\label{eq:proof-TV-1-low-d}
\end{align}
Equipped with (\ref{eq:proof-TV-1-low-d}), we can follow the same
steps in Section~\ref{subsec:proof-main-estimation} to control the
estimation error, which gives
\[
\mathsf{TV}(p_{X_{1}},p_{Y_{1}})\leq C\frac{k\log^{3}T}{T}+C\varepsilon_{\mathsf{score}}\sqrt{\log T}
\]
as claimed, provided that $C\gg\sqrt{c_{1}}+C_{5}$.

\section{Discussion}

In this paper, we establish an $O(d/T)$ convergence theory for the
DDPM sampler, assuming access to $\ell_{2}$-accurate score estimates. This significantly improves upon the state-of-the-art convergence
rate of $O(\sqrt{d/T})$ in \citet{benton2023linear}. Compared to the recent work of \citet{li2024sharp}, which also achieves an $O(d/T)$ rate for another DDIM sampler, our approach relaxes stringent score estimation requirements, such as the need for the Jacobian of the score estimates to closely match that of the true score functions. Furthermore, to account for low-dimensional structures in the target data distribution, we extend our theory to achieve an $O(k/T)$ convergence bound under careful coefficient design, where $k$ is the intrinsic dimension. This improves upon the prior convergence rate of $O(\sqrt{k/T})$ established in \citet{potaptchik2024linear,huang2024denoising}.

This work opens several promising directions for future research.
For example, it remains unclear whether the $O(d/T)$ convergence rate is tight for the DDPM sampler; it would be of interest to develop lower bounds on certain
hard instances. 
Another intriguing direction is to explore whether
the analysis in this paper can extend to developing convergence theory
in Wasserstein distance (e.g., \citet{gao2024convergence,benton2023error}). 
Finally, while this paper focuses on analyzing the discretization error of the DDPM sampler and treats the score matching stage as a black box, it would be worthwhile to design score matching algorithms that adapt to unknown low-dimensional structures in the target data distribution.

\section*{Acknowledgements}

Gen Li is supported in part by the Chinese University of Hong Kong
Direct Grant for Research.

\appendix

\section{Proof of auxiliary lemmas in Section~\ref{sec:proof-main}}

\subsection{Proof of Lemma~\ref{lem:typical}\protect\protect\label{sec:proof-lemma-typical}}

For any pairs $(x,x_{0})\in\mathbb{R}^{d}\times\mathbb{R}^{d}$ satisfying
\begin{equation}
\|x-\sqrt{\overline{\alpha}_{t}}x_{0}\|_{2}^{2}\geq(6\theta+3c_{0})d(1-\overline{\alpha}_{t})\log T\eqqcolon R^{2}\label{eq:proof-lemma-1-1}
\end{equation}
where $c_{0}$ is defined in (\ref{eq:learning-rate}), we have 
\begin{align}
p_{X_{0}|X_{t}}(x_{0}\mymid x) & =\frac{p_{X_{0}}(x_{0})}{p_{X_{t}}(x)}p_{X_{t}|X_{0}}(x\mymid x_{0})\nonumber \\
 & \overset{\text{(i)}}{=}p_{X_{0}}(x_{0})\cdot\big(2\pi(1-\overline{\alpha}_{t})\big)^{-d/2}\exp\Big(-\frac{\|x-\sqrt{\overline{\alpha}_{t}}x_{0}\|_{2}^{2}}{2(1-\overline{\alpha}_{t})}-\log p_{X_{t}}(x)\Big)\nonumber \\
 & \overset{\text{(ii)}}{\leq}p_{X_{0}}(x_{0})\exp\Big(-\frac{\|x-\sqrt{\overline{\alpha}_{t}}x_{0}\|_{2}^{2}}{3(1-\overline{\alpha}_{t})}\Big).\label{eq:proof-lemma-1-2}
\end{align}
Here step (i) uses the fact that $X_{t}\mymid X_{0}=x_{0}\sim\mathcal{N}(\sqrt{\overline{\alpha}_{t}}x_{0},(1-\overline{\alpha}_{t})I_{d})$,
while step (ii) holds since 
\begin{align*}
-\frac{d}{2}\log2\pi(1-\overline{\alpha}_{t})-\frac{\|x-\sqrt{\overline{\alpha}_{t}}x_{0}\|_{2}^{2}}{2(1-\overline{\alpha}_{t})}-\log p_{X_{t}}(x) & \overset{\text{(iii)}}{\leq}\frac{c_{0}}{2}d\log T-\frac{\|x-\sqrt{\overline{\alpha}_{t}}x_{0}\|_{2}^{2}}{2(1-\overline{\alpha}_{t})}+\theta d\log T\\
 & \overset{\text{(iv)}}{\leq}-\frac{\|x-\sqrt{\overline{\alpha}_{t}}x_{0}\|_{2}^{2}}{3(1-\overline{\alpha}_{t})},
\end{align*}
where step (iii) follows from the fact that $1-\overline{\alpha}_{t}\geq1-\alpha_{1}=\beta_{1}$
for any $1\leq t\leq T$, and $-\log p_{X_{t}}(x)\le\theta d\log T$;
step (iv) follows from (\ref{eq:proof-lemma-1-1}). Recall that 
\begin{equation}
s_{t}^{\star}(x)=-\frac{1}{1-\overline{\alpha}_{t}}\int_{x_{0}}p_{X_{0}|X_{t}}(x_{0}\mymid x)\big(x-\sqrt{\overline{\alpha}_{t}}x_{0}\big)\mathrm{d}x_{0}\label{eq:proof-lemma-1-3}
\end{equation}
and 
\begin{equation}
\mathsf{Tr}\left(I-J_{t}(x)\right)=\frac{1}{1-\overline{\alpha}_{t}}\Big(\int_{x_{0}}p_{X_{0}|X_{t}}(x_{0}\mymid x)\Vert x-\sqrt{\overline{\alpha}_{t}}x_{0}\Vert_{2}^{2}\mathrm{d}x_{0}-\big\Vert\int_{x_{0}}p_{X_{0}|X_{t}}(x_{0}\mymid x)\big(x-\sqrt{\overline{\alpha}_{t}}x_{0}\big)\mathrm{d}x_{0}\big\Vert_{2}^{2}\Big).\label{eq:proof-lemma-1-4}
\end{equation}
Then we have 
\begin{align}
 & \Vert s_{t}^{\star}(x)\Vert_{2}=\frac{1}{1-\overline{\alpha}_{t}}\Big\|\int_{x_{0}}p_{X_{0}|X_{t}}(x_{0}\mymid x)\big(x-\sqrt{\overline{\alpha}_{t}}x_{0}\big)\mathrm{d}x_{0}\Big\|_{2}\overset{\text{(a)}}{\leq}\frac{1}{1-\overline{\alpha}_{t}}\int_{x_{0}}p_{X_{0}|X_{t}}(x_{0}\mymid x)\Vert x-\sqrt{\overline{\alpha}_{t}}x_{0}\Vert_{2}\mathrm{d}x_{0}\nonumber \\
 & \qquad\leq\frac{1}{1-\overline{\alpha}_{t}}\int p_{X_{0}|X_{t}}(x_{0}\mymid x)\Vert x-\sqrt{\overline{\alpha}_{t}}x_{0}\Vert_{2}\ind\left\{ \|x-\sqrt{\overline{\alpha}_{t}}x_{0}\|_{2}\leq R\right\} \mathrm{d}x_{0}\nonumber \\
 & \qquad\qquad+\frac{1}{1-\overline{\alpha}_{t}}\int p_{X_{0}|X_{t}}(x_{0}\mymid x)\Vert x-\sqrt{\overline{\alpha}_{t}}x_{0}\Vert_{2}\ind\left\{ \|x-\sqrt{\overline{\alpha}_{t}}x_{0}\|_{2}>R\right\} \mathrm{d}x_{0}\nonumber \\
 & \qquad\overset{\text{(b)}}{\leq}\frac{R}{1-\overline{\alpha}_{t}}+\frac{1}{1-\overline{\alpha}_{t}}\int p_{X_{0}}(x_{0})\exp\Big(-\frac{\|x-\sqrt{\overline{\alpha}_{t}}x_{0}\|_{2}^{2}}{3(1-\overline{\alpha}_{t})}\Big)\|x-\sqrt{\overline{\alpha}_{t}}x_{0}\|_{2}\ind\left\{ \|x-\sqrt{\overline{\alpha}_{t}}x_{0}\|_{2}>R\right\} \mathrm{d}x_{0}\nonumber \\
 & \qquad\overset{\text{(c)}}{\leq}\frac{R}{1-\overline{\alpha}_{t}}+\sqrt{\frac{3}{1-\overline{\alpha}_{t}}}\int p_{X_{0}}(x_{0})\exp\Big(-\frac{\|x-\sqrt{\overline{\alpha}_{t}}x_{0}\|_{2}^{2}}{6(1-\overline{\alpha}_{t})}\Big)\ind\left\{ \|x-\sqrt{\overline{\alpha}_{t}}x_{0}\|_{2}>R\right\} \mathrm{d}x_{0}\nonumber \\
 & \qquad\leq\frac{R}{1-\overline{\alpha}_{t}}+\sqrt{\frac{3}{1-\overline{\alpha}_{t}}}\exp\Big(-\frac{R^{2}}{6(1-\overline{\alpha}_{t})}\Big)\overset{\text{(d)}}{\leq}\frac{2R}{1-\overline{\alpha}_{t}}.\label{eq:proof-lemma-1-5}
\end{align}
Here step (a) utilizes Jensen's inequality; step (b) follows from
(\ref{eq:proof-lemma-1-2}); step (c) follows from the fact that $z\exp(-z^{2})\leq\exp\left(-z^{2}/2\right)$
holds for any $z\geq0$; whereas step (d) holds provided that $c_{0}$
is sufficiently large. In addition, we have 
\begin{align*}
\mathsf{Tr}(I-J_{t}(x)) & \leq\frac{1}{1-\overline{\alpha}_{t}}\mathbb{E}\left[\Vert X_{t}-\sqrt{\overline{\alpha}_{t}}X_{0}\Vert_{2}^{2}\mymid X_{t}=x\right]=\frac{1}{1-\overline{\alpha}_{t}}\int_{x_{0}}p_{X_{0}|X_{t}}(x_{0}\mymid x)\Vert x-\sqrt{\overline{\alpha}_{t}}x_{0}\Vert_{2}^{2}\mathrm{d}x_{0}.
\end{align*}
Then we can use the analysis similar to (\ref{eq:proof-lemma-1-5})
to show that 
\begin{align}
 & \mathsf{Tr}(I-J_{t}(x))\overset{\text{(i)}}{\leq}\frac{1}{1-\overline{\alpha}_{t}}\int_{x_{0}}p_{X_{0}|X_{t}}(x_{0}\mymid x)\Vert x-\sqrt{\overline{\alpha}_{t}}x_{0}\Vert_{2}^{2}\mathrm{d}x_{0}\nonumber \\
 & \qquad\leq\frac{1}{1-\overline{\alpha}_{t}}\int p_{X_{0}|X_{t}}(x_{0}\mymid x)\Vert x-\sqrt{\overline{\alpha}_{t}}x_{0}\Vert_{2}^{2}\ind\left\{ \|x-\sqrt{\overline{\alpha}_{t}}x_{0}\|_{2}\leq R\right\} \mathrm{d}x_{0}\nonumber \\
 & \qquad\qquad+\frac{1}{1-\overline{\alpha}_{t}}\int p_{X_{0}|X_{t}}(x_{0}\mymid x)\Vert x-\sqrt{\overline{\alpha}_{t}}x_{0}\Vert_{2}^{2}\ind\left\{ \|x-\sqrt{\overline{\alpha}_{t}}x_{0}\|_{2}>R\right\} \mathrm{d}x_{0}\nonumber \\
 & \qquad\overset{\text{(ii)}}{\leq}\frac{R^{2}}{1-\overline{\alpha}_{t}}+\frac{1}{1-\overline{\alpha}_{t}}\int p_{X_{0}}(x_{0})\exp\Big(-\frac{\|x-\sqrt{\overline{\alpha}_{t}}x_{0}\|_{2}^{2}}{3(1-\overline{\alpha}_{t})}\Big)\|x-\sqrt{\overline{\alpha}_{t}}x_{0}\|_{2}^{2}\ind\left\{ \|x-\sqrt{\overline{\alpha}_{t}}x_{0}\|_{2}>R\right\} \mathrm{d}x_{0}\nonumber \\
 & \qquad\overset{\text{(iii)}}{\leq}\frac{R^{2}}{1-\overline{\alpha}_{t}}+3\int p_{X_{0}}(x_{0})\exp\Big(-\frac{\|x-\sqrt{\overline{\alpha}_{t}}x_{0}\|_{2}^{2}}{6(1-\overline{\alpha}_{t})}\Big)\ind\left\{ \|x-\sqrt{\overline{\alpha}_{t}}x_{0}\|_{2}>R\right\} \mathrm{d}x_{0}\nonumber \\
 & \qquad\leq\frac{R^{2}}{1-\overline{\alpha}_{t}}+3\exp\Big(-\frac{R^{2}}{6(1-\overline{\alpha}_{t})}\Big)\overset{\text{(iv)}}{\leq}\frac{2R^{2}}{1-\overline{\alpha}_{t}}.\label{eq:proof-lemma-1-6}
\end{align}
Here step (i) follows from (\eqref{eq:proof-lemma-1-4}); step (ii)
follows from (\ref{eq:proof-lemma-1-2}); step (iii) follows from
the fact that $x\exp(-x)\leq\exp\left(-x/2\right)$ holds for any
$z\geq0$; while step (iv) holds provided that $c_{0}$ is sufficiently
large.

Finally, we invoke Lemma~\ref{lemma:jacob-sum} to achieve 
\begin{equation}
\sum_{t=2}^{T}\frac{1-\alpha_{t}}{1-\overline{\alpha}_{t}}\mathsf{Tr}\big(\mathbb{E}\big[\big(\Sigma_{\overline{\alpha}_{t}}(X_{t})\big)^{2}\big]\big)\leq C_{J}d\log T,\label{eq:proof-lemma-1-7}
\end{equation}
where the matrix function $\Sigma_{\overline{\alpha}_{t}}(\cdot)$
is defined in Lemma~\ref{lemma:jacob-sum} as 
\[
\Sigma_{\overline{\alpha}_{t}}(x)\coloneqq\mathsf{Cov}\big(Z\mymid\sqrt{\overline{\alpha}_{t}}X_{0}+\sqrt{1-\overline{\alpha}_{t}}Z=x\big)
\]
for an independent $Z\sim\mathcal{N}(0,I_{d})$. It is straightforward
to check that $J_{t}(x)=I_{d}-\Sigma_{\overline{\alpha}_{t}}(x)$,
therefore we have 
\begin{align}
\sum_{t=2}^{T}\frac{1-\alpha_{t}}{1-\overline{\alpha}_{t}}\mathsf{Tr}\big(\mathbb{E}\big[\big(\Sigma_{\overline{\alpha}_{t}}(X_{t})\big)^{2}\big]\big) & =\sum_{t=2}^{T}\frac{1-\alpha_{t}}{1-\overline{\alpha}_{t}}\mathbb{E}\big[\mathsf{Tr}\big((I_{d}-J_{t}(X_{t}))^{2}\big)\big]\nonumber \\
 & =\sum_{t=2}^{T}\frac{1-\alpha_{t}}{1-\overline{\alpha}_{t}}\mathbb{E}\big[\Vert I_{d}-J_{t}(X_{t})\Vert_{\mathrm{F}}^{2}\big].\label{eq:proof-lemma-1-8}
\end{align}
Here the last relation holds since $\mathsf{Tr}(A^{2})=\Vert A\Vert_{\mathrm{F}}^{2}$
for any symmetric matrix $A$. We conclude that 
\begin{align*}
\sum_{t=2}^{T}\frac{1-\alpha_{t}}{1-\overline{\alpha}_{t}}\int_{x_{t}}\Vert J_{t}(x_{t})\Vert_{\mathrm{F}}^{2}p_{X_{t}}(x_{t})\mathrm{d}x_{t} & =\sum_{t=2}^{T}\frac{1-\alpha_{t}}{1-\overline{\alpha}_{t}}\mathbb{E}\big[\Vert J_{t}(X_{t})\Vert_{\mathrm{F}}^{2}\big]\\
 & \overset{\text{(a)}}{\leq}\sum_{t=2}^{T}\frac{1-\alpha_{t}}{1-\overline{\alpha}_{t}}\mathbb{E}\big[2\Vert I_{d}-J_{t}(X_{t})\Vert_{\mathrm{F}}^{2}+2\Vert I_{d}\Vert_{\mathrm{F}}^{2}\big]\\
 & \overset{\text{(b)}}{\leq}2C_{J}d\log T+16c_{1}d\log T\overset{\text{(c)}}{\leq}C_{0}d\log T.
\end{align*}
Here step (a) utilizes the triangle inequality and the AM-GM inequality;
step (b) follows from (\ref{eq:proof-lemma-1-7}), (\ref{eq:proof-lemma-1-8})
and Lemma~\ref{lemma:step-size}; while step (c) holds provided that
$C_{0}\gg C_{J}+c_{1}$.

\subsection{Proof of Lemma~\ref{lemma:recursion}\protect\protect\label{sec:proof-lemma-recursion}}

We first observe that 
\begin{align}
p_{\overline{Y}_{t-1}^{-}}(x_{t-1}) & \geq\int_{\mathbb{R}^{d}}p_{\overline{Y}_{t-1}^{-}|\overline{Y}_{t}}(x_{t-1}\mymid x_{t})p_{\overline{Y}_{t}}(x_{t})\mathrm{d}x_{t}\overset{\text{(i)}}{\geq}\int_{x_{t}\in\mathcal{E}_{t,1}}p_{Y_{t-1}^{\star}|Y_{t}^{\star}}(x_{t-1}\mymid x_{t})p_{\overline{Y}_{t}}(x_{t})\mathrm{d}x_{t}\nonumber \\
 & \overset{\text{(ii)}}{=}\int_{x_{t}\in\mathcal{E}_{t,1}}p_{Y_{t-1}^{\star}|Y_{t}^{\star}}(x_{t-1}\mymid x_{t})p_{X_{t}}(x_{t})\mathrm{d}x_{t}-\Delta_{t\to t-1}(x_{t-1})\label{eq:proof-main-0}
\end{align}
where we define
\[
\Delta_{t\to t-1}(x_{t-1})\coloneqq\int_{x_{t}\in\mathcal{E}_{t,1}}p_{Y_{t-1}^{\star}|Y_{t}^{\star}}(x_{t-1}\mymid x_{t})\Delta_{t}(x_{t})\mathrm{d}x_{t}\geq0.
\]
Here step (i) follows from (\ref{eq:transition-Ytbar-Yt-1barminus}),
while step (ii) makes use of the definition \eqref{eq:induction-1}.
It is straightforward to check that 
\begin{equation}
\int\Delta_{t\to t-1}(x)\mathrm{d}x=\int_{x_{t-1}}\int_{x_{t}\in\mathcal{E}_{t,1}}p_{Y_{t-1}^{\star}|Y_{t}^{\star}}(x_{t-1}\mymid x_{t})\Delta_{t}(x_{t})\mathrm{d}x_{t}\mathrm{d}x_{t-1}\leq\int\Delta_{t}(x)\mathrm{d}x.\label{eq:proof-main-0.5}
\end{equation}
For any $x_{t-1}$ such that $\Delta_{t-1}(x_{t-1})>0$, we have 
\begin{align}
 & p_{X_{t-1}}(x_{t-1})-\Delta_{t-1}(x_{t-1})+\Delta_{t\to t-1}(x_{t-1})\nonumber \\
 & \quad\overset{\text{(a)}}{=}p_{\overline{Y}_{t-1}^{-}}(x_{t-1})+\Delta_{t\to t-1}(x_{t-1})\overset{\text{(b)}}{\geq}\int_{x_{t}\in\mathcal{E}_{t,1}}p_{Y_{t-1}^{\star}|Y_{t}^{\star}}(x_{t-1}\mymid x_{t})p_{X_{t}}(x_{t})\mathrm{d}x_{t}\nonumber \\
 & \quad\overset{\text{(c)}}{=}\int_{x_{t}\in\mathcal{E}_{t,1}}p_{X_{t}}(x_{t})\Big(\frac{\alpha_{t}}{2\pi(1-\alpha_{t})}\Big)^{d/2}\exp\Big(-\frac{\big\|\sqrt{\alpha_{t}}x_{t-1}-\big(x_{t}+(1-\alpha_{t})s_{t}^{\star}(x_{t})\big)\big\|^{2}}{2(1-\alpha_{t})}\Big)\mathrm{d}x_{t}\nonumber \\
 & \quad\overset{\text{(d)}}{=}\int_{x_{t}\in\mathcal{E}_{t,1}}\mathsf{det}\Big(I-\frac{1-\alpha_{t}}{1-\overline{\alpha}_{t}}J_{t}(x_{t})\Big)^{-1}p_{X_{t}}(x_{t})\Big(\frac{\alpha_{t}}{2\pi(1-\alpha_{t})}\Big)^{d/2}\exp\Big(-\frac{\big\|\sqrt{\alpha_{t}}x_{t-1}-u_{t}\big\|^{2}}{2(1-\alpha_{t})}\Big)\mathrm{d}u_{t}.\label{eq:proof-main-1}
\end{align}
Here step (a) utilizes the definition (\ref{eq:induction-1}) and
$p_{\overline{Y}_{t-1}}(x_{t-1})=p_{\overline{Y}_{t-1}^{-}}(x_{t-1})$,
which is a consequence of (\ref{eq:transition-fact-1}) and $\Delta_{t-1}(x_{t-1})>0$;
step (b) follows from (\ref{eq:proof-main-0}); step (c) follows from
the definition (\ref{eq:SDE-Ystar}); whereas step (d) applies the
change of variable $u_{t}=x_{t}+(1-\alpha_{t})s_{t}^{\star}(x_{t})$.
Moving forward, we need the following lemma.

\begin{lemma}\label{lemma:det}For any $x_{t}\in\mathcal{E}_{t,1}$,
we have 
\begin{align}
 & \mathsf{det}\Big(I-\frac{1-\alpha_{t}}{1-\overline{\alpha}_{t}}J_{t}(x_{t})\Big)^{-1}p_{X_{t}}(x_{t})\nonumber \\
 & \qquad=\big(2\pi(2\alpha_{t}-1-\overline{\alpha}_{t})\big)^{-d/2}\int_{x_{0}}p_{X_{0}}(x_{0})\exp\Big(-\frac{\|u_{t}-\sqrt{\overline{\alpha}_{t}}x_{0}\|^{2}}{2(2\alpha_{t}-1-\overline{\alpha}_{t})}\Big)\mathrm{d}x_{0}\nonumber \\
 & \qquad\qquad\cdot\exp\Big(\xi_{t}(x_{t})+O\Big(\Big(\frac{1-\alpha_{t}}{1-\overline{\alpha}_{t}}\Big)^{2}\big(d\log T+\|J_{t}(x_{t})\|_{\mathsf{F}}^{2}\big)\Big)\Big),\label{eq:lemma-det-1}
\end{align}
where $\xi_{t}(x_{t})\le0$ satisfies 
\begin{equation}
\int_{x_{t}\in\mathcal{E}_{t,1}}|\xi_{t}(x_{t})|p_{X_{t}}(x_{t})\mathrm{d}x_{t}\leq C_{3}\Big(\frac{1-\alpha_{t}}{1-\overline{\alpha}_{t}}\Big)^{2}\int_{x_{t}\in\mathcal{E}_{t,1}}\big(d\log T+\|J_{t}(x_{t})\|_{\mathsf{F}}^{2}\big)p_{X_{t}}(x_{t})\mathrm{d}x_{t}+T^{-4}\label{eq:lemma-det-2}
\end{equation}
for some universal constant $C_{3}>0$. \end{lemma}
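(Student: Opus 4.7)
The strategy is to prove Lemma~\ref{lemma:det} in three stages: first, recognize the LHS as a pushforward density via a change of variables; second, rewrite the RHS as a conditional expectation against $p_{X_0\mid X_t}$ using the Bayes formula; third, expand both sides to second order in $\lambda_t := (1-\alpha_t)/(1-\overline{\alpha}_t)$ and match terms using Lemma~\ref{lem:typical}.

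For the first stage, the map $x_t \mapsto u_t(x_t) := x_t + (1-\alpha_t) s_t^\star(x_t) = x_t - \lambda_t g_t(x_t)$ has Jacobian $I - \lambda_t J_t(x_t)$, which is strictly positive definite on $\mathcal{E}_{t,1}$ since $\lambda_t$ is small (by Lemma~\ref{lemma:step-size}) and $I - J_t(x_t) \succeq 0$. Thus $\det(I-\lambda_t J_t(x_t))^{-1} p_{X_t}(x_t)$ equals the density, at $u_t$, of the pushforward of $p_{X_t}$. For the second stage, using the key identity $2\alpha_t - 1 - \overline{\alpha}_t = (1-\overline{\alpha}_t)(1-2\lambda_t) =: \tilde{\sigma}_t^2$ together with the Bayes relation $p_{X_0\mid X_t}(x_0\mid x_t) = p_{X_0}(x_0)(2\pi(1-\overline{\alpha}_t))^{-d/2}\exp(-\|x_t - \sqrt{\overline{\alpha}_t}x_0\|^2/(2(1-\overline{\alpha}_t)))/p_{X_t}(x_t)$, the RHS integral factors as
\[
(2\pi\tilde{\sigma}_t^2)^{-d/2} \int p_{X_0}(x_0)\, e^{-\|u_t - \sqrt{\overline{\alpha}_t}x_0\|^2/(2\tilde{\sigma}_t^2)}\, dx_0 = (1-2\lambda_t)^{-d/2}\, p_{X_t}(x_t)\, \mathbb{E}\big[e^{-A(X_0)} \mid X_t = x_t\big],
\]
where, with $y := x_t - \sqrt{\overline{\alpha}_t} x_0$, direct completion of the square gives
\[
A(x_0) := \frac{\lambda_t(\|y\|^2 - g_t(x_t)^\top y)}{(1-\overline{\alpha}_t)(1-2\lambda_t)} + \frac{\lambda_t^2 \|g_t(x_t)\|^2}{2(1-\overline{\alpha}_t)(1-2\lambda_t)}.
\]
Lemma~\ref{lemma:det} thus reduces to proving
\[
\log\det(I-\lambda_t J_t)^{-1} + \tfrac{d}{2}\log(1-2\lambda_t) - \log\mathbb{E}\big[e^{-A} \mid X_t=x_t\big] = \xi_t(x_t) + O\big(\lambda_t^2(d\log T + \|J_t\|_F^2)\big)
\]
with $\xi_t\le 0$ and the integral bound (\ref{eq:lemma-det-2}).

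For the third stage, the conditional moments $\mathbb{E}[y\mid X_t=x_t] = g_t(x_t)$ and $\mathrm{Cov}(y\mid X_t=x_t) = (1-\overline{\alpha}_t)(I-J_t(x_t))$ yield $\mathbb{E}[A\mid X_t=x_t] = \lambda_t\mathsf{Tr}(I-J_t)/(1-2\lambda_t) + \lambda_t^2\|g_t\|^2/(2(1-\overline{\alpha}_t)(1-2\lambda_t))$. Jensen's inequality $\log\mathbb{E}[e^{-A}] \ge -\mathbb{E}[A\mid X_t]$ provides the one-sided bound that defines $\xi_t\le 0$. Taylor expanding $\log\det(I-\lambda_t J_t)^{-1} = \lambda_t\mathsf{Tr}(J_t) + \tfrac{\lambda_t^2}{2}\|J_t\|_F^2 + O(\lambda_t^3\|J_t\|_F^3)$ and $\tfrac{d}{2}\log(1-2\lambda_t) = -d\lambda_t - d\lambda_t^2 + O(d\lambda_t^3)$, the $O(\lambda_t)$ pieces combine to $-\lambda_t\mathsf{Tr}(I-J_t)$ and cancel the leading part of $\mathbb{E}[A]$. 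The remaining quadratic residuals are $\lambda_t^2$-multiples of $\|J_t\|_F^2$, $d$, $\mathsf{Tr}(I-J_t)$, and $\|g_t\|^2/(1-\overline{\alpha}_t) = (1-\overline{\alpha}_t)\|s_t^\star\|^2$; by Lemma~\ref{lem:typical} each of these is $O(d\log T + \|J_t\|_F^2)$ pointwise on $\mathcal{E}_{t,1}$, fitting the required $O$-term. The $\|g_t\|^2$-dependent portion only needs an integral bound, which follows from the Stein/de Bruijn identity $\mathbb{E}[(1-\overline{\alpha}_t)\|s_t^\star(X_t)\|^2] = \mathbb{E}[\mathsf{Tr}(J_t(X_t))] \le d$, giving $\int_{\mathcal{E}_{t,1}} \lambda_t^2 \|g_t\|^2/(1-\overline{\alpha}_t)\, p_{X_t}(x_t)\, dx_t \le \lambda_t^2 d$, absorbed into $C_3\lambda_t^2\int_{\mathcal{E}_{t,1}}(d\log T + \|J_t\|_F^2)p_{X_t}\, dx_t$ in (\ref{eq:lemma-det-2}).

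The main technical obstacle is rigorously controlling the cumulant expansion of $\log\mathbb{E}[e^{-A}]$: because $y\mid X_t = x_t$ is non-Gaussian, the Taylor truncation errors must be handled by splitting the $p_{X_0\mid X_t}$-integral at radius $R \asymp \sqrt{(1-\overline{\alpha}_t)d\log T}$ following the proof of Lemma~\ref{lem:typical}. On $\{\|y\|\le R\}$ the expansion is controlled through the conditional second-moment bounds above; on $\{\|y\| > R\}$ the pointwise Gaussian tail estimate $p_{X_0\mid X_t}(x_0\mid x_t) \le p_{X_0}(x_0)\exp(-\|y\|^2/(3(1-\overline{\alpha}_t)))$ from (\ref{eq:proof-lemma-1-2}) makes the tail contribution polynomially small in $T^{-1}$, supplying the $T^{-4}$ slack in (\ref{eq:lemma-det-2}). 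A secondary subtlety is that no error term may scale with $\|s_t^\star\|^2 \sim d/(1-\overline{\alpha}_t)$ at its full ambient rate; the accompanying $(1-\overline{\alpha}_t)$ prefactor that comes from $\|g_t\|^2 = (1-\overline{\alpha}_t)^2\|s_t^\star\|^2$ in $A$ is exactly what tames this through Lemma~\ref{lem:typical} pointwise and through the Stein identity in average.
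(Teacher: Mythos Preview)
Your derivation of the pointwise identity (\ref{eq:lemma-det-1}) is essentially the paper's argument in different notation: your $A(x_0)$ equals $\mathbb{E}[A\mid X_t]+\zeta_t(x_t,x_0)$ in the paper's language, your $\xi_t=-\mathbb{E}[A\mid X_t]-\log\mathbb{E}[e^{-A}]$ coincides with the paper's $-\log\mathbb{E}[e^{-\zeta_t}\mid X_t]$, and the first-order cancellation between $\mathbb{E}[A]$, $\log\det(I-\lambda_t J_t)^{-1}$ and $\tfrac{d}{2}\log(1-2\lambda_t)$ is exactly what is done there, with the same Lemma~\ref{lem:typical} inputs controlling the $O(\cdot)$ remainder.

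The gap is in your plan for (\ref{eq:lemma-det-2}). Bounding $|\xi_t|=\log\mathbb{E}[e^{-\zeta_t}]$ via a second-order (cumulant) expansion forces you to control $\mathrm{Var}(\zeta_t\mid X_t)$, which involves the \emph{fourth} conditional moment of $y=x_t-\sqrt{\overline{\alpha}_t}X_0$. Neither Lemma~\ref{lem:typical} nor the radius-$R$ splitting supplies this: on $\{\|y\|\le R\}$ with $R^2\asymp(1-\overline{\alpha}_t)d\log T$ you only get $|\zeta_t|\lesssim\lambda_t d\log T$, hence $\mathbb{E}[\zeta_t^2\mid X_t]\lesssim\lambda_t^2(d\log T)^2$, not $\lambda_t^2(d\log T+\|J_t\|_{\mathrm F}^2)$. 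The cross term $\mathrm{Var}(g_t^\top y\mid X_t)=(1-\overline{\alpha}_t)\,g_t^\top(I-J_t)g_t$ already exhibits the same $(d\log T)^2$ scaling on $\mathcal{E}_{t,1}$ and is not rescued by the Stein identity you quote (that identity bounds $(1-\overline{\alpha}_t)\|s_t^\star\|^2$ in average, not the product with $\|I-J_t\|$). That extra $d\log T$ factor would propagate through Lemma~\ref{lemma:recursion} and degrade the final rate to $d^2/T$.

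The paper does \emph{not} expand $\xi_t$ at all. Instead, it integrates the already-proved identity (\ref{eq:lemma-det-1}) over $(u_t,x_0)$, observes that the Gaussian mixture on the right integrates to at most $1$ (Lemma~\ref{lemma:small-prob}), changes variables $\mathrm{d}u_t=\det(I-\lambda_t J_t)\,\mathrm{d}x_t$, and applies $e^{-\xi_t}\ge 1-\xi_t$ to isolate $\int_{\mathcal{E}_{t,1}}(-\xi_t)\,p_{X_t}\,\mathrm{d}x_t$ on one side. This ``integrate the density to one and rearrange'' trick is the missing idea; it is precisely what keeps (\ref{eq:lemma-det-2}) linear in $d\log T+\|J_t\|_{\mathrm F}^2$ without ever touching higher moments of $\zeta_t$.
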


\begin{proof}See Appendix~\ref{sec:proof-lemma-det}.\end{proof}

Taking the decomposition (\ref{eq:lemma-det-1}) and (\ref{eq:proof-main-1})
collectively, we have 
\begin{align}
 & p_{X_{t-1}}(x_{t-1})-\Delta_{t-1}(x_{t-1})+\Delta_{t\to t-1}(x_{t-1})+\delta_{t-1}(x_{t-1})\nonumber \\
 & \quad\geq\int_{x_{0}}\int_{x_{t}}\exp\bigg(\bigg[\xi_{t}(x_{t})+O\Big(\Big(\frac{1-\alpha_{t}}{1-\overline{\alpha}_{t}}\Big)^{2}\big(d\log T+\|J_{t}(x_{t})\|_{\mathsf{F}}^{2}\big)\Big)\bigg]\ind\left\{ x_{t}\in\mathcal{E}_{t,1}\right\} \bigg)p_{X_{0}}(x_{0})\nonumber \\
 & \quad\quad\cdot\Big(\frac{\alpha_{t}}{4\pi^{2}(1-\alpha_{t})(2\alpha_{t}-1-\overline{\alpha}_{t})}\Big)^{d/2}\exp\Big(-\frac{\|u_{t}-\sqrt{\overline{\alpha}_{t}}x_{0}\|^{2}}{2(2\alpha_{t}-1-\overline{\alpha}_{t})}\Big)\exp\Big(-\frac{\big\|\sqrt{\alpha_{t}}x_{t-1}-u_{t}\big\|^{2}}{2(1-\alpha_{t})}\Big)\mathrm{d}u_{t}\mathrm{d}x_{0},\label{eq:proof-main-2}
\end{align}
where we define 
\begin{align}
\delta_{t-1}(x_{t-1}) & :=\int_{x_{0}}\int_{x_{t}\notin\mathcal{E}_{t,1}}p_{X_{0}}(x_{0})\Big(\frac{\alpha_{t}}{4\pi^{2}(1-\alpha_{t})(2\alpha_{t}-1-\overline{\alpha}_{t})}\Big)^{d/2}\nonumber \\
 & \qquad\qquad\cdot\exp\Big(-\frac{\|u_{t}-\sqrt{\overline{\alpha}_{t}}x_{0}\|^{2}}{2(2\alpha_{t}-1-\overline{\alpha}_{t})}\Big)\exp\Big(-\frac{\big\|\sqrt{\alpha_{t}}x_{t-1}-u_{t}\big\|^{2}}{2(1-\alpha_{t})}\Big)\mathrm{d}u_{t}\mathrm{d}x_{0}.\label{eq:delta-defn}
\end{align}
Moreover, it is straightforward to check that 
\begin{align}
 & \int_{x_{0}}\int_{x_{t}}p_{X_{0}}(x_{0})\Big(\frac{\alpha_{t}}{4\pi^{2}(1-\alpha_{t})(2\alpha_{t}-1-\overline{\alpha}_{t})}\Big)^{d/2}\exp\Big(-\frac{\|u_{t}-\sqrt{\overline{\alpha}_{t}}x_{0}\|^{2}}{2(2\alpha_{t}-1-\overline{\alpha}_{t})}\Big)\nonumber \\
 & \qquad\qquad\qquad\qquad\cdot\exp\Big(-\frac{\big\|\sqrt{\alpha_{t}}x_{t-1}-u_{t}\big\|^{2}}{2(1-\alpha_{t})}\Big)\mathrm{d}u_{t}\mathrm{d}x_{0}=p_{X_{t-1}}(x_{t-1}).\label{eq:proof-main-3}
\end{align}
Then we can continue the derivation in (\ref{eq:proof-main-2}): 
\begin{align*}
 & p_{X_{t-1}}(x_{t-1})-\Delta_{t-1}(x_{t-1})+\Delta_{t\to t-1}(x_{t-1})+\delta_{t-1}(x_{t-1})\\
 & \quad\overset{\text{(i)}}{\geq}\int_{x_{0}}\int_{x_{t}}\bigg(1+\bigg[\xi_{t}(x_{t})+O\Big(\Big(\frac{1-\alpha_{t}}{1-\overline{\alpha}_{t}}\Big)^{2}\big(d\log T+\|J_{t}(x_{t})\|_{\mathsf{F}}^{2}\big)\Big)\bigg]\ind\left\{ x_{t}\in\mathcal{E}_{t,1}\right\} \bigg)p_{X_{0}}(x_{0})\\
 & \quad\qquad\cdot\Big(\frac{\alpha_{t}}{4\pi^{2}(1-\alpha_{t})(2\alpha_{t}-1-\overline{\alpha}_{t})}\Big)^{d/2}\exp\Big(-\frac{\|u_{t}-\sqrt{\overline{\alpha}_{t}}x_{0}\|^{2}}{2(2\alpha_{t}-1-\overline{\alpha}_{t})}\Big)\exp\Big(-\frac{\big\|\sqrt{\alpha_{t}}x_{t-1}-u_{t}\big\|^{2}}{2(1-\alpha_{t})}\Big)\mathrm{d}u_{t}\mathrm{d}x_{0}\\
 & \quad\overset{\text{(ii)}}{=}p_{X_{t-1}}(x_{t-1})+\int_{x_{0}}\int_{x_{t}\in\mathcal{E}_{t,1}}\bigg[\xi_{t}(x_{t})+O\Big(\Big(\frac{1-\alpha_{t}}{1-\overline{\alpha}_{t}}\Big)^{2}\big(d\log T+\|J_{t}(x_{t})\|_{\mathsf{F}}^{2}\big)\Big)\bigg]p_{X_{0}}(x_{0})\\
 & \quad\qquad\cdot\Big(\frac{\alpha_{t}}{4\pi^{2}(1-\alpha_{t})(2\alpha_{t}-1-\overline{\alpha}_{t})}\Big)^{d/2}\exp\Big(-\frac{\|u_{t}-\sqrt{\overline{\alpha}_{t}}x_{0}\|^{2}}{2(2\alpha_{t}-1-\overline{\alpha}_{t})}\Big)\exp\Big(-\frac{\big\|\sqrt{\alpha_{t}}x_{t-1}-u_{t}\big\|^{2}}{2(1-\alpha_{t})}\Big)\mathrm{d}u_{t}\mathrm{d}x_{0}.
\end{align*}
Here step (i) follows from the fact that $e^{x}\ge1+x$ for all $x\in\mathbb{R}$,
while step (ii) follows from (\ref{eq:proof-main-3}). By rearranging
terms and integrate over the variable $x_{t-1}$, we arrive at 
\begin{align}
 & \int_{x_{t-1}}\Delta_{t-1}(x_{t-1})\mathrm{d}x_{t-1}\leq\int_{x_{t-1}}\big(\Delta_{t}(x_{t-1})+\delta_{t-1}(x_{t-1})\big)\mathrm{d}x_{t-1}\nonumber \\
 & \qquad+\int_{x_{0}}\int_{x_{t}\in\mathcal{E}_{t,1}}\bigg(|\xi_{t}(x_{t})|+O\Big(\Big(\frac{1-\alpha_{t}}{1-\overline{\alpha}_{t}}\Big)^{2}\big(d\log T+\|J_{t}(x_{t})\|_{\mathsf{F}}^{2}\big)\Big)\bigg)p_{X_{0}}(x_{0})\nonumber \\
 & \qquad\qquad\cdot\big(2\pi(2\alpha_{t}-1-\overline{\alpha}_{t})\big)^{-d/2}\exp\Big(-\frac{\|u_{t}-\sqrt{\overline{\alpha}_{t}}x_{0}\|_{2}^{2}}{2(2\alpha_{t}-1-\overline{\alpha}_{t})}\Big)\mathrm{d}u_{t}\mathrm{d}x_{0},\label{eq:proof-main-5}
\end{align}
where we used (\ref{eq:proof-main-0.5}) and for any fixed $u_{t}$,
the function 
\[
\left(2\pi\frac{1-\alpha_{t}}{\alpha_{t}}\right)^{-d/2}\exp\Big(-\frac{\big\|\sqrt{\alpha_{t}}x_{t-1}-u_{t}\big\|_{2}^{2}}{2(1-\alpha_{t})}\Big)
\]
is a density function of $x_{t-1}$. To establish the desired result,
we need the following two lemmas.

\begin{lemma}\label{lemma:u-x} For $x_{t}\in\mathcal{E}_{t,1}$,
we have 
\[
\int_{x_{0}}p_{X_{0}}(x_{0})\big(2\pi(2\alpha_{t}-1-\overline{\alpha}_{t})\big)^{-d/2}\exp\Big(-\frac{\|u_{t}-\sqrt{\overline{\alpha}_{t}}x_{0}\|^{2}}{2(2\alpha_{t}-1-\overline{\alpha}_{t})}\Big)\mathrm{d}x_{0}\leq20\det\Big(I-\frac{1-\alpha_{t}}{1-\overline{\alpha}_{t}}J_{t}(x_{t})\Big)^{-1}p_{X_{t}}(x_{t}).
\]
\end{lemma}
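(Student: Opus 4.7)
The LHS is the mixture density $q(u_t) \coloneqq \int p_{X_0}(x_0) \phi_\tau(u_t - \sqrt{\overline{\alpha}_t} x_0)\, \mathrm{d}x_0$ at the point $u_t = x_t + (1-\alpha_t) s_t^\star(x_t)$, where $\phi_\tau$ is the Gaussian density with covariance $\tau^2 I_d$ and $\tau^2 = 2\alpha_t - 1 - \overline{\alpha}_t$. Setting $\epsilon \coloneqq (1-\alpha_t)/(1-\overline{\alpha}_t)$, note that $\tau^2 = (1-2\epsilon)(1-\overline{\alpha}_t)$. The plan is to express $q(u_t)/p_{X_t}(x_t)$ as a posterior expectation via Bayes' rule, simplify using Tweedie's identity, and then bound the result by $20\det(I - \epsilon J_t(x_t))^{-1}$ on the typical set $\mathcal{E}_{t,1}$.

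Writing $v \coloneqq x_t - \sqrt{\overline{\alpha}_t} X_0$ and $\overline{v} \coloneqq \mathbb{E}[v \mymid X_t = x_t]$, Tweedie's formula gives $(1-\alpha_t) s_t^\star(x_t) = -\epsilon \overline{v}$, so that $u_t - \sqrt{\overline{\alpha}_t} X_0 = v - \epsilon \overline{v}$. Introducing the mean-zero residual $w \coloneqq v - \overline{v}$, expanding the squared norms inside the ratio $\phi_\tau(u_t - \sqrt{\overline{\alpha}_t} X_0)/\phi_{\sqrt{1-\overline{\alpha}_t}}(x_t - \sqrt{\overline{\alpha}_t} X_0)$ and completing the square yields
\begin{equation*}
\frac{q(u_t)}{p_{X_t}(x_t)} = (1-2\epsilon)^{-d/2}\, \exp\!\left(\frac{\epsilon\|\overline{v}\|_2^2}{4(1-\overline{\alpha}_t)}\right)\, \mathbb{E}\!\left[\exp\!\left(-\frac{\epsilon}{\tau^2}\bigl\|w + \tfrac{1}{2}\overline{v}\bigr\|_2^2\right) \,\Big|\, X_t = x_t\right].
\end{equation*}
The conditional expectation is at most $1$ since the exponent is nonpositive. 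For the middle factor, Lemma~\ref{lem:typical} gives $\|\overline{v}\|_2^2 = (1-\overline{\alpha}_t)^2 \|s_t^\star(x_t)\|_2^2 \lesssim (1-\overline{\alpha}_t)\, d\log T$ on $\mathcal{E}_{t,1}$, and $\epsilon \lesssim \log T/T$ by Lemma~\ref{lemma:step-size}, so this exponential is $\exp(O(d\log^2 T/T))$.

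It remains to trade the prefactor $(1-2\epsilon)^{-d/2}$ for $\det(I - \epsilon J_t(x_t))^{-1}$ via an eigenvalue comparison. Writing $J_t = I - \Sigma_{\overline{\alpha}_t}(x_t)$ with eigenvalues $y_i \in [0,1]$, the elementary inequality $(1-\epsilon y_i)^2 - (1-2\epsilon) = 2\epsilon(1-y_i) + \epsilon^2 y_i^2 \geq 0$ together with $\log(1+x) \leq x$ give, after summing over $i$,
\begin{equation*}
(1-2\epsilon)^{-d/2} \leq \exp\!\left(\frac{\epsilon^2 \|J_t\|_\mathrm{F}^2 + 2\epsilon\, \mathsf{Tr}(I-J_t)}{2(1-2\epsilon)}\right) \det(I - \epsilon J_t)^{-1} = \exp(O(\epsilon d))\, \det(I - \epsilon J_t(x_t))^{-1},
\end{equation*}
where the trivial bounds $\|J_t\|_\mathrm{F}^2 \leq d$ and $\mathsf{Tr}(I-J_t) \leq d$ follow from $0 \preceq J_t \preceq I$. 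Assembling the three pieces yields $q(u_t) \leq \exp(O(\epsilon d + d\log^2 T/T))\, \det(I - \epsilon J_t(x_t))^{-1} p_{X_t}(x_t)$; in the regime where the conclusion of Theorem~\ref{thm:main} is informative (i.e., $d\log^3 T/T = o(1)$), the exponent is $o(1)$ and the resulting multiplicative constant is absorbed into $20$. The chief obstacle is that the crude bound $\mathbb{E}[\exp(-\frac{\epsilon}{\tau^2}\|w + \overline{v}/2\|_2^2)] \leq 1$ discards the Gaussian-like concentration of $w$ (whose conditional covariance equals $(1-\overline{\alpha}_t)(I-J_t)$); fortunately, this looseness is exactly compensated by the slack in the eigenvalue inequality above, so no sharper treatment of the non-Gaussian posterior is required.
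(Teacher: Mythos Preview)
Your approach is genuinely different from the paper's and, modulo one fixable error, is correct and more streamlined. The paper proceeds by reusing the $\zeta_t(x_t,x_0)$ decomposition from Lemma~\ref{lemma:det}, bounding $-\zeta_t$ via AM--GM, and then splitting the integral over $x_0$ into a ``near'' set $\mathcal{A}(x_t)$ and its complement, invoking the tail estimate~(\ref{eq:proof-lemma-1-2}) to handle the far piece. Your density-ratio identity
\[
\frac{q(u_t)}{p_{X_t}(x_t)} = (1-2\epsilon)^{-d/2}\exp\Big(\frac{\epsilon\|\overline v\|_2^2}{4(1-\overline\alpha_t)}\Big)\,\mathbb{E}\Big[\exp\Big(-\tfrac{\epsilon}{\tau^2}\|w+\overline v/2\|_2^2\Big)\,\Big|\,X_t=x_t\Big]
\]
bypasses that integral splitting entirely: the conditional expectation is trivially $\leq 1$, and the rest reduces to an eigenvalue comparison between $(1-2\epsilon)^{-d/2}$ and $\det(I-\epsilon J_t)^{-1}$. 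This is cleaner and highlights why no fine control on the non-Gaussian posterior is needed.

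The one genuine error: you assert $0\preceq J_t\preceq I$ and deduce the ``trivial bounds'' $\|J_t\|_{\mathrm F}^2\leq d$ and $\mathsf{Tr}(I-J_t)\leq d$. Only $J_t\preceq I$ is guaranteed (since $I-J_t=\Sigma_{\overline\alpha_t}(x_t)\succeq 0$); the posterior covariance $\Sigma_{\overline\alpha_t}(x_t)$ can have eigenvalues far exceeding $1$, so $J_t$ need not be PSD. The fix is immediate: on $\mathcal{E}_{t,1}$, Lemma~\ref{lem:typical} gives $\mathsf{Tr}(I-J_t)\lesssim d\log T$, and since $I-J_t\succeq 0$ this also yields $\|J_t\|_{\mathrm F}^2\leq 2\|I-J_t\|_{\mathrm F}^2+2d\leq 2[\mathsf{Tr}(I-J_t)]^2+2d\lesssim d^2\log^2 T$ (cf.~(\ref{eq:proof-lemma-3-2}) in the paper). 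Plugging these into your exponent gives $O(\epsilon\, d\log T)+O(\epsilon^2 d^2\log^2 T)=O(d\log^2 T/T)$, which is still $o(1)$ under the paper's standing assumption $T\gg d\log^2 T$, so the constant is absorbed into $20$ exactly as you claim. Your elementary inequality $(1-\epsilon y_i)^2-(1-2\epsilon)=2\epsilon(1-y_i)+\epsilon^2 y_i^2\geq 0$ only needs $y_i\leq 1$, which does hold, so the determinant comparison itself is fine.
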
\begin{proof}See Appendix~\ref{sec:proof-lemma-u-x}.\end{proof}

\begin{lemma}\label{lemma:delta} For the function $\delta_{t-1}(\cdot)$
defined in (\ref{eq:delta-defn}), we have 
\[
\int_{x_{t-1}}\delta_{t-1}(x_{t-1})\mathrm{d}x_{t-1}\leq T^{-4}.
\]
\end{lemma}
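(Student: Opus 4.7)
The plan is to first integrate out $x_{t-1}$ from the definition \eqref{eq:delta-defn}. The factor $\exp(-\|\sqrt{\alpha_t}x_{t-1}-u_t\|^2/(2(1-\alpha_t)))$ is a Gaussian kernel in $x_{t-1}$ that integrates to $(2\pi(1-\alpha_t)/\alpha_t)^{d/2}$, which when combined with the leading prefactor collapses to $(2\pi(2\alpha_t-1-\overline{\alpha}_t))^{-d/2}$. Swapping orders of integration yields
\[
\int \delta_{t-1}(x_{t-1})\,\mathrm{d}x_{t-1} = \int p_{X_0}(x_0) \int_{\{u_t\,:\,x_t(u_t)\notin\mathcal{E}_{t,1}\}} \phi\bigl(u_t;\sqrt{\overline{\alpha}_t}\,x_0,(2\alpha_t-1-\overline{\alpha}_t)I\bigr)\,\mathrm{d}u_t\,\mathrm{d}x_0,
\]
where $\phi$ is the Gaussian density and $x_t(u_t) = \Phi_t^{-1}(u_t)$ is the preimage of $u_t$ under the map $\Phi_t(x) := x+(1-\alpha_t)s_t^\star(x)$ used in the change of variable in \eqref{eq:proof-main-1}(d). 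Probabilistically, this equals $\mathbb{P}\bigl(x_t(\tilde U_t)\notin\mathcal{E}_{t,1}\bigr)$ with $\tilde U_t := \sqrt{\overline{\alpha}_t}X_0 + \sqrt{2\alpha_t-1-\overline{\alpha}_t}\,W$ for independent $X_0\sim p_{\mathsf{data}}$ and $W\sim\mathcal{N}(0,I)$.

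Next, I would dominate the density of $\tilde U_t$ by that of $X_t$. A pointwise computation on the Gaussian kernels (using $2\alpha_t-1-\overline{\alpha}_t < 1-\overline{\alpha}_t$ so the exponent tightens) gives
\[
p_{\tilde U_t}(u) \leq \Bigl(\tfrac{1-\overline{\alpha}_t}{2\alpha_t-1-\overline{\alpha}_t}\Bigr)^{d/2} p_{X_t}(u).
\]
The step-size schedule \eqref{eq:learning-rate} combined with Lemma~\ref{lemma:step-size} guarantees $\beta_t/(1-\overline{\alpha}_t)\lesssim\log T/T$, making the prefactor $\leq\exp(O(d\log T/T)) = T^{O(1)}$ in the relevant regime $d\lesssim T$ (the bound is trivial otherwise). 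Combined with the Jacobian bound $|\det D\Phi_t(v)| = \det(I-\tfrac{1-\alpha_t}{1-\overline{\alpha}_t}J_t(v))\leq 1$ (from $0\preceq J_t\preceq I$, which follows from \eqref{eq:Jt-defn}), the problem reduces to showing $\mathbb{P}(X_t\notin\mathcal{E}_{t,1})\leq T^{-C}$ for a sufficiently large constant $C$.

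For this tail estimate I would split according to the two defining conditions of $\mathcal{E}_{t,1}$. For the low-density event $\{-\log p_{X_t}(X_t) > C_1 d\log T\}$, I would write $\mathbb{P}(p_{X_t}(X_t)<T^{-C_1 d}) = \int \ind\{p_{X_t}(x)<T^{-C_1 d}\}p_{X_t}(x)\,\mathrm{d}x$, restrict to a ball of radius $R_t$ at the typical norm scale, and bound crudely by $T^{-C_1 d}\cdot\mathrm{vol}(B_{R_t})$, which is superpolynomially small for $C_1$ large (the contribution of $\|X_t\|>R_t$ is absorbed into the second event). For the large-norm event I would decompose $\|X_t\|_2\leq\sqrt{\overline{\alpha}_t}\|X_0\|_2+\sqrt{1-\overline{\alpha}_t}\|\overline{W}_t\|_2$, apply Markov's inequality to Assumption~\ref{assumption:moment} for the first term (yielding $\mathbb{P}(\|X_0\|_2>T^{2c_R})\leq T^{c_M-2c_R}$) and standard $\chi^2$ concentration on the second. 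Choosing $c_R, C_1, C_2$ large enough makes both contributions $\leq T^{-10}$, and the overall $T^{O(1)}$ slack still leaves $\int \delta_{t-1}\,\mathrm{d}x_{t-1}\leq T^{-4}$.

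The main obstacle is the implicit inverse map $\Phi_t^{-1}$ appearing through $x_t(u_t)$: on $\mathcal{E}_{t,1}^{\mathrm{c}}$, Lemma~\ref{lem:typical} provides no pointwise control over $s_t^\star$, so any attempt at bounding $\|\Phi_t(x)-x\|$ directly off the typical set is circular. The density-comparison trick above circumvents this by moving the entire analysis into $u_t$-space with respect to the forward density $p_{X_t}$, whose tail is governed only by the moment assumption on $X_0$ and the Gaussian noise $\overline{W}_t$. A minor additional step is needed to pass from $\mathbb{P}(X_t\in\Phi_t(\mathcal{E}_{t,1}^{\mathrm{c}}))$ to $\mathbb{P}(X_t\notin\mathcal{E}_{t,1})$, handled by noting that on the typical set $\Phi_t$ perturbs points only by $O(\sqrt{(1-\alpha_t)^2 d\log T/(1-\overline{\alpha}_t)})$ (by the $\|s_t^\star\|$-bound in Lemma~\ref{lem:typical}), so $\Phi_t^{-1}(\mathcal{E}_{t,1})$ contains a slightly shrunk typical set that admits the same qualitative tail estimate.
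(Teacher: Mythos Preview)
Your first step---integrating out $x_{t-1}$ and recognizing the result as $\mathbb{P}\bigl(\Phi_t^{-1}(\tilde U_t)\notin\mathcal{E}_{t,1}\bigr)$ with $\tilde U_t=\sqrt{\overline{\alpha}_t}X_0+\sqrt{2\alpha_t-1-\overline{\alpha}_t}\,W$---is correct and exactly what the paper does; the paper then simply invokes the bound \eqref{eq:proof-lemma-2-6A} from Lemma~\ref{lemma:small-prob}. So the substance of your proposal is really an attempt at that lemma, and there the argument breaks down in two places. First, the claim $0\preceq J_t\preceq I$ is only half right: from \eqref{eq:Jt-defn} one has $I-J_t(x)=\Sigma_{\overline{\alpha}_t}(x)\succeq 0$, but $\Sigma_{\overline{\alpha}_t}(x)$ can have arbitrarily large eigenvalues (take $X_0$ a mixture of well-separated atoms and evaluate at $x$ between clusters), so $J_t\not\succeq 0$ and $\det D\Phi_t$ need not be $\leq 1$. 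Second, and more seriously, after the (correct) density comparison you are left with $\mathbb{P}\bigl(X_t\in\Phi_t(\mathcal{E}_{t,1}^{\mathrm c})\bigr)$, and your ``minor additional step'' does not close this. Your perturbation bound on $\Phi_t$ over the typical set yields $\Phi_t(\mathcal{E}')\subseteq\mathcal{E}_{t,1}$, i.e.\ $\mathcal{E}'\subseteq\Phi_t^{-1}(\mathcal{E}_{t,1})$, but what is needed is the reverse direction $\Phi_t^{-1}(\mathcal{E}')\subseteq\mathcal{E}_{t,1}$ (equivalently $\mathcal{E}'\subseteq\Phi_t(\mathcal{E}_{t,1})$), which requires controlling $\Phi_t^{-1}$ at points $u\in\mathcal{E}'$. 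Since $\Phi_t^{-1}(u)$ is defined implicitly through $s_t^\star$ evaluated at the unknown preimage, Lemma~\ref{lem:typical} cannot be applied directly; one would need a genuine bootstrapping/continuity argument, not available from the forward bound alone.

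The paper sidesteps $\Phi_t^{-1}$ entirely. It stays in $x_t$-coordinates and exploits that Lemma~\ref{lem:typical} holds for \emph{every} $\theta\geq 1$: decomposing $\mathcal{E}_{t,1}^{\mathrm c}$ into dyadic density shells $\mathcal{L}_{t,k}=\{2^{k-1}C_1 d\log T<-\log p_{X_t}\leq 2^k C_1 d\log T\}$, on each shell one has $(1-\alpha_t)\|s_t^\star(x_t)\|_2^2\lesssim 2^k d\log^2 T/T$, which via the elementary inequality \eqref{eq:proof-eq-2} converts the $u_t$-Gaussian back into the $x_t$-Gaussian (i.e.\ $p_{X_t}$) up to a harmless multiplicative factor. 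The shell then contributes at most $\exp(-2^{k-1}C_1 d\log T)\cdot\mathrm{vol}(\mathcal{B}_t)$, summable in $k$. This level-set trick is the missing ingredient: it gives pointwise control of $s_t^\star$ off the typical set without ever inverting $\Phi_t$.
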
\begin{proof}See Appendix~\ref{sec:proof-lemma-delta}.\end{proof}

Equipped with these two lemmas, we can continue the derivation in
(\ref{eq:proof-main-5}) as follows: 
\begin{align*}
 & \int_{x_{t-1}}\Delta_{t-1}(x_{t-1})\mathrm{d}x_{t-1}\\
 & \qquad\overset{\text{(a)}}{\leq}\int_{x_{t}}\Delta_{t}(x_{t})\mathrm{d}x_{t}+20\int_{x_{t}\in\mathcal{E}_{t,1}}\bigg(|\xi_{t}(x_{t})|+O\Big(\Big(\frac{1-\alpha_{t}}{1-\overline{\alpha}_{t}}\Big)^{2}\big(d\log T+\|J_{t}(x_{t})\|_{\mathsf{F}}^{2}\big)\Big)\bigg)\\
 & \qquad\qquad\qquad\qquad\qquad\qquad\qquad\qquad\cdot\det\Big(I-\frac{1-\alpha_{t}}{1-\overline{\alpha}_{t}}J_{t}(x_{t})\Big)^{-1}p_{X_{t}}(x_{t})\mathrm{d}u_{t}+T^{-4}\\
 & \qquad\overset{\text{(b)}}{=}\int_{x_{t}}\Delta_{t}(x_{t})\mathrm{d}x_{t}+T^{-4}+20\int_{x_{t}\in\mathcal{E}_{t,1}}\bigg(|\xi_{t}(x_{t})|+O\Big(\Big(\frac{1-\alpha_{t}}{1-\overline{\alpha}_{t}}\Big)^{2}\big(d\log T+\|J_{t}(x_{t})\|_{\mathsf{F}}^{2}\big)\Big)\bigg)p_{X_{t}}(x_{t})\mathrm{d}x_{t}\\
 & \qquad\overset{\text{(c)}}{\leq}\int_{x_{t}}\Delta_{t}(x_{t})\mathrm{d}x_{t}+T^{-3}+C_{4}\Big(\frac{1-\alpha_{t}}{1-\overline{\alpha}_{t}}\Big)^{2}\int_{x_{t}\in\mathcal{E}_{t,1}}\big(d\log T+\|J_{t}(x_{t})\|_{\mathsf{F}}^{2}\big)p_{X_{t}}(x_{t})\mathrm{d}x_{t},
\end{align*}
which establishes the desired recursive relation. Here step (a) follows
from Lemmas~\ref{lemma:u-x}~and~\ref{lemma:delta}; step (b) follows
from $u_{t}=x_{t}+(1-\alpha_{t})s_{t}^{\star}(x_{t})$, hence 
\[
\mathrm{d}u_{t}=\mathsf{det}\Big(I-\frac{1-\alpha_{t}}{1-\overline{\alpha}_{t}}J_{t}(x_{t})\Big)\mathrm{d}x_{t};
\]
whereas step (c) uses (\ref{eq:lemma-det-2}) in Lemma~\ref{lemma:det},
and holds provided that $C_{4}\gg C_{3}$ is sufficiently large.

Finally, we control the error $\int\Delta_{T}(x)\mathrm{d}x$ in the
initial step of the reverse process. Notice that 
\begin{align}
\int\Delta_{T}(x)\mathrm{d}x & =\int_{x_{T}\neq\infty}\big(p_{X_{T}}(x_{T})-p_{\overline{Y}_{T}}(x_{T})\big)\mathrm{d}x_{T}\overset{\text{(i)}}{=}\mathsf{TV}\big(p_{X_{T}},p_{\overline{Y}_{T}^{-}}\big)\nonumber \\
 & \overset{\text{(ii)}}{\leq}\mathsf{TV}\big(p_{X_{T}},p_{Y_{T}}\big)+\mathsf{TV}\big(p_{Y_{T}},p_{\overline{Y}_{T}^{-}}\big),\label{eq:proof-Delta-T-1}
\end{align}
where step (i) follows from (\ref{eq:transition-fact-1}) and step
(ii) utilizes the triangle inequality. The first term can be bounded
by Lemma~\ref{lemma:initialization-error}, so it boils down to bounding
the second. By definition of $\overline{Y}_{T}^{-}$ in (\ref{eq:transition-YTbarminus}),
we have 
\begin{align}
 & \mathsf{TV}\big(p_{Y_{T}},p_{\overline{Y}_{T}^{-}}\big)=\int_{y\in\mathcal{E}_{T,1}^{\mathrm{c}}}p_{Y_{T}}(y)\mathrm{d}y\nonumber \\
 & \qquad\overset{\text{(a)}}{=}\int p_{Y_{T}}(y)\ind\big\{-\log p_{X_{T}}(y)>C_{1}d\log T,\|y\|_{2}\leq\sqrt{\overline{\alpha}_{T}}T^{2c_{R}}+C_{2}\sqrt{d(1-\overline{\alpha}_{T})\log T}\big\}\mathrm{d}y\nonumber \\
 & \qquad\qquad+\int p_{Y_{T}}(y)\ind\big\{\|y\|_{2}>\sqrt{\overline{\alpha}_{T}}T^{2c_{R}}+C_{2}\sqrt{d(1-\overline{\alpha}_{T})\log T}\big\}\mathrm{d}y\nonumber \\
 & \qquad\overset{\text{(b)}}{\leq}\int p_{X_{T}}(y)\ind\big\{-\log p_{X_{T}}(y)>C_{1}d\log T,\|y\|_{2}\leq\sqrt{\overline{\alpha}_{T}}T^{2c_{R}}+C_{2}\sqrt{d(1-\overline{\alpha}_{T})\log T}\big\}\mathrm{d}y\nonumber \\
 & \qquad\qquad+\mathsf{TV}\big(p_{X_{T}},p_{Y_{T}}\big)+\mathbb{P}\big(\Vert Y_{T}\Vert_{2}>\sqrt{\overline{\alpha}_{T}}T^{2c_{R}}+C_{2}\sqrt{d(1-\overline{\alpha}_{T})\log T}\big)\nonumber \\
 & \qquad\overset{\text{(c)}}{\leq}\big[2\sqrt{\overline{\alpha}_{T}}T^{2c_{R}}+2C_{2}\sqrt{d(1-\overline{\alpha}_{T})\log T}\big]^{d}\exp(-C_{1}d\log T)+\mathbb{P}\big(\Vert Y_{T}\Vert_{2}>\frac{C_{2}}{2}\sqrt{d\log T}\big)+\mathsf{TV}\big(p_{X_{T}},p_{Y_{T}}\big)\nonumber \\
 & \qquad\overset{\text{(d)}}{\leq}\exp\big(-\frac{C_{1}}{2}d\log T\big)+\mathbb{P}\big(\Vert Y_{T}\Vert_{2}>\frac{C_{2}}{2}\sqrt{d\log T}\big)+\mathsf{TV}\big(p_{X_{T}},p_{Y_{T}}\big).\label{eq:proof-Delta-T-2}
\end{align}
Here step (a) follows from the definition of $\mathcal{E}_{T,1}$
in (\ref{eq:defn-E-t-1}); step (b) follows from the definition of
total variation distance, i.e., $\mathsf{TV}(p,q)=\sup_{B}|p(B)-q(B)|$,
where the supremum is taken over all Borel set $B$ in $\mathbb{R}^{d}$;
step (c) holds since $\overline{\alpha}_{T}\leq T^{-c_{1}/2}$ (see
Lemma~\ref{lemma:step-size}), provided that $C_{2}$ is sufficiently
large; whereas step (d) holds provided that $C_{1}\gg c_{R}$ and
$T\gg d\log T$. By putting (\ref{eq:proof-Delta-T-1}) and (\ref{eq:proof-Delta-T-2})
together, we have 
\[
\int\Delta_{T}(x)\mathrm{d}x\leq2\mathsf{TV}\big(p_{X_{T}},p_{Y_{T}}\big)+\exp\big(-\frac{C_{1}}{2}d\log T\big)+\mathbb{P}\big(\Vert Y_{T}\Vert_{2}>\frac{C_{2}}{2}\sqrt{d\log T}\big)\leq T^{-4},
\]
where the last relation follows from Lemmas~\ref{lemma:initialization-error}~and~\ref{lemma:concentration},
provided that $C_{1},C_{2}>0$ are both sufficiently large.

\subsection{Proof of Lemma~\ref{lemma:det} \protect\protect\protect\label{sec:proof-lemma-det}}

Consider any $x_{t}\in\mathcal{E}_{t,1}$. Recall the definition $u_{t}=x_{t}+(1-\alpha_{t})s_{t}^{\star}(x_{t})$,
and we decompose 
\begin{align*}
 & \frac{\|u_{t}-\sqrt{\overline{\alpha}_{t}}x_{0}\|_{2}^{2}}{2(2\alpha_{t}-1-\overline{\alpha}_{t})}\\
 & \quad=\frac{\|x_{t}-\sqrt{\overline{\alpha}_{t}}x_{0}\|_{2}^{2}}{2(1-\overline{\alpha}_{t})}+\frac{(1-\alpha_{t})\|x_{t}-\sqrt{\overline{\alpha}_{t}}x_{0}\|_{2}^{2}}{(2\alpha_{t}-1-\overline{\alpha}_{t})(1-\overline{\alpha}_{t})}+\frac{(1-\alpha_{t})s_{t}^{\star}(x_{t})^{\top}(x_{t}-\sqrt{\overline{\alpha}_{t}}x_{0})}{2\alpha_{t}-1-\overline{\alpha}_{t}}+\frac{(1-\alpha_{t})^{2}\|s_{t}^{\star}(x_{t})\|_{2}^{2}}{2(2\alpha_{t}-1-\overline{\alpha}_{t})}\\
 & \quad=\frac{\|x_{t}-\sqrt{\overline{\alpha}_{t}}x_{0}\|_{2}^{2}}{2(1-\overline{\alpha}_{t})}+\frac{1-\alpha_{t}}{(2\alpha_{t}-1-\overline{\alpha}_{t})(1-\overline{\alpha}_{t})}\int_{x_{0}}p_{X_{0}|X_{t}}(x_{0}\mymid x_{t})\|x_{t}-\sqrt{\overline{\alpha}_{t}}x_{0}\|_{2}^{2}\mathrm{d}x_{0}\\
 & \quad\qquad+\frac{1-\alpha_{t}}{2\alpha_{t}-1-\overline{\alpha}_{t}}s_{t}^{\star}(x_{t})^{\top}\int_{x_{0}}p_{X_{0}|X_{t}}(x_{0}\mymid x_{t})\big(x_{t}-\sqrt{\overline{\alpha}_{t}}x_{0}\big)\mathrm{d}x_{0}+\frac{(1-\alpha_{t})^{2}\|s_{t}^{\star}(x_{t})\|_{2}^{2}}{2(2\alpha_{t}-1-\overline{\alpha}_{t})}+\zeta_{t}(x_{t},x_{0}),
\end{align*}
where we let 
\begin{align}
\zeta_{t}(x_{t},x_{0}) & \coloneqq\frac{(1-\alpha_{t})\big(\|x_{t}-\sqrt{\overline{\alpha}_{t}}x_{0}\|_{2}^{2}-\int_{x_{0}}p_{X_{0}|X_{t}}(x_{0}\mymid x_{t})\|x_{t}-\sqrt{\overline{\alpha}_{t}}x_{0}\|_{2}^{2}\mathrm{d}x_{0}\big)}{(2\alpha_{t}-1-\overline{\alpha}_{t})(1-\overline{\alpha}_{t})}\nonumber \\
 & \qquad\qquad+\frac{(1-\alpha_{t})s_{t}^{\star}(x_{t})^{\top}\big[(x_{t}-\sqrt{\overline{\alpha}_{t}}x_{0})-\int_{x_{0}}p_{X_{0}|X_{t}}(x_{0}\mymid x_{t})\big(x_{t}-\sqrt{\overline{\alpha}_{t}}x_{0}\big)\mathrm{d}x_{0}\big]}{2\alpha_{t}-1-\overline{\alpha}_{t}}.\label{eq:zeta-defn}
\end{align}
In view of (\ref{eq:proof-lemma-1-3}) and (\ref{eq:proof-lemma-1-4}),
we can further derive 
\begin{align}
 & \frac{\|u_{t}-\sqrt{\overline{\alpha}_{t}}x_{0}\|_{2}^{2}}{2(2\alpha_{t}-1-\overline{\alpha}_{t})}=\frac{\|x_{t}-\sqrt{\overline{\alpha}_{t}}x_{0}\|_{2}^{2}}{2(1-\overline{\alpha}_{t})}+\frac{1-\alpha_{t}}{2\alpha_{t}-1-\overline{\alpha}_{t}}\mathsf{Tr}\left(I-J_{t}(x_{t})\right)+\frac{(1-\alpha_{t})^{2}\|s_{t}^{\star}(x_{t})\|_{2}^{2}}{2(2\alpha_{t}-1-\overline{\alpha}_{t})}+\zeta_{t}(x_{t},x_{0})\nonumber \\
 & \qquad\overset{\text{(i)}}{=}\frac{\|x_{t}-\sqrt{\overline{\alpha}_{t}}x_{0}\|_{2}^{2}}{2(1-\overline{\alpha}_{t})}+\left(1+O\Big(\frac{1-\alpha_{t}}{1-\overline{\alpha}_{t}}\Big)\right)\left(\frac{1-\alpha_{t}}{1-\overline{\alpha}_{t}}\mathsf{Tr}\left(I-J_{t}(x_{t})\right)+\frac{(1-\alpha_{t})^{2}\|s_{t}^{\star}(x_{t})\|_{2}^{2}}{2(1-\overline{\alpha}_{t})}\right)+\zeta_{t}(x_{t},x_{0})\nonumber \\
 & \qquad\overset{\text{(ii)}}{=}\frac{\|x_{t}-\sqrt{\overline{\alpha}_{t}}x_{0}\|_{2}^{2}}{2(1-\overline{\alpha}_{t})}+\frac{1-\alpha_{t}}{1-\overline{\alpha}_{t}}\mathsf{Tr}\left(I-J_{t}(x_{t})\right)+O\bigg(\Big(\frac{1-\alpha_{t}}{1-\overline{\alpha}_{t}}\Big)^{2}d\log T\bigg)+\zeta_{t}(x_{t},x_{0})\nonumber \\
 & \qquad\overset{\text{(iii)}}{=}\frac{\|x_{t}-\sqrt{\overline{\alpha}_{t}}x_{0}\|_{2}^{2}}{2(1-\overline{\alpha}_{t})}+\log\det\Big(I-\frac{1-\alpha_{t}}{1-\overline{\alpha}_{t}}J_{t}(x_{t})\Big)-\frac{d}{2}\log\frac{2\alpha_{t}-1-\overline{\alpha}_{t}}{1-\overline{\alpha}_{t}}\nonumber \\
 & \qquad\qquad+\zeta_{t}(x_{t},x_{0})+O\bigg(\Big(\frac{1-\alpha_{t}}{1-\overline{\alpha}_{t}}\Big)^{2}\big(d\log T+\|J_{t}(x_{t})\|_{\mathsf{F}}^{2}\big)\bigg).\label{eq:proof-lemma-2-2}
\end{align}
Here, step (i) utilizes an immediate consequence of Lemma~\ref{lemma:step-size}
\begin{equation}
\frac{1-\overline{\alpha}_{t}}{2\alpha_{t}-1-\overline{\alpha}_{t}}=1+\frac{2(1-\alpha_{t})/(1-\overline{\alpha}_{t})}{1-2(1-\alpha_{t})/(1-\overline{\alpha}_{t})}=1+O\left(\frac{1-\alpha_{t}}{1-\overline{\alpha}_{t}}\right)=1+O\left(\frac{\log T}{T}\right),\label{eq:proof-lemma-2-2.3}
\end{equation}
which holds provided that $T\gg c_{1}\log T$; step (ii) follows from
$x_{t}\in\mathcal{E}_{t,1}$ and Lemma~\ref{lem:typical}; whereas
step (iii) follows from the following two facts: 
\begin{align*}
\log\mathsf{det}\Big(I-\frac{1-\alpha_{t}}{1-\overline{\alpha}_{t}}J_{t}(x_{t})\Big) & =-\frac{1-\alpha_{t}}{1-\overline{\alpha}_{t}}\mathsf{Tr}\big(J_{t}(x_{t})\big)+O\bigg(\Big(\frac{1-\alpha_{t}}{1-\overline{\alpha}_{t}}\Big)^{2}\|J_{t}(x_{t})\|_{\mathsf{F}}^{2}\bigg),
\end{align*}
and 
\begin{equation}
\frac{d}{2}\log\frac{2\alpha_{t}-1-\overline{\alpha}_{t}}{1-\overline{\alpha}_{t}}=\frac{d(1-\alpha_{t})}{1-\overline{\alpha}_{t}}+O\bigg(\frac{d(1-\alpha_{t})^{2}}{(1-\overline{\alpha}_{t})^{2}}\bigg)=O\left(\frac{d\log T}{T}\right).\label{eq:proof-lemma-2-2.6}
\end{equation}
Then we can use (\ref{eq:proof-lemma-2-2}) to achieve 
\begin{align*}
 & \int_{x_{0}}p_{X_{0}}(x_{0})\exp\Big(-\frac{\|u_{t}-\sqrt{\overline{\alpha}_{t}}x_{0}\|_{2}^{2}}{2(2\alpha_{t}-1-\overline{\alpha}_{t})}\Big)\mathrm{d}x_{0}=\int_{x_{0}}p_{X_{0}}(x_{0})\exp\Big(-\frac{\|x_{t}-\sqrt{\overline{\alpha}_{t}}x_{0}\|_{2}^{2}}{2(1-\overline{\alpha}_{t})}-\zeta_{t}(x_{t},x_{0})\Big)\mathrm{d}x_{0}\\
 & \qquad\cdot\exp\bigg(-\log\det\Big(I-\frac{1-\alpha_{t}}{1-\overline{\alpha}_{t}}J_{t}(x_{t})\Big)+\frac{d}{2}\log\frac{2\alpha_{t}-1-\overline{\alpha}_{t}}{1-\overline{\alpha}_{t}}+O\bigg(\Big(\frac{1-\alpha_{t}}{1-\overline{\alpha}_{t}}\Big)^{2}\big(d\log T+\|J_{t}(x_{t})\|_{\mathsf{F}}^{2}\big)\bigg)\bigg).
\end{align*}
Define a function $\xi_{t}(\cdot)$ as follows 
\begin{equation}
\xi_{t}(x_{t}):=-\log\frac{\int_{x_{0}}p_{X_{0}}(x_{0})\exp\big(-\frac{\|x_{t}-\sqrt{\overline{\alpha}_{t}}x_{0}\|_{2}^{2}}{2(1-\overline{\alpha}_{t})}-\zeta_{t}(x_{t},x_{0})\big)\mathrm{d}x_{0}}{\int_{x_{0}}p_{X_{0}}(x_{0})\exp\big(-\frac{\|x_{t}-\sqrt{\overline{\alpha}_{t}}x_{0}\|_{2}^{2}}{2(1-\overline{\alpha}_{t})}\big)\mathrm{d}x_{0}}.\label{eq:defn-xi}
\end{equation}
Then we can write 
\begin{align}
 & \int_{x_{0}}p_{X_{0}}(x_{0})\exp\Big(-\frac{\|u_{t}-\sqrt{\overline{\alpha}_{t}}x_{0}\|_{2}^{2}}{2(2\alpha_{t}-1-\overline{\alpha}_{t})}\Big)\mathrm{d}x_{0}=\exp\bigg(-\xi_{t}(x_{t})+O\bigg(\Big(\frac{1-\alpha_{t}}{1-\overline{\alpha}_{t}}\Big)^{2}\big(d\log T+\|J_{t}(x_{t})\|_{\mathsf{F}}^{2}\big)\bigg)\bigg)\nonumber \\
 & \qquad\cdot\int_{x_{0}}p_{X_{0}}(x_{0})\exp\Big(-\frac{\|x_{t}-\sqrt{\overline{\alpha}_{t}}x_{0}\|_{2}^{2}}{2(1-\overline{\alpha}_{t})}-\log\det\Big(I-\frac{1-\alpha_{t}}{1-\overline{\alpha}_{t}}J_{t}(x_{t})\Big)+\frac{d}{2}\log\frac{2\alpha_{t}-1-\overline{\alpha}_{t}}{1-\overline{\alpha}_{t}}\Big)\mathrm{d}x_{0},\label{eq:proof-lemma-2-3}
\end{align}
and $\xi_{t}(x_{t})\leq0$ for any $x_{t}\in\mathcal{E}_{t,1}$ since
\begin{align*}
\exp(-\xi_{t}(x_{t})) & =\int_{x_{0}}p_{X_{0}|X_{t}}(x_{0}\mymid x_{t})\exp\big(-\zeta_{t}(x_{t},x_{0})\big)\mathrm{d}x_{0}\\
 & \ge1-\int_{x_{0}}p_{X_{0}|X_{t}}(x_{0}\mymid x_{t})\zeta_{t}(x_{t},x_{0})\mathrm{d}x_{0}=1,
\end{align*}
where we have used the fact that $e^{x}\geq1+x$ for any $x\in\mathbb{R}$.
Notice that 
\begin{equation}
p_{X_{t}}(x_{t})=\big(2\pi(1-\overline{\alpha}_{t})\big)^{-d/2}\int_{x_{0}}p_{X_{0}}(x_{0})\exp\Big(-\frac{\|x_{t}-\sqrt{\overline{\alpha}_{t}}x_{0}\|^{2}}{2(1-\overline{\alpha}_{t})}\Big)\mathrm{d}x_{0},\label{eq:proof-lemma-2-4}
\end{equation}
we can rearrange terms in (\ref{eq:proof-lemma-2-3}) to achieve 
\begin{align}
 & \det\Big(I-\frac{1-\alpha_{t}}{1-\overline{\alpha}_{t}}J_{t}(x_{t})\Big)^{-1}p_{X_{t}}(x_{t})\nonumber \\
 & \qquad=\big(2\pi(2\alpha_{t}-1-\overline{\alpha}_{t})\big)^{-d/2}\int_{x_{0}}p_{X_{0}}(x_{0})\exp\Big(-\frac{\|u_{t}-\sqrt{\overline{\alpha}_{t}}x_{0}\|^{2}}{2(2\alpha_{t}-1-\overline{\alpha}_{t})}\Big)\mathrm{d}x_{0}\nonumber \\
 & \qquad\qquad\qquad\cdot\exp\bigg(\xi_{t}(x_{t})+O\bigg(\Big(\frac{1-\alpha_{t}}{1-\overline{\alpha}_{t}}\Big)^{2}\big(d\log T+\|J_{t}(x_{t})\|_{\mathsf{F}}^{2}\big)\bigg)\bigg),\label{eq:proof-lemma-2-5}
\end{align}
which gives the desired decomposition (\ref{eq:lemma-det-1}).

To establish (\ref{eq:lemma-det-2}), we need the following result.

\begin{lemma} \label{lemma:small-prob}We have \begin{subequations}\label{eq:proof-lemma-2-6}
\begin{align}
\int_{x_{0}}\int_{x_{t}\notin\mathcal{E}_{t,1}}\hspace{-1ex}(2\pi(2\alpha_{t}-1-\overline{\alpha}_{t}))^{-d/2}p_{X_{0}}(x_{0})\exp\Big(-\frac{\|u_{t}-\sqrt{\overline{\alpha}_{t}}x_{0}\|_{2}^{2}}{2(2\alpha_{t}-1-\overline{\alpha}_{t})}\Big)\mathrm{d}x_{0}\mathrm{d}u_{t} & \leq T^{-4}\label{eq:proof-lemma-2-6A}
\end{align}
and 
\begin{equation}
\int_{x_{t}\in\mathcal{E}_{t,1}^{\mathrm{c}}}p_{X_{t}}(x_{t})\mathrm{d}x_{t}\leq T^{-4}.\label{eq:proof-lemma-2-6B}
\end{equation}
\end{subequations}\end{lemma}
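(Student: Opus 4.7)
The two bounds are both concentration-type estimates. For part~(\ref{eq:proof-lemma-2-6B}), I would decompose $\mathcal{E}_{t,1}^{\mathrm{c}} = B_t \cup (A_t \cap B_t^{\mathrm{c}})$, where $B_t := \{\|x\|_2 > R_t\}$ with $R_t := \sqrt{\overline{\alpha}_t} T^{2c_R} + C_2 \sqrt{d(1-\overline{\alpha}_t) \log T}$ and $A_t := \{-\log p_{X_t}(x) > C_1 d \log T\}$. For $\{X_t \in B_t\}$, use the forward representation $X_t = \sqrt{\overline{\alpha}_t} X_0 + \sqrt{1-\overline{\alpha}_t}\, \overline{W}_t$ together with a union bound: Markov combined with Assumption~\ref{assumption:moment} yields $\mathbb{P}(\|X_0\|_2 > T^{2c_R}) \leq M_1/T^{2c_R} \leq T^{c_M - 2c_R}$, while a standard chi-square tail bound gives $\mathbb{P}(\|\overline{W}_t\|_2 > C_2 \sqrt{d\log T}) \leq \exp(-\Omega(C_2^2 d \log T))$. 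Choosing $c_R \gg c_M$ and $C_2$ sufficiently large drives both contributions below $\tfrac{1}{2} T^{-4}$. For $A_t \cap B_t^{\mathrm{c}}$, bound $p_{X_t}$ pointwise by $T^{-C_1 d}$ and the volume of $B_t^{\mathrm{c}}$ by $(2R_t)^d \leq O(1)^d T^{2c_R d}$, giving a contribution of order $T^{(2c_R - C_1)d + O(1)}$, which is at most $\tfrac{1}{2} T^{-4}$ provided $C_1 \gg c_R$.

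For part~(\ref{eq:proof-lemma-2-6A}), the integrand is precisely the joint density of $(X_0, V_t)$ where $V_t := \sqrt{\overline{\alpha}_t} X_0 + \sqrt{2\alpha_t - 1 - \overline{\alpha}_t}\, Z$ for an independent $Z \sim \mathcal{N}(0, I_d)$, so the integral equals $\mathbb{P}(V_t \in \tilde{R})$, where $\tilde{R}$ is the region in $u_t$-space corresponding to ``$\phi^{-1}(u_t) \notin \mathcal{E}_{t,1}$'' under the change of variables $\phi(x) := x + (1-\alpha_t) s_t^\star(x)$. My plan rests on a pointwise density comparison with $p_{X_t}$. Since $\sigma_t^2 := 2\alpha_t - 1 - \overline{\alpha}_t < \tilde\sigma_t^2 := 1 - \overline{\alpha}_t$, for any $u$ and $x_0$
\[
\exp\Big(-\frac{\|u - \sqrt{\overline{\alpha}_t} x_0\|_2^2}{2\sigma_t^2}\Big) \leq \exp\Big(-\frac{\|u - \sqrt{\overline{\alpha}_t} x_0\|_2^2}{2\tilde\sigma_t^2}\Big),
\]
and averaging over $x_0$ yields $p_{V_t}(u) \leq (\tilde\sigma_t/\sigma_t)^d p_{X_t}(u)$. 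By~(\ref{eq:proof-lemma-2-2.3}), $(\tilde\sigma_t/\sigma_t)^d = \exp(O(d \log T/T)) \leq 2$ whenever $T \gg d \log T$, so $p_{V_t} \leq 2 p_{X_t}$ pointwise.

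It remains to bound $\int_{\tilde{R}} p_{X_t}(u)\,\mathrm{d}u$. On $\mathcal{E}_{t,1}$, Lemma~\ref{lem:typical} gives $\|(1-\alpha_t) s_t^\star(x_t)\|_2 = O\bigl(\sqrt{d}(\log T)^{3/2}/T\bigr)$, so $\phi$ is a small perturbation of the identity there, and a fixed-point / inverse function argument shows that $\phi(\mathcal{E}_{t,1})$ contains a slightly shrunken version of $\mathcal{E}_{t,1}$ whose boundary is pulled in by the perturbation scale. Consequently $\tilde{R}$ is contained in $\mathcal{E}_{t,1}^{\mathrm{c}}$ together with a thin boundary layer of $\mathcal{E}_{t,1}$, and re-running the scheme of part~(\ref{eq:proof-lemma-2-6B}) with marginally relaxed constants handles this enlarged set to give $\int_{\tilde{R}} p_{X_t}(u)\,\mathrm{d}u \leq \tfrac{1}{2} T^{-4}$. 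Combining with the factor of $2$ from the density comparison yields~(\ref{eq:proof-lemma-2-6A}).

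The principal obstacle is the boundary-layer step, since the map $\phi$ has no a~priori control outside $\mathcal{E}_{t,1}$ and need not be a global diffeomorphism. A cleaner alternative that I would pursue is to bypass the $\phi^{-1}$ interpretation altogether and apply the scheme of part~(\ref{eq:proof-lemma-2-6B}) directly to $V_t$: because $V_t$ has a strictly smaller conditional variance than $X_t$, the same Markov plus chi-square tail estimates on $\|X_0\|_2$ and $\|Z\|_2$ place $V_t$ inside $\mathcal{E}_{t,1}$ (and, via the perturbation bound on $\phi$, inside $\phi(\mathcal{E}_{t,1})$) with probability $1 - O(T^{-4})$, completing the proof.
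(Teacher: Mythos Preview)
Your argument for part~(\ref{eq:proof-lemma-2-6B}) is correct and is exactly what the paper does: split $\mathcal{E}_{t,1}^{\mathrm{c}}$ into the norm-violation set (handled by Markov on $\|X_0\|_2$ plus a chi-square tail on $\|\overline{W}_t\|_2$) and the low-density set inside the ball (handled by the volume argument).

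For part~(\ref{eq:proof-lemma-2-6A}), however, both your primary route and your ``cleaner alternative'' share a genuine gap. The integral is over $u_t\in\phi(\mathcal{E}_{t,1}^{\mathrm{c}})$, and you have \emph{no} control of $s_t^{\star}$---hence of $\phi$---on $\mathcal{E}_{t,1}^{\mathrm{c}}$. Consequently $\phi(\mathcal{E}_{t,1}^{\mathrm{c}})$ may cover a large portion of $\mathbb{R}^d$ and may overlap substantially with $\phi(\mathcal{E}_{t,1})$; showing $V_t\in\mathcal{E}_{t,1}$ (or even $V_t\in\phi(\mathcal{E}_{t,1})$) with high probability says nothing about $\mathbb{P}(V_t\in\phi(\mathcal{E}_{t,1}^{\mathrm{c}}))$ unless $\phi$ is a global bijection, which you cannot establish. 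The fixed-point/boundary-layer patch fails for the same reason: the perturbation bound on $\phi$ is only available \emph{inside} $\mathcal{E}_{t,1}$.

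The paper circumvents this entirely by never attempting to describe $\phi(\mathcal{E}_{t,1}^{\mathrm{c}})$ as a set. Instead it (i) separates off the region $u_t\notin\mathcal{B}_t$ (a ball in $u_t$-space), which is handled exactly as your $I_0$ via the $(X_0,V_t)$ interpretation; and (ii) on $u_t\in\mathcal{B}_t$, performs a dyadic peeling of $\mathcal{E}_{t,1}^{\mathrm{c}}$ into level sets $\mathcal{L}_{t,k}=\{x_t:2^{k-1}C_1 d\log T<-\log p_{X_t}(x_t)\le 2^k C_1 d\log T\}$. The key point you are missing is that Lemma~\ref{lem:typical} applies to \emph{any} $x_t$ with $-\log p_{X_t}(x_t)\le\theta d\log T$, not just to $\mathcal{E}_{t,1}$; on $\mathcal{L}_{t,k}$ it gives $(1-\alpha_t)\|s_t^{\star}(x_t)\|_2^2\lesssim 2^k d(\log T)^2/T$. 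Combined with the elementary inequality
\[
-\frac{\|u_t-\sqrt{\overline{\alpha}_t}x_0\|_2^2}{2(2\alpha_t-1-\overline{\alpha}_t)}\le-\frac{\|x_t-\sqrt{\overline{\alpha}_t}x_0\|_2^2}{2(1-\overline{\alpha}_t)}+(1-\alpha_t)\|s_t^{\star}(x_t)\|_2^2,
\]
this converts the $u_t$-Gaussian into $p_{X_t}(x_t)$ times a factor $\exp(O(2^k d\log^2 T/T))$, and then the crude bound $p_{X_t}(x_t)\le\exp(-2^{k-1}C_1 d\log T)$ times $\mathrm{vol}(\mathcal{B}_t)$ kills each shell geometrically in $k$. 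This level-set peeling is the missing idea.
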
\begin{proof}See Appendix \ref{subsec:proof-lemma-small-prob}.
\end{proof}

Then we have 
\begin{align*}
1 & \overset{\text{(i)}}{\geq}\int_{x_{t}\in\mathcal{E}_{t,1}}\int_{x_{0}}(2\pi(2\alpha_{t}-1-\overline{\alpha}_{t}))^{-d/2}p_{X_{0}}(x_{0})\exp\Big(-\frac{\|u_{t}-\sqrt{\overline{\alpha}_{t}}x_{0}\|_{2}^{2}}{2(2\alpha_{t}-1-\overline{\alpha}_{t})}\Big)\mathrm{d}x_{0}\mathrm{d}u_{t}\\
 & \overset{\text{(ii)}}{=}\int_{x_{t}\in\mathcal{E}_{t,1}}\det\Big(I-\frac{1-\alpha_{t}}{1-\overline{\alpha}_{t}}J_{t}(x_{t})\Big)^{-1}p_{X_{t}}(x_{t})\exp\bigg(-\xi_{t}(x_{t})+O\bigg(\Big(\frac{1-\alpha_{t}}{1-\overline{\alpha}_{t}}\Big)^{2}\big(d\log T+\|J_{t}(x_{t})\|_{\mathsf{F}}^{2}\big)\bigg)\bigg)\mathrm{d}u_{t}\\
 & \overset{\text{(iii)}}{=}\int_{x_{t}\in\mathcal{E}_{t,1}}p_{X_{t}}(x_{t})\exp\bigg(-\xi_{t}(x_{t})+O\bigg(\Big(\frac{1-\alpha_{t}}{1-\overline{\alpha}_{t}}\Big)^{2}\big(d\log T+\|J_{t}(x_{t})\|_{\mathsf{F}}^{2}\big)\bigg)\bigg)\mathrm{d}x_{t}\\
 & \overset{\text{(iv)}}{\geq}\int_{x_{t}\in\mathcal{E}_{t,1}}\bigg(1-\xi_{t}(x_{t})+O\bigg(\Big(\frac{1-\alpha_{t}}{1-\overline{\alpha}_{t}}\Big)^{2}\big(d\log T+\|J_{t}(x_{t})\|_{\mathsf{F}}^{2}\big)\bigg)\bigg)p_{X_{t}}(x_{t})\mathrm{d}x_{t}.
\end{align*}
Here step (i) follows from (\ref{eq:proof-lemma-2-6A}); step (ii)
utilizes (\ref{eq:proof-lemma-2-5}); step (iii) holds since $u_{t}=x_{t}+(1-\alpha_{t})s_{t}^{\star}(x_{t})$,
namely 
\[
\mathrm{d}u_{t}=\mathsf{det}\Big(I-\frac{1-\alpha_{t}}{1-\overline{\alpha}_{t}}J_{t}(x_{t})\Big)\mathrm{d}x_{t};
\]
while step (iv) follows from the fact that $e^{x}\geq1+x$ for any
$x\in\mathbb{R}$. Recall that $\xi_{t}(x_{t})\leq0$ for any $x_{t}\in\mathcal{E}_{t,1}$.
By rearranging terms, we have 
\begin{align*}
 & \int_{x_{t}\in\mathcal{E}_{t,1}}|\xi_{t}(x_{t})|p_{X_{t}}(x_{t})\mathrm{d}x_{t}\\
 & \qquad\leq\int_{x_{t}\in\mathcal{E}_{t,1}^{\mathrm{c}}}p_{X_{t}}(x_{t})\mathrm{d}x_{t}+C_{3}\Big(\frac{1-\alpha_{t}}{1-\overline{\alpha}_{t}}\Big)^{2}\int_{x_{t}\in\mathcal{E}_{t,1}}\big(d\log T+\|J_{t}(x_{t})\|_{\mathsf{F}}^{2}\big)p_{X_{t}}(x_{t})\mathrm{d}x_{t}\\
 & \qquad\leq C_{3}\Big(\frac{1-\alpha_{t}}{1-\overline{\alpha}_{t}}\Big)^{2}\int_{x_{t}\in\mathcal{E}_{t,1}}\big(d\log T+\|J_{t}(x_{t})\|_{\mathsf{F}}^{2}\big)p_{X_{t}}(x_{t})\mathrm{d}x_{t}+T^{-4}
\end{align*}
for some universal constant $C_{3}>0$, where the last step follows
from (\ref{eq:proof-lemma-2-6B}).

\subsection{Proof of Lemma~\ref{lemma:u-x}\protect\protect\protect\label{sec:proof-lemma-u-x}}

Recall the definition of $\zeta_{t}(x_{t},x_{0})$ from (\ref{eq:zeta-defn})
in Appendix~\ref{sec:proof-lemma-det}. For any $x_{t}\in\mathcal{E}_{t,1}$,
we have 
\begin{align}
-\zeta_{t}(x_{t},x_{0}) & \overset{\text{(i)}}{\leq}2\frac{1-\alpha_{t}}{(1-\overline{\alpha}_{t})^{2}}\int_{x_{0}}p_{X_{0}|X_{t}}(x_{0}\mymid x_{t})\|x_{t}-\sqrt{\overline{\alpha}_{t}}x_{0}\|_{2}^{2}\mathrm{d}x_{0}+2\frac{1-\alpha_{t}}{1-\overline{\alpha}_{t}}\big|s_{t}^{\star}(x_{t})^{\top}(x_{t}-\sqrt{\overline{\alpha}_{t}}x_{0})\big|\nonumber \\
 & \overset{\text{(ii)}}{\leq}4\frac{1-\alpha_{t}}{1-\overline{\alpha}_{t}}(6C_{1}+3c_{0})d\log T+(1-\alpha_{t})\Vert s_{t}^{\star}(x_{t})\Vert_{2}^{2}+\frac{1-\alpha_{t}}{(1-\overline{\alpha}_{t})^{2}}\Vert x_{t}-\sqrt{\overline{\alpha}_{t}}x_{0}\Vert_{2}^{2}\nonumber \\
 & \overset{\text{(iii)}}{\leq}50\frac{1-\alpha_{t}}{1-\overline{\alpha}_{t}}(C_{1}+c_{0})d\log T+\frac{1-\alpha_{t}}{(1-\overline{\alpha}_{t})^{2}}\Vert x_{t}-\sqrt{\overline{\alpha}_{t}}x_{0}\Vert_{2}^{2}\nonumber \\
 & \overset{\text{(iv)}}{\leq}1+\frac{1-\alpha_{t}}{(1-\overline{\alpha}_{t})^{2}}\Vert x_{t}-\sqrt{\overline{\alpha}_{t}}x_{0}\Vert_{2}^{2}.\label{eq:proof-lemma-3-1}
\end{align}
Here step (i) utilizes (\ref{eq:proof-lemma-1-3}), (\ref{eq:zeta-defn})
and (\ref{eq:proof-lemma-2-2.3}); step (ii) follows from the AM-GM
inequality and an intermediate step in (\ref{eq:proof-lemma-1-6}):
\[
\frac{1}{1-\overline{\alpha}_{t}}\int_{x_{0}}p_{X_{0}|X_{t}}(x_{0}\mymid x_{t})\Vert x_{t}-\sqrt{\overline{\alpha}_{t}}x_{0}\Vert_{2}^{2}\mathrm{d}x_{0}\leq2(6C_{1}+3c_{0})d\log T,
\]
where we also use the fact that $-\log p_{X_{t}}(x_{t})\le C_{1}d\log T$
for $x_{t}\in\mathcal{E}_{t,1}$; step (iii) follows from Lemma~\ref{lem:typical};
while step (iv) follows from Lemma~\ref{lemma:step-size} and holds
provided that $T\gg c_{1}(C_{1}+c_{0})$. In addition, we also have
\begin{align}
\|J_{t}(x_{t})\|_{\mathsf{F}}^{2} & \leq2\|I_{d}-J_{t}(x_{t})\|_{\mathsf{F}}^{2}+2\|I_{d}\|_{\mathsf{F}}^{2}\overset{\text{(a)}}{\leq}2\big[\mathsf{Tr}\big(I_{d}-J_{t}(x_{t})\big)\big]^{2}+2d\nonumber \\
 & \overset{\text{(b)}}{\leq}288(C_{1}+c_{0})^{2}d^{2}\log^{2}T+2d,\label{eq:proof-lemma-3-2}
\end{align}
for $x_{t}\in\mathcal{E}_{t,1}$, where step (a) holds since $I_{d}-J_{t}(x_{t})\succeq0$
and step (b) follows from Lemma~\ref{lem:typical}. Substituting
the bounds (\ref{eq:proof-lemma-3-1}), (\ref{eq:proof-lemma-3-2})
and (\ref{eq:proof-lemma-2-2.6}) into (\ref{eq:proof-lemma-2-2})
gives 
\begin{align}
-\frac{\|u_{t}-\sqrt{\overline{\alpha}_{t}}x_{0}\|_{2}^{2}}{2(2\alpha_{t}-1-\overline{\alpha}_{t})} & \leq-\frac{\|x_{t}-\sqrt{\overline{\alpha}_{t}}x_{0}\|_{2}^{2}}{2(1-\overline{\alpha}_{t})}-\log\det\Big(I-\frac{1-\alpha_{t}}{1-\overline{\alpha}_{t}}J_{t}(x_{t})\Big)+\frac{1-\alpha_{t}}{(1-\overline{\alpha}_{t})^{2}}\Vert x_{t}-\sqrt{\overline{\alpha}_{t}}x_{0}\Vert_{2}^{2}+2,\label{eq:proof-lemma-3-3}
\end{align}
provided that $T\gg c_{1}(C_{1}+c_{0})d\log^{2}T$. Taking (\ref{eq:proof-lemma-3-3})
and (\ref{eq:proof-lemma-2-2.6}) collectively yields 
\begin{align}
 & \det\Big(I-\frac{1-\alpha_{t}}{1-\overline{\alpha}_{t}}J_{t}(x_{t})\Big)\int_{x_{0}}p_{X_{0}}(x_{0})\big(2\pi(2\alpha_{t}-1-\overline{\alpha}_{t})\big)^{-d/2}\exp\Big(-\frac{\|u_{t}-\sqrt{\overline{\alpha}_{t}}x_{0}\|^{2}}{2(2\alpha_{t}-1-\overline{\alpha}_{t})}\Big)\mathrm{d}x_{0}\nonumber \\
 & \qquad\leq10\int_{x_{0}}p_{X_{0}}(x_{0})\big(2\pi(1-\overline{\alpha}_{t})\big)^{-d/2}\exp\Big(-\frac{\|x_{t}-\sqrt{\overline{\alpha}_{t}}x_{0}\|_{2}^{2}}{2(1-\overline{\alpha}_{t})}+\frac{1-\alpha_{t}}{(1-\overline{\alpha}_{t})^{2}}\Vert x_{t}-\sqrt{\overline{\alpha}_{t}}x_{0}\Vert_{2}^{2}\Big)\mathrm{d}x_{0}.\label{eq:proof-lemma-3-4}
\end{align}
provided that $T\gg d\log T$. To achieve the desired result, it suffices
to connect the above expression with 
\[
p_{X_{t}}(x_{t})=\int_{x_{0}}p_{X_{0}}(x_{0})\big(2\pi(1-\overline{\alpha}_{t})\big)^{-d/2}\exp\Big(-\frac{\|x_{t}-\sqrt{\overline{\alpha}_{t}}x_{0}\|_{2}^{2}}{2(1-\overline{\alpha}_{t})}\Big)\mathrm{d}x_{0}.
\]
For any $x_{t}\in\mathcal{E}_{t,1}$, define a set 
\[
\mathcal{A}(x_{t})\coloneqq\Big\{ x_{0}:\frac{1-\alpha_{t}}{(1-\overline{\alpha}_{t})^{2}}\Vert x_{t}-\sqrt{\overline{\alpha}_{t}}x_{0}\Vert_{2}^{2}>(6C_{1}+3c_{0})\frac{1-\alpha_{t}}{1-\overline{\alpha}_{t}}d\log T\Big\}.
\]
We have 
\begin{align}
 & \int_{x_{0}\in\mathcal{A}(x_{t})}p_{X_{0}}(x_{0})\big(2\pi(1-\overline{\alpha}_{t})\big)^{-d/2}\exp\Big(-\frac{\|x_{t}-\sqrt{\overline{\alpha}_{t}}x_{0}\|_{2}^{2}}{2(1-\overline{\alpha}_{t})}+\frac{1-\alpha_{t}}{(1-\overline{\alpha}_{t})^{2}}\Vert x_{t}-\sqrt{\overline{\alpha}_{t}}x_{0}\Vert_{2}^{2}\Big)\mathrm{d}x_{0}\nonumber \\
 & \qquad=p_{X_{t}}(x_{t})\int_{x_{0}\in\mathcal{A}(x_{t})}p_{X_{0}|X_{t}}(x_{0}\mymid x_{t})\exp\Big(\frac{1-\alpha_{t}}{(1-\overline{\alpha}_{t})^{2}}\Vert x_{t}-\sqrt{\overline{\alpha}_{t}}x_{0}\Vert_{2}^{2}\Big)\mathrm{d}x_{0}\nonumber \\
 & \qquad\overset{\text{(i)}}{\leq}p_{X_{t}}(x_{t})\int_{x_{0}\in\mathcal{A}(x_{t})}p_{X_{0}}(x_{0})\exp\Big(-\frac{\|x-\sqrt{\overline{\alpha}_{t}}x_{0}\|_{2}^{2}}{3(1-\overline{\alpha}_{t})}+\frac{1-\alpha_{t}}{(1-\overline{\alpha}_{t})^{2}}\Vert x_{t}-\sqrt{\overline{\alpha}_{t}}x_{0}\Vert_{2}^{2}\Big)\mathrm{d}x_{0}\nonumber \\
 & \qquad\overset{\text{(ii)}}{\leq}p_{X_{t}}(x_{t})\int_{x_{0}\in\mathcal{A}(x_{t})}p_{X_{0}}(x_{0})\exp\Big(-\frac{\|x-\sqrt{\overline{\alpha}_{t}}x_{0}\|_{2}^{2}}{4(1-\overline{\alpha}_{t})}\Big)\mathrm{d}x_{0}\nonumber \\
 & \qquad\overset{\text{(iii)}}{\leq}p_{X_{t}}(x_{t})\exp\Big(-\frac{(6C_{1}+3c_{0})d\log T}{4}\Big)\int_{x_{0}\in\mathcal{A}(x_{t})}p_{X_{0}}(x_{0})\mathrm{d}x_{0}\overset{\text{(iv)}}{\leq}\frac{1}{2}p_{X_{t}}(x_{t}).\label{eq:proof-lemma-3-5}
\end{align}
Here step (i) follows from (\ref{eq:proof-lemma-1-2}); step (ii)
utilizes Lemma~\ref{lemma:step-size} and holds provided that $T\gg c_{1}\log T$;
step (iii) follows from the definition of $\mathcal{A}(x_{t})$; while
step (iv) holds provided that $C_{1}$ is sufficiently large. On the
other hand, we have 
\begin{align}
 & \int_{x_{0}\in\mathcal{A}(x_{t})^{\mathrm{c}}}p_{X_{0}}(x_{0})\big(2\pi(1-\overline{\alpha}_{t})\big)^{-d/2}\exp\Big(-\frac{\|x_{t}-\sqrt{\overline{\alpha}_{t}}x_{0}\|_{2}^{2}}{2(1-\overline{\alpha}_{t})}+\frac{1-\alpha_{t}}{(1-\overline{\alpha}_{t})^{2}}\Vert x_{t}-\sqrt{\overline{\alpha}_{t}}x_{0}\Vert_{2}^{2}\Big)\mathrm{d}x_{0}\nonumber \\
 & \qquad\overset{\text{(a})}{\leq}\exp\Big((6C_{1}+3c_{0})\frac{1-\alpha_{t}}{1-\overline{\alpha}_{t}}d\log T\Big)\int_{x_{0}}p_{X_{0}}(x_{0})\big(2\pi(1-\overline{\alpha}_{t})\big)^{-d/2}\exp\Big(-\frac{\|x_{t}-\sqrt{\overline{\alpha}_{t}}x_{0}\|_{2}^{2}}{2(1-\overline{\alpha}_{t})}\Big)\mathrm{d}x_{0}\nonumber \\
 & \qquad\overset{\text{(b)}}{\leq}\exp\Big((6C_{1}+3c_{0})\frac{8c_{1}d\log^{2}T}{T}\Big)p_{X_{t}}(x_{t})\overset{\text{(c)}}{\leq}\frac{3}{2}p_{X_{t}}(x_{t}).\label{eq:proof-lemma-3-6}
\end{align}
Here step (a) follows from the definition of $\mathcal{A}(x_{t})$;
step (b) utilizes Lemma~\ref{lemma:step-size}; whereas step (c)
holds provided that $T\gg c_{1}(C_{1}+c_{0})d\log^{2}T$. Taking (\ref{eq:proof-lemma-3-4}),
(\ref{eq:proof-lemma-3-5}) and (\ref{eq:proof-lemma-3-6}) collectively
gives 
\[
\det\Big(I-\frac{1-\alpha_{t}}{1-\overline{\alpha}_{t}}J_{t}(x_{t})\Big)\int_{x_{0}}p_{X_{0}}(x_{0})\big(2\pi(2\alpha_{t}-1-\overline{\alpha}_{t})\big)^{-d/2}\exp\Big(-\frac{\|u_{t}-\sqrt{\overline{\alpha}_{t}}x_{0}\|^{2}}{2(2\alpha_{t}-1-\overline{\alpha}_{t})}\Big)\mathrm{d}x_{0}\leq20p_{X_{t}}(x_{t}).
\]
Rearrange terms to achieve the desired result.

\subsection{Proof of Lemma~\ref{lemma:delta}\protect\protect\label{sec:proof-lemma-delta}}

By definition of $\delta_{t-1}(x_{t-1})$ in (\ref{eq:delta-defn}),
we have 
\begin{align}
\int_{x_{t-1}}\delta_{t-1}(x_{t-1})\mathrm{d}x_{t-1} & =\int_{x_{0}}\int_{x_{t-1}}\int_{x_{t}\notin\mathcal{E}_{t,1}}p_{X_{0}}(x_{0})\Big(\frac{\alpha_{t}}{4\pi^{2}(1-\alpha_{t})(2\alpha_{t}-1-\overline{\alpha}_{t})}\Big)^{d/2}\nonumber \\
 & \qquad\qquad\cdot\exp\Big(-\frac{\|u_{t}-\sqrt{\overline{\alpha}_{t}}x_{0}\|_{2}^{2}}{2(2\alpha_{t}-1-\overline{\alpha}_{t})}\Big)\exp\Big(-\frac{\big\|\sqrt{\alpha_{t}}x_{t-1}-u_{t}\big\|_{2}^{2}}{2(1-\alpha_{t})}\Big)\mathrm{d}x_{t-1}\mathrm{d}u_{t}\mathrm{d}x_{0}\nonumber \\
 & \overset{\text{(i)}}{=}\int_{x_{0}}\int_{x_{t}\notin\mathcal{E}_{t,1}}(2\pi(2\alpha_{t}-1-\overline{\alpha}_{t}))^{-d/2}p_{X_{0}}(x_{0})\exp\Big(-\frac{\|u_{t}-\sqrt{\overline{\alpha}_{t}}x_{0}\|_{2}^{2}}{2(2\alpha_{t}-1-\overline{\alpha}_{t})}\Big)\mathrm{d}x_{0}\mathrm{d}u_{t}\nonumber \\
 & \overset{\text{(ii)}}{\leq}T^{-4}.\label{eq:proof-lemma-delta-1}
\end{align}
Here step (i) holds since for fixed $u_{t}$, the following function
\[
\Big(2\pi\frac{1-\alpha_{t}}{\alpha_{t}}\Big)^{-d/2}\exp\Big(-\frac{\big\|\sqrt{\alpha_{t}}x_{t-1}-u_{t}\big\|_{2}^{2}}{2(1-\alpha_{t})}\Big)
\]
is a density function w.r.t.~$x_{t-1}$, while step (ii) was established
in (\ref{eq:proof-lemma-2-6A}). 

\subsection{Proof of Lemma~\ref{lemma:small-prob} \protect\label{subsec:proof-lemma-small-prob}}

\paragraph{Proof of (\ref{eq:proof-lemma-2-6}). }

We first prove (\ref{eq:proof-lemma-2-6B}). Recall that 
\begin{align*}
\mathcal{E}_{t,1} & =\big\{ x_{t}:-\log p_{X_{t}}(x_{t})\leq C_{1}d\log T,\|x_{t}\|_{2}\leq\sqrt{\overline{\alpha}_{t}}T^{2c_{R}}+C_{2}\sqrt{d(1-\overline{\alpha}_{t})\log T}\big\}.
\end{align*}
Then we can decompose 
\begin{align*}
\int_{x_{t}\in\mathcal{E}_{t,1}^{\mathrm{c}}}p_{X_{t}}(x_{t})\mathrm{d}x_{t} & =\int p_{X_{t}}(x_{t})\ind\big\{-\log p_{X_{t}}(x_{t})>C_{1}d\log T,\|x_{t}\|_{2}\leq\sqrt{\overline{\alpha}_{t}}T^{2c_{R}}+C_{2}\sqrt{d(1-\overline{\alpha}_{t})\log T}\big\}\mathrm{d}x_{t}\\
 & \qquad+\int p_{X_{t}}(x_{t})\ind\big\{\|x_{t}\|_{2}>\sqrt{\overline{\alpha}_{t}}T^{2c_{R}}+C_{2}\sqrt{d(1-\overline{\alpha}_{t})\log T}\big\}\mathrm{d}x_{t}\\
 & \overset{\text{(i)}}{\leq}\exp\Big(-\frac{C_{1}}{2}d\log T\Big)+\mathbb{P}\big(\|X_{t}\|_{2}>\sqrt{\overline{\alpha}_{t}}T^{2c_{R}}+C_{2}\sqrt{d(1-\overline{\alpha}_{t})\log T}\big)\\
 & \overset{\text{(ii)}}{\leq}\exp\Big(-\frac{C_{1}}{2}d\log T\Big)+\mathbb{P}\big(\Vert X_{0}\Vert_{2}>T^{2c_{R}}\big)+\mathbb{P}\big(\Vert\overline{W}_{t}\Vert_{2}>C_{2}\sqrt{d\log T}\big)\\
 & \overset{\text{(iii)}}{\leq}\exp\Big(-\frac{C_{1}}{2}d\log T\Big)+\frac{\mathbb{E}[\Vert X_{0}\Vert_{2}]}{T^{2c_{R}}}+\mathbb{P}\big(\Vert\overline{W}_{t}\Vert_{2}>C_{2}\sqrt{d\log T}\big)\overset{\text{(iv)}}{\leq}T^{-4}.
\end{align*}
Here step (i) follows from a simple volume argument 
\begin{align*}
 & \int p_{X_{t}}(x_{t})\ind\big\{-\log p_{X_{t}}(x_{t})>C_{1}d\log T,\|x_{t}\|_{2}\leq\sqrt{\overline{\alpha}_{t}}T^{2c_{R}}+C_{2}\sqrt{d(1-\overline{\alpha}_{t})\log T}\big\}\mathrm{d}x_{t}\\
 & \qquad\leq\big(2\sqrt{\overline{\alpha}_{t}}T^{2c_{R}}+2C_{2}\sqrt{d(1-\overline{\alpha}_{t})\log T}\big)^{d}\exp\left(-C_{1}d\log T\right)\leq\exp\Big(-\frac{C_{1}}{2}d\log T\Big),
\end{align*}
provided that $C_{1}\gg c_{R}$ and $T\gg d\log T$; step (ii) follows
from $X_{t}=\sqrt{\overline{\alpha}_{t}}X_{0}+\sqrt{1-\overline{\alpha}_{t}}\,\overline{W}_{t}$;
step (iii) utilizes Markov's inequality; while step (iv) holds provided
that $C_{1},C_{2},c_{R}>0$ are large enough. This establishes (\ref{eq:proof-lemma-2-6B}).

Then we prove (\ref{eq:proof-lemma-2-6A}). Define 
\[
\mathcal{B}_{t}:=\big\{ x:\|x\|_{2}\leq\sqrt{\overline{\alpha}_{t}}T^{2c_{R}}+C_{2}\sqrt{d(2\alpha_{t}-1-\overline{\alpha}_{t})\log T}\big\},
\]
and for each $k\geq1$, 
\[
\mathcal{L}_{t,k}:=\big\{ x_{t}:2^{k-1}C_{1}d\log T<-\log p_{X_{t}}(x_{t})\leq2^{k}C_{1}d\log T\big\}.
\]
We first decompose 
\begin{align*}
I & \coloneqq\int_{x_{0}}\int_{x_{t}\notin\mathcal{E}_{t,1}}(2\pi(2\alpha_{t}-1-\overline{\alpha}_{t}))^{-d/2}p_{X_{0}}(x_{0})\exp\Big(-\frac{\|u_{t}-\sqrt{\overline{\alpha}_{t}}x_{0}\|_{2}^{2}}{2(2\alpha_{t}-1-\overline{\alpha}_{t})}\Big)\mathrm{d}x_{0}\mathrm{d}u_{t}\\
 & \overset{\text{(a)}}{\leq}\underbrace{\int_{x_{0}}\int_{u_{t}\notin\mathcal{B}_{t}}p_{X_{0}}(x_{0})\big(2\pi(2\alpha_{t}-1-\overline{\alpha}_{t})\big)^{-d/2}\exp\Big(-\frac{\|u_{t}-\sqrt{\overline{\alpha}_{t}}x_{0}\|_{2}^{2}}{2(2\alpha_{t}-1-\overline{\alpha}_{t})}\Big)\mathrm{d}u_{t}\mathrm{d}x_{0}}_{\eqqcolon I_{0}}\\
 & \qquad+\sum_{k=1}^{\infty}\underbrace{\int_{x_{0}}\int_{x_{t}\in\mathcal{L}_{t,k},u_{t}\in\mathcal{B}_{t}}p_{X_{0}}(x_{0})\big(2\pi(2\alpha_{t}-1-\overline{\alpha}_{t})\big)^{-d/2}\exp\Big(-\frac{\|u_{t}-\sqrt{\overline{\alpha}_{t}}x_{0}\|_{2}^{2}}{2(2\alpha_{t}-1-\overline{\alpha}_{t})}\Big)\mathrm{d}x_{0}\mathrm{d}u_{t}}_{\eqqcolon I_{k}},
\end{align*}
where step (a) holds since $\mathcal{E}_{t,1}^{\mathrm{c}}=\cup_{k=1}^{\infty}\mathcal{L}_{t,k}$.
The first term $I_{0}$ can be upper bounded as follows: 
\begin{align}
I_{0} & \le\Big(\int_{\|x_{0}\|_{2}\geq T^{2c_{R}}}\int_{u_{t}}+\int_{\|u_{t}-\sqrt{\overline{\alpha}_{t}}x_{0}\|_{2}\geq C_{2}\sqrt{d(2\alpha_{t}-1-\overline{\alpha}_{t})\log T}}\int_{x_{0}}\Big)p_{X_{0}}(x_{0})\nonumber \\
 & \qquad\qquad\cdot\Big(\frac{1}{2\pi(2\alpha_{t}-1-\overline{\alpha}_{t})}\Big)^{d/2}\exp\Big(-\frac{\|u_{t}-\sqrt{\overline{\alpha}_{t}}x_{0}\|_{2}^{2}}{2(2\alpha_{t}-1-\overline{\alpha}_{t})}\Big)\mathrm{d}u_{t}\mathrm{d}x_{0}\nonumber \\
 & \overset{\text{(i)}}{\leq}\mathbb{P}\left(\Vert X_{0}\Vert_{2}\geq T^{2c_{R}}\right)+\mathbb{P}\left(\|Z\|_{2}\geq C_{2}\sqrt{d\log T}\right)\nonumber \\
 & \overset{\text{(ii)}}{\leq}\frac{\mathbb{E}[\Vert X_{0}\Vert_{2}]}{T^{2c_{R}}}+\mathbb{P}\left(\|Z\|_{2}\geq C_{2}\sqrt{d\log T}\right)\overset{\text{(iii)}}{\leq}T^{-5}.\label{eq:proof-eq-1}
\end{align}
Here step (i) holds since 
\[
(2\pi(2\alpha_{t}-1-\overline{\alpha}_{t}))^{-d/2}p_{X_{0}}(x_{0})\exp\Big(-\frac{\|u_{t}-\sqrt{\overline{\alpha}_{t}}x_{0}\|_{2}^{2}}{2(2\alpha_{t}-1-\overline{\alpha}_{t})}\Big)
\]
is the joint density of $(X_{0},\sqrt{\overline{\alpha}_{t}}X_{0}+\sqrt{2\alpha_{t}-1-\overline{\alpha}_{t}}Z)$
where $Z\sim\mathcal{N}(0,I_{d})$ is independent of $X_{0}$; step
(ii) follows from Markov's inequality; whereas step (iii) holds provided
that $c_{R}$ and $C_{2}$ are sufficiently large. Regarding $I_{k}$,
we first show that 
\begin{align}
-\frac{\|u_{t}-\sqrt{\overline{\alpha}_{t}}x_{0}\|_{2}^{2}}{2(2\alpha_{t}-1-\overline{\alpha}_{t})} & \overset{\text{(a)}}{\leq}-\frac{(\|x_{t}-\sqrt{\overline{\alpha}_{t}}x_{0}\|_{2}-(1-\alpha_{t})\|s_{t}^{\star}(x_{t})\|_{2})^{2}}{2(2\alpha_{t}-1-\overline{\alpha}_{t})}\nonumber \\
 & \leq-\frac{\|x_{t}-\sqrt{\overline{\alpha}_{t}}x_{0}\|_{2}^{2}}{2(2\alpha_{t}-1-\overline{\alpha}_{t})}+\frac{1-\alpha_{t}}{2\alpha_{t}-1-\overline{\alpha}_{t}}\|x_{t}-\sqrt{\overline{\alpha}_{t}}x_{0}\|_{2}\|s_{t}^{\star}(x_{t})\|_{2}\nonumber \\
 & \overset{\text{(b)}}{\leq}-\frac{\|x_{t}-\sqrt{\overline{\alpha}_{t}}x_{0}\|_{2}^{2}}{2(1-\overline{\alpha}_{t})}-\frac{1-\alpha_{t}}{(1-\overline{\alpha}_{t})(2\alpha_{t}-1-\overline{\alpha}_{t})}\|x_{t}-\sqrt{\overline{\alpha}_{t}}x_{0}\|_{2}^{2}\nonumber \\
 & \qquad+\frac{1-\alpha_{t}}{(1-\overline{\alpha}_{t})(2\alpha_{t}-1-\overline{\alpha}_{t})}\|x_{t}-\sqrt{\overline{\alpha}_{t}}x_{0}\|_{2}^{2}+\frac{(1-\alpha_{t})(1-\overline{\alpha}_{t})}{4(2\alpha_{t}-1-\overline{\alpha}_{t})}\|s_{t}^{\star}(x_{t})\|_{2}^{2}\nonumber \\
 & \overset{\text{(c)}}{\leq}-\frac{\|x_{t}-\sqrt{\overline{\alpha}_{t}}x_{0}\|_{2}^{2}}{2(1-\overline{\alpha}_{t})}+\left(1-\alpha_{t}\right)\|s_{t}^{\star}(x_{t})\|_{2}^{2}.\label{eq:proof-eq-2}
\end{align}
Here step (a) utilizes the triangle inequality and $u_{t}=x_{t}+(1-\alpha_{t})s_{t}^{\star}(x_{t})$;
step (b) invokes the AM-GM inequality; whereas step (c) follows from
(\ref{eq:proof-lemma-2-2.3}). Therefore we have 
\begin{align}
I_{k} & \overset{\text{(i)}}{\leq}\int_{x_{t}\in\mathcal{L}_{t,k},u_{t}\in\mathcal{B}_{t}}\int_{x_{0}}p_{X_{0}}(x_{0})\Big(\frac{1}{2\pi(1-\overline{\alpha}_{t})}\Big)^{d/2}\exp\Big(-\frac{\|x_{t}-\sqrt{\overline{\alpha}_{t}}x_{0}\|_{2}^{2}}{2(1-\overline{\alpha}_{t})}+\left(1-\alpha_{t}\right)\|s_{t}^{\star}(x_{t})\|_{2}^{2}\Big)\mathrm{d}x_{0}\mathrm{d}u_{t}\nonumber \\
 & =\int_{x_{t}\in\mathcal{L}_{t,k},u_{t}\in\mathcal{B}_{t}}\int_{x_{0}}p_{X_{0},X_{t}}(x_{0},x_{t})\exp\Big(\left(1-\alpha_{t}\right)\|s_{t}^{\star}(x_{t})\|_{2}^{2}\Big)\mathrm{d}x_{0}\mathrm{d}u_{t}\nonumber \\
 & \overset{\text{(ii)}}{=}\exp\Big(200c_{1}(2^{k}C_{1}+c_{0})\frac{d\log^{2}T}{T}\Big)\int_{x_{t}\in\mathcal{L}_{t,k},u_{t}\in\mathcal{B}_{t}}p_{X_{t}}(x_{t})\mathrm{d}u_{t}\nonumber \\
 & \overset{\text{(iii)}}{\leq}\exp\Big(200c_{1}(2^{k}C_{1}+c_{0})\frac{d\log^{2}T}{T}\Big)\int_{u_{t}\in\mathcal{B}_{t}}\exp\left(-2^{k-1}C_{1}d\log T\right)\mathrm{d}u_{t}\nonumber \\
 & \overset{\text{(iv)}}{\leq}\exp\Big(200c_{1}(2^{k}C_{1}+c_{0})\frac{d\log^{2}T}{T}-2^{k-1}C_{1}d\log T+4dc_{R}\log T+4d\log(C_{2}d)\Big)\nonumber \\
 & \overset{\text{(v)}}{\leq}\exp\Big(-\frac{C_{1}}{4}2^{k}d\log T\Big)=T^{-(C_{1}/4)2^{k}d}.\label{eq:proof-eq-3}
\end{align}
Here step (i) follows from (\ref{eq:proof-eq-2}); step (ii) uses
a consequence of Lemma~\ref{lem:typical} and Lemma~\ref{lemma:step-size}:
for $x_{t}\in\mathcal{L}_{t,k}$, 
\[
\left(1-\alpha_{t}\right)\|s_{t}^{\star}(x_{t})\|_{2}^{2}\leq25\frac{1-\alpha_{t}}{1-\overline{\alpha}_{t}}(2^{k}C_{1}+c_{0})d\log T\leq200c_{1}(2^{k}C_{1}+c_{0})\frac{d\log^{2}T}{T};
\]
step (iii) follows from the definition of $\mathcal{L}_{t,k}$, which
ensures tht $p_{X_{t}}(x_{t})\leq\exp(-2^{k-1}C_{1}d\log T)$ for
any $x_{t}\in\mathcal{L}_{t,k}$; step (iv) follows from 
\begin{align*}
\log\mathsf{vol}(\mathcal{B}_{t}) & \leq d\log\big(2\sqrt{\overline{\alpha}_{t}}T^{2c_{R}}+2C_{2}\sqrt{d(2\alpha_{t}-1-\overline{\alpha}_{t})\log T}\big)\\
 & \leq4c_{R}d\log T+4d\log(C_{2}d);
\end{align*}
and finally, step (v) holds provided that $C_{1}\gg c_{R}+c_{0}$
and $T\gg d\log^{2}T$. Taking (\ref{eq:proof-eq-2}) and (\ref{eq:proof-eq-3})
collectively yields 
\[
I\leq I_{0}+\sum_{k=1}^{\infty}I_{k}\leq T^{-5}+\sum_{k=1}^{\infty}T^{-(C_{1}/4)2^{k}d}\leq T^{-4},
\]
provided that $C_{1}$ is sufficiently large.

\section{Proof of auxiliary lemmas in Section~\ref{sec:proof-main-low-d}}

\subsection{Proof of Lemma~\ref{lem:Jt}\protect\label{subsec:proof-lem-Jt}}

We start with the following decomposition
\begin{align*}
 & \mathsf{Tr}\left(I-J_{t}(x_{t})\right)\overset{\text{(i)}}{=}\frac{1}{1-\overline{\alpha}_{t}}\Big(\int_{x_{0}}p_{X_{0}|X_{t}}(x_{0}\mymid x_{t})\Vert x_{t}-\sqrt{\overline{\alpha}_{t}}x_{0}\Vert_{2}^{2}\mathrm{d}x_{0}-\big\Vert\int_{x_{0}}p_{X_{0}|X_{t}}(x_{0}\mymid x_{t})\big(x_{t}-\sqrt{\overline{\alpha}_{t}}x_{0}\big)\mathrm{d}x_{0}\big\Vert_{2}^{2}\Big)\\
 & \quad\overset{\text{(ii)}}{=}\frac{\overline{\alpha}_{t}}{1-\overline{\alpha}_{t}}\int_{x_{0}}p_{X_{0}|X_{t}}(x_{0}\mymid x_{t})\Vert x_{0}(x_{t})-x_{0}\Vert_{2}^{2}\mathrm{d}x_{0}+2\sqrt{\frac{\overline{\alpha}_{t}}{1-\overline{\alpha}_{t}}}\int_{x_{0}}p_{X_{0}|X_{t}}(x_{0}\mymid x_{t})\omega^{\top}\big(x_{0}(x_{t})-x_{0}\big)\mathrm{d}x_{0}\\
 & \quad\quad-\frac{\overline{\alpha}_{t}}{1-\overline{\alpha}_{t}}\big\Vert\int_{x_{0}}p_{X_{0}|X_{t}}(x_{0}\mymid x_{t})\big(x_{0}(x_{t})-x_{0}\big)\mathrm{d}x_{0}\big\Vert_{2}^{2}-2\sqrt{\frac{\overline{\alpha}_{t}}{1-\overline{\alpha}_{t}}}\omega^{\top}\int_{x_{0}}p_{X_{0}|X_{t}}(x_{0}\mymid x_{t})\big(x_{0}(x_{t})-x_{0}\big)\mathrm{d}x_{0}\\
 & \quad\leq\underbrace{\frac{\overline{\alpha}_{t}}{1-\overline{\alpha}_{t}}\int_{x_{0}}p_{X_{0}|X_{t}}(x_{0}\mymid x_{t})\Vert x_{0}(x_{t})-x_{0}\Vert_{2}^{2}\mathrm{d}x_{0}}_{\eqqcolon\xi}+4\underbrace{\sqrt{\frac{\overline{\alpha}_{t}}{1-\overline{\alpha}_{t}}}\int p_{X_{0}|X_{t}}(x_{0}\mymid x_{t})\big|\omega^{\top}\big(x_{0}(x_{t})-x_{0}\big)\big|\mathrm{d}x_{0}}_{\eqqcolon\zeta}.
\end{align*}
Here step (i) follows from the definition of $J_{t}(\cdot)$ in (\ref{eq:Jt-defn}),
while step (ii) utilizes the decomposition (\ref{eq:xt-decom}). Then
we bound $\xi$ and $\zeta$ respectively. 
\begin{itemize}
\item Regarding $\xi$, we have
\begin{align*}
\xi & \leq\sum_{i=1}^{N_{\varepsilon}}\frac{\overline{\alpha}_{t}}{1-\overline{\alpha}_{t}}\sup_{x_{0}\in\mathcal{B}_{i}}\Vert x_{0}(x_{t})-x_{0}\Vert_{2}^{2}\mathbb{P}\left(X_{0}\in\mathcal{B}_{i}\mymid X_{t}=x_{t}\right)\\
 & \overset{\text{(a)}}{\leq}\sum_{i=1}^{N_{\varepsilon}}\frac{\overline{\alpha}_{t}}{1-\overline{\alpha}_{t}}\big(\Vert x_{i(x_{t})}^{\star}-x_{i}^{\star}\Vert_{2}+2\varepsilon\big)^{2}\mathbb{P}\left(X_{0}\in\mathcal{B}_{i}\mymid X_{t}=x_{t}\right)\\
 & \leq2\underbrace{\sum_{i\in\mathcal{I}(x_{t};C_{1}\theta)}\frac{\overline{\alpha}_{t}}{1-\overline{\alpha}_{t}}\Vert x_{i(x_{t})}^{\star}-x_{i}^{\star}\Vert_{2}^{2}\mathbb{P}\left(X_{0}\in\mathcal{B}_{i}\mymid X_{t}=x_{t}\right)}_{\eqqcolon\xi_{1}}\\
 & \qquad+2\underbrace{\sum_{i\notin\mathcal{I}(x_{t};C_{1}\theta)}\frac{\overline{\alpha}_{t}}{1-\overline{\alpha}_{t}}\Vert x_{i(x_{t})}^{\star}-x_{i}^{\star}\Vert_{2}^{2}\mathbb{P}\left(X_{0}\in\mathcal{B}_{i}\mymid X_{t}=x_{t}\right)}_{\eqqcolon\xi_{2}}+4\underbrace{\frac{\overline{\alpha}_{t}}{1-\overline{\alpha}_{t}}\varepsilon^{2}}_{\eqqcolon\xi_{3}},
\end{align*}
where the constant $C_{1}$ was specified in Lemma~\ref{lem:cond-low-dim}.
Here step (a) follows from the fact that, for $x_{0}\in\mathcal{B}_{i}$,
we have 
\begin{equation}
\Vert x_{0}(x_{t})-x_{0}\Vert_{2}\leq\Vert x_{0}(x_{t})-x_{i(x_{t})}^{\star}\Vert_{2}+\Vert x_{i(x_{t})}^{\star}-x_{i}^{\star}\Vert_{2}+\Vert x_{i}^{\star}-x_{0}\Vert_{2}\leq\Vert x_{i(x_{t})}^{\star}-x_{i}^{\star}\Vert_{2}+2\varepsilon;\label{eq:x0(xt)-x0}
\end{equation}
In view of the definition of $\mathcal{I}(x_{t};C_{1}\theta)$, we
have
\begin{align*}
\xi_{1} & \leq C_{1}\theta k\log T\sum_{i\in\mathcal{I}(x_{t};C_{1}\theta)}\mathbb{P}\left(X_{0}\in\mathcal{B}_{i}\mymid X_{t}=x_{t}\right)\leq C_{1}\theta k\log T.
\end{align*}
To bound $\xi_{2}$, we have
\begin{align*}
\xi_{2} & \overset{\text{(i)}}{\leq}\sum_{i\notin\mathcal{I}(x_{t};C_{1}\theta)}\frac{\overline{\alpha}_{t}}{1-\overline{\alpha}_{t}}\Vert x_{i(x_{t})}^{\star}-x_{i}^{\star}\Vert_{2}^{2}\exp\left(-\frac{\overline{\alpha}_{t}}{16\left(1-\overline{\alpha}_{t}\right)}\Vert x_{i(x_{t})}^{\star}-x_{i}^{\star}\Vert_{2}^{2}\right)\mathbb{P}\left(X_{0}\in\mathcal{B}_{i}\right)\\
 & \overset{\text{(ii)}}{\leq}\sum_{i\notin\mathcal{I}(x_{t};C_{1}\theta)}\exp\left(-\frac{\overline{\alpha}_{t}}{32\left(1-\overline{\alpha}_{t}\right)}\Vert x_{i(x_{t})}^{\star}-x_{i}^{\star}\Vert_{2}^{2}\right)\mathbb{P}\left(X_{0}\in\mathcal{B}_{i}\right)\\
 & \overset{\text{(iii)}}{\leq}\sum_{i\notin\mathcal{I}(x_{t};C_{1}\theta)}\exp\left(-\frac{1}{32}C_{1}\theta k\log T\right)\mathbb{P}\left(X_{0}\in\mathcal{B}_{i}\right)\leq\exp\left(-\frac{1}{32}C_{1}\theta k\log T\right).
\end{align*}
Here step (i) follows from Lemma~\ref{lem:cond-low-dim}, while step
(ii) holds when $\frac{\overline{\alpha}_{t}}{1-\overline{\alpha}_{t}}\Vert x_{i(x_{t})}^{\star}-x_{i}^{\star}\Vert_{2}^{2}$
is large enough, which can be guaranteed by taking $C_{1}>0$ to be
sufficiently large; step (iii) follows from the definition of $\mathcal{I}(x_{t};C_{1}\theta)$.
In addition, $\xi_{3}\ll1$ as long as $\varepsilon$ is sufficiently
small (see (\ref{eq:eps-condition}). Therefore we have
\begin{equation}
\xi\leq2\xi_{1}+2\xi_{2}+\xi_{3}\leq3C_{1}\theta k\log T\label{eq:xi-bound}
\end{equation}
provided that $C_{1}>0$ is sufficiently large.
\item Regarding $\zeta$, we have
\begin{align*}
\zeta & \leq\sqrt{\frac{\overline{\alpha}_{t}}{1-\overline{\alpha}_{t}}}\sum_{i=1}^{N_{\varepsilon}}\sup_{x_{0}\in\mathcal{B}_{i}}\big|\omega^{\top}\big(x_{0}(x_{t})-x_{0}\big)\big|\mathbb{P}\left(X_{0}\in\mathcal{B}_{i}\mymid X_{t}=x_{t}\right)\\
 & \overset{\text{(a)}}{\leq}\sqrt{\frac{\overline{\alpha}_{t}}{1-\overline{\alpha}_{t}}}\sum_{i=1}^{N_{\varepsilon}}\big(\big|\omega^{\top}\big(x_{i}^{\star}-x_{i(x_{t})}^{\star}\big)\big|+\varepsilon\Vert\omega\Vert_{2}\big)\mathbb{P}\left(X_{0}\in\mathcal{B}_{i}\mymid X_{t}=x_{t}\right)\\
 & \overset{\text{(b)}}{\leq}\underbrace{\sqrt{\frac{\overline{\alpha}_{t}}{1-\overline{\alpha}_{t}}}\sum_{i\in\mathcal{I}(x_{t};C_{1}\theta)}\sqrt{\theta k\log T}\Vert x_{i}^{\star}-x_{i(x_{t})}^{\star}\Vert_{2}\mathbb{P}\left(X_{0}\in\mathcal{B}_{i}\mymid X_{t}=x_{t}\right)}_{\eqqcolon\zeta_{1}}\\
 & \qquad+\underbrace{\sqrt{\frac{\overline{\alpha}_{t}}{1-\overline{\alpha}_{t}}}\sum_{i\notin\mathcal{I}(x_{t};C_{1}\theta)}\sqrt{\theta k\log T}\Vert x_{i}^{\star}-x_{i(x_{t})}^{\star}\Vert_{2}\mathbb{P}\left(X_{0}\in\mathcal{B}_{i}\mymid X_{t}=x_{t}\right)}_{\eqqcolon\zeta_{2}}+\underbrace{\sqrt{\frac{\overline{\alpha}_{t}}{1-\overline{\alpha}_{t}}}\varepsilon(2\sqrt{d}+\sqrt{\theta k\log T})}_{\eqqcolon\zeta_{3}}.
\end{align*}
Here step (a) uses Cauchy-Schwarz inequality, while step (b) follows
from the definition of $\mathcal{G}$. By the definition of $\mathcal{I}(x_{t};C_{1}\theta)$,
we have
\[
\zeta_{1}\leq\sum_{i\in\mathcal{I}(x_{t};C_{1}\theta)}\sqrt{C_{1}}\theta k\log T\mathbb{P}\left(X_{0}\in\mathcal{B}_{i}\mymid X_{t}=x_{t}\right)\leq\sqrt{C_{1}}\theta k\log T.
\]
To bound $\zeta_{2}$, we have
\begin{align*}
\zeta_{2} & \overset{\text{(i)}}{\leq}\sum_{i\notin\mathcal{I}(x_{t};C_{1}\theta)}\sqrt{\theta k\log T}\exp\left(-\frac{\overline{\alpha}_{t}}{32\left(1-\overline{\alpha}_{t}\right)}\Vert x_{i(x_{t})}^{\star}-x_{i}^{\star}\Vert_{2}^{2}\right)\mathbb{P}\left(X_{0}\in\mathcal{B}_{i}\right)\\
 & \overset{\text{(ii)}}{\leq}\sqrt{\theta k\log T}\exp\left(-\frac{1}{32}C_{1}\theta k\log T\right)\overset{\text{(iii)}}{\leq}\exp\left(-\frac{1}{64}C_{1}\theta k\log T\right).
\end{align*}
Here step (i) holds when $\frac{\overline{\alpha}_{t}}{1-\overline{\alpha}_{t}}\Vert x_{i(x_{t})}^{\star}-x_{i}^{\star}\Vert_{2}^{2}$
is large enough, which can be guaranteed by taking $C_{1}>0$ to be
sufficiently large; step (ii) follows from the definition of $\mathcal{I}(x_{t};C_{1}\theta)$;
and step (iii) holds when $C_{1}$ is large enough. In addition, we
have $\xi_{3}\ll1$ as long as $\varepsilon$ is sufficiently small
(see (\ref{eq:eps-condition}). Hence we have
\begin{equation}
\zeta\leq2\sqrt{C_{1}}\theta k\log T\label{eq:zeta-bound}
\end{equation}
provided that $C_{1}>0$ is sufficiently large. 
\end{itemize}
Taking the bounds on $\xi$ and $\zeta$ collectively leads to 
\[
\mathsf{Tr}\left(I-J_{t}(x_{t})\right)\leq\xi+4\zeta\leq4C_{1}\theta k\log T
\]
provided that $C_{1}>0$ is large enough. In addition, since $I-J_{t}(x_{t})\succeq0$,
we have
\[
\Vert I-J_{t}(x_{t})\Vert_{\mathrm{F}}^{2}\leq\mathsf{Tr}\left(I-J_{t}(x_{t})\right)^{2},
\]
hence we have
\[
\Vert I-J_{t}(x_{t})\Vert\leq\Vert I-J_{t}(x_{t})\Vert_{\mathrm{F}}\leq\mathsf{Tr}\left(I-J_{t}(x_{t})\right)\leq C_{2}\theta k\log T
\]
provided that $C_{2}\geq4C_{1}$. This finishes the proof of the first
relation (\ref{eq:I-Jt-bound-low-d}). 

Finally, we invoke Lemma~\ref{lemma:jacob-sum-low-d} to obtain
\begin{equation}
\sum_{t=2}^{T}\frac{1-\alpha_{t}}{1-\overline{\alpha}_{t}}\mathsf{Tr}\big(\mathbb{E}\big[\big(\Sigma_{\overline{\alpha}_{t}}(X_{t})\big)^{2}\big]\big)\leq C_{J}k\log T,\label{eq:proof-lemma-1-7-1}
\end{equation}
where the matrix function 
\[
\Sigma_{\overline{\alpha}_{t}}(x)=\mathsf{Cov}\big(Z\mymid\sqrt{\overline{\alpha}_{t}}X_{0}+\sqrt{1-\overline{\alpha}_{t}}Z=x\big)=I_{d}-J_{t}(x).
\]
Here $Z\sim\mathcal{N}(0,I_{d})$ is independent of $X_{0}$. By noticing
that
\[
\mathsf{Tr}\big(\mathbb{E}\big[\big(\Sigma_{\overline{\alpha}_{t}}(X_{t})\big)^{2}\big]\big)=\mathsf{Tr}\big(\mathbb{E}\big[\big(I_{d}-J_{t}(X_{t})\big)^{2}\big]\big)=\mathbb{E}\big[\Vert I-J_{t}(X_{t})\Vert_{\mathrm{F}}^{2}\big]=\int_{x_{t}}\Vert I-J_{t}(x_{t})\Vert_{\mathrm{F}}^{2}\,p_{X_{t}}(x_{t})\mathrm{d}x_{t},
\]
we finish the proof of the second relation (\ref{eq:Jt_sum-low-d}). 

\subsection{Proof of Lemma~\ref{lemma:recursion-low-d}\protect\label{sec:proof-lemma-recursion-low-d}}

The proof of Lemma~\ref{lemma:recursion-low-d} is similar to the
proof of Lemma~\ref{lemma:recursion} in Appendix~\ref{sec:proof-lemma-recursion}.
We will only highlight the differences due to the different update
rule (\ref{eq:coef-design}). Equation (\ref{eq:proof-main-1}) should
be changed to
\begin{align}
 & p_{X_{t-1}}(x_{t-1})-\Delta_{t-1}(x_{t-1})+\Delta_{t\to t-1}(x_{t-1})\label{eq:proof-main-1-low-d}\\
 & \ge\int_{x_{t}\in\mathcal{E}_{t,1}}\mathsf{det}\Big(I-\frac{1-\alpha_{t}}{1-\overline{\alpha}_{t}}J_{t}(x_{t})\Big)^{-1}p_{X_{t}}(x_{t})\Big(\frac{\alpha_{t}(1-\overline{\alpha}_{t})}{2\pi(1-\alpha_{t})(\alpha_{t}-\overline{\alpha}_{t})}\Big)^{d/2}\exp\Big(-\frac{(1-\overline{\alpha}_{t})\big\|\sqrt{\alpha_{t}}x_{t-1}-u_{t}\big\|^{2}}{2(1-\alpha_{t})(\alpha_{t}-\overline{\alpha}_{t})}\Big)\mathrm{d}u_{t}.\nonumber 
\end{align}
Lemma~\ref{lemma:det} need to be changed to the following version.

\begin{lemma}\label{lemma:det-low-d}For any $x_{t}\in\mathcal{E}_{t,1}$,
we have 
\begin{align}
 & \mathsf{det}\Big(I-\frac{1-\alpha_{t}}{1-\overline{\alpha}_{t}}J_{t}(x_{t})\Big)^{-1}p_{X_{t}}(x_{t})\nonumber \\
 & \qquad=\big(2\pi(\alpha_{t}-\overline{\alpha}_{t})\big)^{-d/2}\int_{x_{0}}p_{X_{0}}(x_{0})\exp\Big(-\frac{(1-\overline{\alpha}_{t})\|u_{t}-\sqrt{\overline{\alpha}_{t}}x_{0}\|^{2}}{2(\alpha_{t}-\overline{\alpha}_{t})^{2}}\Big)\mathrm{d}x_{0}\nonumber \\
 & \qquad\qquad\cdot\exp\Big(\xi_{t}(x_{t})+O\Big(\Big(\frac{1-\alpha_{t}}{1-\overline{\alpha}_{t}}\Big)^{2}\big(\vert\mathsf{Tr}(I-J_{t}(x_{t}))\vert+\|I-J_{t}(x_{t})\|_{\mathrm{F}}^{2}\big)\Big)\Big),\label{eq:lemma-det-1-low-d}
\end{align}
where $\xi_{t}(x_{t})\le0$ satisfies 
\begin{equation}
\int_{x_{t}\in\mathcal{E}_{t,1}}|\xi_{t}(x_{t})|p_{X_{t}}(x_{t})\mathrm{d}x_{t}\leq C_{3}\Big(\frac{1-\alpha_{t}}{1-\overline{\alpha}_{t}}\Big)^{2}\int_{x_{t}\in\mathcal{E}_{t,1}}\big(\vert\mathsf{Tr}(I-J_{t}(x_{t}))\vert+\|I-J_{t}(x_{t})\|_{\mathrm{F}}^{2}\big)p_{X_{t}}(x_{t})\mathrm{d}x_{t}+T^{-4}\label{eq:lemma-det-2-low-d}
\end{equation}
for some universal constant $C_{5}>0$. \end{lemma}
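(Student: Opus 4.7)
The plan is to mirror the strategy used to prove Lemma~\ref{lemma:det} in Appendix~\ref{sec:proof-lemma-det}, but with the effective variance $2\alpha_t - 1 - \overline{\alpha}_t$ of the original replaced by $\tilde\tau_t^2 := (\alpha_t - \overline{\alpha}_t)^2/(1-\overline{\alpha}_t)$. This substitution is dictated by the new coefficient design~\eqref{eq:coef-design}: $\tilde\tau_t^2$ is the unique value for which convolving a Gaussian with variance $\tilde\tau_t^2$ centered at $\sqrt{\overline{\alpha}_t}x_0$ against the reverse kernel $\mathcal{N}(\cdot/\sqrt{\alpha_t}, \sigma_t^{\star 2}/\alpha_t)$ reproduces $p_{X_{t-1}|X_0}(\cdot|x_0)$ exactly. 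Setting $u_t = x_t + (1-\alpha_t)s_t^\star(x_t)$, I would expand $\|u_t - \sqrt{\overline{\alpha}_t} x_0\|^2/(2\tilde\tau_t^2)$ around $\|x_t - \sqrt{\overline{\alpha}_t} x_0\|^2/(2(1-\overline{\alpha}_t))$, split the remainder into a deterministic mean piece (upon averaging over $X_0 \mid X_t = x_t$) plus a zero-mean fluctuation $\zeta_t(x_t, x_0)$ built from the centered quadratic and centered linear terms in analogy with~\eqref{eq:zeta-defn}, and define $\xi_t(x_t) := -\log \mathbb{E}_{X_0 \mid X_t}[\exp(-\zeta_t(x_t, X_0))]$. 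This $\xi_t$ is automatically $\leq 0$ by Jensen's inequality, matching the sign assertion in the statement.

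The heart of the proof is the identity-level computation of the mean piece, which contains the key cancellation. Invoking the score identities $\mathbb{E}[X_t - \sqrt{\overline{\alpha}_t}X_0 \mid X_t = x_t] = -(1-\overline{\alpha}_t)s_t^\star(x_t)$ and $\mathbb{E}[\|X_t - \sqrt{\overline{\alpha}_t}X_0\|^2 \mid X_t = x_t] = (1-\overline{\alpha}_t)\mathsf{Tr}(I - J_t(x_t)) + (1-\overline{\alpha}_t)^2 \|s_t^\star(x_t)\|^2$ and collecting like terms, the coefficient of $\|s_t^\star(x_t)\|^2$ in the mean collapses to
\[
\frac{(1-\overline{\alpha}_t)(1-\alpha_t)}{2(\alpha_t - \overline{\alpha}_t)^2}\big[(1+\alpha_t - 2\overline{\alpha}_t) - 2(1-\overline{\alpha}_t) + (1-\alpha_t)\big] = 0,
\]
while the coefficient of $\mathsf{Tr}(I - J_t(x_t))$ reduces to $f(r) := r(2-r)/(2(1-r)^2) = r + \tfrac{3r^2}{2} + O(r^3)$ where $r := (1-\alpha_t)/(1-\overline{\alpha}_t)$. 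This exact vanishing of the $\|s_t^\star\|^2$ coefficient is what makes the low-dimensional bound possible: without this cancellation, a residual of order $r(1-\overline{\alpha}_t)\|s_t^\star\|^2 \asymp rd$ would remain per step, destroying the $rk\log T$ scaling required by Theorem~\ref{thm:main-low-d}.

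It then remains to verify that the surviving mean $f(r)\mathsf{Tr}(I - J_t(x_t))$, together with the logarithmic normalization factor $\tfrac{d}{2}\log(\tilde\tau_t^2/(1-\overline{\alpha}_t)) = d\log(1-r)$ coming from comparing the two Gaussian normalizations, matches $-\log\det(I - rJ_t(x_t))$ up to the advertised error $O(r^2(|\mathsf{Tr}(I - J_t)| + \|I - J_t\|_F^2))$. Taylor expanding $-\log\det(I - rJ_t) = rd - r\mathsf{Tr}(I-J_t) + \tfrac{r^2 d}{2} - r^2\mathsf{Tr}(I-J_t) + \tfrac{r^2}{2}\|I-J_t\|_F^2 + \cdots$ and using $d\log(1-r) = -dr - \tfrac{dr^2}{2} - \cdots$, the $rd$ and $r^2 d/2$ terms cancel against the logarithmic factor, and the leading $r\mathsf{Tr}(I-J_t)$ term cancels against $f(r)\mathsf{Tr}(I-J_t)$, leaving exactly the claimed residual; the cubic Taylor remainders are controlled via $\|I - J_t\| \lesssim k\log T$ from Lemma~\ref{lem:Jt} and the schedule~\eqref{eq:learning-rate}. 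The integrated bound~\eqref{eq:lemma-det-2-low-d} is then deduced by the same device as in Appendix~\ref{sec:proof-lemma-det}: the fact that $(2\pi\tilde\tau_t^2)^{-d/2} p_{X_0}(x_0)\exp(-\|u_t - \sqrt{\overline{\alpha}_t}x_0\|^2/(2\tilde\tau_t^2))$ is a density in $(u_t, x_0)$, combined with $e^x \ge 1 + x$ and the change of variables $\mathrm{d}u_t = \det(I - r J_t(x_t))\,\mathrm{d}x_t$, yields the desired bound on $\int_{\mathcal{E}_{t,1}}|\xi_t|\,p_{X_t}\,\mathrm{d}x_t$, with the $T^{-4}$ slack coming from tail estimates on $\mathcal{E}_{t,1}^{\mathrm{c}}$ analogous to Lemma~\ref{lemma:small-prob}. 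The main obstacle is the $\|s_t^\star\|^2$ cancellation in the second paragraph—this is non-obvious and relies crucially on the specific choice $\tilde\tau_t^2 = (\alpha_t - \overline{\alpha}_t)^2/(1-\overline{\alpha}_t)$; every other step adapts Appendix~\ref{sec:proof-lemma-det} with only minor algebraic changes.
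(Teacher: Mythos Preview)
Your proposal is correct and follows essentially the same route as the paper's proof in Appendix~\ref{subsec:proof-lemma-det-low-d}: the same expansion of $\|u_t-\sqrt{\overline{\alpha}_t}x_0\|^2/(2\tilde\tau_t^2)$, the same definition of $\zeta_t$ and $\xi_t$, the same Jensen argument for $\xi_t\le 0$, and the same $e^x\ge 1+x$ plus change-of-variables device (together with the low-dimensional analogue of Lemma~\ref{lemma:small-prob}) to obtain \eqref{eq:lemma-det-2-low-d}. The only cosmetic difference is how the mean term $f(r)\,\mathsf{Tr}(I-J_t)$ is matched to the log-determinant: you Taylor-expand $-\log\det(I-rJ_t)$ and $d\log(1-r)$ separately and verify the $rd$ and $r^2d/2$ cancellations, whereas the paper observes directly that $\det\bigl(I+\tfrac{1-\alpha_t}{\alpha_t-\overline{\alpha}_t}(I-J_t)\bigr)=(\tfrac{1-\overline{\alpha}_t}{\alpha_t-\overline{\alpha}_t})^d\det(I-rJ_t)$ (cf.~\eqref{eq:det-relation}) and then only needs the first two terms of $\log\det(I+\tfrac{r}{1-r}(I-J_t))$, which avoids ever writing down the $rd$ and $r^2d/2$ pieces; your explicit computation of the $\|s_t^\star\|_2^2$ cancellation is in fact more transparent than the paper's, which hides it inside a single algebraic step.
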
\begin{proof}See
Appendix~\ref{subsec:proof-lemma-det-low-d}.\end{proof}

Taking the decomposition (\ref{eq:lemma-det-1-low-d}) and (\ref{eq:proof-main-1-low-d})
collectively, we have 
\begin{align}
 & p_{X_{t-1}}(x_{t-1})-\Delta_{t-1}(x_{t-1})+\Delta_{t\to t-1}(x_{t-1})+\delta_{t-1}(x_{t-1})\label{eq:proof-main-2-low-d}\\
 & \quad\geq\int_{x_{0}}\int_{x_{t}}\exp\bigg(\bigg[\xi_{t}(x_{t})+O\Big(\Big(\frac{1-\alpha_{t}}{1-\overline{\alpha}_{t}}\Big)^{2}\big(\vert\mathsf{Tr}(I-J_{t}(x_{t}))\vert+\|I-J_{t}(x_{t})\|_{\mathrm{F}}^{2}\big)\Big)\bigg]\ind\left\{ x_{t}\in\mathcal{E}_{t,1}\right\} \bigg)p_{X_{0}}(x_{0})\nonumber \\
 & \quad\quad\cdot\Big(\frac{\alpha_{t}(1-\overline{\alpha}_{t})^{2}}{4\pi^{2}(1-\alpha_{t})(\alpha_{t}-\overline{\alpha}_{t})^{3}}\Big)^{d/2}\exp\Big(-\frac{(1-\overline{\alpha}_{t})\|u_{t}-\sqrt{\overline{\alpha}_{t}}x_{0}\|^{2}}{2(\alpha_{t}-\overline{\alpha}_{t})^{2}}-\frac{(1-\overline{\alpha}_{t})\big\|\sqrt{\alpha_{t}}x_{t-1}-u_{t}\big\|^{2}}{2(1-\alpha_{t})(\alpha_{t}-\overline{\alpha}_{t})}\Big)\mathrm{d}u_{t}\mathrm{d}x_{0},\nonumber 
\end{align}
where we define 
\begin{align}
\delta_{t-1}(x_{t-1}) & :=\int_{x_{0}}\int_{x_{t}\notin\mathcal{E}_{t,1}}p_{X_{0}}(x_{0})\Big(\frac{\alpha_{t}(1-\overline{\alpha}_{t})^{2}}{4\pi^{2}(1-\alpha_{t})(\alpha_{t}-\overline{\alpha}_{t})^{3}}\Big)^{d/2}\nonumber \\
 & \qquad\qquad\cdot\exp\Big(-\frac{(1-\overline{\alpha}_{t})\|u_{t}-\sqrt{\overline{\alpha}_{t}}x_{0}\|^{2}}{2(\alpha_{t}-\overline{\alpha}_{t})^{2}}\Big)\exp\Big(-\frac{(1-\overline{\alpha}_{t})\big\|\sqrt{\alpha_{t}}x_{t-1}-u_{t}\big\|^{2}}{2(1-\alpha_{t})(\alpha_{t}-\overline{\alpha}_{t})}\Big)\mathrm{d}u_{t}\mathrm{d}x_{0}.\label{eq:delta-defn-low-d}
\end{align}
Moreover, it is straightforward to check that 
\begin{align}
 & \int_{x_{0}}\int_{x_{t}}p_{X_{0}}(x_{0})\Big(\frac{\alpha_{t}(1-\overline{\alpha}_{t})^{2}}{4\pi^{2}(1-\alpha_{t})(\alpha_{t}-\overline{\alpha}_{t})^{3}}\Big)^{d/2}\exp\Big(-\frac{(1-\overline{\alpha}_{t})\|u_{t}-\sqrt{\overline{\alpha}_{t}}x_{0}\|^{2}}{2(\alpha_{t}-\overline{\alpha}_{t})^{2}}\Big)\nonumber \\
 & \qquad\qquad\qquad\qquad\cdot\exp\Big(-\frac{(1-\overline{\alpha}_{t})\big\|\sqrt{\alpha_{t}}x_{t-1}-u_{t}\big\|^{2}}{2(1-\alpha_{t})(\alpha_{t}-\overline{\alpha}_{t})}\Big)\mathrm{d}u_{t}\mathrm{d}x_{0}=p_{X_{t-1}}(x_{t-1}).\label{eq:proof-main-3-low-d}
\end{align}
Then we can continue the derivation in (\ref{eq:proof-main-2-low-d}):
\begin{align*}
 & p_{X_{t-1}}(x_{t-1})-\Delta_{t-1}(x_{t-1})+\Delta_{t\to t-1}(x_{t-1})+\delta_{t-1}(x_{t-1})\\
 & \quad\overset{\text{(i)}}{\geq}\int_{x_{0}}\int_{x_{t}}\bigg(1+\bigg[\xi_{t}(x_{t})+O\Big(\Big(\frac{1-\alpha_{t}}{1-\overline{\alpha}_{t}}\Big)^{2}\big(\vert\mathsf{Tr}(I-J_{t}(x_{t}))\vert+\|I-J_{t}(x_{t})\|_{\mathrm{F}}^{2}\big)\Big)\bigg]\ind\left\{ x_{t}\in\mathcal{E}_{t,1}\right\} \bigg)p_{X_{0}}(x_{0})\\
 & \quad\qquad\cdot\Big(\frac{\alpha_{t}(1-\overline{\alpha}_{t})^{2}}{4\pi^{2}(1-\alpha_{t})(\alpha_{t}-\overline{\alpha}_{t})^{3}}\Big)^{d/2}\exp\Big(-\frac{(1-\overline{\alpha}_{t})\|u_{t}-\sqrt{\overline{\alpha}_{t}}x_{0}\|^{2}}{2(\alpha_{t}-\overline{\alpha}_{t})^{2}}-\frac{(1-\overline{\alpha}_{t})\big\|\sqrt{\alpha_{t}}x_{t-1}-u_{t}\big\|^{2}}{2(1-\alpha_{t})(\alpha_{t}-\overline{\alpha}_{t})}\Big)\mathrm{d}u_{t}\mathrm{d}x_{0}\\
 & \quad\overset{\text{(ii)}}{=}p_{X_{t-1}}(x_{t-1})+\int_{x_{0}}\int_{x_{t}\in\mathcal{E}_{t,1}}\bigg[\xi_{t}(x_{t})+O\Big(\Big(\frac{1-\alpha_{t}}{1-\overline{\alpha}_{t}}\Big)^{2}\big(\vert\mathsf{Tr}(I-J_{t}(x_{t}))\vert+\|I-J_{t}(x_{t})\|_{\mathrm{F}}^{2}\big)\Big)\bigg]p_{X_{0}}(x_{0})\\
 & \quad\qquad\cdot\Big(\frac{\alpha_{t}(1-\overline{\alpha}_{t})^{2}}{4\pi^{2}(1-\alpha_{t})(\alpha_{t}-\overline{\alpha}_{t})^{3}}\Big)^{d/2}\exp\Big(-\frac{(1-\overline{\alpha}_{t})\|u_{t}-\sqrt{\overline{\alpha}_{t}}x_{0}\|^{2}}{2(\alpha_{t}-\overline{\alpha}_{t})^{2}}-\frac{(1-\overline{\alpha}_{t})\big\|\sqrt{\alpha_{t}}x_{t-1}-u_{t}\big\|^{2}}{2(1-\alpha_{t})(\alpha_{t}-\overline{\alpha}_{t})}\Big)\mathrm{d}u_{t}\mathrm{d}x_{0}.
\end{align*}
By rearranging terms and integrate over the variable $x_{t-1}$, we
arrive at 
\begin{align}
 & \int_{x_{t-1}}\Delta_{t-1}(x_{t-1})\mathrm{d}x_{t-1}\leq\int_{x_{t-1}}\big(\Delta_{t}(x_{t-1})+\delta_{t-1}(x_{t-1})\big)\mathrm{d}x_{t-1}\nonumber \\
 & \qquad+\int_{x_{0}}\int_{x_{t}\in\mathcal{E}_{t,1}}\bigg(|\xi_{t}(x_{t})|+O\Big(\Big(\frac{1-\alpha_{t}}{1-\overline{\alpha}_{t}}\Big)^{2}\big(\vert\mathsf{Tr}(I-J_{t}(x_{t}))\vert+\|I-J_{t}(x_{t})\|_{\mathsf{F}}^{2}\big)\Big)\bigg)p_{X_{0}}(x_{0})\nonumber \\
 & \qquad\qquad\cdot\Big(\frac{1-\overline{\alpha}_{t}}{2\pi(\alpha_{t}-\overline{\alpha}_{t})^{2}}\Big)^{d/2}\exp\Big(-\frac{(1-\overline{\alpha}_{t})\|u_{t}-\sqrt{\overline{\alpha}_{t}}x_{0}\|_{2}^{2}}{2(\alpha_{t}-\overline{\alpha}_{t})^{2}}\Big)\mathrm{d}u_{t}\mathrm{d}x_{0},\label{eq:proof-main-5-low-d}
\end{align}
where we used (\ref{eq:proof-main-0.5}) and for any fixed $u_{t}$,
the function 
\[
\left(2\pi\frac{(1-\alpha_{t})(\alpha_{t}-\overline{\alpha}_{t})}{(1-\overline{\alpha}_{t})\alpha_{t}}\right)^{-d/2}\exp\Big(-\frac{(1-\overline{\alpha}_{t})\big\|\sqrt{\alpha_{t}}x_{t-1}-u_{t}\big\|^{2}}{2(1-\alpha_{t})(\alpha_{t}-\overline{\alpha}_{t})}\Big)
\]
is a density function of $x_{t-1}$. To establish the desired result,
we need the following two lemmas.

\begin{lemma}\label{lemma:u-x-low-d} Suppose that $T\gg\theta k\log^{2}T$.
For any $x_{t}\in\mathcal{E}_{t,1}$, we have 
\[
\int_{x_{0}}p_{X_{0}}(x_{0})\Big(\frac{1-\overline{\alpha}_{t}}{2\pi(\alpha_{t}-\overline{\alpha}_{t})^{2}}\Big)^{d/2}\exp\Big(-\frac{(1-\overline{\alpha}_{t})\|u_{t}-\sqrt{\overline{\alpha}_{t}}x_{0}\|^{2}}{2(\alpha_{t}-\overline{\alpha}_{t})^{2}}\Big)\mathrm{d}x_{0}\leq20\det\Big(I-\frac{1-\alpha_{t}}{1-\overline{\alpha}_{t}}J_{t}(x_{t})\Big)^{-1}p_{X_{t}}(x_{t}).
\]
\end{lemma}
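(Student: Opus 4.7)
The plan is to mirror the proof of Lemma~\ref{lemma:u-x} in Appendix~\ref{sec:proof-lemma-u-x}, adapting each step so that all error contributions scale with the intrinsic dimension $k$ (via Lemma~\ref{lem:Jt}) rather than the ambient dimension $d$. I would first re-use the Gaussian algebra carried out in the proof of Lemma~\ref{lemma:det-low-d}: starting from $u_t = x_t + (1-\alpha_t)s_t^{\star}(x_t)$, expand $\|u_t - \sqrt{\overline{\alpha}_t}x_0\|_2^2$, apply the Taylor expansion $(1-\overline{\alpha}_t)/(\alpha_t - \overline{\alpha}_t)^2 = (1-\overline{\alpha}_t)^{-1}(1 + O((1-\alpha_t)/(1-\overline{\alpha}_t)))$ (cf.~(\ref{eq:proof-lemma-2-2.3})), and re-express $\mathsf{Tr}(I - J_t(x_t))$ in terms of the conditional second moment of $x_t - \sqrt{\overline{\alpha}_t}X_0$. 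This produces a pointwise identity decomposing the new exponent $-(1-\overline{\alpha}_t)\|u_t - \sqrt{\overline{\alpha}_t}x_0\|_2^2/(2(\alpha_t-\overline{\alpha}_t)^2)$ into the baseline exponent $-\|x_t - \sqrt{\overline{\alpha}_t}x_0\|_2^2/(2(1-\overline{\alpha}_t))$, a $\log\mathsf{det}$ correction, and a centered residual $\zeta_t(x_t, x_0)$ analogous to the one defined in (\ref{eq:zeta-defn}).

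Next, on $\mathcal{E}_{t,1}$ I would establish the low-dimensional analog of (\ref{eq:proof-lemma-3-3}),
\[
-\frac{(1-\overline{\alpha}_t)\|u_t - \sqrt{\overline{\alpha}_t}x_0\|_2^2}{2(\alpha_t - \overline{\alpha}_t)^2} \leq -\frac{\|x_t - \sqrt{\overline{\alpha}_t}x_0\|_2^2}{2(1-\overline{\alpha}_t)} - \log\mathsf{det}\Big(I - \tfrac{1-\alpha_t}{1-\overline{\alpha}_t}J_t(x_t)\Big) + \frac{1-\alpha_t}{(1-\overline{\alpha}_t)^2}\|x_t - \sqrt{\overline{\alpha}_t}x_0\|_2^2 + 2,
\]
by bounding $-\zeta_t$ in the spirit of (\ref{eq:proof-lemma-3-1}). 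The trace and Frobenius contributions are controlled by Lemma~\ref{lem:Jt} as $O(\theta k \log T)$ rather than $O(d\log T)$, so the assumption $T \gg \theta k \log^2 T$ suffices to absorb them into the additive constant $2$. The cross term involving $s_t^{\star}(x_t)^\top(x_t - \sqrt{\overline{\alpha}_t}x_0)$ is handled by AM--GM to trade one factor for $\|x_t - \sqrt{\overline{\alpha}_t}x_0\|_2^2/(1-\overline{\alpha}_t)^2$ and another for $(1-\alpha_t)\|s_t^{\star}(x_t)\|_2^2$; the latter is then combined with the quadratic term $(1-\alpha_t)^2\|s_t^{\star}(x_t)\|_2^2$ from the expansion of $\|u_t - \sqrt{\overline{\alpha}_t}x_0\|_2^2$, whose contribution to $\zeta_t$ already cancels the mean of the cross term under $p_{X_0|X_t}(\cdot\mymid x_t)$ up to a factor of $(1-\alpha_t)/(1-\overline{\alpha}_t)$.

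After exponentiating and integrating against $p_{X_0}(x_0)$, I would obtain the analog of (\ref{eq:proof-lemma-3-4}) and then split the $x_0$-integral into the atypical set
$\mathcal{A}(x_t) := \{x_0 : (1-\alpha_t)/(1-\overline{\alpha}_t)^2 \|x_t - \sqrt{\overline{\alpha}_t}x_0\|_2^2 > \Theta (1-\alpha_t)/(1-\overline{\alpha}_t)\theta k\log T\}$
and its complement. On $\mathcal{A}(x_t)^{\mathrm c}$ the exponential factor $\exp\big((1-\alpha_t)/(1-\overline{\alpha}_t)^2\|x_t - \sqrt{\overline{\alpha}_t}x_0\|_2^2\big)$ is bounded by a constant close to one, yielding a contribution at most $\tfrac{3}{2}p_{X_t}(x_t)$ as in (\ref{eq:proof-lemma-3-6}). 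On $\mathcal{A}(x_t)$, instead of the ambient Gaussian tail bound (\ref{eq:proof-lemma-1-2}) used in Lemma~\ref{lemma:u-x}, I would discretize $x_0$ via the $\varepsilon$-cover $\{\mathcal{B}_i\}$ and invoke Lemma~\ref{lem:cond-low-dim} to obtain decay of $p_{X_0|X_t}(x_0\mymid x_t)$ in $\overline{\alpha}_t\|x^{\star}_{i(x_t)} - x^{\star}_i\|_2^2/(1-\overline{\alpha}_t)$, which is a $k$-scaled tail. Summing over $i \notin \mathcal{I}(x_t; C_1\theta)$ plus the small contribution from indices in $\mathcal{I}(x_t;C_1\theta)$ bounds the atypical piece by $\tfrac{1}{2}p_{X_t}(x_t)$. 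Combining the two pieces and rearranging delivers the factor $20$.

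The main obstacle is controlling the cross term in the second step. Even on $\mathcal{E}_{t,1}$ the score $s_t^{\star}(x_t)$ retains a $\sqrt{d}$-scaled Gaussian contribution through the noise component $\omega/\sqrt{1-\overline{\alpha}_t}$ in the decomposition (\ref{eq:xt-decom}), so a naive triangle-inequality bound on $\zeta_t$ would reintroduce a spurious factor of order $d\log^2 T/T$. Exploiting the algebraic cancellation between the linear and quadratic $\|s_t^{\star}\|_2$ contributions in the expansion of $\|u_t - \sqrt{\overline{\alpha}_t}x_0\|_2^2$—most transparently seen by computing the expectation of $\zeta_t$ under $p_{X_0|X_t}(\cdot\mymid x_t)$, where the linear term cancels the leading quadratic piece and leaves only an $O((1-\alpha_t)/(1-\overline{\alpha}_t))$ residual multiplying $\|s_t^{\star}\|_2^2$—is what keeps every error term proportional to $k\log T/T$ and yields the lemma under the single hypothesis $T \gg \theta k\log^2 T$.
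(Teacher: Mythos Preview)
Your overall architecture is right and you correctly identify the obstacle: after the Gaussian algebra the residual $\zeta_t(x_t,x_0)$ contains a cross term carrying $s_t^\star(x_t)$, and $\|s_t^\star(x_t)\|_2$ genuinely scales like $\sqrt{d/(1-\overline{\alpha}_t)}$ on $\mathcal{E}_{t,1}$. But your proposed fix does not close the gap.

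The ``algebraic cancellation'' you describe does occur, and it is exactly what the paper's decomposition in Appendix~\ref{subsec:proof-lemma-det-low-d} encodes: the standalone $(1-\alpha_t)^2\|s_t^\star\|_2^2$ term merges with the mean of the linear cross term into the $\mathsf{Tr}(I-J_t)$ contribution, leaving the centered pieces $\theta_1,\theta_2$ in (\ref{eq:zeta-defn-low-d}). The problem is that $\theta_2=\big[\mathbb{E}[x_t-\sqrt{\overline{\alpha}_t}X_0\mid X_t=x_t]\big]^\top\sqrt{\overline{\alpha}_t}(x_0-\mathbb{E}[X_0\mid X_t=x_t])$ still contains the full vector $-(1-\overline{\alpha}_t)s_t^\star(x_t)$ as its first factor, and bounding it pointwise via AM--GM or Cauchy--Schwarz against $\|s_t^\star\|_2$ reproduces a $d$-dependent term; the fact that $\mathbb{E}[\zeta_t\mid X_t=x_t]=0$ is irrelevant for the pointwise inequality you need. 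Your AM--GM step, as written, yields exactly the bad $(1-\alpha_t)\|s_t^\star(x_t)\|_2^2\asymp (1-\alpha_t)(1-\overline{\alpha}_t)^{-1}d\log T$ you are trying to avoid, and there is no subsequent cancellation because AM--GM is an inequality, not an identity.

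The missing ingredient is the second defining property of $\mathcal{G}$: for $\omega\in\mathcal{G}$ one has $|\omega^\top(x_i^\star-x_j^\star)|\le\sqrt{\theta k\log T}\,\|x_i^\star-x_j^\star\|_2$ for all net points. The paper writes $-(1-\overline{\alpha}_t)s_t^\star(x_t)=\sqrt{\overline{\alpha}_t}(x_0(x_t)-\overline{x}_0-\delta)+\sqrt{1-\overline{\alpha}_t}\,\omega$ via (\ref{eq:xt-decom}) and (\ref{eq:widehat-x0-decom}); the first summand is small by low-dimensional structure (Lemma~\ref{lem:cond-low-dim} and (\ref{eq:Xt-dist})), and the dangerous $\sqrt{1-\overline{\alpha}_t}\,\omega^\top(x_0-\overline{x}_0)$ piece is controlled \emph{not} by $\|\omega\|_2\sim\sqrt{d}$ but by the inner-product bound (\ref{eq:omega-inner}), giving a factor $\sqrt{\theta k\log T}$ instead. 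This is what produces (\ref{eq:zeta-bound-low-d}), after which the remainder of the argument (splitting into $\mathcal{A}(x_t)$ and its complement) does follow Lemma~\ref{lemma:u-x} verbatim. Your plan for the atypical set via Lemma~\ref{lem:cond-low-dim} is reasonable but unnecessary once (\ref{eq:zeta-bound-low-d}) is in hand.
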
\begin{proof}See Appendix~\ref{sec:proof-lemma-u-x-low-d}.\end{proof}

\begin{lemma}\label{lemma:delta-low-d} For the function $\delta_{t-1}(\cdot)$
defined in (\ref{eq:delta-defn}), we have 
\[
\int_{x_{t-1}}\delta_{t-1}(x_{t-1})\mathrm{d}x_{t-1}\leq T^{-4}.
\]
\end{lemma}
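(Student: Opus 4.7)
The plan is to mirror the short proof of Lemma~\ref{lemma:delta} from Appendix~\ref{sec:proof-lemma-delta}, adapted to the modified coefficient design \eqref{eq:coef-design} and the low-dimensional typical set \eqref{eq:E-t-1-low-d}. First, I would swap the order of integration and integrate out $x_{t-1}$: for fixed $u_t$, the factor
\[
\Bigl(2\pi\tfrac{(1-\alpha_{t})(\alpha_{t}-\overline{\alpha}_{t})}{(1-\overline{\alpha}_{t})\alpha_{t}}\Bigr)^{-d/2}\exp\Bigl(-\tfrac{(1-\overline{\alpha}_{t})\|\sqrt{\alpha_{t}}x_{t-1}-u_{t}\|_{2}^{2}}{2(1-\alpha_{t})(\alpha_{t}-\overline{\alpha}_{t})}\Bigr)
\]
is a Gaussian density in $x_{t-1}$, so it integrates to $1$. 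Collecting the surviving normalization constants reduces $\int\delta_{t-1}(x_{t-1})\mathrm{d}x_{t-1}$ to
\[
\int_{x_0}\int_{x_t \notin \mathcal{E}_{t,1}} p_{X_0}(x_0)\Bigl(\tfrac{1-\overline{\alpha}_t}{2\pi(\alpha_t - \overline{\alpha}_t)^2}\Bigr)^{d/2}\exp\Bigl(-\tfrac{(1-\overline{\alpha}_t)\|u_t - \sqrt{\overline{\alpha}_t}x_0\|_2^2}{2(\alpha_t - \overline{\alpha}_t)^2}\Bigr)\mathrm{d}u_t\,\mathrm{d}x_0,
\]
which is the probability, under the auxiliary law $X_0\sim p_{X_0}$ and $u_t\mid X_0\sim\mathcal{N}(\sqrt{\overline{\alpha}_t}X_0,\sigma_\star^2 I_d)$ with $\sigma_\star^2:=(\alpha_t-\overline{\alpha}_t)^2/(1-\overline{\alpha}_t)$, that $x_t(u_t)\notin\mathcal{E}_{t,1}$, where $x_t(\cdot)$ inverts the map $x_t\mapsto u_t=x_t+(1-\alpha_t)s_t^\star(x_t)$.

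Next, I would bound this probability by $T^{-4}$ via a low-dim analog of Lemma~\ref{lemma:small-prob}. Two mismatches from the forward law must be absorbed. First, Lemma~\ref{lemma:step-size} gives $(1-\alpha_t)/(1-\overline{\alpha}_t)=O(\log T/T)$, so $\sigma_\star^2=(1-\overline{\alpha}_t)(1-O(\log T/T))$; the auxiliary Gaussian is within a $(1+O(d\log T/T))$ multiplicative factor of $\mathcal{N}(\sqrt{\overline{\alpha}_t}X_0,(1-\overline{\alpha}_t)I_d)$ outside the extreme tails. Second, on $x_t\in\mathcal{E}_{t,1}$ the decomposition $x_t=\sqrt{\overline{\alpha}_t}x_0+\sqrt{1-\overline{\alpha}_t}\omega$ together with $\mathbb{P}(X_0\in\mathcal{B}_{i(x_t)})\geq e^{-\theta k\log T}$ and $\|\omega\|_2\leq 2\sqrt{d}+\sqrt{\theta k\log T}$ yields $-\log p_{X_t}(x_t)=O(d\log T)$, whence Lemma~\ref{lem:typical} gives $(1-\alpha_t)\|s_t^\star(x_t)\|_2\ll\sqrt{1-\overline{\alpha}_t}$, so the $u_t\leftrightarrow x_t$ bijection displaces points by a negligible fraction of the noise scale. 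These controls let me replace the event $\{x_t(u_t)\notin\mathcal{E}_{t,1}\}$ under the auxiliary law by $\{X_t\notin\mathcal{E}_{t,1}\}$ under the forward law, up to negligible error. A union bound on the decomposition $X_t=\sqrt{\overline{\alpha}_t}X_0+\sqrt{1-\overline{\alpha}_t}\,\overline{W}_t$ gives
\[
\mathbb{P}(X_t\notin\mathcal{E}_{t,1})\leq \mathbb{P}\bigl(X_0\notin\cup_{i\in\mathcal{I}}\mathcal{B}_i\bigr)+\mathbb{P}(\overline{W}_t\notin\mathcal{G}).
\]
The first term is $\leq N_\varepsilon e^{-\theta k\log T}\leq e^{(C_{\mathsf{cover}}-\theta)k\log T}$ by the definitions of $\mathcal{I}$ and of $N_\varepsilon$ in Definition~\ref{defn:intrinsic}. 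The second is controlled by Gaussian concentration of $\|\overline{W}_t\|_2$ together with a union bound over $\leq N_\varepsilon^2\leq e^{2C_{\mathsf{cover}}k\log T}$ pairs using the tail estimate $\mathbb{P}(|(x_i^\star-x_j^\star)^\top\overline{W}_t|>\sqrt{\theta k\log T}\|x_i^\star-x_j^\star\|_2)\leq 2e^{-\theta k\log T/2}$, yielding $e^{(2C_{\mathsf{cover}}-\theta/2)k\log T}$. Taking $\theta$ sufficiently large compared to $C_{\mathsf{cover}},c_0,c_M$ makes each term $\leq T^{-5}$.

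The main obstacle is rigorously controlling the tail regions, where the variance mismatch between $\sigma_\star^2$ and $1-\overline{\alpha}_t$ or the nonlinear bijection $u_t=x_t+(1-\alpha_t)s_t^\star(x_t)$ could in principle inflate the integrand more than the above multiplicative comparison suggests. I would resolve this by copying the two-part decomposition in the proof of Lemma~\ref{lemma:small-prob}: separately bound the contribution from $u_t$ outside a macroscopic ball of radius $O(\sqrt{\overline{\alpha}_t}T^{c_M}+\sqrt{d(1-\overline{\alpha}_t)\log T})$ (using Markov's inequality on $M_1\leq T^{c_M}$ together with Gaussian norm concentration) and the remaining confined slice (where the bijection is nearly the identity and the concentration argument above applies cleanly). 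Each piece yields $T^{-5}$, summing to the desired $T^{-4}$ bound.
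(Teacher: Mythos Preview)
Your first reduction step---integrating out $x_{t-1}$ against its Gaussian factor to reduce the claim to the bound
\[
\int_{x_0}\int_{x_t \notin \mathcal{E}_{t,1}} p_{X_0}(x_0)\Bigl(\tfrac{1-\overline{\alpha}_t}{2\pi(\alpha_t - \overline{\alpha}_t)^2}\Bigr)^{d/2}\exp\Bigl(-\tfrac{(1-\overline{\alpha}_t)\|u_t - \sqrt{\overline{\alpha}_t}x_0\|_2^2}{2(\alpha_t - \overline{\alpha}_t)^2}\Bigr)\mathrm{d}u_t\,\mathrm{d}x_0\;\leq\;T^{-4}
\]
---is exactly what the paper does; it then simply cites \eqref{eq:proof-lemma-2-6A-low-d} from Lemma~\ref{lemma:small-prob-low-d}. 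So the real content is proving that inequality, and here your sketch has a genuine gap.

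You establish the score bound $(1-\alpha_t)\|s_t^\star(x_t)\|_2\ll\sqrt{1-\overline{\alpha}_t}$ only for $x_t\in\mathcal{E}_{t,1}$, and then invoke it to say the bijection $x_t\mapsto u_t$ is nearly the identity. But the integral runs over $x_t\notin\mathcal{E}_{t,1}$, where you have furnished no control on $s_t^\star$ at all; knowing the map is a small perturbation on $\mathcal{E}_{t,1}$ says nothing about the image of $\mathcal{E}_{t,1}^{\mathrm c}$. Your proposed fix---``copy the two-part decomposition in the proof of Lemma~\ref{lemma:small-prob}''---does not transfer: that decomposition layers the complement by the level sets $\{-\log p_{X_t}>2^{k-1}C_1 d\log T\}$ intersected with a norm ball, which works precisely because the high-$d$ typical set \eqref{eq:defn-E-t-1} is \emph{defined} by density and norm thresholds. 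The low-$d$ typical set \eqref{eq:E-t-1-low-d} is defined structurally, as the image of $(\cup_{i\in\mathcal I}\mathcal B_i)\times\mathcal G$, so its complement need not be captured by density level sets and the volume-times-density argument has no purchase.

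The paper's Lemma~\ref{lemma:small-prob-low-d} resolves this with a different layering: it dilates the parameter $\theta$ to $2^{j-1}\theta$, producing nested enlarged typical sets $\mathcal L_{t,j}$ and annuli $\mathcal E_{t,j}=\mathcal L_{t,j+1}\setminus\mathcal L_{t,j}$ that exhaust $\mathcal E_{t,1}^{\mathrm c}$. On each $\mathcal E_{t,j}$ the structural decomposition $x_t=\sqrt{\overline{\alpha}_t}x_0(x_t)+\sqrt{1-\overline{\alpha}_t}\omega$ is still available (with $\omega\in\mathcal G_j$), so the $\zeta_t$ estimate \eqref{eq:zeta-bound-j} holds with parameter $2^j\theta$; this is what converts the auxiliary Gaussian back to $p_{X_t}$ and then lets one apply, on each layer, exactly the union-bound argument you already wrote for $\mathbb P(X_t\notin\mathcal E_{t,1})$. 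The missing idea is that the layering must respect the covering-based definition of the typical set, not density sublevel sets.
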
\begin{proof}The proof is the same as that of Lemma~\ref{lemma:delta},
and is hence omitted.\end{proof}

Equipped with Lemmas~\ref{lemma:u-x-low-d}~and~\ref{lemma:delta-low-d},
we can continue the derivation in (\ref{eq:proof-main-5-low-d}) as
follows: 
\begin{align*}
 & \int_{x_{t-1}}\Delta_{t-1}(x_{t-1})\mathrm{d}x_{t-1}\\
 & \qquad\overset{\text{(a)}}{\leq}\int_{x_{t}}\Delta_{t}(x_{t})\mathrm{d}x_{t}+20\int_{x_{t}\in\mathcal{E}_{t,1}}\bigg(|\xi_{t}(x_{t})|+O\Big(\Big(\frac{1-\alpha_{t}}{1-\overline{\alpha}_{t}}\Big)^{2}\big(\vert\mathsf{Tr}(I-J_{t}(x_{t}))\vert+\|I-J_{t}(x_{t})\|_{\mathsf{F}}^{2}\big)\Big)\bigg)\\
 & \qquad\qquad\qquad\qquad\qquad\qquad\qquad\qquad\cdot\det\Big(I-\frac{1-\alpha_{t}}{1-\overline{\alpha}_{t}}J_{t}(x_{t})\Big)^{-1}p_{X_{t}}(x_{t})\mathrm{d}u_{t}+T^{-4}\\
 & \qquad\overset{\text{(b)}}{=}\int_{x_{t}}\Delta_{t}(x_{t})\mathrm{d}x_{t}+T^{-4}+20\int_{x_{t}\in\mathcal{E}_{t,1}}\bigg(|\xi_{t}(x_{t})|+O\Big(\Big(\frac{1-\alpha_{t}}{1-\overline{\alpha}_{t}}\Big)^{2}\big(\vert\mathsf{Tr}(I-J_{t}(x_{t}))\vert+\|I-J_{t}(x_{t})\|_{\mathsf{F}}^{2}\big)\Big)\bigg)p_{X_{t}}(x_{t})\mathrm{d}x_{t}\\
 & \qquad\overset{\text{(c)}}{\leq}\int_{x_{t}}\Delta_{t}(x_{t})\mathrm{d}x_{t}+T^{-3}+C_{4}\Big(\frac{1-\alpha_{t}}{1-\overline{\alpha}_{t}}\Big)^{2}\int_{x_{t}\in\mathcal{E}_{t,1}}\big(d\log T+\|J_{t}(x_{t})\|_{\mathsf{F}}^{2}\big)p_{X_{t}}(x_{t})\mathrm{d}x_{t},
\end{align*}
which establishes the desired recursive relation. Here step (a) follows
from Lemmas~\ref{lemma:u-x-low-d}~and~\ref{lemma:delta-low-d};
step (b) follows from $u_{t}=x_{t}+(1-\alpha_{t})s_{t}^{\star}(x_{t})$,
hence 
\[
\mathrm{d}u_{t}=\mathsf{det}\Big(I-\frac{1-\alpha_{t}}{1-\overline{\alpha}_{t}}J_{t}(x_{t})\Big)\mathrm{d}x_{t};
\]
whereas step (c) uses (\ref{eq:lemma-det-2-low-d}) in Lemma~\ref{lemma:det-low-d},
and holds provided that $C_{4}\gg C_{3}$ is sufficiently large. In
addition, the relation $\int\Delta_{T}(x)\mathrm{d}x\leq T^{-4}$
can be established in the same way as the proof of Lemma~\ref{lemma:recursion},
and is hence omitted here.

\subsection{Proof of Lemma~\ref{lemma:det-low-d}\protect\label{subsec:proof-lemma-det-low-d}}

The proof is similar to that of Lemma~\ref{lemma:det}. Recall that
$u_{t}=x_{t}+(1-\alpha_{t})s_{t}^{\star}(x_{t})$, we start with the
following decomposition 
\begin{align*}
 & \frac{(1-\overline{\alpha}_{t})\|u_{t}-\sqrt{\overline{\alpha}_{t}}x_{0}\|^{2}}{2(\alpha_{t}-\overline{\alpha}_{t})^{2}}=\frac{\|x_{t}-\sqrt{\overline{\alpha}_{t}}x_{0}\|_{2}^{2}}{2(1-\overline{\alpha}_{t})}+\frac{(1-\alpha_{t})(1+\alpha_{t}-2\overline{\alpha}_{t})\|x_{t}-\sqrt{\overline{\alpha}_{t}}x_{0}\|_{2}^{2}}{2(\alpha_{t}-\overline{\alpha}_{t})^{2}(1-\overline{\alpha}_{t})}\\
 & \quad\qquad+\frac{(1-\alpha_{t})(1-\overline{\alpha}_{t})s_{t}^{\star}(x_{t})^{\top}(x_{t}-\sqrt{\overline{\alpha}_{t}}x_{0})}{(\alpha_{t}-\overline{\alpha}_{t})^{2}}+\frac{(1-\alpha_{t})^{2}(1-\overline{\alpha}_{t})\|s_{t}^{\star}(x_{t})\|_{2}^{2}}{2(\alpha_{t}-\overline{\alpha}_{t})^{2}}\\
 & \quad=\frac{\|x_{t}-\sqrt{\overline{\alpha}_{t}}x_{0}\|_{2}^{2}}{2(1-\overline{\alpha}_{t})}+\frac{(1-\alpha_{t})(1+\alpha_{t}-2\overline{\alpha}_{t})}{2(\alpha_{t}-\overline{\alpha}_{t})^{2}(1-\overline{\alpha}_{t})}\int_{x_{0}}p_{X_{0}|X_{t}}(x_{0}\mymid x_{t})\|x_{t}-\sqrt{\overline{\alpha}_{t}}x_{0}\|_{2}^{2}\mathrm{d}x_{0}\\
 & \quad\qquad-\frac{(1-\alpha_{t})(1+\alpha_{t}-2\overline{\alpha}_{t})}{2(\alpha_{t}-\overline{\alpha}_{t})^{2}(1-\overline{\alpha}_{t})}\bigg\|\int_{x_{0}}p_{X_{0}|X_{t}}(x_{0}\mymid x_{t})\big(x_{t}-\sqrt{\overline{\alpha}_{t}}x_{0}\big)\mathrm{d}x_{0}\bigg\|_{2}^{2}+\zeta_{t}(x_{t},x_{0}),
\end{align*}
where we let
\begin{align}
\zeta_{t}(x_{t},x_{0}) & \coloneqq\frac{(1-\alpha_{t})(1+\alpha_{t}-2\overline{\alpha}_{t})\big(\|x_{t}-\sqrt{\overline{\alpha}_{t}}x_{0}\|_{2}^{2}-\int_{x_{0}}p_{X_{0}|X_{t}}(x_{0}\mymid x_{t})\|x_{t}-\sqrt{\overline{\alpha}_{t}}x_{0}\|_{2}^{2}\mathrm{d}x_{0}\big)}{2(\alpha_{t}-\overline{\alpha}_{t})^{2}(1-\overline{\alpha}_{t})}\label{eq:zeta-defn-low-d}\\
 & \qquad+\frac{(1-\alpha_{t})\big[\int_{x_{0}}p_{X_{0}|X_{t}}(x_{0}\mymid x_{t})\big(x_{t}-\sqrt{\overline{\alpha}_{t}}x_{0}\big)\mathrm{d}x_{0}\big]^{\top}\sqrt{\overline{\alpha}_{t}}\big(x_{0}-\int_{x_{0}}p_{X_{0}|X_{t}}(x_{0}\mymid x_{t})x_{0}\mathrm{d}x_{0}\big)}{(\alpha_{t}-\overline{\alpha}_{t})^{2}}.\nonumber 
\end{align}
We can further derive 
\begin{align}
 & \frac{(1-\overline{\alpha}_{t})\|u_{t}-\sqrt{\overline{\alpha}_{t}}x_{0}\|^{2}}{2(\alpha_{t}-\overline{\alpha}_{t})^{2}}\overset{\text{(i)}}{=}\frac{\|x_{t}-\sqrt{\overline{\alpha}_{t}}x_{0}\|_{2}^{2}}{2(1-\overline{\alpha}_{t})}+\frac{(1-\alpha_{t})(1+\alpha_{t}-2\overline{\alpha}_{t})}{2(\alpha_{t}-\overline{\alpha}_{t})^{2}}\mathsf{Tr}\left(I-J_{t}(x_{t})\right)+\zeta_{t}(x_{t},x_{0})\nonumber \\
 & \qquad\overset{\text{(ii)}}{=}\frac{\|x_{t}-\sqrt{\overline{\alpha}_{t}}x_{0}\|_{2}^{2}}{2(1-\overline{\alpha}_{t})}+\frac{1-\alpha_{t}}{\alpha_{t}-\overline{\alpha}_{t}}\mathsf{Tr}\left(I-J_{t}(x_{t})\right)+\zeta_{t}(x_{t},x_{0})+O\Big(\Big(\frac{1-\alpha_{t}}{\alpha_{t}-\overline{\alpha}_{t}}\Big)^{2}\vert\mathsf{Tr}(I-J_{t}(x_{t}))\vert\Big)\nonumber \\
 & \qquad\overset{\text{(iii)}}{=}\frac{\|x_{t}-\sqrt{\overline{\alpha}_{t}}x_{0}\|_{2}^{2}}{2(1-\overline{\alpha}_{t})}+\log\det\Big(I+\frac{1-\alpha_{t}}{\alpha_{t}-\overline{\alpha}_{t}}(I-J_{t}(x_{t}))\Big)\nonumber \\
 & \qquad\qquad+\zeta_{t}(x_{t},x_{0})+O\Big(\Big(\frac{1-\alpha_{t}}{\alpha_{t}-\overline{\alpha}_{t}}\Big)^{2}\big(\vert\mathsf{Tr}(I-J_{t}(x_{t}))\vert+\|I-J_{t}(x_{t})\|_{\mathrm{F}}^{2}\big)\Big).\label{eq:zeta-decom-low-d}
\end{align}
 Here step (i) follows from (\ref{eq:proof-lemma-1-3}) and (\ref{eq:proof-lemma-1-4});
step (ii) holds since
\begin{align*}
\frac{(1-\alpha_{t})(1+\alpha_{t}-2\overline{\alpha}_{t})}{2(\alpha_{t}-\overline{\alpha}_{t})^{2}} & =\frac{1-\alpha_{t}}{\alpha_{t}-\overline{\alpha}_{t}}\left(1+\frac{1-\alpha_{t}}{2(\alpha_{t}-\overline{\alpha}_{t})}\right),
\end{align*}
while step (iii) uses the fact that
\begin{align*}
\log\mathsf{det}\Big(I+\frac{1-\alpha_{t}}{\alpha_{t}-\overline{\alpha}_{t}}(I-J_{t}(x_{t}))\Big) & =\frac{1-\alpha_{t}}{\alpha_{t}-\overline{\alpha}_{t}}\mathsf{Tr}\big(I-J_{t}(x_{t})\big)+O\bigg(\Big(\frac{1-\alpha_{t}}{\alpha_{t}-\overline{\alpha}_{t}}\Big)^{2}\|I-J_{t}(x_{t})\|_{\mathrm{F}}^{2}\bigg).
\end{align*}
Then we have
\begin{align*}
 & \int_{x_{0}}p_{X_{0}}(x_{0})\exp\Big(-\frac{(1-\overline{\alpha}_{t})\|u_{t}-\sqrt{\overline{\alpha}_{t}}x_{0}\|^{2}}{2(\alpha_{t}-\overline{\alpha}_{t})^{2}}\Big)\mathrm{d}x_{0}=\int_{x_{0}}p_{X_{0}}(x_{0})\exp\Big(-\frac{\|x_{t}-\sqrt{\overline{\alpha}_{t}}x_{0}\|_{2}^{2}}{2(1-\overline{\alpha}_{t})}-\zeta_{t}(x_{t},x_{0})\Big)\mathrm{d}x_{0}\\
 & \qquad\cdot\exp\bigg(-\log\det\Big(I+\frac{1-\alpha_{t}}{\alpha_{t}-\overline{\alpha}_{t}}(I-J_{t}(x_{t}))\Big)+O\Big(\Big(\frac{1-\alpha_{t}}{\alpha_{t}-\overline{\alpha}_{t}}\Big)^{2}\big(\vert\mathsf{Tr}(I-J_{t}(x_{t}))\vert+\|I-J_{t}(x_{t})\|_{\mathrm{F}}^{2}\big)\Big)\bigg).
\end{align*}
Recall the definition of $\xi_{t}(x_{t})$ in (\ref{eq:defn-xi})
and (\ref{eq:proof-lemma-2-4}), which allows us to write
\begin{align*}
 & \int_{x_{0}}p_{X_{0}}(x_{0})\exp\Big(-\frac{(1-\overline{\alpha}_{t})\|u_{t}-\sqrt{\overline{\alpha}_{t}}x_{0}\|^{2}}{2(\alpha_{t}-\overline{\alpha}_{t})^{2}}\Big)\mathrm{d}x_{0}=\det\Big(I+\frac{1-\alpha_{t}}{\alpha_{t}-\overline{\alpha}_{t}}(I-J_{t}(x_{t}))\Big)^{-1}p_{X_{t}}(x_{t})\big(2\pi(1-\overline{\alpha}_{t})\big)^{d/2}\\
 & \qquad\qquad\qquad\cdot\exp\bigg(-\xi_{t}(x_{t})+O\Big(\Big(\frac{1-\alpha_{t}}{\alpha_{t}-\overline{\alpha}_{t}}\Big)^{2}\big(\vert\mathsf{Tr}(I-J_{t}(x_{t}))\vert+\|I-J_{t}(x_{t})\|_{\mathrm{F}}^{2}\big)\Big)\bigg).
\end{align*}
Using the fact that
\begin{equation}
\mathsf{det}\Big(I+\frac{1-\alpha_{t}}{\alpha_{t}-\overline{\alpha}_{t}}(I-J_{t}(x_{t}))\Big)=\Big(\frac{1-\overline{\alpha}_{t}}{\alpha_{t}-\overline{\alpha}_{t}}\Big)^{d}\mathsf{det}\Big(I-\frac{1-\alpha_{t}}{1-\overline{\alpha}_{t}}J_{t}(x_{t})\Big),\label{eq:det-relation}
\end{equation}
we arrive at
\begin{align}
 & \mathsf{det}\Big(I-\frac{1-\alpha_{t}}{1-\overline{\alpha}_{t}}J_{t}(x_{t})\Big)^{-1}p_{X_{t}}(x_{t})\nonumber \\
 & \qquad=\big(2\pi(\alpha_{t}-\overline{\alpha}_{t})\big)^{-d/2}\int_{x_{0}}p_{X_{0}}(x_{0})\exp\Big(-\frac{(1-\overline{\alpha}_{t})\|u_{t}-\sqrt{\overline{\alpha}_{t}}x_{0}\|^{2}}{2(\alpha_{t}-\overline{\alpha}_{t})^{2}}\Big)\mathrm{d}x_{0}\nonumber \\
 & \qquad\qquad\cdot\exp\Big(\xi_{t}(x_{t})+O\Big(\Big(\frac{1-\alpha_{t}}{1-\overline{\alpha}_{t}}\Big)^{2}\big(\vert\mathsf{Tr}(I-J_{t}(x_{t}))\vert+\|I-J_{t}(x_{t})\|_{\mathrm{F}}^{2}\big)\Big)\Big),\label{eq:proof-det-1-low-d}
\end{align}
which gives the desired decomposition (\ref{eq:lemma-det-1-low-d}).

In order to establish (\ref{eq:lemma-det-2-low-d}), we need the following
lemma.

\begin{lemma} \label{lemma:small-prob-low-d}Suppose that $\theta\gg C_{\mathsf{cover}}$
and $T\gg c_{1}C_{1}\log T$. Then we have\begin{subequations}\label{eq:proof-lemma-2-6-low-d}
\begin{align}
\int_{x_{0}}\int_{x_{t}\notin\mathcal{E}_{t,1}}p_{X_{0}}(x_{0})\Big(\frac{1-\overline{\alpha}_{t}}{2\pi(\alpha_{t}-\overline{\alpha}_{t})^{2}}\Big)^{d/2}\exp\Big(-\frac{(1-\overline{\alpha}_{t})\|u_{t}-\sqrt{\overline{\alpha}_{t}}x_{0}\|_{2}^{2}}{2(\alpha_{t}-\overline{\alpha}_{t})^{2}}\Big)\mathrm{d}x_{0}\mathrm{d}u_{t} & \leq T^{-4}\label{eq:proof-lemma-2-6A-low-d}
\end{align}
and 
\begin{equation}
\int_{x_{t}\in\mathcal{E}_{t,1}^{\mathrm{c}}}p_{X_{t}}(x_{t})\mathrm{d}x_{t}\leq T^{-4}.\label{eq:proof-lemma-2-6B-low-d}
\end{equation}
\end{subequations}

\end{lemma}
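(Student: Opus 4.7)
The plan is to establish the two bounds in Lemma~\ref{lemma:small-prob-low-d} by leveraging the fact that $\mathcal{E}_{t,1}$ in \eqref{eq:E-t-1-low-d} is designed precisely as a product-type high-probability set for the representation $\sqrt{\overline{\alpha}_t} x_0 + \sqrt{1-\overline{\alpha}_t}\omega$. This allows a clean split into a data-side event (``$X_0$ lies in a heavy cover cell'') and a noise-side event (``the Gaussian part lies in $\mathcal{G}$''), each of which can be bounded by elementary concentration arguments.

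For \eqref{eq:proof-lemma-2-6B-low-d}, I would use the forward representation $X_t = \sqrt{\overline{\alpha}_t}X_0 + \sqrt{1-\overline{\alpha}_t}\,\overline{W}_t$ with $\overline{W}_t\sim\mathcal{N}(0,I_d)$ independent of $X_0$. By construction, $\{X_0\in\cup_{i\in\mathcal{I}}\mathcal{B}_i\}\cap\{\overline{W}_t\in\mathcal{G}\}$ implies $X_t\in\mathcal{E}_{t,1}$, so a union bound reduces everything to bounding $\mathbb{P}(X_0\notin\cup_{i\in\mathcal{I}}\mathcal{B}_i)$ and $\mathbb{P}(\overline{W}_t\notin\mathcal{G})$. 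The first is at most $N_{\varepsilon}\exp(-\theta k\log T)\leq\exp((C_{\mathsf{cover}}-\theta)k\log T)$ by the definition of $\mathcal{I}$ and Definition~\ref{defn:intrinsic}, which is $\ll T^{-4}$ for $\theta\gg C_{\mathsf{cover}}$. For the second, standard Gaussian norm concentration handles $\|\overline{W}_t\|_2\leq 2\sqrt{d}+\sqrt{\theta k\log T}$, and for each pair $(i,j)$ the one-dimensional Gaussian $(x_i^{\star}-x_j^{\star})^{\top}\overline{W}_t/\|x_i^{\star}-x_j^{\star}\|_2$ has tail $\exp(-\theta k\log T/2)$; a union bound over at most $N_{\varepsilon}^2\leq\exp(2C_{\mathsf{cover}}k\log T)$ pairs is then absorbed into the same $\theta\gg C_{\mathsf{cover}}$ condition.

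For \eqref{eq:proof-lemma-2-6A-low-d}, I would recognize the integrand as the joint density of $(X_0,u_t)$ where $u_t\mid X_0\sim\mathcal{N}(\sqrt{\overline{\alpha}_t}X_0,\sigma_t^2 I_d)$ with $\sigma_t^2=(\alpha_t-\overline{\alpha}_t)^2/(1-\overline{\alpha}_t)$, so the integral equals $\mathbb{P}(u_t\notin\mathcal{E}_{t,1})$. Writing $u_t=\sqrt{\overline{\alpha}_t}X_0+\sqrt{1-\overline{\alpha}_t}\tilde{\omega}$ with $\tilde{\omega}=(\sigma_t/\sqrt{1-\overline{\alpha}_t})Z=((\alpha_t-\overline{\alpha}_t)/(1-\overline{\alpha}_t))Z$ for $Z\sim\mathcal{N}(0,I_d)$, the crucial observation is that the scaling factor $(\alpha_t-\overline{\alpha}_t)/(1-\overline{\alpha}_t)=1-(1-\alpha_t)/(1-\overline{\alpha}_t)$ lies in $(0,1]$ whenever $T\gg c_1\log T$ (by Lemma~\ref{lemma:step-size}). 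Therefore $\|\tilde{\omega}\|_2\leq\|Z\|_2$ and $|(x_i^{\star}-x_j^{\star})^{\top}\tilde{\omega}|\leq|(x_i^{\star}-x_j^{\star})^{\top}Z|$, so the same Gaussian concentration inequalities that controlled $\overline{W}_t\in\mathcal{G}$ apply verbatim to $\tilde{\omega}$, and the same bound for $\mathbb{P}(X_0\notin\cup_{i\in\mathcal{I}}\mathcal{B}_i)$ is used unchanged.

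The main technical obstacle is the pairwise inner-product condition in $\mathcal{G}$: since $|\mathcal{I}|$ and the total number of net points $N_{\varepsilon}$ can be as large as $\exp(C_{\mathsf{cover}}k\log T)$, the union bound accrues an $N_{\varepsilon}^2$ factor, forcing $\theta$ to dominate $C_{\mathsf{cover}}$ by a sufficiently large constant. Once this quantitative relationship between $\theta$ and $C_{\mathsf{cover}}$ is fixed (matching the hypothesis $\theta\gg C_{\mathsf{cover}}$ in the lemma), both \eqref{eq:proof-lemma-2-6A-low-d} and \eqref{eq:proof-lemma-2-6B-low-d} follow without further difficulty; the variance shrinkage in part~(a) in fact makes that bound strictly easier than part~(b).
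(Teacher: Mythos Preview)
Your argument for \eqref{eq:proof-lemma-2-6B-low-d} is correct and is exactly what the paper does: split via $X_t=\sqrt{\overline{\alpha}_t}X_0+\sqrt{1-\overline{\alpha}_t}\,\overline{W}_t$, bound $\mathbb{P}(X_0\notin\cup_{i\in\mathcal{I}}\mathcal{B}_i)\le N_\varepsilon\exp(-\theta k\log T)$, and bound $\mathbb{P}(\overline{W}_t\notin\mathcal{G})$ by Gaussian norm/one-dimensional tail bounds together with a union bound over $N_\varepsilon^2$ pairs.

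Your argument for \eqref{eq:proof-lemma-2-6A-low-d} has a genuine gap. The integration region in \eqref{eq:proof-lemma-2-6A-low-d} is \emph{not} $\{u_t\notin\mathcal{E}_{t,1}\}$; it is the image of $\{x_t\notin\mathcal{E}_{t,1}\}$ under the nonlinear map $u_t=x_t+(1-\alpha_t)s_t^\star(x_t)$. Your identification of the integral with $\mathbb{P}(u_t\notin\mathcal{E}_{t,1})$ for $u_t=\sqrt{\overline{\alpha}_t}X_0+\sigma_t Z$ therefore conflates two different sets. Because $(1-\alpha_t)s_t^\star(x_t)$ need not be small (indeed $\|s_t^\star(x_t)\|_2$ is typically of order $\sqrt{d/(1-\overline{\alpha}_t)}$ and can be much larger for atypical $x_t$), there is no simple inclusion between $\{x_t\notin\mathcal{E}_{t,1}\}$ and $\{u_t\notin\mathcal{E}_{t,1}\}$, and your variance-shrinkage remark, while correct for the Gaussian part of $u_t\mid X_0$, does not address this mismatch.

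The paper resolves this by changing variables back to $x_t$ (picking up the Jacobian $\det(I-\frac{1-\alpha_t}{1-\overline{\alpha}_t}J_t(x_t))$), then introducing a dyadic family of enlarged typical sets $\mathcal{L}_{t,j}$ (replacing $\theta$ by $2^{j-1}\theta$) so that $\mathcal{E}_{t,1}^{\mathrm c}=\cup_{j\ge 2}\mathcal{E}_{t,j}$ with $\mathcal{E}_{t,j}=\mathcal{L}_{t,j+1}\setminus\mathcal{L}_{t,j}$. On each shell $\mathcal{E}_{t,j}$ it controls the exponent via the $\zeta_t$-expansion and Lemma~\ref{lem:cond-low-dim}, obtaining a bound of the form $I_j\le\exp(C\cdot 2^{j}\theta k\log^2 T/T)\cdot\mathbb{P}(X_t\in\mathcal{E}_{t,j})$, and then invokes \eqref{eq:proof-lemma-2-6B-low-d} (with $\theta\to 2^{j-1}\theta$) to sum over $j$. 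That shellwise accounting is precisely what is needed to handle the $x_t$-constraint in the $u_t$-integral, and it is missing from your proposal.
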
 \begin{proof}See Appendix~\ref{subsec:proof-lemma-small-prob-low-d}.\end{proof}

Then we have 
\begin{align*}
1 & \overset{\text{(i)}}{\geq}\int_{x_{t}\in\mathcal{E}_{t,1}}\int_{x_{0}}\Big(\frac{1-\overline{\alpha}_{t}}{2\pi(\alpha_{t}-\overline{\alpha}_{t})^{2}}\Big)^{d/2}p_{X_{0}}(x_{0})\exp\Big(-\frac{(1-\overline{\alpha}_{t})\|u_{t}-\sqrt{\overline{\alpha}_{t}}x_{0}\|_{2}^{2}}{2(\alpha_{t}-\overline{\alpha}_{t})^{2}}\Big)\mathrm{d}x_{0}\mathrm{d}u_{t}\\
 & \overset{\text{(ii)}}{=}\big(\frac{1-\overline{\alpha}_{t}}{\alpha_{t}-\overline{\alpha}_{t}}\big)^{d/2}\int_{x_{t}\in\mathcal{E}_{t,1}}\det\Big(I-\frac{1-\alpha_{t}}{1-\overline{\alpha}_{t}}J_{t}(x_{t})\Big)^{-1}p_{X_{t}}(x_{t})\\
 & \qquad\qquad\qquad\qquad\qquad\qquad\cdot\exp\bigg(-\xi_{t}(x_{t})+O\Big(\Big(\frac{1-\alpha_{t}}{1-\overline{\alpha}_{t}}\Big)^{2}\big(\vert\mathsf{Tr}(I-J_{t}(x_{t}))\vert+\|I-J_{t}(x_{t})\|_{\mathrm{F}}^{2}\big)\Big)\bigg)\mathrm{d}u_{t}\\
 & \overset{\text{(iii)}}{=}\big(\frac{1-\overline{\alpha}_{t}}{\alpha_{t}-\overline{\alpha}_{t}}\big)^{d/2}\int_{x_{t}\in\mathcal{E}_{t,1}}p_{X_{t}}(x_{t})\exp\bigg(-\xi_{t}(x_{t})+O\Big(\Big(\frac{1-\alpha_{t}}{1-\overline{\alpha}_{t}}\Big)^{2}\big(\vert\mathsf{Tr}(I-J_{t}(x_{t}))\vert+\|I-J_{t}(x_{t})\|_{\mathrm{F}}^{2}\big)\Big)\bigg)\mathrm{d}x_{t}\\
 & \overset{\text{(iv)}}{\geq}\int_{x_{t}\in\mathcal{E}_{t,1}}\bigg(1-\xi_{t}(x_{t})+O\Big(\Big(\frac{1-\alpha_{t}}{1-\overline{\alpha}_{t}}\Big)^{2}\big(\vert\mathsf{Tr}(I-J_{t}(x_{t}))\vert+\|I-J_{t}(x_{t})\|_{\mathrm{F}}^{2}\big)\Big)\bigg)p_{X_{t}}(x_{t})\mathrm{d}x_{t}.
\end{align*}
Here step (i) follows from (\ref{eq:proof-lemma-2-6A-low-d}); step
(ii) utilizes (\ref{eq:proof-det-1-low-d}); step (iii) holds since
$u_{t}=x_{t}+(1-\alpha_{t})s_{t}^{\star}(x_{t})$, namely 
\[
\mathrm{d}u_{t}=\mathsf{det}\Big(I-\frac{1-\alpha_{t}}{1-\overline{\alpha}_{t}}J_{t}(x_{t})\Big)\mathrm{d}x_{t};
\]
while step (iv) follows from the facts that $1>\alpha_{t}$ and $e^{x}\geq1+x$
for any $x\in\mathbb{R}$. Recall that $\xi_{t}(x_{t})\leq0$ for
any $x_{t}\in\mathcal{E}_{t,1}$. By rearranging terms, we have
\begin{align*}
 & \int_{x_{t}\in\mathcal{E}_{t,1}}|\xi_{t}(x_{t})|p_{X_{t}}(x_{t})\mathrm{d}x_{t}\\
 & \qquad\leq\int_{x_{t}\in\mathcal{E}_{t,1}^{\mathrm{c}}}p_{X_{t}}(x_{t})\mathrm{d}x_{t}+C_{3}\Big(\frac{1-\alpha_{t}}{1-\overline{\alpha}_{t}}\Big)^{2}\int_{x_{t}\in\mathcal{E}_{t,1}}\big(\vert\mathsf{Tr}(I-J_{t}(x_{t}))\vert+\|I-J_{t}(x_{t})\|_{\mathrm{F}}^{2}\big)p_{X_{t}}(x_{t})\mathrm{d}x_{t}\\
 & \qquad\leq C_{3}\Big(\frac{1-\alpha_{t}}{1-\overline{\alpha}_{t}}\Big)^{2}\int_{x_{t}\in\mathcal{E}_{t,1}}\big(\vert\mathsf{Tr}(I-J_{t}(x_{t}))\vert+\|I-J_{t}(x_{t})\|_{\mathrm{F}}^{2}\big)p_{X_{t}}(x_{t})\mathrm{d}x_{t}+T^{-4}
\end{align*}
for some universal constant $C_{3}>0$, where the last step follows
from (\ref{eq:proof-lemma-2-6B-low-d}).

\subsection{Proof of Lemma~\ref{lemma:u-x-low-d}\protect\label{sec:proof-lemma-u-x-low-d}}

To begin with, we record the following two results from \cite{li2024adapting}.
For any $x_{t}\in\mathcal{E}_{t,1}$, we have \begin{subequations}

\begin{equation}
\int_{x_{0}}p_{X_{0}|X_{t}}(x_{0}\mymid x_{t})x_{0}\mathrm{d}x_{0}=\overline{x}_{0}+\delta\quad\text{where}\quad\overline{x}_{0}\in\bigcup_{i\in\mathcal{I}(x_{t};C_{1}\theta)}\mathcal{B}_{i}\label{eq:widehat-x0-decom}
\end{equation}
and 
\begin{equation}
\Vert\delta\Vert_{2}\leq\sqrt{\frac{1-\overline{\alpha}_{t}}{\overline{\alpha}_{t}}}\exp\left(-\frac{1}{32}C_{1}\theta k\log T\right).\label{eq:delta-bound}
\end{equation}
\end{subequations} In addition, for any $x,x'\in\mathcal{X}_{t}(x_{t})$,
we have
\begin{equation}
\overline{\alpha}_{t}\Vert x-x'\Vert_{2}^{2}\leq9C_{1}\theta k\left(1-\overline{\alpha}_{t}\right)\log T\label{eq:Xt-dist}
\end{equation}
and 
\begin{equation}
\left|\omega^{\top}\left(x-x'\right)\right|\leq\sqrt{\theta k\log T}\Vert x-x'\Vert_{2}+\big(4\sqrt{d}+4\sqrt{\theta k\log T}\big)\varepsilon\label{eq:omega-inner}
\end{equation}
See \cite[Equations (A.4), (A.5) and (A.27)]{li2024adapting} for
the proof.

Recall the definition of $\zeta_{t}(x_{t},x_{0})$ in (\ref{eq:zeta-defn-low-d}),
which can be written as
\[
\zeta_{t}(x_{t},x_{0})=\frac{(1-\alpha_{t})(1+\alpha_{t}-2\overline{\alpha}_{t})}{2(\alpha_{t}-\overline{\alpha}_{t})^{2}(1-\overline{\alpha}_{t})}\theta_{1}(x_{t},x_{0})+\frac{1-\alpha_{t}}{(\alpha_{t}-\overline{\alpha}_{t})^{2}}\theta_{2}(x_{t},x_{0}),
\]
where
\begin{align*}
\theta_{1}(x_{t},x_{0}) & =\|x_{t}-\sqrt{\overline{\alpha}_{t}}x_{0}\|_{2}^{2}-\int_{x_{0}}p_{X_{0}|X_{t}}(x_{0}\mymid x_{t})\|x_{t}-\sqrt{\overline{\alpha}_{t}}x_{0}\|_{2}^{2}\mathrm{d}x_{0},\\
\theta_{2}(x_{t},x_{0}) & =\sqrt{\overline{\alpha}_{t}}\big[\int_{x_{0}}p_{X_{0}|X_{t}}(x_{0}\mymid x_{t})\big(x_{t}-\sqrt{\overline{\alpha}_{t}}x_{0}\big)\mathrm{d}x_{0}\big]^{\top}\big(x_{0}-\int_{x_{0}}p_{X_{0}|X_{t}}(x_{0}\mymid x_{t})x_{0}\mathrm{d}x_{0}\big).
\end{align*}
For any $x_{t}\in\mathcal{E}_{t,1}$, recall the decomposition $x_{t}=\sqrt{\overline{\alpha}_{t}}x_{0}(x_{t})+\sqrt{1-\overline{\alpha}_{t}}\omega$
in (\ref{eq:xt-decom}), we have
\begin{align*}
\theta_{1}(x_{t},x_{0}) & =\|x_{t}-\sqrt{\overline{\alpha}_{t}}x_{0}(x_{t})+\sqrt{\overline{\alpha}_{t}}x_{0}(x_{t})-\sqrt{\overline{\alpha}_{t}}x_{0}\|_{2}^{2}\\
 & \qquad\qquad-\int_{x_{0}}p_{X_{0}|X_{t}}(x_{0}\mymid x_{t})\|x_{t}-\sqrt{\overline{\alpha}_{t}}x_{0}(x_{t})+\sqrt{\overline{\alpha}_{t}}x_{0}(x_{t})-\sqrt{\overline{\alpha}_{t}}x_{0}\|_{2}^{2}\mathrm{d}x_{0}\\
 & =\overline{\alpha}_{t}\Big(\|x_{0}-x_{0}(x_{t})\|_{2}^{2}-\int_{x_{0}}p_{X_{0}|X_{t}}(x_{0}\mymid x_{t})\|x_{0}-x_{0}(x_{t})\|_{2}^{2}\mathrm{d}x_{0}\Big)\\
 & \qquad\qquad-2\sqrt{\overline{\alpha}_{t}(1-\overline{\alpha}_{t})}\Big[\omega^{\top}\big(x_{0}-x_{0}(x_{t})\big)-\int_{x_{0}}p_{X_{0}|X_{t}}(x_{0}\mymid x_{t})\omega^{\top}\big(x_{0}-x_{0}(x_{t})\big)\mathrm{d}x_{0}\Big].
\end{align*}
In view of (\ref{eq:omega-inner}), we have
\[
\big|\omega^{\top}\big(x_{0}-x_{0}(x_{t})\big)\big|\leq\sqrt{\theta k\log T}\Vert x_{0}-x_{0}(x_{t})\Vert_{2}+4\varepsilon(\sqrt{d}+\sqrt{\theta k\log T}).
\]
We also learn from (\ref{eq:xi-bound}) and (\ref{eq:zeta-bound})
in the proof of Lemma~\ref{lem:Jt} that
\[
\frac{\overline{\alpha}_{t}}{1-\overline{\alpha}_{t}}\int_{x_{0}}p_{X_{0}|X_{t}}(x_{0}\mymid x_{t})\Vert x_{0}(x_{t})-x_{0}\Vert_{2}^{2}\mathrm{d}x_{0}\leq3C_{1}\theta k\log T
\]
and
\[
\sqrt{\frac{\overline{\alpha}_{t}}{1-\overline{\alpha}_{t}}}\int p_{X_{0}|X_{t}}(x_{0}\mymid x_{t})\big|\omega^{\top}\big(x_{0}(x_{t})-x_{0}\big)\big|\mathrm{d}x_{0}\leq2\sqrt{C_{1}}\theta k\log T.
\]
Taking the above bounds collectively yields 
\begin{align*}
-\theta_{1}(x_{t},x_{0}) & \leq7C_{1}(1-\overline{\alpha}_{t})\theta k\log T+2\sqrt{\overline{\alpha}_{t}(1-\overline{\alpha}_{t})}\sqrt{\theta k\log T}\Vert x_{0}-x_{0}(x_{t})\Vert_{2}
\end{align*}
provided that $\varepsilon>0$ is sufficiently small (see (\ref{eq:eps-condition}))
and $C_{1}>0$ is sufficiently large. Regarding $\theta_{2}(x_{t},x_{0})$,
we first use the decomposition (\ref{eq:widehat-x0-decom}) to achieve
\begin{align*}
\theta_{2}(x_{t},x_{0}) & =\sqrt{\overline{\alpha}_{t}}\big(x_{t}-\sqrt{\overline{\alpha}_{t}}\overline{x}_{0}-\sqrt{\overline{\alpha}_{t}}\delta\big)^{\top}\big(x_{0}-\overline{x}_{0}-\delta\big)\\
 & =\sqrt{\overline{\alpha}_{t}}\big(\sqrt{\overline{\alpha}_{t}}x_{0}(x_{t})+\sqrt{1-\overline{\alpha}_{t}}\omega-\sqrt{\overline{\alpha}_{t}}\overline{x}_{0}-\sqrt{\overline{\alpha}_{t}}\delta\big)^{\top}\big(x_{0}-x_{0}(x_{t})+x_{0}(x_{t})-\overline{x}_{0}-\delta\big)\\
 & =\overline{\alpha}_{t}\big(x_{0}(x_{t})-\overline{x}_{0}-\delta\big)^{\top}\big(x_{0}-x_{0}(x_{t})\big)+\sqrt{\overline{\alpha}_{t}(1-\overline{\alpha}_{t})}\omega^{\top}\big(x_{0}-\overline{x}_{0}-\delta\big)+\overline{\alpha}_{t}\Vert x_{0}(x_{t})-\overline{x}_{0}-\delta\Vert_{2}^{2}.
\end{align*}
Hence we have
\begin{align*}
-\theta_{2}(x_{t},x_{0}) & \overset{\text{(i)}}{\leq}\overline{\alpha}_{t}\Vert x_{0}(x_{t})-\overline{x}_{0}-\delta\Vert_{2}\Vert x_{0}-x_{0}(x_{t})\Vert_{2}+\sqrt{\overline{\alpha}_{t}(1-\overline{\alpha}_{t})}\big(\big|\omega^{\top}\big(x_{0}-\overline{x}_{0}\big)\big|+\Vert\omega\Vert_{2}\Vert\delta\Vert_{2}\big)\\
 & \overset{\text{(ii)}}{\leq}\overline{\alpha}_{t}(\Vert x_{0}(x_{t})-\overline{x}_{0}\Vert_{2}+\Vert\delta\Vert_{2})\Vert x_{0}-x_{0}(x_{t})\Vert_{2}\\
 & \qquad+\sqrt{\overline{\alpha}_{t}(1-\overline{\alpha}_{t})}\big(\sqrt{\theta k\log T}(\Vert x_{0}-x_{0}(x_{t})\Vert_{2}+\Vert x_{0}(x_{t})-\overline{x}_{0}\Vert_{2})+\Vert\omega\Vert_{2}\Vert\delta\Vert_{2}\big)\\
 & \overset{\text{(iii)}}{\leq}4\sqrt{C_{1}\overline{\alpha}_{t}\left(1-\overline{\alpha}_{t}\right)\theta k\log T}\Vert x_{0}-x_{0}(x_{t})\Vert_{2}+4\sqrt{C_{1}}(1-\overline{\alpha}_{t})\theta k\log T.
\end{align*}
Here step (i) utilizes the Cauchy-Schwarz inequality; step (ii) follows
from (\ref{eq:omega-inner}); step (iii) uses (\ref{eq:Xt-dist})
and (\ref{eq:delta-bound}), and holds provided that $C_{1}>0$ is
sufficiently large. Hence we have
\begin{align}
-\zeta_{t}(x_{t},x_{0}) & =-\frac{(1-\alpha_{t})(1+\alpha_{t}-2\overline{\alpha}_{t})}{2(\alpha_{t}-\overline{\alpha}_{t})^{2}(1-\overline{\alpha}_{t})}\theta_{1}(x_{t},x_{0})-\frac{1-\alpha_{t}}{(\alpha_{t}-\overline{\alpha}_{t})^{2}}\theta_{2}(x_{t},x_{0})\nonumber \\
 & \overset{\text{(a)}}{\leq}\frac{2(1-\alpha_{t})}{(1-\overline{\alpha}_{t})^{2}}\big(8C_{1}(1-\overline{\alpha}_{t})\theta k\log T+5\sqrt{C_{1}\overline{\alpha}_{t}\left(1-\overline{\alpha}_{t}\right)\theta k\log T}\Vert x_{0}-x_{0}(x_{t})\Vert_{2}\big)\nonumber \\
 & \overset{\text{(a)}}{\leq}66C_{1}\frac{1-\alpha_{t}}{1-\overline{\alpha}_{t}}\theta k\log T+\frac{1-\alpha_{t}}{2(1-\overline{\alpha}_{t})^{2}}\overline{\alpha}_{t}\Vert x_{0}-x_{0}(x_{t})\Vert_{2}^{2}.\label{eq:zeta-bound-low-d-1}
\end{align}
provided that $C_{1}>0$ is sufficiently large. Here step (a) follows
from consequences of Lemma~\ref{lemma:step-size}
\begin{align*}
\frac{(1-\alpha_{t})(1+\alpha_{t}-2\overline{\alpha}_{t})}{2(\alpha_{t}-\overline{\alpha}_{t})^{2}(1-\overline{\alpha}_{t})} & =\frac{1-\alpha_{t}}{(1-\overline{\alpha}_{t})^{2}}\Big(1+\frac{1-\alpha_{t}}{\alpha_{t}-\overline{\alpha}_{t}}\Big)\Big(1+\frac{1-\alpha_{t}}{2(\alpha_{t}-\overline{\alpha}_{t})}\Big)\leq\frac{2(1-\alpha_{t})}{(1-\overline{\alpha}_{t})^{2}}
\end{align*}
and
\[
\frac{1-\alpha_{t}}{(\alpha_{t}-\overline{\alpha}_{t})^{2}}=\frac{1-\alpha_{t}}{(1-\overline{\alpha}_{t})^{2}}\Big(1+\frac{1-\alpha_{t}}{\alpha_{t}-\overline{\alpha}_{t}}\Big)^{2}\leq\frac{2(1-\alpha_{t})}{(1-\overline{\alpha}_{t})^{2}}
\]
as long as $T$ is sufficiently large. Finally, notice that
\begin{align*}
\overline{\alpha}_{t}\Vert x_{0}-x_{0}(x_{t})\Vert_{2}^{2}-\Vert x_{t}-\sqrt{\overline{\alpha}_{t}}x_{0}\Vert_{2}^{2} & =\overline{\alpha}_{t}\Vert x_{0}-x_{0}(x_{t})\Vert_{2}^{2}-\Vert\sqrt{\overline{\alpha}_{t}}x_{0}(x_{t})+\sqrt{1-\overline{\alpha}_{t}}\omega-\sqrt{\overline{\alpha}_{t}}x_{0}\Vert_{2}^{2}\\
 & \leq-2\sqrt{\overline{\alpha}_{t}(1-\overline{\alpha}_{t})}\omega^{\top}(x_{0}(x_{t})-x_{0})\\
 & \overset{\text{(i)}}{\leq}2\sqrt{\overline{\alpha}_{t}(1-\overline{\alpha}_{t})\theta k\log T}\Vert x_{0}-x_{0}(x_{t})\Vert_{2}+1-\overline{\alpha}_{t}\\
 & \overset{\text{(ii)}}{\leq}\frac{1}{2}\overline{\alpha}_{t}\Vert x_{0}-x_{0}(x_{t})\Vert_{2}^{2}+3(1-\overline{\alpha}_{t})\theta k\log T
\end{align*}
where the last step follows from (\ref{eq:omega-inner}) and (\ref{eq:eps-condition}).
By rearranging terms we have
\begin{equation}
\overline{\alpha}_{t}\Vert x_{0}-x_{0}(x_{t})\Vert_{2}^{2}\leq2\Vert x_{t}-\sqrt{\overline{\alpha}_{t}}x_{0}\Vert_{2}^{2}+6(1-\overline{\alpha}_{t})\theta k\log T.\label{eq:zeta-bound-low-d-2}
\end{equation}
Taking (\ref{eq:zeta-bound-low-d-1}) and (\ref{eq:zeta-bound-low-d-2})
collectively yields
\begin{equation}
-\zeta_{t}(x_{t},x_{0})\leq69C_{1}\frac{1-\alpha_{t}}{1-\overline{\alpha}_{t}}\theta k\log T+\frac{1-\alpha_{t}}{(1-\overline{\alpha}_{t})^{2}}\Vert x_{t}-\sqrt{\overline{\alpha}_{t}}x_{0}\Vert_{2}^{2}\label{eq:zeta-bound-low-d}
\end{equation}
provided that $C_{2}\geq1$. Armed with this relation, we can follow
the same analysis in the proof of Lemma~\ref{lemma:u-x} to establish
the desired result under the condition $T\gg\theta k\log^{2}T$.

\subsection{Proof of Lemma~\ref{lemma:small-prob-low-d} \protect\label{subsec:proof-lemma-small-prob-low-d}}

\paragraph{Proof of \eqref{eq:proof-lemma-2-6B-low-d}.}

We have
\[
\int_{x_{t}\in\mathcal{E}_{t,1}^{\mathrm{c}}}p_{X_{t}}(x_{t})\mathrm{d}x_{t}=\mathbb{P}\left(X_{t}\notin\mathcal{E}_{t,1}\right)\leq\mathbb{P}\left(X_{0}\notin\cup_{i\in\mathcal{I}}\mathcal{B}_{i}\right)+\mathbb{P}\left(\overline{W}_{t}\notin\mathcal{G}\right),
\]
where we use the decomposition $X_{t}=\sqrt{\overline{\alpha}_{t}}X_{0}+\sqrt{1-\overline{\alpha}_{t}}\overline{W}_{t}$
for $\overline{W}_{t}\sim\mathcal{N}(0,I_{d})$. It is straightforward
to check that
\[
\mathbb{P}\left(X_{0}\notin\cup_{i\in\mathcal{I}}\mathcal{B}_{i}\right)\leq N_{\varepsilon}\exp\left(-\theta k\log T\right)\leq\exp\left(C_{\mathsf{cover}}k\log T-\theta k\log T\right)\leq\frac{1}{2}\exp\left(-\frac{\theta}{4}k\log T\right)
\]
provided that $\theta\gg C_{\mathsf{cover}}$. In addition, since
$\overline{W}_{t}\sim\mathcal{N}(0,I_{d})$, by the definition of
$\mathcal{G}$ we know that 
\begin{align*}
\mathbb{P}\left(\overline{W}_{t}\notin\mathcal{G}\right) & \leq\mathbb{P}\left(\Vert\overline{W}_{t}\Vert_{2}>\sqrt{d}+\sqrt{C_{1}k\log T}\right)+\sum_{i=1}^{N_{\varepsilon}}\sum_{j=1}^{N_{\varepsilon}}\mathbb{P}\left(\vert(x_{i}^{\star}-x_{j}^{\star})^{\top}\overline{W}_{t}\vert>\sqrt{\theta k\log T}\Vert x_{i}^{\star}-x_{j}^{\star}\Vert_{2}\right)\\
 & \overset{\text{(i)}}{\leq}\left(N_{\varepsilon}^{2}+1\right)\exp\left(-\frac{\theta}{2}k\log T\right)\leq\left(\exp\left(2C_{\mathsf{cover}}k\log T\right)+1\right)\exp\left(-\frac{\theta}{2}k\log T\right)\\
 & \overset{\text{(ii)}}{\leq}\frac{1}{2}\exp\left(-\frac{\theta}{4}k\log T\right).
\end{align*}
Here step (i) follows from concentration bounds for Gaussian and chi-square
variables (see Lemma~\ref{lemma:concentration}); while step (ii)
holds as long as $C_{1}\gg C_{\mathsf{cover}}$. Taking the above
bounds collectively yields
\begin{equation}
\int_{x_{t}\in\mathcal{E}_{t,1}^{\mathrm{c}}}p_{X_{t}}(x_{t})\mathrm{d}x_{t}\leq\exp\left(-\frac{\theta}{4}k\log T\right)\leq T^{-4}\label{eq:pXt-outside-typical}
\end{equation}
when $\theta>0$ is sufficiently large.

\paragraph{Proof of \eqref{eq:proof-lemma-2-6A-low-d}.}

For any $j\geq1$, define
\begin{align*}
\mathcal{I}_{j} & \coloneqq\left\{ 1\leq i\leq N_{\varepsilon}:\mathbb{P}(X_{0}\in\mathcal{B}_{i})\geq\exp(-2^{j-1}\theta k\log T)\right\} ,\\
\mathcal{G}_{j} & \coloneqq\big\{\omega\in\mathbb{R}^{d}:\Vert\omega\Vert_{2}\leq2\sqrt{d}+\sqrt{2^{j-1}\theta k\log T},\quad\text{and}\\
 & \qquad\qquad\qquad\vert(x_{i}^{\star}-x_{j}^{\star})^{\top}\omega\vert\leq\sqrt{2^{j-1}\theta k\log T}\Vert x_{i}^{\star}-x_{j}^{\star}\Vert_{2}\quad\text{for all}\quad1\leq i,j\leq N_{\varepsilon}\big\},
\end{align*}
and let
\[
\mathcal{L}_{t,j}:=\big\{\sqrt{\overline{\alpha}_{t}}x_{0}+\sqrt{1-\overline{\alpha}_{t}}\omega:x_{0}\in\cup_{i\in\mathcal{I}_{j}}\mathcal{B}_{i},\omega\in\mathcal{G}_{j}\big\}.
\]
We know that $\mathcal{L}_{t,1}\subseteq\mathcal{L}_{t,2}\subseteq\cdots$
and $\cup_{j=1}^{\infty}\mathcal{L}_{t,j}=\mathbb{R}^{d}$. Notice
that $\mathcal{E}_{t,1}=\mathcal{L}_{t,1}$. By defining $\mathcal{E}_{t,j}=\mathcal{L}_{t,j+1}\setminus\mathcal{L}_{t,j}$
for each $j\geq2$, we know that
\[
\bigcup_{j=1}^{\infty}\mathcal{E}_{t,j}=\mathbb{R}^{d}\qquad\text{where }\mathcal{E}_{t,1},\mathcal{E}_{t,2},\ldots\text{ are disjoint}.
\]
For any $x_{t}\in\mathcal{E}_{t,j}$, there exists an index $i(x_{t})\in\mathcal{I}_{j}$,
two points $x_{0}(x_{t})\in\mathcal{B}_{i(x_{t})}$ and $\omega\in\mathcal{G}_{j}$
such that $x_{t}=\sqrt{\overline{\alpha}_{t}}x_{0}(x_{t})+\sqrt{1-\overline{\alpha}_{t}}\omega$.
We learn from (\ref{eq:zeta-bound-low-d-1}) that, 
\begin{equation}
-\zeta_{t}(x_{t},x_{0})\leq66C_{1}\frac{1-\alpha_{t}}{1-\overline{\alpha}_{t}}2^{j}\theta k\log T+\frac{1-\alpha_{t}}{(1-\overline{\alpha}_{t})^{2}}\overline{\alpha}_{t}\Vert x_{0}-x_{0}(x_{t})\Vert_{2}^{2}.\label{eq:zeta-bound-j}
\end{equation}
This implies that for any $x_{t}\in\mathcal{E}_{t,j}$, we have
\begin{align}
 & -\frac{(1-\overline{\alpha}_{t})\|u_{t}-\sqrt{\overline{\alpha}_{t}}x_{0}\|^{2}}{2(\alpha_{t}-\overline{\alpha}_{t})^{2}}\label{eq:ut-decom-j}\\
 & \quad\overset{\text{(i)}}{=}-\frac{\|x_{t}-\sqrt{\overline{\alpha}_{t}}x_{0}\|_{2}^{2}}{2(1-\overline{\alpha}_{t})}-\frac{1-\alpha_{t}}{\alpha_{t}-\overline{\alpha}_{t}}\mathsf{Tr}\left(I-J_{t}(x_{t})\right)-\zeta_{t}(x_{t},x_{0})+O\Big(\Big(\frac{1-\alpha_{t}}{\alpha_{t}-\overline{\alpha}_{t}}\Big)^{2}\vert\mathsf{Tr}(I-J_{t}(x_{t}))\vert\Big).\nonumber \\
 & \quad\overset{\text{(ii)}}{\leq}-\frac{\|x_{t}-\sqrt{\overline{\alpha}_{t}}x_{0}\|_{2}^{2}}{2(1-\overline{\alpha}_{t})}+\frac{(1-\alpha_{t})\overline{\alpha}_{t}}{(1-\overline{\alpha}_{t})^{2}}\Vert x_{0}-x_{0}(x_{t})\Vert_{2}^{2}-\frac{1-\alpha_{t}}{\alpha_{t}-\overline{\alpha}_{t}}\mathsf{Tr}\left(I-J_{t}(x_{t})\right)+66C_{1}\frac{1-\alpha_{t}}{1-\overline{\alpha}_{t}}2^{j}\theta k\log T\nonumber \\
 & \quad\overset{\text{(iii)}}{\leq}-\frac{\|x_{t}-\sqrt{\overline{\alpha}_{t}}x_{0}\|_{2}^{2}}{2(1-\overline{\alpha}_{t})}+\frac{(1-\alpha_{t})\overline{\alpha}_{t}}{(1-\overline{\alpha}_{t})^{2}}\Vert x_{0}-x_{0}(x_{t})\Vert_{2}^{2}-\log\det\Big(I+\frac{1-\alpha_{t}}{\alpha_{t}-\overline{\alpha}_{t}}(I-J_{t}(x_{t}))\Big)+530c_{1}C_{1}\frac{\log^{2}T}{T}2^{j}\theta k\nonumber 
\end{align}
Here step (i) follows from (\ref{eq:zeta-decom-low-d}); step (ii)
follows from (\ref{eq:zeta-bound-j}) and Lemma~\ref{lem:Jt}, and
holds provided that $T$ is sufficiently large; step (iii) uses the
relation $\log(1+x)\leq x$ for any $x\geq0$ and $I-J_{t}(x_{t})\succeq0$.
Therefore for any $j\geq2$, we have
\begin{align}
I_{j} & \coloneqq\int_{x_{0}}\int_{x_{t}\in\mathcal{E}_{t,j}}p_{X_{0}}(x_{0})\Big(\frac{1-\overline{\alpha}_{t}}{2\pi(\alpha_{t}-\overline{\alpha}_{t})^{2}}\Big)^{d/2}\exp\Big(-\frac{(1-\overline{\alpha}_{t})\|u_{t}-\sqrt{\overline{\alpha}_{t}}x_{0}\|_{2}^{2}}{2(\alpha_{t}-\overline{\alpha}_{t})^{2}}\Big)\mathrm{d}x_{0}\mathrm{d}u_{t}\nonumber \\
 & \overset{\text{(a)}}{=}\int_{x_{0}}\int_{x_{t}\in\mathcal{E}_{t,j}}p_{X_{0}}(x_{0})\Big(\frac{1-\overline{\alpha}_{t}}{2\pi(\alpha_{t}-\overline{\alpha}_{t})^{2}}\Big)^{d/2}\exp\Big(-\frac{(1-\overline{\alpha}_{t})\|u_{t}-\sqrt{\overline{\alpha}_{t}}x_{0}\|_{2}^{2}}{2(\alpha_{t}-\overline{\alpha}_{t})^{2}}\Big)\det\Big(I-\frac{1-\alpha_{t}}{1-\overline{\alpha}_{t}}J_{t}(x_{t})\Big)\mathrm{d}x_{0}\mathrm{d}x_{t}\nonumber \\
 & \overset{\text{(b)}}{\leq}\int_{x_{t}\in\mathcal{E}_{t,j}}\int_{x_{0}}p_{X_{0}}(x_{0})\big(2\pi(1-\overline{\alpha}_{t})\big)^{-d/2}\exp\Big(-\frac{\|x_{t}-\sqrt{\overline{\alpha}_{t}}x_{0}\|_{2}^{2}}{2(1-\overline{\alpha}_{t})}+\frac{(1-\alpha_{t})\overline{\alpha}_{t}}{(1-\overline{\alpha}_{t})^{2}}\Vert x_{0}-x_{0}(x_{t})\Vert_{2}^{2}\Big)\mathrm{d}x_{0}\mathrm{d}x_{t}\nonumber \\
 & \qquad\cdot\exp\bigg(530c_{1}C_{1}\frac{\log^{2}T}{T}2^{j}\theta k\bigg).\label{eq:int-Ij-1}
\end{align}
Here step (a) follows from the relation $u_{t}=x_{t}+(1-\alpha_{t})s_{t}^{\star}(x_{t})$;
step (b) utilizes (\ref{eq:ut-decom-j}) and (\ref{eq:det-relation}).
Recall the defintion (\ref{eq:I-defn}) and let
\[
\mathcal{X}_{j}(x_{t})=\bigcup_{i\in\mathcal{I}(x_{t};C_{1}2^{j}\theta)}\mathcal{B}_{i}\qquad\text{and}\qquad\mathcal{Y}_{j}(x_{t})=\bigcup_{i\notin\mathcal{I}(x_{t};C_{1}2^{j}\theta)}\mathcal{B}_{i}.
\]
Then we have
\begin{align}
 & \int_{x_{0}\in\mathcal{X}_{j}(x_{t})}p_{X_{0}}(x_{0})\big(2\pi(1-\overline{\alpha}_{t})\big)^{-d/2}\exp\Big(-\frac{\|x_{t}-\sqrt{\overline{\alpha}_{t}}x_{0}\|_{2}^{2}}{2(1-\overline{\alpha}_{t})}+\frac{(1-\alpha_{t})\overline{\alpha}_{t}}{(1-\overline{\alpha}_{t})^{2}}\Vert x_{0}-x_{0}(x_{t})\Vert_{2}^{2}\Big)\mathrm{d}x_{0}\nonumber \\
 & \qquad\overset{\text{(i)}}{=}p_{X_{t}}(x_{t})\int_{x_{0}\in\mathcal{X}_{j}(x_{t})}p_{X_{0}|X_{t}}(x_{0}\mymid x_{t})\exp\Big(\frac{(1-\alpha_{t})\overline{\alpha}_{t}}{(1-\overline{\alpha}_{t})^{2}}\Vert x_{0}-x_{0}(x_{t})\Vert_{2}^{2}\Big)\mathrm{d}x_{0}\nonumber \\
 & \qquad\overset{\text{(ii)}}{\leq}p_{X_{t}}(x_{t})\int_{x_{0}\in\mathcal{X}_{j}(x_{t})}p_{X_{0}|X_{t}}(x_{0}\mymid x_{t})\exp\Big(4\frac{1-\alpha_{t}}{1-\overline{\alpha}_{t}}C_{1}2^{j}\theta k\log T\Big)\mathrm{d}x_{0}\nonumber \\
 & \qquad\overset{\text{(ii)}}{\leq}\exp\Big(32c_{1}C_{1}\frac{\log^{2}T}{T}2^{j}\theta k\Big)p_{X_{t}}(x_{t}).\label{eq:int-Ij-2}
\end{align}
Here step (i) uses the following relation
\begin{equation}
p_{X_{t}}(x_{t})=\int_{x_{0}}p_{X_{0}}(x_{0})\big(2\pi(1-\overline{\alpha}_{t})\big)^{-d/2}\exp\Big(-\frac{\|x_{t}-\sqrt{\overline{\alpha}_{t}}x_{0}\|_{2}^{2}}{2(1-\overline{\alpha}_{t})}\Big)\mathrm{d}x_{0};\label{eq:p_Xt}
\end{equation}
step (ii) follows from a direct consequence of $x_{0}\in\mathcal{X}_{j}(x_{t})$
and (\ref{eq:x0(xt)-x0}):
\[
\Vert x_{0}-x_{0}(x_{t})\Vert_{2}\leq\sqrt{\frac{1-\overline{\alpha}_{t}}{\overline{\alpha}_{t}}C_{1}2^{j}\theta k\log T}+2\varepsilon\leq2\sqrt{\frac{1-\overline{\alpha}_{t}}{\overline{\alpha}_{t}}C_{1}2^{j}\theta k\log T}.
\]
In addition, we also have
\begin{align}
 & \int_{x_{0}\in\mathcal{Y}_{j}(x_{t})}p_{X_{0}}(x_{0})\big(2\pi(1-\overline{\alpha}_{t})\big)^{-d/2}\exp\Big(-\frac{\|x_{t}-\sqrt{\overline{\alpha}_{t}}x_{0}\|_{2}^{2}}{2(1-\overline{\alpha}_{t})}+\frac{(1-\alpha_{t})\overline{\alpha}_{t}}{(1-\overline{\alpha}_{t})^{2}}\Vert x_{0}-x_{0}(x_{t})\Vert_{2}^{2}\Big)\mathrm{d}x_{0}\nonumber \\
 & \qquad\overset{\text{(a)}}{=}p_{X_{t}}(x_{t})\int_{x_{0}\in\mathcal{Y}_{j}(x_{t})}p_{X_{0}|X_{t}}(x_{0}\mymid x_{t})\exp\Big(\frac{(1-\alpha_{t})\overline{\alpha}_{t}}{(1-\overline{\alpha}_{t})^{2}}\Vert x_{0}-x_{0}(x_{t})\Vert_{2}^{2}\Big)\mathrm{d}x_{0}\nonumber \\
 & \qquad\leq p_{X_{t}}(x_{t})\sum_{i\notin\mathcal{I}(x_{t};C_{1}2^{j}\theta)}\mathbb{P}\left(X_{0}\in\mathcal{B}_{i}\mymid X_{t}=x_{t}\right)\exp\Big(\sup_{x_{0}\in\mathcal{B}_{i}}\frac{(1-\alpha_{t})\overline{\alpha}_{t}}{(1-\overline{\alpha}_{t})^{2}}\Vert x_{0}-x_{0}(x_{t})\Vert_{2}^{2}\Big)\nonumber \\
 & \qquad\overset{\text{(b)}}{\leq}p_{X_{t}}(x_{t})\sum_{i\notin\mathcal{I}(x_{t};C_{1}2^{j}\theta)}\exp\Big(-\frac{\overline{\alpha}_{t}}{32\left(1-\overline{\alpha}_{t}\right)}\Vert x_{i(x_{t})}^{\star}-x_{i}^{\star}\Vert_{2}^{2}\Big)\mathbb{P}\left(X_{0}\in\mathcal{B}_{i}\right)\nonumber \\
 & \qquad\overset{\text{(c)}}{\leq}\exp\Big(-\frac{1}{32}C_{1}2^{j}\theta k\log T\Big)p_{X_{t}}(x_{t}).\label{eq:int-Ij-3}
\end{align}
Here step (a) uses the (\ref{eq:p_Xt}); step (b) follows from Lemma~\ref{lem:cond-low-dim}
and the following relation:
\begin{align*}
\sup_{x_{0}\in\mathcal{B}_{i}}\frac{(1-\alpha_{t})\overline{\alpha}_{t}}{(1-\overline{\alpha}_{t})^{2}}\Vert x_{0}-x_{0}(x_{t})\Vert_{2}^{2} & \overset{\text{(i)}}{\leq}\frac{(1-\alpha_{t})\overline{\alpha}_{t}}{(1-\overline{\alpha}_{t})^{2}}(\Vert x_{i(x_{t})}^{\star}-x_{i}^{\star}\Vert_{2}+2\varepsilon)^{2}\overset{\text{(ii)}}{\leq}\frac{\overline{\alpha}_{t}}{32\left(1-\overline{\alpha}_{t}\right)}\Vert x_{i(x_{t})}^{\star}-x_{i}^{\star}\Vert_{2}^{2},
\end{align*}
where step (i) uses the relation (\ref{eq:x0(xt)-x0}) and step (ii)
follows from Lemma~\ref{lemma:step-size} and (\ref{eq:eps-condition}),
and holds provided that $T$ is sufficiently large; step (c) follows
from the definition of $\mathcal{I}(x_{t};C_{1}2^{j}\theta)$. Taking
(\ref{eq:int-Ij-1}), (\ref{eq:int-Ij-2}) and (\ref{eq:int-Ij-3})
collectively leads to 
\begin{align*}
I_{j} & =\exp\bigg(530c_{1}C_{1}\frac{\log^{2}T}{T}2^{j}\theta k\bigg)\int_{x_{t}\in\mathcal{E}_{t,j}}\bigg(\int_{x_{0}\in\mathcal{X}_{j}(x_{t})}+\int_{x_{0}\in\mathcal{Y}_{j}(x_{t})}\bigg)p_{X_{0}}(x_{0})\big(2\pi(1-\overline{\alpha}_{t})\big)^{-d/2}\\
 & \qquad\qquad\cdot\exp\Big(-\frac{\|x_{t}-\sqrt{\overline{\alpha}_{t}}x_{0}\|_{2}^{2}}{2(1-\overline{\alpha}_{t})}+\frac{(1-\alpha_{t})\overline{\alpha}_{t}}{(1-\overline{\alpha}_{t})^{2}}\Vert x_{0}-x_{0}(x_{t})\Vert_{2}^{2}\Big)\mathrm{d}x_{0}\mathrm{d}x_{t}\\
 & \leq\exp\Big(562c_{1}C_{1}\frac{\log^{2}T}{T}2^{j}\theta k\Big)\int_{x_{t}\in\mathcal{E}_{t,j}}p_{X_{t}}(x_{t})\mathrm{d}x_{t}\\
 & \overset{\text{(a)}}{\leq}\exp\left(562c_{1}C_{1}\frac{\log^{2}T}{T}2^{j}\theta k-\frac{1}{4}2^{j-1}\theta k\log T\right)\overset{\text{(b)}}{\leq}\exp\left(-\frac{1}{8}2^{j-1}\theta k\log T\right).
\end{align*}
Here step (a) follows from the relation (\ref{eq:pXt-outside-typical})
that we have already proved (by replacing $\theta$ with $2^{j-1}\theta$,
since $\mathcal{E}_{t,j}\in\mathcal{L}_{t,j}^{\mathrm{c}}$); step
(b) holds provided that $T\gg c_{1}C_{1}\log T$. Hence we have
\begin{align*}
 & \int_{x_{0}}\int_{x_{t}\notin\mathcal{E}_{t,1}}p_{X_{0}}(x_{0})\Big(\frac{1-\overline{\alpha}_{t}}{2\pi(\alpha_{t}-\overline{\alpha}_{t})^{2}}\Big)^{d/2}\exp\Big(-\frac{(1-\overline{\alpha}_{t})\|u_{t}-\sqrt{\overline{\alpha}_{t}}x_{0}\|_{2}^{2}}{2(\alpha_{t}-\overline{\alpha}_{t})^{2}}\Big)\mathrm{d}x_{0}\mathrm{d}u_{t}\\
 & \qquad\overset{\text{(i)}}{=}\sum_{j=2}^{\infty}\int_{x_{0}}\int_{x_{t}\in\mathcal{E}_{t,j}}p_{X_{0}}(x_{0})\Big(\frac{1-\overline{\alpha}_{t}}{2\pi(\alpha_{t}-\overline{\alpha}_{t})^{2}}\Big)^{d/2}\exp\Big(-\frac{(1-\overline{\alpha}_{t})\|u_{t}-\sqrt{\overline{\alpha}_{t}}x_{0}\|_{2}^{2}}{2(\alpha_{t}-\overline{\alpha}_{t})^{2}}\Big)\mathrm{d}x_{0}\mathrm{d}u_{t}\\
 & \qquad=\sum_{j=2}^{\infty}I_{j}\leq\sum_{j=2}^{\infty}\exp\left(-\frac{1}{8}2^{j-1}\theta k\log T\right)\leq T^{-4}
\end{align*}
provided that $\theta>0$ is sufficiently large.

\section{Technical lemmas}

In this section, we gather a couple of useful technical lemmas.

\begin{lemma} \label{lemma:step-size}When $T$ is sufficiently large,
for $1\leq t\leq T$, we have

\[
\alpha_{t}\geq1-\frac{c_{1}\log T}{T}\geq\frac{1}{2}.
\]
For $2\leq t\leq T$, we have 
\begin{align*}
\frac{1-\alpha_{t}}{1-\overline{\alpha}_{t}} & \leq\frac{1-\alpha_{t}}{\alpha_{t}-\overline{\alpha}_{t}}\leq\frac{8c_{1}\log T}{T}.
\end{align*}
In addition, we have 
\[
\overline{\alpha}_{T}\leq T^{-c_{1}/2}.
\]

\end{lemma}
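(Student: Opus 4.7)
All three claims reduce to explicit estimates for the cascade $\beta_t$ in the schedule (\ref{eq:learning-rate}), and I treat them in order. For the first bound, since $\beta_1 = T^{-c_0} \leq c_1 \log T / T$ for $T$ sufficiently large and the $\min$ factor in the recursion caps every subsequent $\beta_{t+1}$ at $c_1 \log T / T$, I get $\beta_t \leq c_1 \log T / T$ for all $1 \leq t \leq T$, whence $\alpha_t = 1 - \beta_t \geq 1 - c_1 \log T/T \geq 1/2$.

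For the ratio bound, the first inequality $(1-\alpha_t)/(1-\overline{\alpha}_t) \leq (1-\alpha_t)/(\alpha_t - \overline{\alpha}_t)$ is just $\alpha_t \leq 1$. For the second, I write $\alpha_t - \overline{\alpha}_t = \alpha_t(1 - \overline{\alpha}_{t-1})$ and use $\alpha_t \geq 1/2$ to reduce the target to $1 - \overline{\alpha}_{t-1} \geq \beta_t T / (4 c_1 \log T)$, then split according to whether the $\min$ in $\beta_t$ is saturated. In the unsaturated regime $\beta_t = (c_1 \log T/T)\beta_1(1 + c_1 \log T/T)^{t-1}$, and a direct geometric-series computation gives
\[
\sum_{i=1}^{t-1} \beta_i \;=\; \beta_1 \bigl[(1 + c_1 \log T/T)^{t-1} - c_1 \log T/T \bigr] \;\geq\; \tfrac{1}{2} \beta_1 (1 + c_1 \log T/T)^{t-1}
\]
for $T$ large; combining with $\overline{\alpha}_{t-1} \leq \exp(-\sum_i \beta_i)$ and the elementary inequality $1 - e^{-x} \geq \min(x/2, 1/2)$ delivers $1 - \overline{\alpha}_{t-1} \geq \tfrac{1}{4} \beta_1 (1 + c_1 \log T/T)^{t-1}$, matching the target (the degenerate case $t=2$ simply gives $1 - \overline{\alpha}_1 = \beta_1 \geq \beta_1(1 + c_1\log T/T)/4$ directly). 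In the saturated regime $\beta_t = c_1 \log T/T$, the target simplifies to $1 - \overline{\alpha}_{t-1} \geq 1/4$; by the very definition of the transition point one has $\sum_{i=1}^{t-1} \beta_i \geq 1$ there, so $1 - \overline{\alpha}_{t-1} \geq 1 - e^{-1} \geq 1/2$.

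For the vanishing of $\overline{\alpha}_T$, I use $1 - \beta_i \leq e^{-\beta_i}$ termwise to obtain $\overline{\alpha}_T \leq \exp(-\sum_{i=1}^T \beta_i)$, reducing the task to proving $\sum_{i=1}^T \beta_i \geq (c_1/2) \log T$. Choosing $c_1$ sufficiently large relative to $c_0$ forces the transition index $\tau$ (at which the $\min$ switches to $1$) to satisfy $\tau \leq T/2$, so at least $T/2$ indices contribute $c_1 \log T / T$ each, producing the required $(c_1/2)\log T$. The main obstacle is keeping the case split in the ratio bound clean, in particular reconciling the unsaturated/saturated transition and the boundary $t=2$; everything else is routine manipulation of the explicit schedule together with the elementary bounds $1-e^{-x}\geq \min(x/2,1/2)$ and $1-x\leq e^{-x}$.
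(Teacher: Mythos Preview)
Your proof is correct. The paper itself does not give a proof of this lemma at all; it simply cites \citet[Appendix~A.2]{li2023towards}. Your direct argument from the explicit schedule is an independent and complete verification.

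Two very minor remarks. First, in the saturated regime your claim that $\sum_{i=1}^{t-1}\beta_i \geq 1$ is slightly too strong at the boundary $t=\tau$: the geometric-series formula actually gives $\sum_{i=1}^{\tau-1}\beta_i = \beta_1\rho^{\tau-1} - \beta_1 r \geq 1 - \beta_1 r$, not $\geq 1$. This is harmless, since $1 - e^{-(1-\beta_1 r)} \geq 1 - e^{-1/2} > 1/4$ anyway. Second, your final step for $\overline{\alpha}_T$ relies on $c_1$ being large relative to $c_0$ (so that the saturation index $\tau \leq T/2$); this is indeed necessary for the stated bound $\overline{\alpha}_T \leq T^{-c_1/2}$ to hold and is consistent with the paper's blanket assumption that $c_0,c_1$ are ``sufficiently large constants,'' though the paper does not spell out the relationship explicitly.
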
\begin{proof} See \citet[Appendix A.2]{li2023towards}.\end{proof}

\begin{lemma}\label{lemma:concentration}For $Z\sim\mathcal{N}(0,1)$
and any $t\geq1$, we know that 
\[
\mathbb{P}\left(\left|Z\right|\geq t\right)\leq e^{-t^{2}/2},\qquad\forall\,t\geq1.
\]
In addition, for a chi-square random variable $Y\sim\chi^{2}(d)$,
we have 
\[
\mathbb{P}(\sqrt{Y}\geq\sqrt{d}+t)\leq e^{-t^{2}/2},\qquad\forall\,t\geq1.
\]
\end{lemma}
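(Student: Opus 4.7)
Both bounds are classical Gaussian concentration statements, and I would handle the two parts separately using standard tools.

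For the bound on $|Z|$, the plan is to start from the Chernoff inequality: for any $\lambda > 0$, $\mathbb{P}(Z \geq t) \leq e^{-\lambda t}\mathbb{E}[e^{\lambda Z}] = e^{\lambda^{2}/2 - \lambda t}$, which gives $\mathbb{P}(Z \geq t) \leq e^{-t^{2}/2}$ after optimizing at $\lambda = t$. By symmetry this yields $\mathbb{P}(|Z|\geq t) \leq 2 e^{-t^{2}/2}$, which is off by a factor of $2$ from the claim. To sharpen it in the range $t \geq 1$, I would invoke the Mills-ratio refinement $\mathbb{P}(Z \geq t) \leq (t\sqrt{2\pi})^{-1}e^{-t^{2}/2}$ (obtained by integration by parts on $\int_t^\infty e^{-x^{2}/2}\,\mathrm{d}x$ using $e^{-x^{2}/2} = -x^{-1}(e^{-x^{2}/2})'$), so that $\mathbb{P}(|Z|\geq t) \leq \sqrt{2/\pi}\,t^{-1}e^{-t^{2}/2}$. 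Since $\sqrt{2/\pi} < 1$ and $t^{-1} \leq 1$ when $t \geq 1$, this is dominated by $e^{-t^{2}/2}$.

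For the chi-square bound, I would identify $\sqrt{Y}$ with $\|W\|_2$ where $W \sim \mathcal{N}(0,I_d)$. The map $w \mapsto \|w\|_2$ is $1$-Lipschitz on $\mathbb{R}^d$, so the Borell--Tsirelson--Ibragimov--Sudakov Gaussian concentration inequality yields $\mathbb{P}(\|W\|_2 \geq \mathbb{E}[\|W\|_2] + t) \leq e^{-t^{2}/2}$. Combined with the Jensen-type estimate $\mathbb{E}[\|W\|_2] \leq \sqrt{\mathbb{E}[\|W\|_2^{2}]} = \sqrt{d}$, this produces $\mathbb{P}(\sqrt{Y} \geq \sqrt{d} + t) \leq e^{-t^{2}/2}$ as desired. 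An alternative route that avoids invoking the Gaussian Lipschitz inequality is to use the Laurent--Massart bound $\mathbb{P}(\chi^{2}_d \geq d + 2\sqrt{dx} + 2x) \leq e^{-x}$ with $x = t^{2}/2$, which gives the same conclusion since $d + \sqrt{2d}\,t + t^{2} \leq (\sqrt{d}+t)^{2}$ would require $\sqrt{2} \leq 2$, so this route works verbatim as well.

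The argument is essentially bookkeeping and I anticipate no serious obstacle. The only mildly delicate point is eliminating the factor of $2$ in the standard Gaussian tail, which is why the statement is restricted to $t \geq 1$; within that range the Mills-ratio refinement takes care of it immediately.
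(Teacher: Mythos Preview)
Your proposal is correct. The paper's own proof is simply a citation to Vershynin's Proposition~2.1.2 (the Mills-ratio bound) and to Laurent--Massart, so your Mills-ratio argument for the Gaussian tail and your Laurent--Massart route for the chi-square tail are exactly what those references contain; your alternative Borell--TIS argument for the chi-square part is a slightly different but equally standard path to the same inequality.
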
\begin{proof}See \citet[Proposition 2.1.2]{vershynin2018high}
and \citet[Section 4.1]{laurent2000adaptive}.\end{proof}

\begin{lemma}\label{lemma:initialization-error}Suppose that Assumption~\ref{assumption:moment}
holds, and that $T$ and $c_{2}$ are sufficiently large. Then we
have 
\[
\mathsf{TV}\big(p_{X_{T}}\Vert p_{Y_{T}}\big)\leq T^{-99}.
\]
\end{lemma}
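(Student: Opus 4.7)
My plan is to reduce the bound to a Pinsker/Gaussian-KL computation, using only $\mathbb{E}[\|X_0\|_2]<\infty$ (not a second moment). Specifically, recall that $Y_T\sim\mathcal{N}(0,I_d)$ while, conditionally on $X_0=x_0$, one has $X_T\sim\mathcal{N}(\sqrt{\overline{\alpha}_T}x_0,(1-\overline{\alpha}_T)I_d)$ by (\ref{eq:forward-formula}). By the convexity of total variation (equivalently, the mixture representation of $p_{X_T}$), I would first write
\[
\mathsf{TV}(p_{X_T},p_{Y_T})\leq \mathbb{E}_{X_0}\big[\mathsf{TV}\big(p_{X_T|X_0},\,p_{Y_T}\big)\big],
\]
so that it suffices to control the TV distance between two explicit Gaussians for each frozen $x_0$.

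Next, I would invoke Pinsker's inequality together with the closed-form KL divergence between the Gaussians $\mathcal{N}(\sqrt{\overline{\alpha}_T}x_0,(1-\overline{\alpha}_T)I_d)$ and $\mathcal{N}(0,I_d)$, which equals
\[
\tfrac{1}{2}\big[-d\log(1-\overline{\alpha}_T)-d\overline{\alpha}_T+\overline{\alpha}_T\|x_0\|_2^2\big].
\]
Using the elementary inequality $-\log(1-u)-u\leq u^2$ valid for $u\in[0,1/2]$ (and Lemma~\ref{lemma:step-size} ensures $\overline{\alpha}_T\leq T^{-c_1/2}\ll 1/2$), this KL divergence is at most $\tfrac{1}{2}\big(d\overline{\alpha}_T^2+\overline{\alpha}_T\|x_0\|_2^2\big)$, and hence
\[
\mathsf{TV}\big(p_{X_T|X_0=x_0},\,p_{Y_T}\big)\leq \tfrac{1}{2}\sqrt{d\overline{\alpha}_T^2+\overline{\alpha}_T\|x_0\|_2^2}\leq \tfrac{1}{2}\big(\sqrt{d}\,\overline{\alpha}_T+\sqrt{\overline{\alpha}_T}\,\|x_0\|_2\big),
\]
where the last step uses $\sqrt{a+b}\leq\sqrt{a}+\sqrt{b}$ for $a,b\geq 0$. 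This subadditivity step is the key manoeuvre that lets me avoid requiring a second moment on $X_0$.

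Finally, taking expectations in $X_0$ and plugging in Assumption~\ref{assumption:moment} ($\mathbb{E}[\|X_0\|_2]\leq T^{c_M}$) and Lemma~\ref{lemma:step-size} ($\overline{\alpha}_T\leq T^{-c_1/2}$), I obtain
\[
\mathsf{TV}(p_{X_T},p_{Y_T})\leq \tfrac{1}{2}\sqrt{d}\,T^{-c_1/2}+\tfrac{1}{2}T^{-c_1/4}\cdot T^{c_M}\leq T^{-99},
\]
provided $c_1$ (i.e.\ the ``$c_2$'' in the statement, interpreted as one of the learning-rate constants) is taken sufficiently large relative to $c_M$ and $99$, and $T$ is large enough that the $\sqrt{d}\,T^{-c_1/2}$ term is absorbed. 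There is no real obstacle here; the only subtlety is the use of $\sqrt{a+b}\leq\sqrt{a}+\sqrt{b}$ before taking the expectation so that the bound is linear in $\|X_0\|_2$ and hence controllable by the first-moment assumption alone.
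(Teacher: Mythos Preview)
Your proof is correct, and it takes a genuinely different route from the paper's. The paper proceeds by a truncation argument: it defines $X_0^-=X_0\ind\{\|X_0\|_2\leq T^{c_M+100}\}$, invokes a result from \citet{li2023towards} for the bounded-support case to get $\mathsf{TV}(p_{X_T^-},p_{Y_T})=O(T^{-100})$, and then uses Markov's inequality to show $\mathsf{TV}(p_{X_T^-},p_{X_T})\leq\mathbb{P}(\|X_0\|_2>T^{c_M+100})\leq T^{-100}$. By contrast, you go directly through the conditional Gaussian structure: convexity of TV, Pinsker's inequality, the explicit Gaussian KL formula, and then the key step $\sqrt{a+b}\leq\sqrt a+\sqrt b$ applied \emph{before} taking the expectation, so that the resulting bound is linear in $\|X_0\|_2$ and only the first moment is needed. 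Your argument is more elementary and self-contained (no external lemma), while the paper's truncation-plus-Markov template is a bit more modular and would adapt to settings where no closed-form conditional KL is available. Both yield the same conclusion once the learning-rate constant (the ``$c_2$'' in the statement, which functions as $c_1$ in Lemma~\ref{lemma:step-size}) is taken large relative to $c_M$.
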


\begin{proof}

Define a random variable $X_{0}^{-}\coloneqq X_{0}\ind\{\Vert X_{0}\Vert_{2}\leq T^{c_{M}+100}\}$
by truncating $X_{0}$. Let 
\[
X_{T}^{-}=\sqrt{\overline{\alpha}_{T}}X_{0}^{-}+\sqrt{1-\overline{\alpha}_{T}}Z,
\]
where $Z\sim\mathcal{N}(0,I_{d})$ is independent of $X_{0}^{-}$.
Notice that $X_{0}^{-}$ has bounded support, which allows us to invoke
\cite[Lemma 3]{li2023towards} to achieve 
\begin{equation}
\mathsf{TV}(p_{\overline{X}_{T}},p_{Y_{T}})=O(T^{-100}),\label{eq:TV-T-1}
\end{equation}
provided that $c_{2}$ and $T$ are sufficiently large. In addition,
we have 
\begin{align}
\mathsf{TV}(p_{\overline{X}_{T}},p_{X_{T}}) & =\frac{1}{2}\int\vert p_{\overline{X}_{T}}(x)-p_{X_{T}}(x)\vert\mathrm{d}x\nonumber \\
 & =\frac{1}{2}\int_{x}\Big|\int_{x_{0}}\big(p_{\overline{X}_{0}}(x_{0})-p_{X_{0}}(x_{0})\big)\big(2\pi(1-\overline{\alpha}_{T})\big)^{-d/2}\exp\Big(-\frac{\Vert x-\sqrt{\overline{\alpha}_{T}}x_{0}\Vert_{2}^{2}}{2(1-\overline{\alpha}_{T})}\Big)\mathrm{d}x_{0}\Big|\mathrm{d}x\nonumber \\
 & \leq\frac{1}{2}\int_{x}\int_{x_{0}}\big|p_{\overline{X}_{0}}(x_{0})-p_{X_{0}}(x_{0})\big|\big(2\pi(1-\overline{\alpha}_{T})\big)^{-d/2}\exp\Big(-\frac{\Vert x-\sqrt{\overline{\alpha}_{T}}x_{0}\Vert_{2}^{2}}{2(1-\overline{\alpha}_{T})}\Big)\mathrm{d}x_{0}\mathrm{d}x\nonumber \\
 & \overset{\text{(i)}}{=}\frac{1}{2}\int_{x_{0}}\big|p_{\overline{X}_{0}}(x_{0})-p_{X_{0}}(x_{0})\big|\mathrm{d}x_{0}=\mathsf{TV}(p_{\overline{X}_{0}},p_{X_{0}})=\mathbb{P}\big(\Vert X_{0}\Vert_{2}>T^{c_{M}+100}\big)\nonumber \\
 & \overset{\text{(ii)}}{\leq}\frac{\mathbb{E}[\Vert X_{0}\Vert_{2}]}{T^{c_{M}+100}}=T^{-100}.\label{eq:TV-T-2}
\end{align}
Here step (i) invokes Tonelli's theorem, while step (ii) follows from
Markov's inequality. Taking (\ref{eq:TV-T-1}) and (\ref{eq:TV-T-2})
collectively yields the desired result, provided that $T$ is sufficiently
large.

\end{proof}

\begin{lemma} \label{lemma:jacob-sum}Suppose that Assumption~\ref{assumption:moment}
holds, and that $T\gg d\log T$. Then we have 
\begin{equation}
\sum_{t=2}^{T}\frac{1-\alpha_{t}}{1-\overline{\alpha}_{t}}\mathsf{Tr}\big(\mathbb{E}\big[\big(\Sigma_{\overline{\alpha}_{t}}(X_{t})\big)^{2}\big]\big)\leq C_{J}d\log T\label{eq:jacob-sum}
\end{equation}
for some universal constant $C_{J}>0$. Here the matrix function $\Sigma_{\overline{\alpha}_{t}}(\cdot)$
is defined as 
\begin{equation}
\Sigma_{\overline{\alpha}_{t}}(x)\coloneqq\mathsf{Cov}\big(Z\mymid\sqrt{\overline{\alpha}_{t}}X_{0}+\sqrt{1-\overline{\alpha}_{t}}Z=x\big),\label{eq:cov-defn}
\end{equation}
where $Z\sim\mathcal{N}(0,I_{d})$ is independent of $X_{0}$. \end{lemma}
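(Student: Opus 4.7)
The plan is to reinterpret $\Sigma_{\overline{\alpha}_t}(x)$ as a posterior covariance in a Gaussian channel and telescope the weighted sum through the MMSE (minimum mean-squared error) function. Rescale via $\widetilde{X}_\tau := X_0 + \sqrt{\tau}Z$ and set $\tau_t := (1-\overline{\alpha}_t)/\overline{\alpha}_t$, so that $X_t = \sqrt{\overline{\alpha}_t}\widetilde{X}_{\tau_t}$ and $\Sigma_{\overline{\alpha}_t}(x) = \mathsf{Cov}(Z \mymid \widetilde{X}_{\tau_t} = x/\sqrt{\overline{\alpha}_t})$. Let $\mathrm{mmse}(\tau) := \mathbb{E}\|X_0 - \mathbb{E}[X_0 \mymid \widetilde{X}_\tau]\|_2^2 = \tau\,\mathbb{E}[\mathsf{Tr}(\Sigma_\tau)]$ and let $F(\tau)$ denote the Fisher information of $\widetilde{X}_\tau$. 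Combining the expansion $\mathrm{mmse}(\tau) = d\tau - \tau^2 F(\tau)$ (derivable by Tweedie's formula and Stein's identity) with the de Bruijn identity $F'(\tau) = -\mathbb{E}[\|I-\Sigma_\tau\|_{\mathrm{F}}^2]/\tau^2$ (standard for the Gaussian heat semigroup) and simplifying using the law of total variance $\mathbb{E}[\mathsf{Tr}(\Sigma_\tau)] = d - \tau F(\tau)$ yields the clean identity
\[
\tfrac{d}{d\tau}\mathrm{mmse}(\tau) = \mathbb{E}\big[\mathsf{Tr}(\Sigma_\tau^2)\big].
\]

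Next I would telescope the sum. Noting $\Delta\tau_t := \tau_t - \tau_{t-1} = (1-\alpha_t)/\overline{\alpha}_t$ and $(1-\alpha_t)/(1-\overline{\alpha}_t) = \Delta\tau_t/\tau_t$, the classical concavity of $\mathrm{mmse}(\tau)$ in the noise variance (equivalently, $\tau \mapsto \mathbb{E}[\mathsf{Tr}(\Sigma_\tau^2)]$ is non-increasing; this follows from data processing in Gaussian channels, cf.~Guo--Shamai--Verd\'u) gives
\[
\mathbb{E}\big[\mathsf{Tr}(\Sigma_{\tau_t}^2)\big]\cdot\Delta\tau_t \leq \int_{\tau_{t-1}}^{\tau_t}\mathbb{E}\big[\mathsf{Tr}(\Sigma_\tau^2)\big]\,d\tau = \mathrm{mmse}(\tau_t) - \mathrm{mmse}(\tau_{t-1}).
\]
Summing and applying Abel's formula, the right-hand side rearranges to $\mathrm{mmse}(\tau_T)/\tau_T + \sum_t \mathrm{mmse}(\tau_{t-1})\Delta\tau_t/(\tau_{t-1}\tau_t)$. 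Both terms are controlled by the crude bound $\mathrm{mmse}(\tau)\leq d\tau$ (from $\mathbb{E}[\mathsf{Tr}(\Sigma_\tau)]\leq d$ by the law of total variance for $Z$), giving $d + d\sum_t (1-\alpha_t)/(1-\overline{\alpha}_t) = O(d\log T)$ by Lemma~\ref{lemma:step-size}, which matches the claim.

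The main technical obstacle is establishing the continuous-time identity $\tfrac{d}{d\tau}\mathrm{mmse}(\tau) = \mathbb{E}[\mathsf{Tr}(\Sigma_\tau^2)]$ rigorously under the weak assumption $\mathbb{E}[\|X_0\|_2]<\infty$. Although $p_{\widetilde{X}_\tau}$ is smooth for every $\tau>0$ (being a Gaussian convolution), the $\tau$-differentiation of both $\mathrm{mmse}$ and $F$ requires exchanging derivatives with integrals; this can be validated by truncating $X_0$ (analogous to the argument in Lemma~\ref{lemma:initialization-error}), applying the identity for the bounded-support case where all quantities are manifestly finite, and passing to the limit. A secondary concern is the concavity of $\mathrm{mmse}$ in noise variance for heavy-tailed inputs; should it prove delicate, one can instead use the alternative bound $\mathbb{E}[\mathsf{Tr}(\Sigma_\tau^2)] = -\tau^2 F'(\tau) + d - 2\tau F(\tau) \leq d - \tau^2 F'(\tau)$ and integrate $-\int \tau F'(\tau)\,d\tau$ by parts (via Abel summation at the discrete level, using convexity of $F$ along the heat flow), which yields the same $O(d\log T)$ rate via the Fisher information inequality $F(\tau) \leq d/\tau$ from Stam--Blachman.
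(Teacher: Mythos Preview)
Your continuous-time reparametrization and the identity $\tfrac{d}{d\tau}\mathrm{mmse}(\tau)=\mathbb{E}[\mathsf{Tr}(\Sigma_\tau^2)]$ are correct and elegant, and the Abel-summation telescoping would indeed yield $O(d\log T)$ once you control $\sum_t \tfrac{\Delta\tau_t}{\tau_t}\,\mathbb{E}[\mathsf{Tr}(\Sigma_{\tau_t}^2)]$ by the continuous integral $\int \tau^{-1}\mathrm{mmse}'(\tau)\,d\tau$. The gap is precisely the step you flagged: the concavity of $\mathrm{mmse}(\tau)$ in $\tau$ (equivalently, monotonicity of $\tau\mapsto\mathbb{E}[\mathsf{Tr}(\Sigma_\tau^2)]$) is \emph{false} in general. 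Take $X_0\in\{\pm 1\}$ uniformly: then $\Sigma_\tau(y)=\tau^{-1}\mathrm{sech}^2(y/\tau)$, and one checks that $\mathbb{E}[\Sigma_\tau^2]\to 0$ both as $\tau\to 0$ (the posterior on $X_0$, hence on $Z$, becomes deterministic) and as $\tau\to\infty$ (where $\Sigma_\tau\approx 1/\tau$), yet it is strictly positive in between. Data processing gives only that $\mathrm{mmse}(\tau)$ is non-decreasing, not concave, so the Guo--Shamai--Verd\'u citation does not deliver what you need. Your fallback via convexity of the Fisher information $F(\tau)$ along the heat flow is likewise not a standard result and would need its own proof; the known Bakry--\'Emery calculation gives $F'(\tau)\le 0$ but says nothing about $F''$.

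The paper's proof takes a different route that sidesteps this issue entirely: rather than proving global monotonicity, it invokes \cite[Lemma~2(a)]{li2024sharp}, which shows by a direct density-ratio argument that for \emph{nearby} $\overline{\alpha},\overline{\alpha}'\in[\overline{\alpha}_t,\overline{\alpha}_{t-1}]$ one has $\mathbb{E}[\Sigma_{\overline{\alpha}'}^2]\preceq c\,\mathbb{E}[\Sigma_{\overline{\alpha}}^2]+\text{(tiny)}$. This ``approximate local comparability'' is exactly the missing ingredient that lets one pass from the discrete sum to the continuous integral, and its proof uses the assumption $T\gg d\log T$ in an essential way (the step size is small relative to the typical-set scale). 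Your information-theoretic identity gives a cleaner endpoint than the paper's telescoping, but you would still need a lemma of this type to bridge the Riemann sum and the integral; neither concavity of $\mathrm{mmse}$ nor convexity of $F$ can substitute for it.
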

\begin{proof} The result \eqref{eq:jacob-sum} was established in
\citet[Lemma 2]{li2024sharp} under the stronger assumption that 
\begin{equation}
\mathbb{P}(\|X_{0}\|_{2}<T^{c_{R}})=1\label{eq:bounded-support}
\end{equation}
for some universal constant $c_{R}>0$. The assumption \eqref{eq:bounded-support}
is used to prove part (a) of their Lemma~2, which states that for
any $\overline{\alpha}',\overline{\alpha}\in[\overline{\alpha}_{t},\overline{\alpha}_{t-1}]$
with $1\le t\le T$, one has 
\begin{align*}
\mathbb{E}\Big[\Big(\Sigma_{\overline{\alpha}'}\big(\sqrt{\overline{\alpha}'}X_{0}+\sqrt{1-\overline{\alpha}'}Z\big)\Big)^{2}\Big] & \preceq c_{1}'\mathbb{E}\Big[\Big(\Sigma_{\overline{\alpha}}\big(\sqrt{\overline{\alpha}}X_{0}+\sqrt{1-\overline{\alpha}}Z\big)\Big)^{2}\Big]+c_{1}'\exp(-c_{2}'d\log T)I_{d}.
\end{align*}
for some universal constants $c_{1}',c_{2}'>0$. Through a similar
truncation argument as in the proof of Lemma~\ref{lemma:initialization-error},
we can show that 
\begin{align*}
\mathbb{E}\Big[\Big(\Sigma_{\overline{\alpha}'}\big(\sqrt{\overline{\alpha}'}X_{0}+\sqrt{1-\overline{\alpha}'}Z\big)\Big)^{2}\Big] & \preceq c_{1}'\mathbb{E}\Big[\Big(\Sigma_{\overline{\alpha}}\big(\sqrt{\overline{\alpha}}X_{0}+\sqrt{1-\overline{\alpha}}Z\big)\Big)^{2}\Big]+c_{1}'T^{-100}I_{d}.
\end{align*}
Armed with this result, we can use the same analysis for proving part
(b) of \citet[Lemma 2]{li2024sharp} to establish \eqref{eq:jacob-sum}
under our Assumption~\ref{assumption:moment}. The details are omitted
here for simplicity.

\end{proof}

\begin{lemma} \label{lemma:jacob-sum-low-d}Let $k$ be the intrinsic
dimension (cf.~Definition~\ref{defn:intrinsic}) of the support
of $p_{\mathsf{data}}$, and suppose that $T\gg k\log T$. Then we
have 
\begin{equation}
\sum_{t=2}^{T}\frac{1-\alpha_{t}}{1-\overline{\alpha}_{t}}\mathsf{Tr}\big(\mathbb{E}\big[\big(\Sigma_{\overline{\alpha}_{t}}(X_{t})\big)^{2}\big]\big)\leq C_{J}k\log T\label{eq:jacob-sum-low-d}
\end{equation}
for some universal constant $C_{J}>0$. Here the matrix function $\Sigma_{\overline{\alpha}_{t}}(\cdot)$
is defined in (\ref{eq:cov-defn}). \end{lemma}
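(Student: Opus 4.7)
My plan is to mirror the proof of Lemma~\ref{lemma:jacob-sum}, which reduces via \citet[Lemma~2]{li2024sharp} to a telescoping/monotonicity argument along the forward process, and to sharpen the ambient-dimension dependence from $d$ to $k$ by exploiting the typical-set bounds supplied by Lemma~\ref{lem:Jt}. Two ingredients are key: the refined estimate $\|I-J_{t}(x_{t})\|_{\mathrm{F}}\leq\mathsf{Tr}(I-J_{t}(x_{t}))\leq C_{2}\theta k\log T$ valid for $x_{t}\in\mathcal{E}_{t,1}$ (Lemma~\ref{lem:Jt}), together with the tail estimate $\int_{x_{t}\notin\mathcal{E}_{t,1}}p_{X_{t}}(x_{t})\mathrm{d}x_{t}\leq T^{-4}$ (Lemma~\ref{lemma:small-prob-low-d}).

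First, I would decompose $\mathbb{E}[\mathsf{Tr}((\Sigma_{\overline{\alpha}_{t}}(X_{t}))^{2})]$ according to the event $\{X_{t}\in\mathcal{E}_{t,1}\}$. Since $\Sigma_{\overline{\alpha}_{t}}(x)=I-J_{t}(x)$ is PSD with $\Sigma_{\overline{\alpha}_{t}}(x)\preceq I$, the pointwise bound $\mathsf{Tr}((\Sigma_{\overline{\alpha}_{t}}(x))^{2})\leq d$ holds; combining it with Lemma~\ref{lemma:small-prob-low-d} and the estimate $\sum_{t}\frac{1-\alpha_{t}}{1-\overline{\alpha}_{t}}\lesssim\log T$ from Lemma~\ref{lemma:step-size} shows that the contribution from the complement of the typical set is at most $O(dT^{-3}\log T)$. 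This is absorbed into the target bound whenever $T\gg d\log T$; in the regime where this fails, one can reduce to a truncated target distribution, in the spirit of the truncation argument used in the proof of Lemma~\ref{lemma:initialization-error}, thereby eliminating the explicit $d$-dependence on the complement.

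Second, for the contribution from the typical set, I would replay the monotonicity-telescoping argument underlying \citet[Lemma~2]{li2024sharp} — which collapses the weighted sum essentially into a boundary expression of the form $\mathbb{E}[\mathsf{Tr}(\Sigma_{\overline{\alpha}_{t_{0}}})]$ evaluated at a reference large-noise level $t_{0}$ — and bound that boundary term using $\mathsf{Tr}(I-J_{t}(x_{t}))\leq C_{2}\theta k\log T$ from Lemma~\ref{lem:Jt} in place of the trivial a priori estimate $\mathsf{Tr}(I-J_{t})\leq d$ used in the proof of Lemma~\ref{lemma:jacob-sum}. This single substitution converts the prefactor of $\log T$ in the conclusion from $d\log T$ to $k\log T$, matching the statement of the lemma. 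Throughout, I would exploit $0\preceq\Sigma_{\overline{\alpha}_{t}}\preceq I$ to write $\mathsf{Tr}(\Sigma^{2})\leq\mathsf{Tr}(\Sigma)$ wherever this keeps the bookkeeping linear in $\mathsf{Tr}(I-J_{t})$.

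The main obstacle I anticipate is verifying that the monotonicity inequality of \citet{li2024sharp}, originally proved over all of $\mathbb{R}^{d}$, survives the insertion of the typical-set indicator $\mathbf{1}\{X_{t}\in\mathcal{E}_{t,1}\}$ without leaking $d$-dependence through Gaussian tail contributions at adjacent noise levels. Because $\mathcal{E}_{t,1}$ is constructed from a cover of $\mathcal{X}=\mathsf{supp}(p_{\mathsf{data}})$ whose log-cardinality is $O(k\log T)$ together with a Gaussian confidence set of failure probability $T^{-\Omega(1)}$, the residual leakage between consecutive noise levels is polynomially small in $T$ and can be absorbed into the additive error slack already present in the monotonicity inequality; the hypothesis $T\gg k\log T$ is exactly what ensures the forward-process discretization is fine enough to close all these absorptions.
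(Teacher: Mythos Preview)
Your proposal has a genuine gap: you misidentify where the $d$-dependence in \citet[Lemma~2]{li2024sharp} originates. It does not come from a single ``boundary term'' $\mathbb{E}[\mathsf{Tr}(\Sigma_{\overline{\alpha}_{t_{0}}})]$ that one could simply rebound by $O(k\log T)$ via Lemma~\ref{lem:Jt}. Rather, the $d$ enters through the step-size restriction in Part~(a): the density comparison $p_{X_{\overline{\alpha}'}}(x')\asymp p_{X_{\overline{\alpha}}}(x)$ under the naive coupling $x=\sqrt{\overline{\alpha}/\overline{\alpha}'}\,x'$ requires $\frac{|\overline{\alpha}'-\overline{\alpha}|}{\overline{\alpha}(1-\overline{\alpha})}=O\big(\frac{1}{d\log T}\big)$, because the Taylor expansion of $\log p_{X_{\overline{\alpha}}}$ along that coupling picks up $(1-\overline{\alpha})\|s_{\overline{\alpha}}^{\star}(x)\|_{2}^{2}$. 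As the paper explicitly warns after Lemma~\ref{lem:Jt}, $\|s_{t}^{\star}(x_{t})\|_{2}$ still scales like $\sqrt{d/(1-\overline{\alpha}_{t})}$ even for $x_{t}\in\mathcal{E}_{t,1}$; restricting to the low-dimensional typical set does not shrink this term. Consequently, under your assumption $T\gg k\log T$ (but not $T\gg d\log T$), Part~(a) of \citet{li2024sharp} simply fails to apply to adjacent noise levels, and the telescoping never gets off the ground.

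The paper's actual fix is structural: it replaces the naive coupling by the DDIM-type map
\[
x=h(x')\coloneqq\sqrt{\overline{\alpha}/\overline{\alpha}'}\,x'+\big(\sqrt{\overline{\alpha}/\overline{\alpha}'}(1-\overline{\alpha}')-\sqrt{(1-\overline{\alpha})(1-\overline{\alpha}')}\big)s_{\overline{\alpha}'}^{\star}(x'),
\]
and then re-derives the density comparison $p_{X_{\overline{\alpha}'}}(x')\mathrm{d}x'\asymp p_{X_{\overline{\alpha}}}(x)\mathrm{d}x$ using the machinery of Lemmas~\ref{lemma:det-low-d} and~\ref{lemma:u-x-low-d}. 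The point of this coupling is precisely to cancel the $\|s^{\star}\|_{2}^{2}$ contribution in the expansion, so that only quantities controlled by $I-J_{t}$ (hence by $k\log T$ on $\mathcal{E}_{t,1}$) remain; this is what relaxes the step-size requirement to $O\big(\frac{1}{k\log T}\big)$ and makes $T\gg k\log T$ sufficient. Your ``replace the boundary estimate'' plan and your typical-set leakage discussion do not touch this mechanism at all, so the argument as written would not yield the claimed $k\log T$ bound under the stated hypothesis.
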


\begin{proof}

This lemma can be proved by modifying the first part of the proof
of \citet[Lemma 2]{li2024sharp}, and we describe these modification
as follows. For any $\overline{\alpha},\overline{\alpha}'\in(0,1)$,
define
\[
X_{\overline{\alpha}}=\sqrt{\overline{\alpha}}X_{0}+\sqrt{1-\overline{\alpha}}Z\qquad\text{and}\qquad X_{\overline{\alpha}'}=\sqrt{\overline{\alpha}'}X_{0}+\sqrt{1-\overline{\alpha}'}Z,
\]
where $X_{0}\sim p_{\mathsf{data}}$ and $Z\sim\mathcal{N}(0,I_{d})$
are independently random variables. \citet[Lemma~2, Part~(a)]{li2024sharp}
demonstrated that as long as $T\gg d\log T$, for any 
\[
\frac{|\overline{\alpha}'-\overline{\alpha}|}{\overline{\alpha}(1-\overline{\alpha})}=O\Big(\frac{1}{d\log T}\Big)
\]
 and any pair $(x,x')$ where $x$ is in a certain typical set (see
Equation (79) therein) and $x=\sqrt{\overline{\alpha}/\overline{\alpha}'}x'$,
it holds that $p_{X_{\overline{\alpha}'}}(x')\asymp p_{X_{\overline{\alpha}}}(x)$;
see Equation (81) therein. However here we only assume that $T\gg k\log T$,
and we want such a result to hold for any
\begin{equation}
\frac{|\overline{\alpha}'-\overline{\alpha}|}{\overline{\alpha}(1-\overline{\alpha})}=O\Big(\frac{1}{k\log T}\Big)\label{eq:alpha-bar-alpha-bar-prime-relation}
\end{equation}
in order to improve the dimension factor $d$ in \citet[Lemma~2, Part~(b)]{li2024sharp}
to the intrinsic dimension $k$. To this end, we instead consider
any pair $(x,x')$ where 
\begin{equation}
x=h(x')\coloneqq\sqrt{\overline{\alpha}/\overline{\alpha}'}x'+\big(\sqrt{\overline{\alpha}/\overline{\alpha}'}(1-\overline{\alpha}')-\sqrt{(1-\overline{\alpha})(1-\overline{\alpha}')}\big)s_{\overline{\alpha}'}^{\star}(x'),\label{eq:x-x'-relation}
\end{equation}
Here $s_{\overline{\alpha}'}^{\star}(\cdot)$ is the score function
of $X_{\overline{\alpha}'}$, namely
\[
s_{\overline{\alpha}'}^{\star}(x')=-\frac{1}{1-\overline{\alpha}_{t}}\int p_{X_{0}|X_{\overline{\alpha}'}}(x_{0}\mymid x')\big(x'-\sqrt{\overline{\alpha}_{t}}x_{0}\big)\mathrm{d}x_{0}.
\]
Let $\mathcal{E}_{\overline{\alpha},1}$ be the typical set of $X_{\overline{\alpha}}$
defined as replacing the $\overline{\alpha}_{t}$ in (\ref{eq:E-t-1-low-d})
with $\overline{\alpha}$. Following similar analysis as in Lemmas~\ref{lemma:det-low-d}~and~\ref{lemma:u-x-low-d},
we can show that
\[
p_{X_{\overline{\alpha}'}}(x')\mathrm{d}x'\asymp p_{X_{\overline{\alpha}}}(x)\mathrm{d}x,\quad\text{i.e.,}\quad p_{X_{\overline{\alpha}'}}(x')\asymp p_{X_{\overline{\alpha}}}(x)|\det J_{h}(x')|,
\]
holds for any $x\in\mathcal{E}_{\overline{\alpha}}$, where $J_{h}$
is the Jacobian matrix of $h$ (see (\ref{eq:x-x'-relation})). Equipped
with this relation, we can follow the steps in the proof of \citet[Lemma 2, Part (a)]{li2024sharp}
to show that
\[
p_{X_{0}|X_{\overline{\alpha}'}}(x_{0}\mymid x')\asymp p_{X_{0}|X_{\overline{\alpha}}}(x_{0}\mymid x),
\]
which corresponds to Equation (82) therein, and this further leads
to 
\begin{align*}
\mathbb{E}\Big[\Big(\Sigma_{\overline{\alpha}'}\big(\sqrt{\overline{\alpha}'}X_{0}+\sqrt{1-\overline{\alpha}'}Z\big)\Big)^{2}\Big] & \preceq C_{3}^{\prime2}\mathbb{E}\Big[\Big(\Sigma_{\overline{\alpha}}\big(\sqrt{\overline{\alpha}}X_{0}+\sqrt{1-\overline{\alpha}}Z\big)\Big)^{2}\Big]+C_{8}^{\prime}\exp(-C_{9}^{\prime}k\log T)I_{d}
\end{align*}
holds for all $\overline{\alpha},\overline{\alpha}'\in(0,1)$ satisfying
(\ref{eq:alpha-bar-alpha-bar-prime-relation}). Using the above result,
we can follow the same proof as in \citet[Lemma 2, Part (b)]{li2024sharp}
to establish the desired result. The detailed proof is omitted here
for brevity. \end{proof}
\bibliographystyle{apalike}
\bibliography{reference-diffusion}

\begin{thebibliography}{}

\bibitem[Anderson, 1982]{anderson1982reverse}
Anderson, B.~D. (1982).
\newblock Reverse-time diffusion equation models.
\newblock {\em Stochastic Processes and their Applications}, 12(3):313--326.

\bibitem[Azangulov et~al., 2024]{azangulov2024convergence}
Azangulov, I., Deligiannidis, G., and Rousseau, J. (2024).
\newblock Convergence of diffusion models under the manifold hypothesis in
  high-dimensions.
\newblock {\em arXiv preprint arXiv:2409.18804}.

\bibitem[Benton et~al., 2023a]{benton2023linear}
Benton, J., De~Bortoli, V., Doucet, A., and Deligiannidis, G. (2023a).
\newblock Linear convergence bounds for diffusion models via stochastic
  localization.
\newblock {\em arXiv preprint arXiv:2308.03686}.

\bibitem[Benton et~al., 2023b]{benton2023error}
Benton, J., Deligiannidis, G., and Doucet, A. (2023b).
\newblock Error bounds for flow matching methods.
\newblock {\em arXiv preprint arXiv:2305.16860}.

\bibitem[Block et~al., 2020]{block2020generative}
Block, A., Mroueh, Y., and Rakhlin, A. (2020).
\newblock Generative modeling with denoising auto-encoders and {L}angevin
  sampling.
\newblock {\em arXiv preprint arXiv:2002.00107}.

\bibitem[Chen et~al., 2023a]{chen2022improved}
Chen, H., Lee, H., and Lu, J. (2023a).
\newblock Improved analysis of score-based generative modeling: User-friendly
  bounds under minimal smoothness assumptions.
\newblock In {\em International Conference on Machine Learning}, pages
  4735--4763. PMLR.

\bibitem[Chen et~al., 2023b]{chen2023score}
Chen, M., Huang, K., Zhao, T., and Wang, M. (2023b).
\newblock Score approximation, estimation and distribution recovery of
  diffusion models on low-dimensional data.
\newblock In {\em International Conference on Machine Learning}, pages
  4672--4712. PMLR.

\bibitem[Chen et~al., 2023c]{chen2023probability}
Chen, S., Chewi, S., Lee, H., Li, Y., Lu, J., and Salim, A. (2023c).
\newblock The probability flow ode is provably fast.
\newblock {\em arXiv preprint arXiv:2305.11798}.

\bibitem[Chen et~al., 2022]{chen2022sampling}
Chen, S., Chewi, S., Li, J., Li, Y., Salim, A., and Zhang, A.~R. (2022).
\newblock Sampling is as easy as learning the score: theory for diffusion
  models with minimal data assumptions.
\newblock {\em arXiv preprint arXiv:2209.11215}.

\bibitem[Chen et~al., 2023d]{chen2023restoration}
Chen, S., Daras, G., and Dimakis, A.~G. (2023d).
\newblock Restoration-degradation beyond linear diffusions: A non-asymptotic
  analysis for {DDIM}-type samplers.
\newblock {\em arXiv preprint arXiv:2303.03384}.

\bibitem[Chen et~al., 2024]{chen2024exploring}
Chen, S., Zhang, H., Guo, M., Lu, Y., Wang, P., and Qu, Q. (2024).
\newblock Exploring low-dimensional subspaces in diffusion models for
  controllable image editing.
\newblock {\em arXiv preprint arXiv:2409.02374}.

\bibitem[Croitoru et~al., 2023]{croitoru2023diffusion}
Croitoru, F.-A., Hondru, V., Ionescu, R.~T., and Shah, M. (2023).
\newblock Diffusion models in vision: A survey.
\newblock {\em IEEE Transactions on Pattern Analysis and Machine Intelligence}.

\bibitem[De~Bortoli, 2022]{de2022convergence}
De~Bortoli, V. (2022).
\newblock Convergence of denoising diffusion models under the manifold
  hypothesis.
\newblock {\em arXiv preprint arXiv:2208.05314}.

\bibitem[De~Bortoli et~al., 2021]{de2021diffusion}
De~Bortoli, V., Thornton, J., Heng, J., and Doucet, A. (2021).
\newblock Diffusion {S}chr{\"o}dinger bridge with applications to score-based
  generative modeling.
\newblock {\em Advances in Neural Information Processing Systems},
  34:17695--17709.

\bibitem[Dhariwal and Nichol, 2021]{dhariwal2021diffusion}
Dhariwal, P. and Nichol, A. (2021).
\newblock Diffusion models beat {GANs} on image synthesis.
\newblock {\em Advances in Neural Information Processing Systems},
  34:8780--8794.

\bibitem[Gao and Zhu, 2024]{gao2024convergence}
Gao, X. and Zhu, L. (2024).
\newblock Convergence analysis for general probability flow odes of diffusion
  models in wasserstein distances.
\newblock {\em arXiv preprint arXiv:2401.17958}.

\bibitem[Gupta et~al., 2024]{gupta2024faster}
Gupta, S., Cai, L., and Chen, S. (2024).
\newblock Faster diffusion-based sampling with randomized midpoints: Sequential
  and parallel.
\newblock {\em arXiv preprint arXiv:2406.00924}.

\bibitem[Haussmann and Pardoux, 1986]{haussmann1986time}
Haussmann, U.~G. and Pardoux, E. (1986).
\newblock Time reversal of diffusions.
\newblock {\em The Annals of Probability}, pages 1188--1205.

\bibitem[Ho et~al., 2020]{ho2020denoising}
Ho, J., Jain, A., and Abbeel, P. (2020).
\newblock Denoising diffusion probabilistic models.
\newblock {\em Advances in Neural Information Processing Systems},
  33:6840--6851.

\bibitem[Hoogeboom et~al., 2022]{hoogeboom2022equivariant}
Hoogeboom, E., Satorras, V.~G., Vignac, C., and Welling, M. (2022).
\newblock Equivariant diffusion for molecule generation in 3d.
\newblock In {\em International conference on machine learning}, pages
  8867--8887. PMLR.

\bibitem[Huang et~al., 2024a]{huang2024convergence}
Huang, D.~Z., Huang, J., and Lin, Z. (2024a).
\newblock Convergence analysis of probability flow ode for score-based
  generative models.
\newblock {\em arXiv preprint arXiv:2404.09730}.

\bibitem[Huang et~al., 2024b]{huang2024denoising}
Huang, Z., Wei, Y., and Chen, Y. (2024b).
\newblock Denoising diffusion probabilistic models are optimally adaptive to
  unknown low dimensionality.
\newblock {\em arXiv preprint arXiv:2410.18784}.

\bibitem[Hyv{\"a}rinen, 2005]{hyvarinen2005estimation}
Hyv{\"a}rinen, A. (2005).
\newblock Estimation of non-normalized statistical models by score matching.
\newblock {\em Journal of Machine Learning Research}, 6(4).

\bibitem[Hyv{\"a}rinen, 2007]{hyvarinen2007some}
Hyv{\"a}rinen, A. (2007).
\newblock Some extensions of score matching.
\newblock {\em Computational statistics \& data analysis}, 51(5):2499--2512.

\bibitem[Kong et~al., 2021]{kong2021diffwave}
Kong, Z., Ping, W., Huang, J., Zhao, K., and Catanzaro, B. (2021).
\newblock {DiffWave}: A versatile diffusion model for audio synthesis.
\newblock In {\em International Conference on Learning Representations}.

\bibitem[Laurent and Massart, 2000]{laurent2000adaptive}
Laurent, B. and Massart, P. (2000).
\newblock Adaptive estimation of a quadratic functional by model selection.
\newblock {\em Annals of statistics}, pages 1302--1338.

\bibitem[Lee et~al., 2022]{lee2022convergence}
Lee, H., Lu, J., and Tan, Y. (2022).
\newblock Convergence for score-based generative modeling with polynomial
  complexity.
\newblock In {\em Advances in Neural Information Processing Systems}.

\bibitem[Lee et~al., 2023]{lee2023convergence}
Lee, H., Lu, J., and Tan, Y. (2023).
\newblock Convergence of score-based generative modeling for general data
  distributions.
\newblock In {\em International Conference on Algorithmic Learning Theory},
  pages 946--985.

\bibitem[Li and Cai, 2024]{li2024provable}
Li, G. and Cai, C. (2024).
\newblock Provable acceleration for diffusion models under minimal assumptions.
\newblock {\em arXiv preprint arXiv:2410.23285}.

\bibitem[Li et~al., 2024a]{li2024accelerating}
Li, G., Huang, Y., Efimov, T., Wei, Y., Chi, Y., and Chen, Y. (2024a).
\newblock Accelerating convergence of score-based diffusion models, provably.
\newblock {\em arXiv preprint arXiv:2403.03852}.

\bibitem[Li and Jiao, 2024]{li2024improved}
Li, G. and Jiao, Y. (2024).
\newblock Improved convergence rate for diffusion probabilistic models.
\newblock {\em arXiv preprint arXiv:2410.13738}.

\bibitem[Li et~al., 2023]{li2023towards}
Li, G., Wei, Y., Chen, Y., and Chi, Y. (2023).
\newblock Towards non-asymptotic convergence for diffusion-based generative
  models.
\newblock In {\em The Twelfth International Conference on Learning
  Representations}.

\bibitem[Li et~al., 2024b]{li2024sharp}
Li, G., Wei, Y., Chi, Y., and Chen, Y. (2024b).
\newblock A sharp convergence theory for the probability flow odes of diffusion
  models.
\newblock {\em arXiv preprint arXiv:2408.02320}.

\bibitem[Li and Yan, 2024]{li2024adapting}
Li, G. and Yan, Y. (2024).
\newblock Adapting to unknown low-dimensional structures in score-based
  diffusion models.
\newblock In {\em The Thirty-eighth Annual Conference on Neural Information
  Processing Systems}.

\bibitem[Liang et~al., 2024]{liang2024non}
Liang, Y., Ju, P., Liang, Y., and Shroff, N. (2024).
\newblock Non-asymptotic convergence of discrete-time diffusion models: New
  approach and improved rate.
\newblock {\em arXiv preprint arXiv:2402.13901}.

\bibitem[Liu et~al., 2022]{liu2022let}
Liu, X., Wu, L., Ye, M., and Liu, Q. (2022).
\newblock Let us build bridges: Understanding and extending diffusion
  generative models.
\newblock {\em arXiv preprint arXiv:2208.14699}.

\bibitem[Pidstrigach, 2022]{pidstrigach2022score}
Pidstrigach, J. (2022).
\newblock Score-based generative models detect manifolds.
\newblock {\em arXiv preprint arXiv:2206.01018}.

\bibitem[Pope et~al., 2021]{pope2021intrinsic}
Pope, P., Zhu, C., Abdelkader, A., Goldblum, M., and Goldstein, T. (2021).
\newblock The intrinsic dimension of images and its impact on learning.
\newblock In {\em 9th International Conference on Learning Representations,
  {ICLR} 2021, Virtual Event, Austria, May 3-7, 2021}. OpenReview.net.

\bibitem[Potaptchik et~al., 2024]{potaptchik2024linear}
Potaptchik, P., Azangulov, I., and Deligiannidis, G. (2024).
\newblock Linear convergence of diffusion models under the manifold hypothesis.
\newblock {\em arXiv preprint arXiv:2410.09046}.

\bibitem[Ramesh et~al., 2022]{ramesh2022hierarchical}
Ramesh, A., Dhariwal, P., Nichol, A., Chu, C., and Chen, M. (2022).
\newblock Hierarchical text-conditional image generation with {CLIP} latents.
\newblock {\em arXiv preprint arXiv:2204.06125}.

\bibitem[Rombach et~al., 2022]{rombach2022high}
Rombach, R., Blattmann, A., Lorenz, D., Esser, P., and Ommer, B. (2022).
\newblock High-resolution image synthesis with latent diffusion models.
\newblock In {\em IEEE/CVF Conference on Computer Vision and Pattern
  Recognition}, pages 10684--10695.

\bibitem[Saharia et~al., 2022]{saharia2022photorealistic}
Saharia, C., Chan, W., Saxena, S., Li, L., Whang, J., Denton, E.~L.,
  Ghasemipour, K., Gontijo~Lopes, R., Karagol~Ayan, B., Salimans, T., et~al.
  (2022).
\newblock Photorealistic text-to-image diffusion models with deep language
  understanding.
\newblock {\em Advances in Neural Information Processing Systems},
  35:36479--36494.

\bibitem[Simoncelli and Olshausen, 2001]{simoncelli2001natural}
Simoncelli, E.~P. and Olshausen, B.~A. (2001).
\newblock Natural image statistics and neural representation.
\newblock {\em Annual review of neuroscience}, 24(1):1193--1216.

\bibitem[Sohl-Dickstein et~al., 2015]{sohl2015deep}
Sohl-Dickstein, J., Weiss, E., Maheswaranathan, N., and Ganguli, S. (2015).
\newblock Deep unsupervised learning using nonequilibrium thermodynamics.
\newblock In {\em International Conference on Machine Learning}, pages
  2256--2265.

\bibitem[Song et~al., 2020]{song2020denoising}
Song, J., Meng, C., and Ermon, S. (2020).
\newblock Denoising diffusion implicit models.
\newblock {\em arXiv preprint arXiv:2010.02502}.

\bibitem[Song and Ermon, 2019]{song2019generative}
Song, Y. and Ermon, S. (2019).
\newblock Generative modeling by estimating gradients of the data distribution.
\newblock {\em Advances in neural information processing systems}, 32.

\bibitem[Song et~al., 2021]{song2020score}
Song, Y., Sohl-Dickstein, J., Kingma, D.~P., Kumar, A., Ermon, S., and Poole,
  B. (2021).
\newblock Score-based generative modeling through stochastic differential
  equations.
\newblock {\em International Conference on Learning Representations}.

\bibitem[Tang et~al., 2024]{tang2024conditional}
Tang, R., Lin, L., and Yang, Y. (2024).
\newblock Conditional diffusion models are minimax-optimal and
  manifold-adaptive for conditional distribution estimation.
\newblock {\em arXiv preprint arXiv:2409.20124}.

\bibitem[Tang and Yang, 2024]{tang2024adaptivity}
Tang, R. and Yang, Y. (2024).
\newblock Adaptivity of diffusion models to manifold structures.
\newblock In {\em International Conference on Artificial Intelligence and
  Statistics}, pages 1648--1656. PMLR.

\bibitem[Tang and Zhao, 2024]{tang2024score}
Tang, W. and Zhao, H. (2024).
\newblock Score-based diffusion models via stochastic differential equations--a
  technical tutorial.
\newblock {\em arXiv preprint arXiv:2402.07487}.

\bibitem[Vershynin, 2018]{vershynin2018high}
Vershynin, R. (2018).
\newblock {\em High-dimensional probability: An introduction with applications
  in data science}, volume~47.
\newblock Cambridge university press.

\bibitem[Villegas et~al., 2022]{villegas2022phenaki}
Villegas, R., Babaeizadeh, M., Kindermans, P.-J., Moraldo, H., Zhang, H.,
  Saffar, M.~T., Castro, S., Kunze, J., and Erhan, D. (2022).
\newblock Phenaki: Variable length video generation from open domain textual
  descriptions.
\newblock In {\em International Conference on Learning Representations}.

\bibitem[Vincent, 2011]{vincent2011connection}
Vincent, P. (2011).
\newblock A connection between score matching and denoising autoencoders.
\newblock {\em Neural computation}, 23(7):1661--1674.

\bibitem[Wainwright, 2019]{wainwright2019high}
Wainwright, M.~J. (2019).
\newblock {\em High-dimensional statistics: A non-asymptotic viewpoint},
  volume~48.
\newblock Cambridge University Press.

\bibitem[Wang et~al., 2024]{wang2024diffusion}
Wang, P., Zhang, H., Zhang, Z., Chen, S., Ma, Y., and Qu, Q. (2024).
\newblock Diffusion models learn low-dimensional distributions via subspace
  clustering.
\newblock {\em arXiv preprint arXiv:2409.02426}.

\bibitem[Yang et~al., 2022]{yang2022diffusion}
Yang, L., Zhang, Z., Song, Y., Hong, S., Xu, R., Zhao, Y., Shao, Y., Zhang, W.,
  Cui, B., and Yang, M.-H. (2022).
\newblock Diffusion models: A comprehensive survey of methods and applications.
\newblock {\em arXiv preprint arXiv:2209.00796}.

\end{thebibliography}

\end{document}